%% file: main.tex
\documentclass[pdflatex,sn-mathphys-num]{sn-jnl}

\usepackage[T1]{fontenc}
\usepackage{graphicx}
\usepackage{amsmath,amssymb,amsfonts}
\usepackage{amsthm}
\usepackage{algorithmic}
\usepackage{textcomp}
\usepackage{upquote}
\usepackage{xcolor}
\usepackage{wasysym}
\usepackage{booktabs}
\usepackage{enumitem}
\usepackage{listings}
\usepackage{placeins}

\hypersetup{bookmarksdepth=3}

\renewcommand{\orcidlogo}{\includegraphics[height=1em]{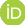}}

\newcommand{\resec}[1]{Section~\ref{sec:#1}}
\newcommand{\refig}[1]{Fig.~\ref{fig:#1}}
\newcommand{\reqn}[1]{(\ref{eqn:#1})}
\newcommand{\retab}[1]{Table~\ref{tbl:#1}}
\newcommand{\relist}[1]{Listing~\ref{lst:#1}}

\newcommand{\seedetail}[1]{Appendix~\ref{sec:#1}}

\newif\ifIEEEtranloaded
\makeatletter
\@ifclassloaded{IEEEtran}{\IEEEtranloadedtrue}{\IEEEtranloadedfalse}
\makeatother

\makeatletter
\newcommand{\iftwocolumn}[2]{\if@twocolumn #1\else #2\fi}
\makeatother

\theoremstyle{thmstyleone}
\newtheorem{theorem}{Theorem}[section]
\newtheorem{lemma}[theorem]{Lemma}
\newtheorem{corollary}[theorem]{Corollary}

\theoremstyle{thmstyletwo}
\newtheorem{remark}{Remark}

\theoremstyle{thmstylethree}
\newtheorem{definition}{Definition}[section]

\newcommand{\True}{\texttt{True}}
\newcommand{\False}{\texttt{False}}

\DeclareUnicodeCharacter{02BC}{'}

\begin{document}

\newif\ifreview    
\reviewfalse   

\makeatletter
\def\email#1{\global\advance\emailcnt by 1\relax%
\if@corauemail%
   \g@addto@macro\corrauthemail{%
   \setcounter{footnote}{0}%
   \textcolor{blue}{#1}\ %
   }%
\else%
   \g@addto@macro\authemail{%
   \setcounter{footnote}{0}%
   \textcolor{blue}{#1}\ %
   }%
\fi}
\makeatother

\title[Encoding, Liveness, and Auditing of Synthesized Robot Supervisors]{Realizability Is Not Enough: Encoding, Liveness, and Auditing of Synthesized Robot Supervisors}

\ifreview

\author*{\fnm{Author information} \sur{redacted}\textsuperscript{*}\!}\email{redacted@example.com}

\affil{\orgname{Redacted for review}}

\else

\author*{\fnm{David C.} \sur{Conner}\textsuperscript{*}\!\,\orcid{https://orcid.org/0000-0002-2037-2560}}\email{david.conner@cnu.edu}
\author{\fnm{Joshua} \sur{Luzier}}
\author{\fnm{William J.} \sur{Doyle}}
\author{\fnm{Emma R.} \sur{Faith}}
\author{\fnm{Aubrie B.} \sur{Kooiker}}
\author{\fnm{Andrew J.} \sur{Farney}}
\author{\fnm{Sebastian} \sur{Fox}}
\author{\fnm{Evangelina} \sur{Grimes}}
\author{\fnm{Ian G.} \sur{Conner}}
\author{\fnm{Kyle} \sur{Bloom}}

\affil{\orgdiv{Capable Humanitarian Robotics and Intelligent Systems Lab (CHRISLab)\\}
  \orgname{School of Engineering and Computing, Christopher Newport University\\}
  \orgaddress{\city{Newport News}, \state{Virginia}, \postcode{23606}, \country{USA}}}

\fi

\ifreview
\hypersetup{
    pdftitle={Realizability Is Not Enough: Encoding, Liveness, and Auditing of Synthesized Robot Supervisors},
    pdfauthor={Author information redacted for review}
}
\else
\hypersetup{
    pdftitle={Realizability Is Not Enough: Encoding, Liveness, and Auditing of Synthesized Robot Supervisors},
    pdfauthor={David C. Conner, Joshua Luzier, William J. Doyle, Emma R. Faith, Aubrie B. Kooiker, Andrew J. Farney, Sebastian Fox, Evangelina Grimes, Ian G. Conner, and Kyle Bloom}
}
\fi

\abstract{\input{abstract}}

\keywords{reactive synthesis, supervisory control, hierarchical finite-state machines, formal verification, FlexBE, aerial robotics}

\maketitle

\input{Introduction}

\input{Background}

\input{PipelineOverview}

\input{Demonstrations}

\input{Pipeline}

\input{Auditor}

\input{Reduction}

\input{ControllerRealization}

\input{Discussion}

\input{Conclusion}

\input{Appendices}

\backmatter

\bmhead{Acknowledgments}

\input{Acknowledgment}

\section*{Declarations}

\textbf{Funding}
This work was primarily supported by United States Navy contract
No. N00174-23-1-0018. Additional support was provided by the
Christopher Newport University School of Engineering and Computing
and the CNU Office of Research and Creative Activities. The views expressed
are those of the authors.

\textbf{Competing interests}
The authors declare no competing interests.

\textbf{Data and code availability}
The implementation and demonstration artifacts are available
as described in the
manuscript~\cite{flexbe-synthesis, flexbe-synthesis-demo, flexible-drones-git}.
For the demonstration artifacts~\cite{flexbe-synthesis-demo}, the general-use
code corresponds to main branch tag \texttt{0.0.2}, which pins the core
synthesis library~\cite{flexbe-synthesis} at its own main branch tag
\texttt{0.0.2}; the experimental data
reported in this
paper are on the \texttt{experiment-data} branch of \cite{flexbe-synthesis-demo} at commit
\href{https://github.com/CNURobotics/flexbe_synthesis_demo/commit/e7991e659990184ae83348e416ac8827fd76c376}{\texttt{e7991e65}}.

\bibliography{main}

\end{document}

%% file: abstract.tex
%

High-level robotic supervisors coordinate capabilities whose reported outcomes
determine the robot's next action. Reactive synthesis can generate such supervisors
with formal guarantees, but deployment requires more than proving a Generalized
Reactivity (1) (GR(1)) specification realizable. Designers must encode failure-prone
capabilities, choose liveness assumptions that match retry intent, audit strategies,
and translate them into robot software. We present an open-source pipeline for Robot
Operating System (ROS) 2 Flexible Behavior Engine (FlexBE) supervisors that
generates capability-based GR(1) specifications, analyzes assumptions before
synthesis, audits strategies, reduces states with a behavior-preservation proof, and
emits executable state machines.

Across four case studies (six comparisons), including hardware on two quadcopter
platforms, we compare enumerated and one-hot encodings and two liveness
formulations. Under the tested backend, enumerated encoding usually synthesizes
faster, although fewer propositions do not reliably predict smaller controllers or
lower symbolic cost. System-Goal without pending memory is the only liveness
treatment confirmed to yield executable controllers under both encodings across the
reported grid; Fair-Outcome can permit realizable cycles without designer-intended
completion. For this backend and model, we recommend enumerated encoding with
System-Goal and auditing every realized strategy, since proposition count and
realizability do not measure deployability.

The auditor is sound and complete for four structural defect classes (protocol
violations, deadlocks, bounded-failure violations, goal-unreachable traps) but is
not a general liveness verifier, and the reduction preserves capability-level
behavior. Together, these stages narrow the gap between formal realizability and
controllers that pass protocol and structural-progress checks.

%% file: Introduction.tex
\section{Introduction}\label{sec:intro}

Reactive high-level robot behaviors are commonly implemented as
hand-engineered \emph{hierarchical finite-state machines (HFSMs)}
or \emph{behavior trees (BTs)}
\cite{Schillinger2016,bt_robotics_18,Ghzouli_bt_sm_23}.
These controllers must account not only for nominal action sequences but also
for capabilities that may fail during execution. For example, motion planning can fail due to a
closed door, perception can become unavailable due to camera failure, and
trajectory execution can fail due to a low battery.
These failures arise from conditions outside the capability's influence.
A high-level supervisory controller must react to the reported outcomes, whether by
retrying an action, selecting an alternative, or terminating the behavior safely.
As the number of capabilities, outcomes, and recovery paths increases,
manually designing and validating this supervisory logic becomes increasingly
difficult.

\begin{figure}[t]
    \centering
    \includegraphics[width=0.9\linewidth]{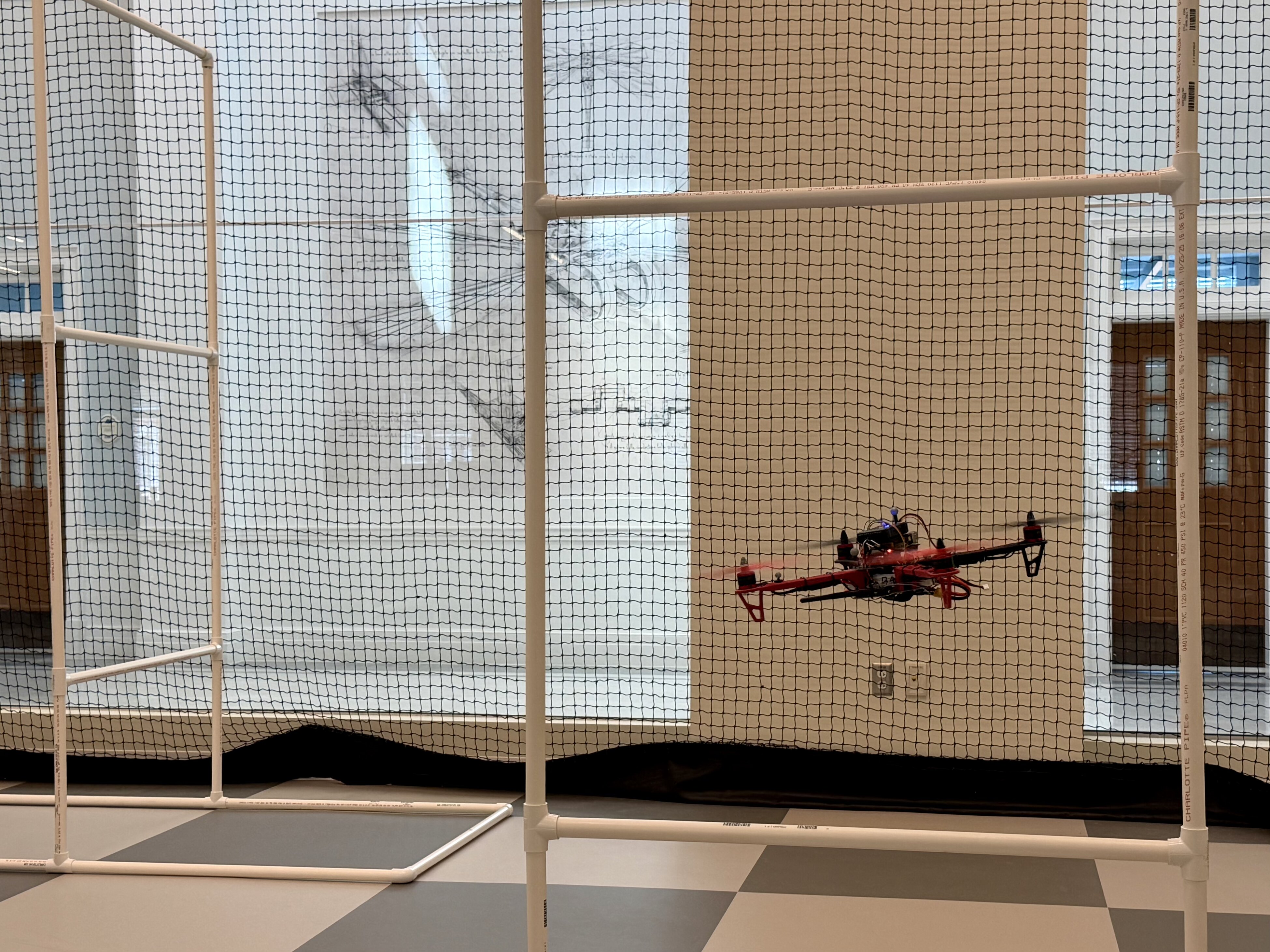}
    \caption{``PiHawk'' drone flying under supervisory control
    of synthesized state machine in our indoor flight space.}
    \label{fig:pihawk-flight}
\end{figure}

Formal methods offer an alternative by synthesizing reactive controllers from
logical specifications
\cite{Hadas09,Bloem2012,Maniatopoulos2016,cbc_bt17}.
Reactive synthesis generates supervisory decisions from explicit logical
specifications that encode environment assumptions and system guarantees.
The effort of writing specifications and running synthesis is borne
once, but the resulting supervisory controller can be executed many times.
A winning strategy for the supervisor satisfies the system guarantees for all
potentially adversarial environment behaviors that satisfy the encoded
assumptions. This formal assurance, however, is narrower than physical
task correctness or even designer-intended task progress. A formally realizable
strategy may satisfy the specification without reaching the terminal condition
intended by the designer, particularly when the specification permits repeated
capability failures or contains assumptions that can be violated along some
execution traces. This tension between formal realizability and designer
intent is a recognized hazard in reactive synthesis~\cite{KleinPnueli2010,MaozRingert2016,KindControllers24}.
\resec{synthesis} reviews these prior results.
This paper studies how concrete
specification design choices for robot capabilities bring this hazard about
in practice: a more compact capability encoding does not reliably reduce
synthesis cost, and a natural retry-liveness formulation can yield a
realizable strategy that never reaches the designer's intended goal.
Consequently, the usefulness of a synthesized robotic
supervisor depends not only on realizability, but also on the structure and
operational interpretation of its specification.

This paper builds on and substantially extends an open-source modular
pipeline~\cite{flexbe-synthesis, flexbe-synthesis-demo} first introduced
in~\cite{eit26-synthesis},
which automatically generates Generalized Reactivity (1) (GR(1)) specifications
and synthesizes executable HFSM supervisors for Robot Operating System (ROS)~2
systems. The pipeline supports
parameterized capability interfaces in which the synthesized supervisor
selects both the capability and its configuration, and synthesized
strategies are converted into executable Flexible Behavior Engine (FlexBE) state machines whose
transitions respond to capability outcomes reported during execution.
This paper extends our prior executable synthesis pipeline~\cite{eit26-synthesis}
by making two specification-design axes configurable: capability encoding and
liveness formulation. Specifically, we add an enumerated capability encoding
and an alternative liveness formulation.
We also add three pipeline modules:
pre-synthesis analysis, post-synthesis strategy auditing, and reduction of
redundant strategy states prior to translation into a FlexBE HFSM. The appendices
provide correctness proofs for the auditor and reduction modules.

The pipeline is released as open source~\cite{flexbe-synthesis, flexbe-synthesis-demo}.
It generates GR(1) specifications from a failure-aware abstraction of
parameterized robot capabilities and their reported outcomes, synthesizes a
supervisory strategy for realizable specifications, audits the strategy, and,
when the audited strategy is valid, reduces and converts it into an executable
FlexBE finite-state machine.
Detailed contributions include:
\begin{itemize}
    \item Two configurable specification-design axes for this pipeline under
    the stated sequential, one-active-capability protocol: enumerated versus
    one-hot capability encoding and Fair-Outcome versus System-Goal liveness
    formulations for retry-capable behaviors;

    \item New integrated validation modules for the pipeline, including a
    \texttt{slugs}-based well-separation analysis that classifies
    assumption-falsification cases and reports responsible-assumption cores,
    and an explicit-state strategy auditor that checks protocol conformance,
    deadlock, bounded-failure policies, and goal-unreachable traps. Under the
    stated exploration assumptions, the structural checks are proven sound
    and complete; the bounded-failure path is also unit-tested;

    \item A principled state-merging reduction for the explicit Mealy
    strategies, with a
    correctness argument establishing that the reduced strategy preserves
    the output-trace behavior, and hence the capability-level guarantees, of
    the audited raw strategy synthesized from our specifications;

    \item A mechanical conversion of the explicit Mealy strategy into an executable FlexBE
    finite-state machine.

\end{itemize}

We evaluate GR(1)-based synthesis through four case-study domains:
an extension of the classic coffee-maker example,
an extended river-crossing puzzle,
a simulated mobile-manipulation logistics task,
and hardware-integration experiments using two different
quadcopter platforms operating under the supervision
of the synthesized state machine.
The quantitative synthesis summary treats the Coffee base and extended
capability sets and the PyRoboSim P0 and P1 models as separate, yielding six
domain/subdomain comparisons: Coffee base, Coffee extended, Two-Rivers,
PyRoboSim P0, PyRoboSim P1, and the quadcopter domain.
The hardware case study includes the ``PiHawk'' quadcopter shown flying under
synthesized supervisory control in \refig{pihawk-flight}.
The first two compact examples are used as a
controlled testbed for isolating capability-encoding
and liveness-formulation effects.
These four case studies compare the effects of
alternative capability representations and liveness formulations on proposition count,
symbolic representation size, realizability time, explicit strategy size, and
task-level behavior.

This paper specifically examines two specification-design axes that
initially appear advantageous for failure-aware supervisory synthesis. The
first is an enumerated capability encoding versus one-hot encoding.
The second is the
choice between two plausible liveness formulations for retry-capable
behaviors: ``System-Goal'' and ``Fair-Outcome''. We test one
hypothesis for each axis. \textbf{H1 (encoding)} predicts that the enumerated
encoding's reduction in atomic-proposition count lowers symbolic synthesis
cost. \textbf{H2 (liveness)} predicts that Fair-Outcome retry semantics
preserve designer-intended task progress wherever the specification remains
realizable. Experiments reveal distinct, sometimes counterintuitive
consequences along the two axes, and neither hypothesis holds as stated.
Under the tested
\texttt{slugs}/CUDD backend, enumerated encoding usually improves synthesis
time and is the practical empirical starting point, although reducing the
number of atomic propositions does not reliably predict strategy size or
symbolic synthesis cost. System-Goal without pending memory is the only
liveness treatment confirmed as realized FlexBE HFSMs under both encodings
across the reported comparison grid; a realizable Fair-Outcome strategy can
fail to guarantee the task-level
progress intended by the designer.
These experiments motivate the validation
modules developed in this paper. A specification choice that looks advantageous on proposition
count or realizability alone can still hide a deployability defect, which is
exactly what the auditor is built to catch.
The paper does not introduce a new synthesis algorithm; it uses
\texttt{slugs} for the underlying GR(1) synthesis~\cite{Ehlers2016}. This
work extends the prior \texttt{slugs} code to use Python 3 and adds new
features without modifying the underlying synthesis algorithm~\cite{CNU_slugs}.

Every realized controller in this study is produced as an executable
ROS~2 FlexBE HFSM and deployed accordingly. Not every specification
configuration evaluated here is realizable, and the auditor halts the
pipeline before realization for any configuration with a confirmed
protocol or progress failure (\resec{auditor}). Well-separation
(\resec{well-separation-module}) is a separate, non-blocking diagnostic
and is not to be conflated with this halt. Every configuration evaluated
here proceeds through synthesis and auditing regardless of its
well-separation verdict, and several non-well-separated (NWS)
configurations realize controllers that pass the audit cleanly
(\resec{pipeline-integration}). All realized configurations are executed in
their corresponding software test environments, while the quadcopter
controllers are additionally validated in kinematic simulation and on
physical hardware using two different flight platforms.

The remainder of this paper is organized as follows. \resec{background}
reviews FlexBE and GR(1)-based synthesis, including well-separation.
\resec{pipeline-overview} presents the pipeline architecture that maps
parameterized capabilities to executable FlexBE state machines.
\resec{demonstrations} introduces the four case-study domains used
throughout the paper. \resec{pipeline} details the specification-generation
choices under study together with the well-separation module, the
post-synthesis auditor (\resec{auditor}), and the state-merging reduction
(\resec{reduction}). \resec{discussion} reports and discusses the experimental findings.
\resec{conclusion} concludes and outlines future work.
The appendices provide formal proofs for the auditor and reduction algorithms
and detail the experimental results summarized in \resec{discussion}.

%% file: Background.tex
\section{Background}\label{sec:background}

This work considers the synthesis of HFSM-based behavior controllers for
ROS~2 systems using Linear Temporal Logic (LTL). Other temporal
specification languages have also been used for robot behavior synthesis.
Signal Temporal Logic (STL), for instance, extends temporal reasoning to
continuous, real-valued signals rather than Boolean atomic propositions
\cite{MalerNickovic2004}. Kress-Gazit et al.~\cite{Hadas2018} survey this
broader landscape of specification languages, abstractions, and synthesis
algorithms for robotics. A different paradigm altogether generates
behavior through automated planning rather than reactive synthesis.
Classical planners built on the Planning Domain Definition Language
(PDDL) search offline for an action sequence that reaches a goal from a
fixed initial state, rather than synthesizing a controller that reacts
correctly to any environment behavior encountered at
runtime~\cite{McDermott1998PDDL}. This alternative is concretely
available for the same class of task studied in this paper's
mobile-manipulation logistics demonstration (\resec{pyrobosim}). The
simulator used for that demonstration ships with built-in
task-and-motion planning via PDDLStream, which integrates a symbolic PDDL
planner with blackbox samplers for continuous values such as grasp poses
and base placements~\cite{Garrett2020PDDLStream}. That planning stack is
well suited to finding sampled action sequences, while the problem studied
here is to synthesize a reusable supervisor that reacts at runtime to
reported capability outcomes, including failures and recovery branches.
Here, we limit the
background to concepts directly relevant to the demonstrations presented
in this paper.

\input{Background-FlexBE.tex}

\input{Background-Synthesis.tex}

%% file: Background-FlexBE.tex
\subsection{Robot System Behaviors}

This work is demonstrated within the ROS~2 framework~\cite{ROS2022},
which provides modular communication and execution infrastructure for robotic systems.
Within ROS~2, high-level behaviors are commonly implemented as behavior trees (BTs)
or hierarchical finite state machines (HFSMs) \cite{Schillinger2016, bt_robotics_18, Ghzouli_bt_sm_23},
and correct-by-construction synthesis has been studied for BTs as well as
for the HFSM-based controllers this paper targets \cite{cbc_bt17}.

Supervisory-controller synthesis has also been applied to ROS-based systems
more broadly, using Ramadge-Wonham discrete-event supervisory control
theory rather than reactive game-based synthesis~\cite{Torta2023CEP, Wesselink2023RoboSC}. That line of work maps
ROS/ROS~2 components to plant models and requirements and synthesizes a
supervisor with the CIF supervisory-controller synthesis toolset, an approach distinct from the
formal reactive synthesis this paper builds on.
Analysis of ROS~2 itself, e.g.\ its communication and execution
semantics, has separately been addressed through verification rather than
synthesis~\cite{Dust2025FrontiersROS2}.

In this work, we use the open-source Flexible Behavior Engine
(FlexBE)~\cite{Schillinger2016, Kohlbrecher2016, Zutell_22},
an HFSM-based framework that supports collaborative autonomy and enables runtime interaction
between human operators and autonomous behaviors.
In FlexBE, states are implemented as Python components that interface with system capabilities
via ROS topics, services, and actions.

Prior work has demonstrated the use of formal synthesis of realizable controllers
deployable within FlexBE-based systems~\cite{Maniatopoulos2016, Hayhurst2018, wgcf24, eit26-synthesis}.
Building on that foundation, this paper's specification-generation,
capability-encoding, and liveness-formulation extensions are detailed in
\resec{overview-capability} and \resec{pipeline_config}.

%% file: Background-Synthesis.tex
\subsection{LTL-based State Machine Synthesis} \label{sec:synthesis}

Linear Temporal Logic (LTL) extends propositional logic with temporal operators
that describe system evolution over discrete time~\cite{Ehlers2011, Bloem2012, Ehlers2016, Hadas2018}.
In general, reactive synthesis from full LTL specifications is doubly exponential
in the size of the formula~\cite{PnueliRosner90, Vardi1994}.

To obtain tractable synthesis for robotic systems,
we adopt the Generalized Reactivity (1) (GR(1)) fragment,
which restricts specification structure and enables singly exponential synthesis~\cite{Bloem2012, Ehlers2016}.

In GR(1), the specification is written as an implication
\begin{equation}
   \varphi \equiv (\varphi_i^a\land\varphi_s^a\land\varphi_l^a) \implies
    (\varphi_i^g\land\varphi_s^g\land\varphi_l^g)
\label{eqn:varphi}
\end{equation}

where environment \emph{assumptions} ($\varphi^a$) and system \emph{guarantees} ($\varphi^g$)
are partitioned into \emph{initialization} ($\varphi_i$),
\emph{safety} ($\varphi_s$), and \emph{liveness} ($\varphi_l$) constraints.
Synthesis is posed as a two-player game between the potentially adversarial environment
and the system.
A specification is \emph{realizable} if the system can choose controller outputs
that enforce the guarantees
for all environment behaviors satisfying the assumptions.
When realizable, the result is a finite-state Mealy machine that serves as a reactive controller.

The implication form in \reqn{varphi} is powerful but can also hide a
specification-quality problem. A controller output choice may satisfy the implication by
forcing the environment to violate its own assumptions, after which the
system guarantees no longer constrain the play. This mismatch between
standard implication realizability and the stronger intuition that the
controller should not win by falsifying the environment was identified in
early GR(1) research through strict realizability and well-separation
\cite{KleinPnueli2010}, and later developed into an algorithmic diagnosis for
non-well-separated (NWS) GR(1) specifications~\cite{MaozRingert2016}.
Other studies
examine anomalous or assumption-falsifying controllers and how assumptions
should be handled during synthesis~\cite{DIppolito2013,
Bloem2014Assumptions}. Work on kind controllers goes further,
seeking controllers that avoid exploiting NWS assumptions when
such controllers exist~\cite{KindControllers24}.
Complementary to this well-separation-focused literature,
Maoz et al.~\cite{Maoz2026} address GR(1) specification validity and quality
more broadly, independent of well-separation.
Their analysis includes systematic scenario generation that demonstrates
the meaning of each specification element on a synthesized controller, and
detection of unnecessary variables and specification clones as readability and
maintainability issues distinct from realizability. This analysis reviews whether a
specification says what its author intended, entirely at the specification and
simulated-scenario level. This paper's well-separation module (\resec{well-separation-module}) and
post-synthesis auditor (\resec{auditor})
instead check, after the specification is fixed, whether its assumptions
can be forced or an already-synthesized strategy structurally fails to
progress toward the designer's goal.
This paper's pipeline carries
those checks through synthesis, reduction, and controller realization to
an executable FlexBE state machine that runs a real ROS~2 system, rather
than stopping at a specification-level artifact.

Using the notation of \reqn{varphi}, the well-separation check asks whether
the environment side $(\varphi_i^a,\varphi_s^a,\varphi_l^a)$ alone admits a
system strategy that can force an assumption violation. Following
\cite{MaozRingert2016}, this is reduced to an ordinary GR(1) winning-region
computation by replacing the system side with unconstrained initialization
and safety guarantees and an impossible liveness guarantee, i.e.,
$\varphi_i^g=\True$, $\varphi_s^g=\True$, and $\varphi_l^g=\False$. If the
system can win this reduced game from an environment-reachable state, then
the original specification is NWS. Some legal interaction lets the controller
make the environment unable to satisfy its assumptions. The diagnosis is
classified by whether this assumption-falsifying winning region is
unavoidable from every legal initial environment choice or only reachable
from some later game state, and by whether the forced violation concerns
initialization, safety, or liveness assumptions.
The implementation used in this paper exposes these cases and, when
requested, minimizes the responsible assumption set to a core.

GR(1) synthesis has a substantial history of application to robot behavior
generation. Early work used LTL mission specifications to synthesize
discrete supervisors for slow, high-level actions layered over continuous
low-level controllers~\cite{Hadas08, Raman12}, later extended to continuous
control for actions whose execution duration is not known in
advance~\cite{Raman13ICRA}. Our activation--outcome formulation
(\resec{syn_spec}) generalizes that durative-action abstraction to actions
with multiple possible outcomes rather than a single completion event.
Because not every mission specification is realizable, subsequent work
addressed diagnosing and minimally explaining unsynthesizable
specifications~\cite{Raman13}. This paper revisits that concern from the
opposite direction in \resec{auditor}. Our auditor instead diagnoses
strategies that \emph{are} realizable but still fail to make designer-intended
progress. More broadly, formal synthesis of control strategies from
finite-state abstractions has been applied well beyond mobile-robot mission
planning, including general dynamical systems~\cite{Belta16}.

Our pipeline uses the \texttt{slugs} GR(1) synthesizer~\cite{Ehlers2016}.
\texttt{slugs} carries out its symbolic fixed-point computations using
binary decision diagrams (BDDs) through CUDD, the CU Decision Diagram Package~\cite{SomenziCUDD}.
Atomic propositions are divided into environment inputs $AP_I$ and system outputs $AP_O$,
with the synthesized controller assigning values to $AP_O$ in response to $AP_I$.
\texttt{Slugs} supports a restricted extension of GR(1) that permits the \emph{Next}
operator, provided each clause in \reqn{varphi} is a conjunction of Boolean
formulas satisfying the structural constraints summarized in
\retab{slugs_spec}. Every safety clause must hold \emph{Globally}, and every
liveness clause must satisfy a \emph{Globally Finally} condition. The
\emph{Next} operator is also restricted differently on the environment and
system sides. For example, environmental safety assumptions may apply
\emph{Next} only to input propositions $AP_I$, whereas system safety clauses
may apply \emph{Next} to both input and output propositions. The environment
cannot be constrained to react to the system's own next move, but the system
can react to either. This restricted use of \emph{Next} is what enables the
practical modeling of reactive robotic behaviors used throughout this paper.
Sections~\ref{sec:demonstrations} and~\ref{sec:pipeline} provide concrete examples drawn
from our implementation.

\begin{table*}[htbp]
\centering
\caption{\texttt{Slugs} Specification Structure}
\label{tbl:slugs_spec}
\renewcommand{\arraystretch}{1.4}
\begin{tabular}{|l|c|}
\hline
\textbf{Term} & \textbf{Form} \\
\hline
\multicolumn{2}{|c|}{\textbf{Environment}} \\
\hline
$\varphi_i^a$ &
$\psi : AP_I$ with $\land, \lor, \neg$ only \\
\hline
$\varphi_s^a$ &
$G\psi : AP_I \cup AP_O \cup \left\{Xy : y \in AP_I\right\}$ \\
\hline
$\varphi_l^a$ &
$GF\psi : AP_I \cup AP_O \cup \left\{Xy : y \in AP_I \cup AP_O\right\}$ \\
\hline
\multicolumn{2}{|c|}{\textbf{System}} \\
\hline
$\varphi_i^g$ &
$\psi : AP_I \cup AP_O$ with $\land, \lor, \neg$ only\\
\hline
$\varphi_s^g$ &
$G\psi : AP_I \cup AP_O \cup \left\{Xy : y \in AP_I \cup AP_O\right\}$  \\
\hline
$\varphi_l^g$ &
$GF\psi : AP_I \cup AP_O \cup \left\{Xy : y \in AP_I \cup AP_O\right\}$ \\
\hline
\end{tabular}
\end{table*}

Studies of satisfiability and symbolic reasoning have shown
that computational difficulty depends strongly on constraint structure and
cannot be predicted from problem size alone
\cite{Cheeseman91,Selman199617,Ran3SatThicken01,RandSATBeyond04}.
For BDD-based methods, such as GR(1) synthesis, logically equivalent or similarly sized formulations
can produce substantially different symbolic representations because of
variable interactions and ordering.
Accordingly, as shown in this paper, proposition count alone is not expected to provide a reliable
measure of GR(1) synthesis cost.

%% file: PipelineOverview.tex
\section{Synthesis Pipeline Overview}
\label{sec:pipeline-overview}

Before presenting the demonstration domains, we summarize how our synthesis
pipeline turns a description of a robot's software capability interfaces into an
executable FlexBE supervisor, and introduce the vocabulary the
demonstrations rely on. \resec{pipeline} formalizes this pipeline in full.
This overview focuses on the concepts needed to read a capability
configuration file and follow what the synthesized controller is doing.

\subsection{From Capability to Executable State}
\label{sec:overview-capability}

A \emph{capability} is a named, parameterized unit of robot functionality,
typically a motion primitive, a perception routine, or a simple
logging action, that is already implemented with a FlexBE state interface. A
capability declares the parameters it accepts and the outcomes it can
report. The synthesizer decides \emph{when} to activate a capability via its FlexBE state interface
and \emph{with which parameter values}, while the underlying system capability does
the work and reports back its result to the FlexBE state instance.
The reported state outcomes, including timeouts and failures, correspond to
environmental inputs in the GR(1) formalism.
 \refig{pipeline-overview}
shows our pipeline with several such underlying robot system capabilities, including planning,
perception, control, navigation, monitoring, and sensing.
Each capability is interfaced by
its own FlexBE state implementation. For example, a \texttt{move\_robot}
FlexBE state implementation might connect to an existing navigation module through
an action interface accepting a destination parameter and reporting \texttt{completed} or
\texttt{failure}. The synthesized supervisor chooses
which capability to activate next and with what
parameters, exactly as a human-authored FlexBE behavior would, so the
resulting controller can be inspected, modified, and executed with the same
tools used for any other FlexBE behavior.

A \emph{capability configuration file} binds each named capability to its
FlexBE state implementation and records the details the synthesizer needs
that are not visible from the workspace alone. These include fixed or symbolic
parameter values, \emph{preconditions} that must hold before the capability may be
activated, \emph{postconditions} that update symbolic memory once an
outcome is reported, and \emph{transition relations} that force a specific
follow-on transition after a given outcome (for instance, always logging a
failure before a retry). Where a parameter's value is chosen by the
synthesizer rather than fixed in advance, the configuration file names it
with a leading \texttt{@},
e.g.\ \texttt{destination: \textquotesingle @move\_dest\textquotesingle}.
This tells the pipeline to read the value from
the corresponding enumerated
proposition in the synthesized strategy at runtime rather than treating it
as a constant. \resec{demonstrations} shows examples of concrete
configuration files using this notation.

This paper's notion of a \emph{capability}, formalized in \resec{syn_spec},
connects to a broader line of work on skill-based robot architectures,
which organize robot behavior around reusable, composable skill
abstractions rather than monolithic task programs~\cite{SkiROS23,
Skill23}. Recent work has also explored automatic encoding of robot
capabilities and repair of infeasible high-level reactive task
specifications synthesized from LTL~\cite{Asfour23}, and, most closely
related to the encoding choices compared in this paper, online
resynthesis of high-level collaborative tasks as a robot's available
capabilities change at runtime~\cite{FangYK25}. Our pipeline instead
resynthesizes offline from a fixed capability catalog
(\resec{pipeline_config}), though runtime capability change is a natural
direction for future integration with that line of work.

\subsection{Pipeline Stages}
\label{sec:overview-stages}

\refig{pipeline-overview} sketches the resulting FlexBE synthesis pipeline
introduced in \cite{eit26-synthesis} and fully presented in this paper.
We label it \emph{FlexBE
Synthesis Pipeline} in the figure to distinguish the pipeline itself
from the FlexBE system and from the underlying
robot system capabilities that FlexBE's state
implementations interface. The pipeline is orchestrated by
a single \texttt{synthesis\_manager} ROS~2 node.
It separates a \emph{preprocessing} phase, run once at startup,
from \emph{request-time processing} run for
each synthesis request given new goals and initial conditions:

\begin{figure*}[t]
    \centering
    \includegraphics[width=\linewidth]{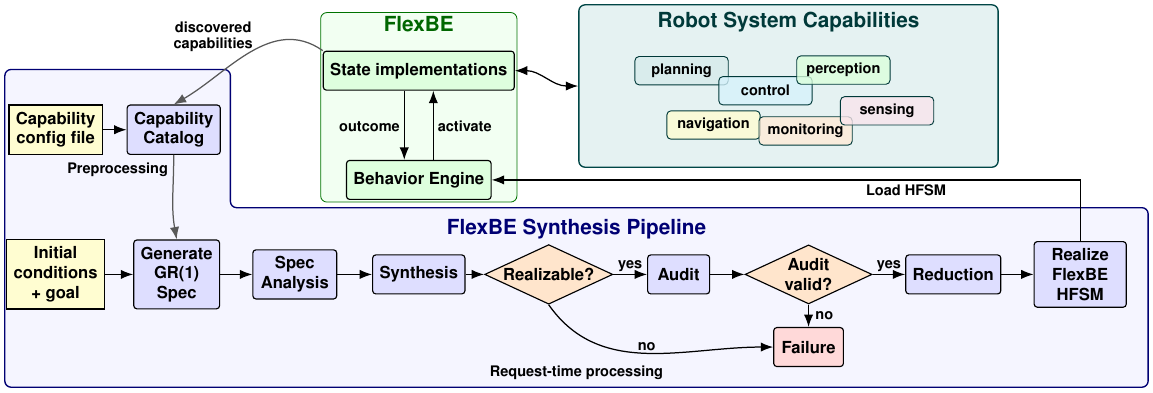}
    \caption{
    The FlexBE Synthesis Pipeline builds a reusable capability catalog
    once at startup (Preprocessing) by discovering FlexBE's existing
    state implementations, each of which interfaces with an underlying
    robot system capability such as planning, perception, or control.
    Request-time processing runs once per synthesis request and performs
    specification analysis, synthesis, auditing, and reduction before
    ending either in \emph{Failure} or in a realized FlexBE state machine,
    loaded into FlexBE's Behavior Engine for execution. The
    running HFSM activates the state implementations and receives
    their outcomes in return, driving the next transition.
     }
    \label{fig:pipeline-overview}
\end{figure*}

\begin{itemize}
    \item \textbf{Preprocessing.} The manager inspects the ROS workspace
    for available FlexBE states and combines this with the capability
    configuration file to build a reusable capability catalog
    (\resec{pipeline_config}).
    \item \textbf{Capability-Based Specification Generation.}
    \refig{pipeline-overview} draws this as a single \emph{Generate GR(1) spec}
    box, but it bundles several sub-steps. It loads the request's relevant
    capability subset from the catalog, instantiates the propositions needed for the
    initial conditions and goal requirements, and generates the activation--outcome
    protocol and liveness constraints that make up the GR(1) specification
    (\resec{syn_spec}).
    \item \textbf{Specification analysis.} Before synthesis, the
    generated \texttt{slugs} specification can optionally be checked for
    assumption-falsification cases and responsible environment-assumption
    cores using a well-separation analysis (\resec{well-separation-module}).
    \item \textbf{Synthesis.} The specification is passed to the synthesizer.
    An unrealizable result ends the request, while a
    realizable result yields a raw Mealy strategy.
    The pipeline currently uses the \texttt{slugs} GR(1)-based synthesis tool.
    \item \textbf{Auditing.} The raw strategy is checked by the
    post-synthesis auditor (\resec{auditor}) for protocol violations and
    for regions from which the designer's goal has become structurally
    unreachable. A strategy that fails this check is not a deployable controller.
    \item \textbf{Reduction.} A strategy that passes audit is (possibly) reduced by
    merging behaviorally equivalent states. The implementation can repeat the
    local merge sweep to a fixed point, while retaining the same conservative
    merge test. \resec{reduction} proves this reduction preserves the
    strategy's executable behavior exactly.
    \item \textbf{Controller realization.} Drawn as a single
    \emph{Realize FlexBE HFSM} box, this stage bundles translating the
    reduced strategy into an executable FlexBE hierarchical state machine
    and generating an initial GUI layout for that state machine so it can
    be opened directly in the FlexBE WebUI. For the sequential supervisors
    evaluated here, each nonterminal reduced-automaton state that selects an
    executable capability configuration becomes a corresponding FlexBE
    state. Realization instantiates the FlexBE state
    implementation named by preprocessing and mechanically wires each concrete FlexBE
    outcome through the same outcome mapping used during specification
    generation to the reduced-automaton successor for the corresponding
    abstract outcome.
\end{itemize}

The formal equivalence result in this paper concerns the raw strategy and the
reduced automaton that realization consumes. At the capability interface, the
realized controller is generated to follow that reduced automaton: each step
activates a capability (and, for symbolic parameters, selects a value), the
corresponding FlexBE state invokes the relevant system capability, and its
reported concrete outcome is deterministically remapped to the abstract
outcome that selects the next reduced-automaton state. This lets us execute a
synthesized supervisor directly in FlexBE. \resec{reduction} proves that the
state-merging reduction step preserves the strategy's core output traces; the
subsequent strategy-to-HFSM realization is a mechanical construction from the
reduced automaton and the shared discrete abstraction, validated by the tests
and robot executions reported later.

\subsection{Slugs Example}
\label{sec:slugs-example}

Before \resec{demonstrations} introduces this paper's own
capability-generated specifications, it is worth working through the
Mealy-machine encoding, its mapping to a realized HFSM state, and the
\texttt{mealy2dot} visualization convention used throughout this paper on
a specification small enough to draw in full. We use the coffee-machine
example from Ehlers' introductory lecture on reactive
synthesis~\cite{Ehlers2026Video}, transcribed in \texttt{.structuredslugs} form in
\relist{slugs-example-spec}.

\begin{lstlisting}[
float={*ht},
basicstyle=\normalsize\ttfamily,
columns=fullflexible,
caption={A minimal coffee-machine {GR(1)} specification, transcribed
from~\cite{Ehlers2026Video}. The button press itself
(\texttt{bu}) is a raw environment input rather than the outcome of a
system-activated capability.},
label={lst:slugs-example-spec}]
[INPUT]
bu

[OUTPUT]
br
gr

[SYS_INIT]
gr <-> bu
! br

[SYS_TRANS]
br' <-> gr
gr' -> bu'

[ENV_TRANS]
bu' -> !gr & !br

[SYS_LIVENESS]
br

[ENV_LIVENESS]
bu
\end{lstlisting}

Every section of \relist{slugs-example-spec} is one term of \reqn{varphi},
using the same $\varphi_i, \varphi_s, \varphi_l$ notation defined in
\resec{synthesis} and \retab{slugs_spec}. The \texttt{[INPUT]} and
\texttt{[OUTPUT]} sections declare $AP_I=\{\texttt{bu}\}$ and
$AP_O=\{\texttt{br},\texttt{gr}\}$.

The remaining sections map directly to the GR(1) terms. The
\texttt{[SYS\_INIT]} section is $\varphi_i^g$,
\texttt{[SYS\_TRANS]} is $\varphi_s^g$, \texttt{[ENV\_TRANS]} is
$\varphi_s^a$, \texttt{[SYS\_LIVENESS]} is $\varphi_l^g$, and
\texttt{[ENV\_LIVENESS]} is $\varphi_l^a$. There is no
\texttt{[ENV\_INIT]} section, so $\varphi_i^a=\True$. The environment may
therefore start with \texttt{bu} either true or false.

This specification is realizable. Passed through this paper's own
\texttt{slugs}-based synthesis and extraction stages
(\resec{overview-stages}), it yields the three-state explicit strategy in JSON
form shown in \relist{slugs-example-json}.

\begin{lstlisting}[
float={*ht},
basicstyle=\normalsize\ttfamily,
columns={[c]fixed},
basewidth=0.5em,
caption={The explicit Mealy strategy \texttt{slugs} extracts for
\relist{slugs-example-spec} in the JSON format.
\texttt{variables} fixes the position of each proposition
within every \texttt{state} array. For each node, \texttt{trans} lists
the node IDs reachable in one step. The \texttt{rank} field records which
system liveness obligation the node is currently working toward, and is~0
throughout here because \relist{slugs-example-spec} declares a single
\texttt{[SYS\_LIVENESS]} goal.},
label={lst:slugs-example-json}]
{"version": 0,
 "slugs": "0.0.1",
 "variables": ["bu", "br", "gr"],
 "nodes": {
  "0": {"rank": 0, "state": [0, 0, 0], "trans": [0, 1]},
  "1": {"rank": 0, "state": [1, 0, 1], "trans": [2]},
  "2": {"rank": 0, "state": [0, 1, 0], "trans": [0]}
}}
\end{lstlisting}

\begin{figure}[t]
    \centering
    \includegraphics[width=\linewidth]{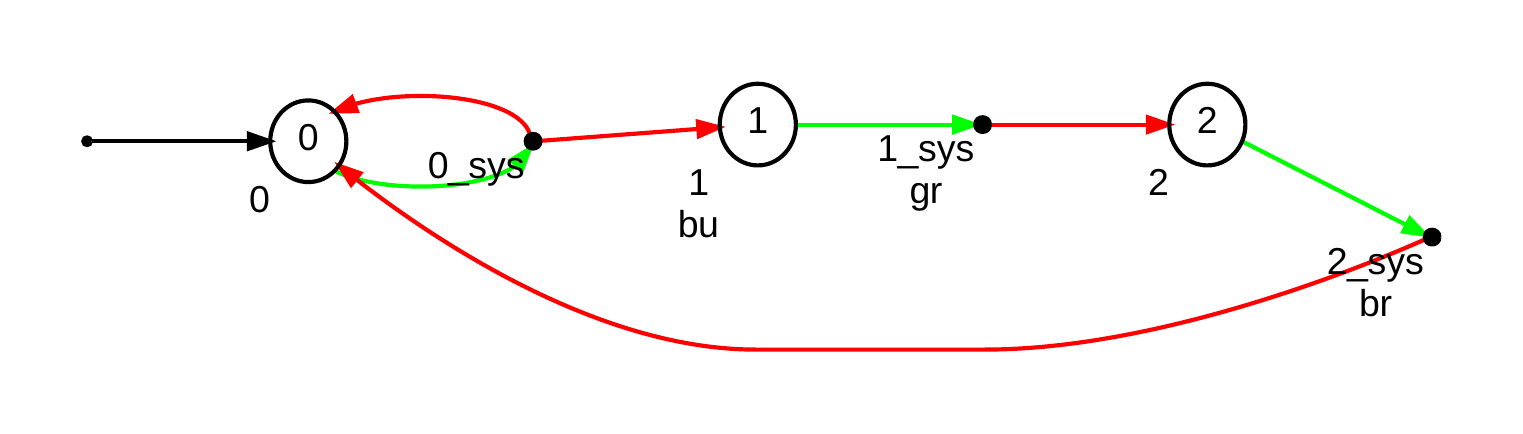}
    \caption{The Mealy strategy of \relist{slugs-example-json}, rendered
    by our \texttt{mealy2dot} tool in the visual convention used throughout this
    paper. Each raw \texttt{slugs} node is drawn as a linked pair: a
    large circle showing the node's ID and which input propositions are
    true (an environment-observation state), followed by a green
    env$\to$sys edge to a small filled dot showing the same node's ID
    suffixed \texttt{\_sys} and which output propositions it newly
    asserts (a system-activation state). A red sys$\to$env edge then
    leaves that dot for each of the node's \texttt{trans} successors,
    representing the environment's next choice. Node~0 is idle
    (\texttt{bu} false, no output asserted) and self-loops until the
    environment sets \texttt{bu}. Node~1 observes \texttt{bu}, and the
    system responds by asserting \texttt{gr}. Node~2 is entered with
    \texttt{bu} false again, and the system asserts \texttt{br} before
    returning to node~0.}
    \label{fig:slugs-example-mealy}
\end{figure}

Our \texttt{mealy2dot} tool used to generate \refig{slugs-example-mealy}
makes the env/sys alternation concrete:
\texttt{slugs} couples ``observe an input, then commit an output'' into a
single node, and \texttt{mealy2dot} deliberately un-collapses that
pairing into two drawn nodes so the alternation is visible. This same
pairing is exactly how the strategy is realized as an HFSM
(\resec{overview-stages}): each (circle, dot) pair becomes one FlexBE
state, the dot's asserted output is the capability that state activates,
and the state's outgoing transitions are keyed by which environment input
is reported back.
For \refig{slugs-example-mealy}, \texttt{bu} is a primitive, unstructured
environment input: the button press itself is given to the system for
free. \resec{coffee} extends this example
so that even noticing the button press is a system capability.

\FloatBarrier

%% file: Demonstrations.tex
\section{Demonstrations}\label{sec:demonstrations}

This paper presents four examples of synthesis using the modular pipeline.
This section introduces the demonstration domains used throughout the paper so
that the synthesis pipeline and specification choices can be described
concretely. We describe the experimental setup and show the synthesized state
machines executing here. \resec{discussion} compares specification-generation
variations.

All of the code required for these demonstrations is available open
source~\cite{flexbe-synthesis, flexbe-synthesis-demo}. Configuration files,
scripts, and raw trial data for the experiments reported below are published
with the demonstration artifacts; the exact tag and experiment-data commit are
listed in the Data and code availability declaration at the end of this paper.

\input{CoffeeDemo}

\input{TwoRiversDemo}

\input{PyroboSimDemo}

\input{CrazyFlieDemo}

%% file: CoffeeDemo.tex
\subsection{Extended Coffee Maker Demonstration}
\label{sec:coffee}

The basic demonstration starts with Ehlers' coffee-maker
example~\cite{bridginggap17, Ehlers2026Video}.
As described in \resec{slugs-example}, the original example
has an environmental input of \texttt{bu} for button, and system outputs of
\texttt{gr} and \texttt{br}, for grind and brew.
Our example extends this to include a button detection capability, \texttt{bd}, with
potential failure modes, and regular activation and completion variables as described in \resec{syn_spec}.
We use FlexBE \texttt{OperatorDecisionState}s to model the coffee-maker capabilities,
as shown in \refig{coffee_sm} using the capability configuration
excerpt in \relist{coffee-config}.

\begin{figure*}[tb]
    \centering
    \includegraphics[
    trim=3px 2px 0 0,
    clip, width=0.9\linewidth]{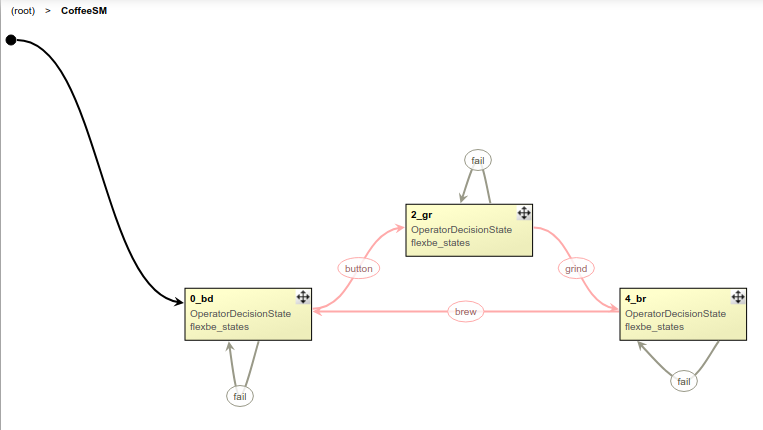}
    \caption{
    The realized FlexBE state machine for the extended coffee-maker example.}
    \label{fig:coffee_sm}
\end{figure*}

\begin{lstlisting}[
float={*ht},
basicstyle=\normalsize\ttfamily,
columns=fullflexible,
caption={Capability configuration excerpt for the extended Coffee demonstration,
showing an \texttt{OperatorDecisionState} capability with preconditions,
postconditions, parameters, and outcome autonomy levels.},
label={lst:coffee-config}]
capabilities:
  gr:
    interface: OperatorDecisionState
    parameters:
      outcomes:
        - grind
        - fail
      hint: "Grind"
      suggestion: grind
    preconditions:
      - "bd"
    postconditions:
      completed:
        - '!bd' # Clears button memory
        - 'gr'  # Sets memory flag
      failure:
        - "bd"  # Preserve button detected
    autonomy:
      # Autonomy level per outcome (1=low)
      grind: 2  # High autonomy required
      fail:  3  # Full autonomy required
\end{lstlisting}

In this example, the desired behavior does not have any finished or failed
outcomes, so the synthesized state machine can run forever as shown in
\refig{coffee_sm}. After brewing, it transitions back to the button detection
state.

In \resec{discussion}, we compare the basic capability version with the extended version that models failure modes.
Because the basic Coffee capability set has no failure-prone capabilities, its
liveness variants coincide; that case therefore informs only the encoding axis.
The impact of various encodings of the liveness conditions is discussed.

\FloatBarrier

%% file: TwoRiversDemo.tex
\subsection{Two-Rivers Demonstration}
\label{sec:two-rivers}

Like Coffee, this demonstration shown in \refig{two-rivers-flexbe}
uses a compact, hand-tractable domain
deliberately, not because the underlying problem is trivial to solve, but
because its small size makes it a controlled testbed. Encoding and
liveness effects that would be hard to isolate amid a larger domain's own
complexity show up here directly, and \resec{discussion} exploits exactly
that property for the encoding/liveness comparisons that follow
(\resec{pyrobosim} then tests whether the same conclusions hold as domain
complexity grows).

\begin{figure*}[!bt]
  \centering
  \includegraphics[width=0.8\linewidth]{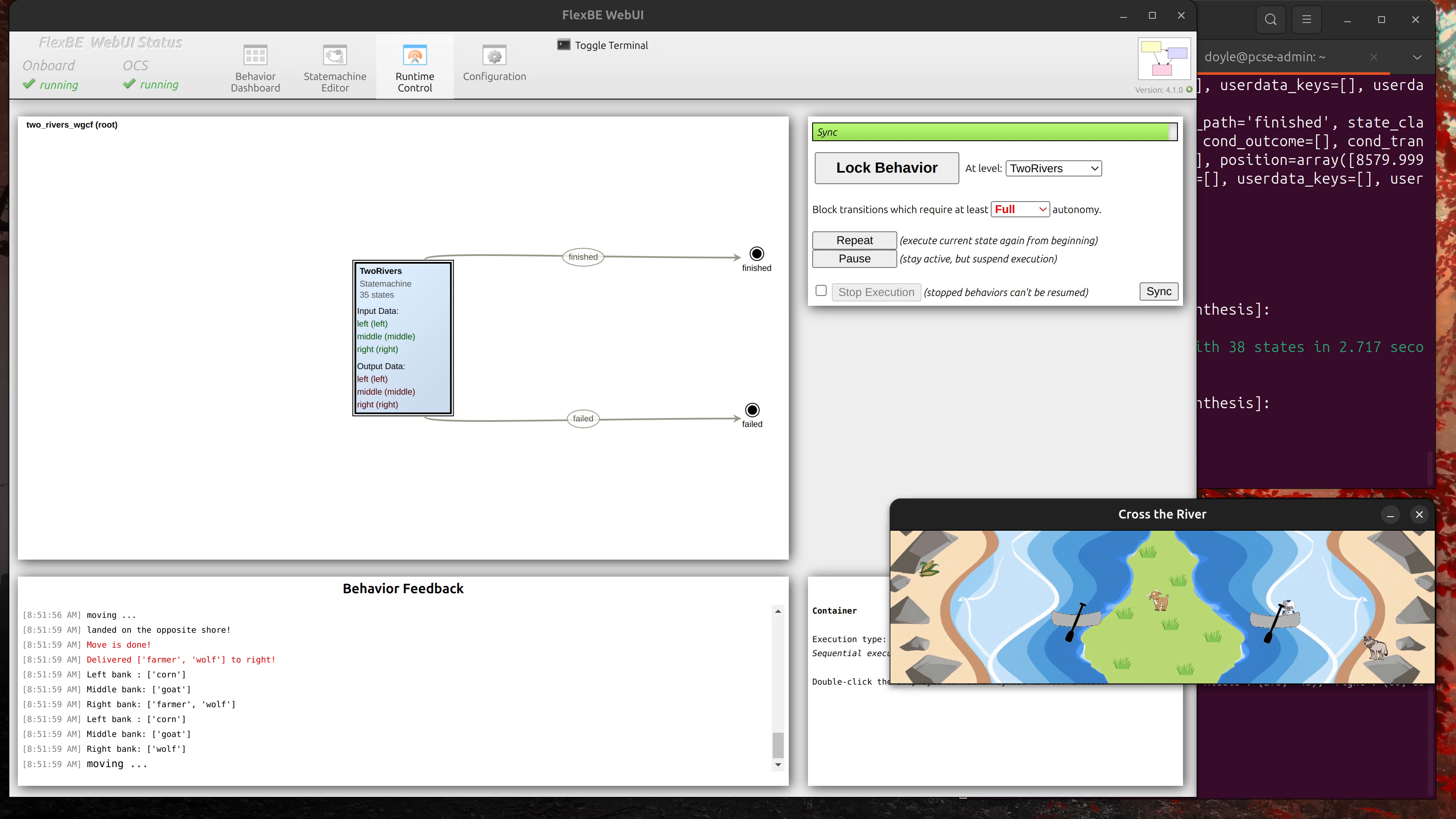}
  \caption{
     In this variation of the classic logic puzzle, the farmer must cross two rivers
  without leaving the
  goat unsupervised with either the corn or the wolf.
  This image shows a screenshot during execution of the synthesized state machine in FlexBE.
  (visualization adapted from~\cite{wgcf24})}
  \label{fig:two-rivers-flexbe}
\end{figure*}

This demonstration modifies an example from the literature.
Where~\cite{wgcf24} hard-coded the item moved by each state, our
approach uses enumerated values in \texttt{slugs} to enable more general parameterizations~\cite{flexbe-synthesis-demo}.
These enumerated values are then mapped via the capabilities file to
specific parameter values in the instantiated state machine.  Instead of simple \True/\False
\ to indicate item position and separate specifications for
\texttt{move\_goat}, \texttt{move\_wolf}, ... as used
in~\cite{wgcf24}, this approach uses a single \texttt{MoveState2R} capability and
numeric positions (e.g. 0=left bank, 1=middle, and 2=right bank) for the variables
(\texttt{wolf}, \texttt{goat}, \texttt{corn}, \texttt{farmer}, and \texttt{move\_dest}),
along with an enumerated \texttt{move\_item} to designate which item to move
 (e.g. 0=farmer alone, 1=wolf, 2=goat, 3=corn).
 \relist{two-rivers-config} shows an excerpt of the capability configuration
 file that maps these symbolic values to FlexBE state parameters.

\begin{lstlisting}[
float={*hbt},
basicstyle=\normalsize\ttfamily,
columns=fullflexible,
caption={Capability configuration excerpt for the Two-Rivers demonstration.
The \texttt{@} notation marks values supplied by the synthesized controller.},
label={lst:two-rivers-config}]
name: 'two_rivers_enumerated'
variable_mappings:
  # Use these to map enumerated binary
  # values to parameter values
  env_props:
    wolf:
      0: 'left'
      1: 'middle'
      2: 'right'
    goat:
      0: 'left'
      1: 'middle'
      2: 'right'

capabilities:
  move_state:
    interface: MoveState2R
    parameters:
      item: '@move_item'
      tracker: farmer
      destination: '@move_dest'
    transition_relation:
      # activate log on any failure
      failure: [log_failed]
    postconditions:
      ['@wolf', '@goat', '@corn', '@farmer']
\end{lstlisting}

The capability designates which FlexBE state implementation to use as an interface
and specifies parameter values.
The '@' symbol denotes retrieval of the mapped variable from the enumerated synthesis value returned by \texttt{slugs}
instead of a simple constant.
The transition relation in this example requires that any failures are logged.

Where~\cite{wgcf24} completely specified all GR(1) formulas for all FlexBE states,
our approach generates most of the specifications related
to FlexBE states and transitions, and only requires the user to specify the specific rules of the game.
In this case, the rules require that the goat
must never be left with either the wolf or the corn unless supervised by the farmer.

 \begin{figure*}[t]
   \centering
   \includegraphics[width=0.9\linewidth]{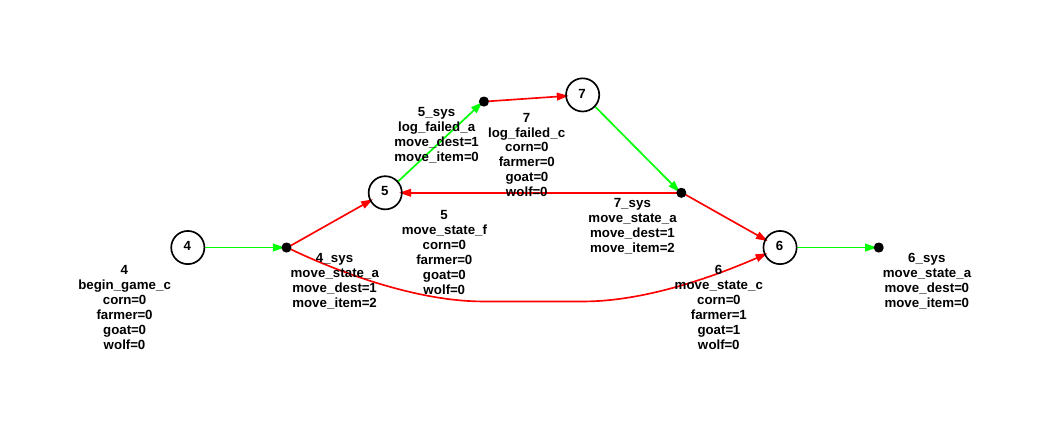}%
   \caption{Excerpt of the Two-Rivers symbolic state machine synthesized by the \texttt{slugs} tool,
 using the \texttt{mealy2dot} visual convention introduced in \refig{slugs-example-mealy}
 (\resec{slugs-example}). Large circles are environment-observation states, small dots are
 system-activation states, green edges are env$\to$sys transitions, and red edges are sys$\to$env
 transitions. Only \True\ and enumerated values are shown.
  The excerpt comes from the one-hot, System-Goal, no-pending configuration
  (\textbf{2S}) under the alphabetic declaration ordering.
  In this case, states \texttt{4\_sys} and \texttt{7\_sys} can be identified to reduce the state machine size.
  The entire file is available with our open source release.}
   \label{fig:two-rivers-mealy}
\end{figure*}

\refig{two-rivers-mealy} shows a portion of the representation
of the synthesized Mealy state machine for the one-hot System-Goal
configuration, which has 37 states.
As shown in \refig{two-rivers-mealy}, states \texttt{4\_sys} and \texttt{7\_sys} are equivalent
because they have the same output labeling and transitions. The state machine
can be reduced by identifying other equivalent states and
updating the transitions using our state reduction module in the pipeline.
Executing the FlexBE state machine, which had only 23 states after identifying the equivalent states
(as for every realizable capability-generated Two-Rivers configuration),
results in the safe crossing of both rivers.

\refig{two-rivers-flexbe} shows a screenshot of the synthesized
state machine executing in FlexBE along with the corresponding visualization.
A central feature of this approach is that the synthesized behavior
is inherently reactive rather than a fixed sequential plan.
In this demonstration, FlexBE is used to inject failures, for example by
emulating a crossing failure due to high wind as a forced failure outcome.
 The synthesized reactive controller successfully delivers all
 items across both rivers despite repeated, though not continual, failures.
 This experiment reproduces the core scenario presented in prior work~\cite{wgcf24},
 but within a simplified software framework and extended
 beyond simple binary propositions. The results further demonstrate
 the extraction of parameterized information from the synthesis process
 and enable direct execution of the synthesized state machine without
 manual modification once the capabilities and specifications are defined.
 \resec{discussion} discusses the comparison of different specification
 generation approaches enabled by the work presented in this paper.

\FloatBarrier

%% file: PyroboSimDemo.tex
\subsection{Logistics Demonstration}
\label{sec:pyrobosim}

The logistics demonstration provides a larger simulated domain built
on the PyRoboSim 2D simulator~\cite{Castro2022}.
The example is adapted from the ``Problem~1'' scenario from the ROSCon~2024
deliberation workshop and was used previously to validate the software pipeline
from an implementation perspective~\cite{eit26-synthesis}.
Here, we use it to illustrate how parameterized capabilities and custom
environment dynamics are represented before synthesis.
The world tracks six items (\texttt{bread0}, \texttt{snacks0}, \texttt{soda0},
\texttt{butter0}, \texttt{waste0}, \texttt{waste1}) across rooms including a
kitchen, dining room, closet, and office (Fig.~\ref{fig:pyrobosim-initial},
left). \texttt{butter0} begins inside the fridge and \texttt{bread0} inside the
pantry, and the hallway doors along the route the robot must travel are open.
The synthesis request asks the supervisor to retrieve \texttt{butter0} and
\texttt{bread0} from the kitchen and place them on the dining table, using the
unmodified FlexBE states provided in the workshop repository.

\begin{lstlisting}[
float={*hbt},
basicstyle=\normalsize\ttfamily,
columns=fullflexible,
caption={Capability configuration excerpt for the logistics demonstration
(\texttt{p1\_capabilities.yaml}). \texttt{grab\_item}'s \texttt{object\_name}
parameter is bound to \texttt{\textquotesingle @pick\textquotesingle}, an enumerated output the synthesized
controller sets to select which of the six tracked items to grab, rather
than a value fixed in the capability configuration.},
label={lst:pyrobosim-config}]
grab_item:
    interface: PickActionState
    parameters:
        robot_name: robot
        action_topic: /execute_action
        timeout: 2.5
        object_name: '@pick'
    transition_relation:
      failure:
        - log_grab_failed
place_item:
    interface: PlaceActionState
    parameters:
        robot_name: robot
        action_topic: /execute_action
        timeout: 2.5
    transition_relation:
      failure:
        - log_place_failed
\end{lstlisting}

We use two capability configurations of this domain, differing in whether
storage-door state is modeled. P0 abstracts doors away entirely. Its
capabilities are \texttt{move\_robot}, \texttt{grab\_item},
\texttt{place\_item}, and logging (\relist{pyrobosim-config}), and its
specification declares no door propositions, so the synthesized supervisor
never reasons about opening a storage location. P1 extends P0 with an
\texttt{open} capability and its own failure logging, and initializes the
fridge and pantry as closed, so the supervisor must open a container before it
can retrieve from it as shown in \refig{pyrobosim-initial}.
The P0 experiments reported below therefore do not
depend on the simulator's door configuration. P1 is used separately to
evaluate the added door-modeling capability.
The configuration maps each capability to a FlexBE state implementation and
binds symbolic parameters to controller outputs. \texttt{grab\_item}'s
\texttt{object\_name} parameter is bound to
\texttt{\textquotesingle @pick\textquotesingle} rather than a
fixed value, so which of the six tracked items to grab is chosen by the
synthesized controller through an enumerated output value.

Custom environment constraints encode the discrete task semantics that are not
contained in the generic capability template. These include the effects of
successful and failed motion and co-location requirements for grasping
objects. For example, \texttt{grab\_item} requires the robot and the target object to
share a location. In P1 they additionally include door-state conditions that
prevent retrieval until the relevant storage location has been opened. The
pipeline combines these domain-specific rules with the automatically generated
activation--outcome protocol and liveness constraints.

\begin{figure*}[tb]
  \centering
  \includegraphics[width=0.49\linewidth]{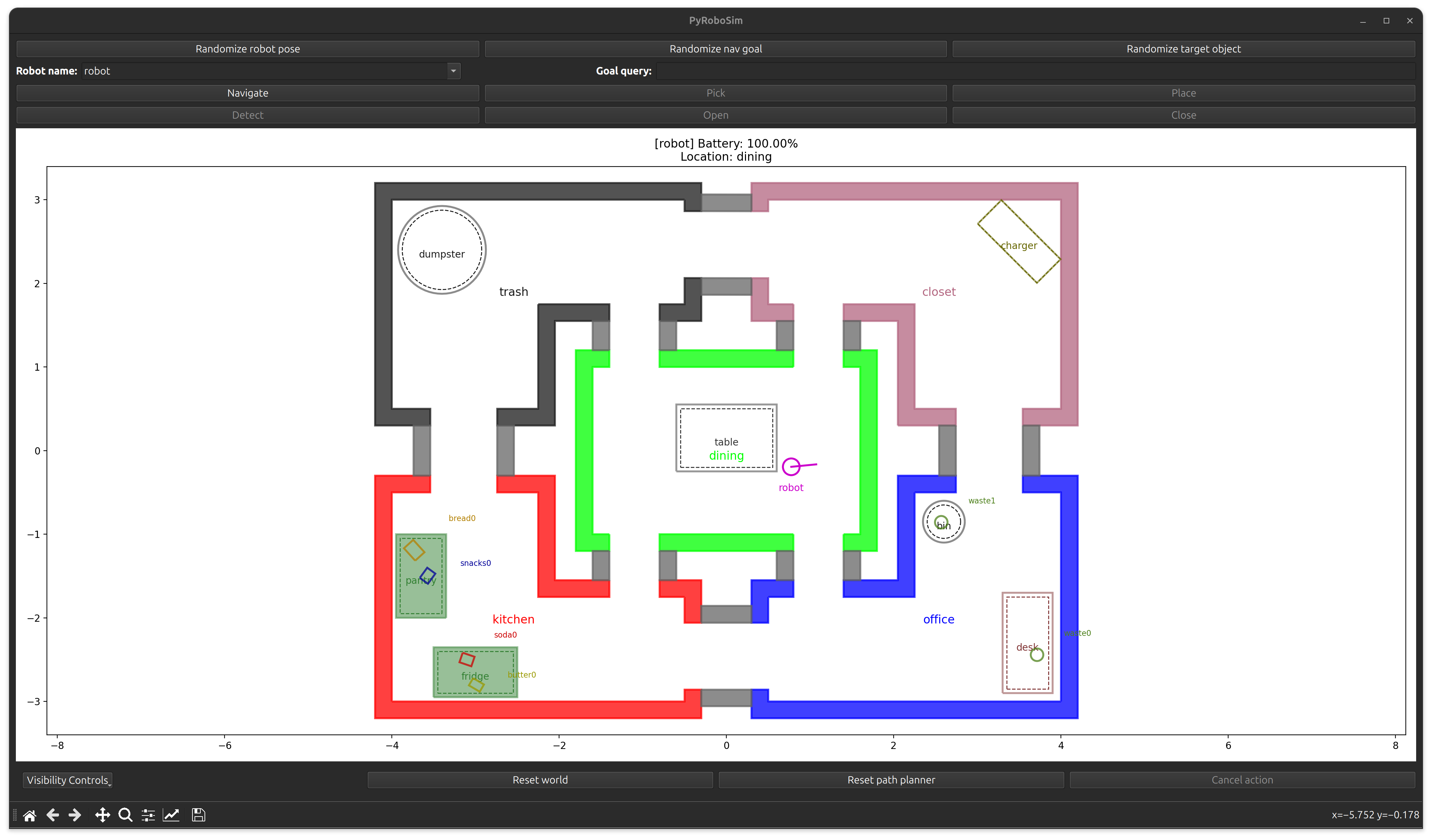}
  \includegraphics[width=0.49\linewidth]{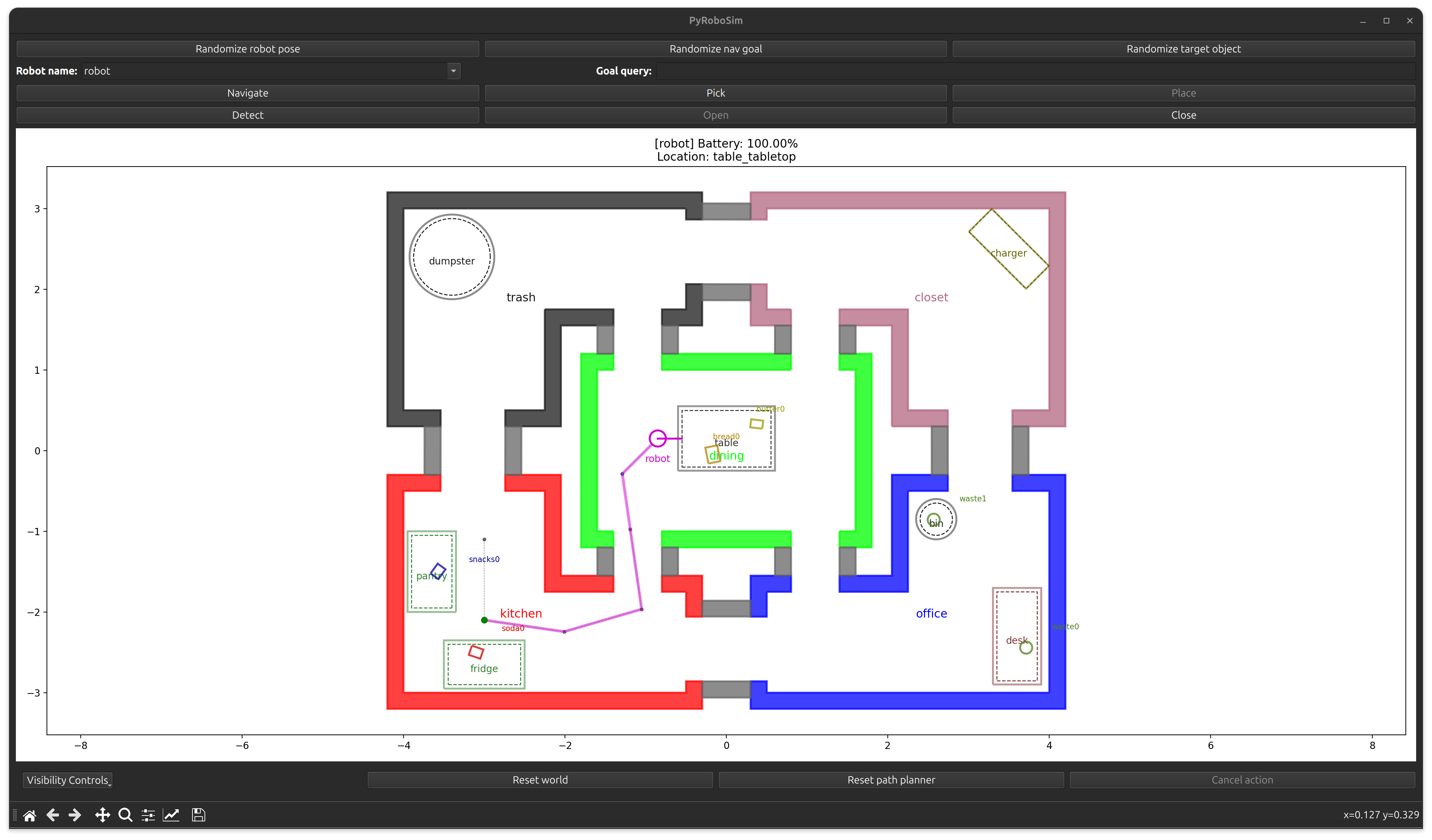}
  \caption{
  A simulated execution of the P1 configuration, which models
  storage-door state. In the initial state (left), \texttt{butter0} is in the
  fridge and \texttt{bread0} in the pantry, both closed, among the four other
  tracked items placed throughout the world. The synthesized controller opens
  both containers and delivers the two items to the dining table (right).
  The principal synthesis comparison of \resec{pyrobosim-p0-results} uses the
  P0 configuration, which abstracts doors away, and \resec{pyrobosim-p1-results}
  reports this door-modeling P1 configuration.
  }
  \label{fig:pyrobosim-initial}
\end{figure*}

\begin{figure*}[tb]
  \centering
  \includegraphics[width=0.8\linewidth, trim=7 0 5 0, clip]{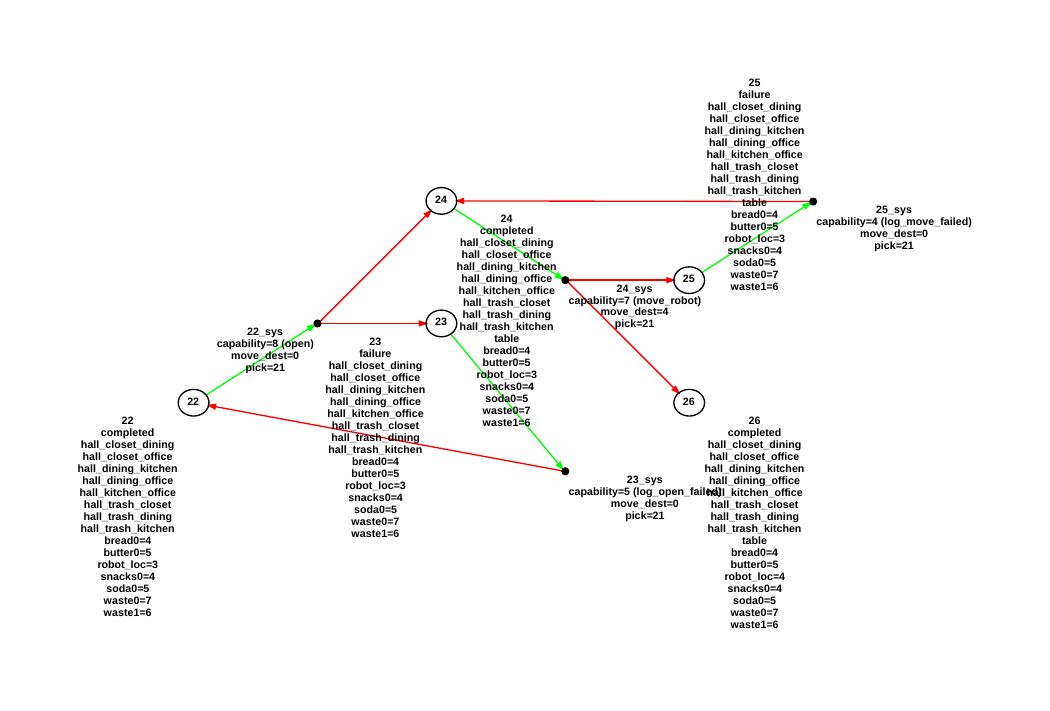}
  \caption{Excerpt of the PyRoboSim P1 symbolic state machine synthesized by
  the \texttt{slugs} tool under the enumerated capability encoding, prior to
  reduction, using the \texttt{mealy2dot} visual convention introduced in
  \refig{slugs-example-mealy} (\resec{slugs-example}). Capability activation
  is a single \texttt{capability} value under this encoding rather than a
  per-capability boolean, so each system state is annotated with the
  activated index and its name. States 22--26 show the \texttt{open}
  capability (\texttt{capability=8}) activated at state 22. On failure
  (23), \texttt{log\_open\_failed} (\texttt{capability=5}) retries by
  re-activating \texttt{open}. On success (24), the controller proceeds to
  \texttt{move\_robot} (\texttt{capability=7}), which itself retries through
  \texttt{log\_move\_failed} (\texttt{capability=4}) on failure before
  reaching state 26.}
  \label{fig:pyrobosim-mealy}
\end{figure*}

The resulting P1 GR(1) specification produces a 46-state Mealy machine.
A post-processing pipeline module performs state-merging reduction by
identifying equivalent states and updating transitions,
reducing the controller to 27 states as shown in \refig{pyrobosim-mealy}.
The reduced strategy is realized as an HFSM and executed directly in FlexBE.
\refig{pyrobosim-initial} shows an execution of the corresponding P1
controller, which additionally opens the fridge and pantry before retrieving
the two items and delivering them to the table.

This demonstration provides a larger simulated task-level domain for exercising
the same capability abstraction and realization pipeline used in the other
examples. \resec{pyrobosim-results} reports quantitative synthesis results
for the P0 configuration across the same encoding/liveness sweep used in
the other case studies. \resec{pyrobosim-p1-results} extends this to the
door-modeling P1 configuration shown in \refig{pyrobosim-initial}.

\FloatBarrier

%% file: CrazyFlieDemo.tex
\subsection{Quadcopter Hardware-Integration Demonstration}
\label{sec:crazyflie}


The final demonstration shows execution of a synthesized supervisory
controller on hardware using two different quadcopters:
a Bitcraze Crazyflie 2.1+ nano-quadcopter~\cite{Giernacki2017}
and a larger ``PiHawk'' drone using a PixHawk flight controller~\cite{MeierPixhawk2011}
and ArduCopter firmware~\cite{ArduPilot}.
The purpose of this case study is to demonstrate integration of the
discrete-event supervisor with asynchronous ROS~2 capabilities and reported
execution outcomes.
Flight primitives are exposed as synthesizable capabilities by the
open-source ``Flexible Drones'' ROS~2 system, which provides trajectory
planning and real-time control through ROS~2 action interfaces and runtime
feedback~\cite{flexible-drones26, flexible-drones-git}.

\begin{figure*}[t]
    \centering
    \begin{minipage}[t]{0.370\textwidth}
        \centering
        \includegraphics[width=\linewidth,height=2.7in,keepaspectratio]{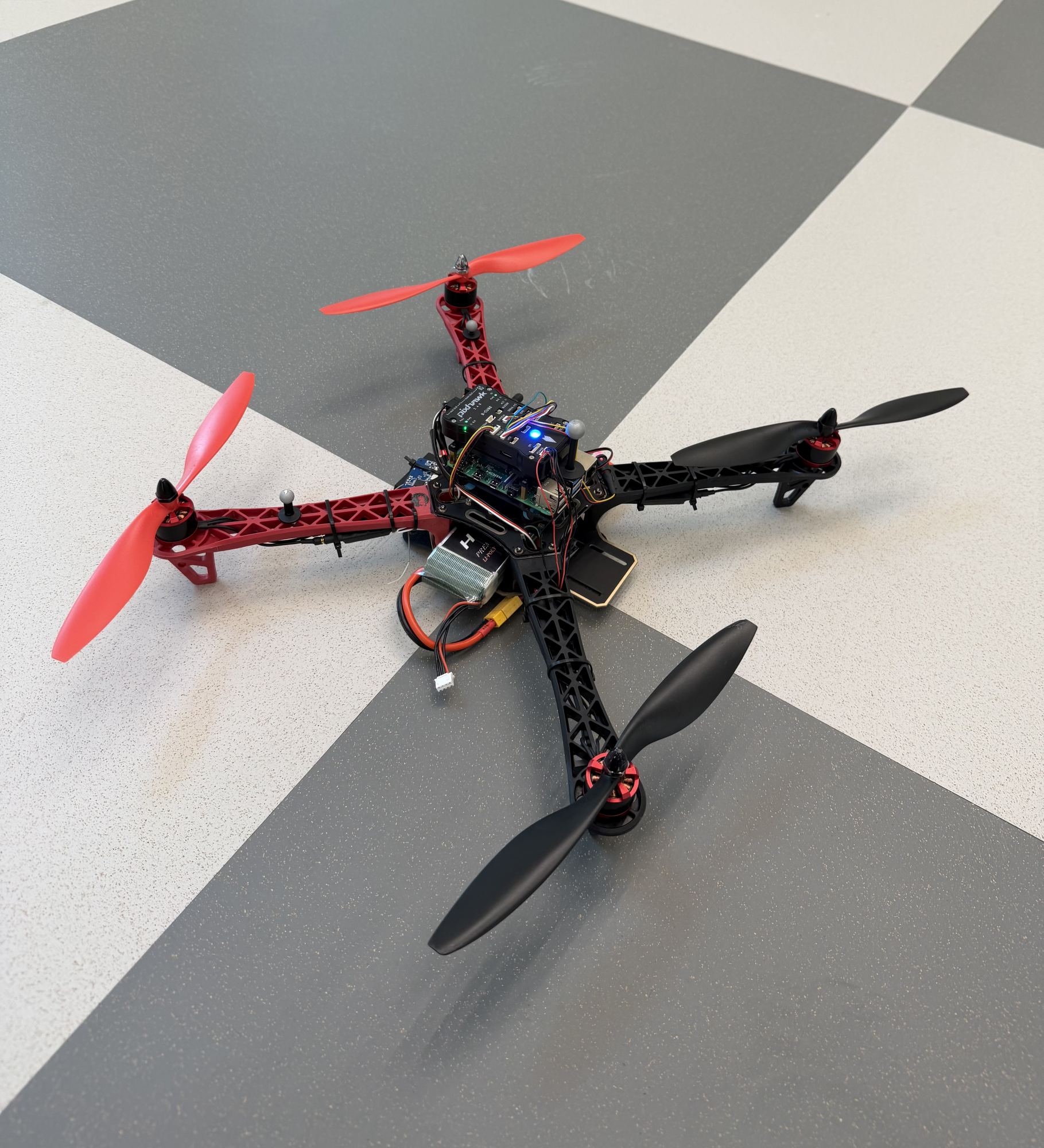}
        \caption{``PiHawk'' drone
        platform used in this case study.}
        \label{fig:pihawk-static}
    \end{minipage}\hfill
    \begin{minipage}[t]{0.580\textwidth}
        \centering
        \includegraphics[width=\linewidth,height=2.7in,keepaspectratio]{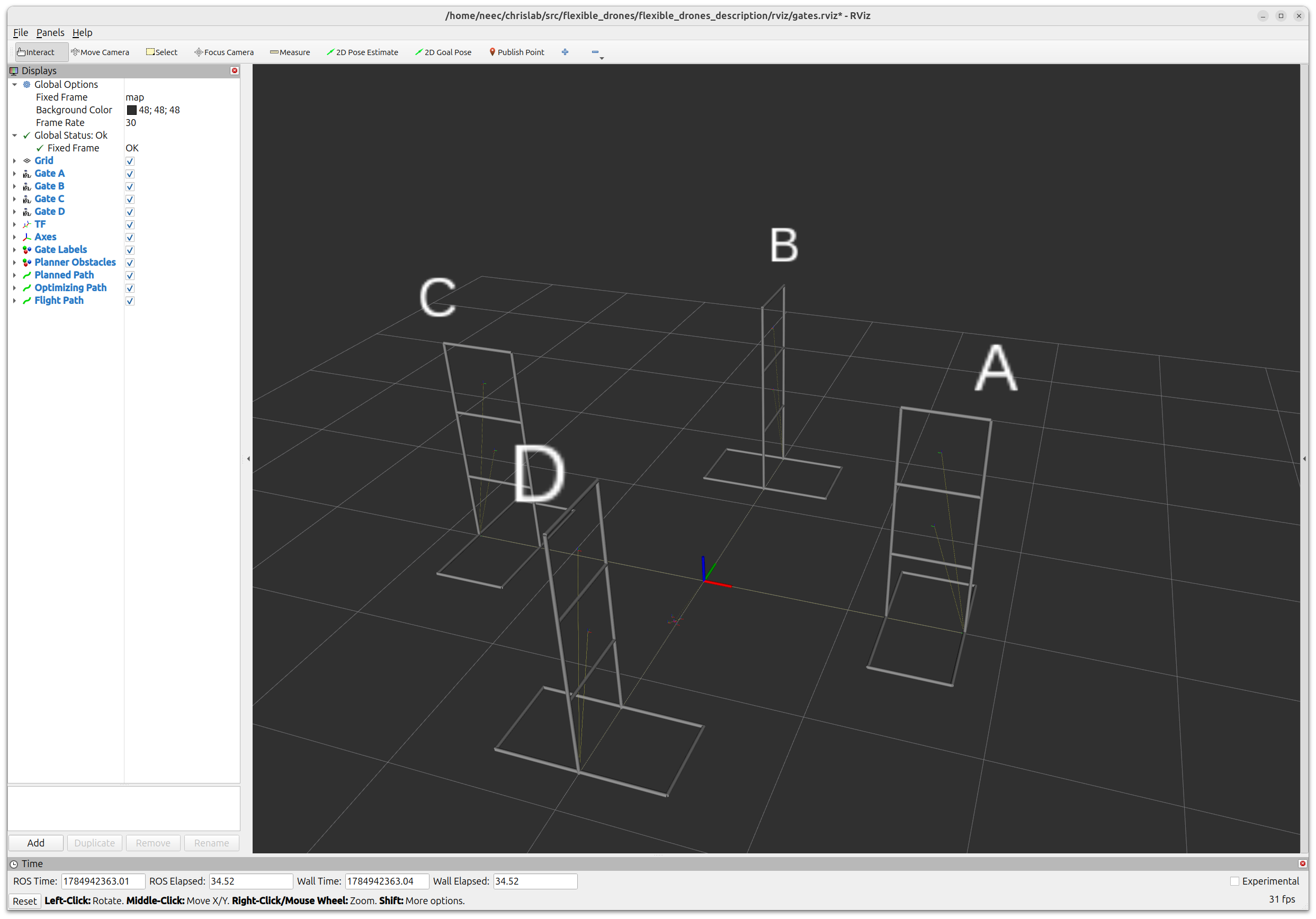}
        \caption{RViz screenshot showing the layout of labeled gates in our flight space.}
        \label{fig:gates-layout}
    \end{minipage}
\end{figure*}

Failure-handling behavior is checked by the post-synthesis auditor
(\resec{auditor}), which exhaustively analyzes the defined strategy-level
defect classes over every reachable state and transition rather than a
sampled subset. We also forced capability-failure outcomes in simulation
and confirmed the supervisor's response matched the audited strategy in
each case. Across more than a dozen individual flights conducted for this
demonstration, most runs completed via the all-success branch under nominal
pose tracking and battery state.
\resec{crazyflie-failure-validation} reports the exceptions, where real
pose-tracking dropouts on the PiHawk platform exercised the supervisor's
audited failure-handling branch on hardware. Because the supervisor's response to a reported outcome
is fixed by that audited strategy rather than learned or heuristic, repeating
an otherwise-identical flight would mainly sample the reliability of the
surrounding infrastructure, principally the OptiTrack pose-tracking system,
rather than exercise new supervisor behavior.

Battery depletion, localization dropout, and safety-triggered forced landings
are environment-chosen outcomes in the sense of \resec{syn_spec}. The state
implementation reports them, and lower-level mechanisms bound how long a
physical failure can go undetected, since each FlexBE state enforces its own
execution timeout and the flight stack runs independent safety watchdogs, for
example on loss of external pose tracking or excessive trajectory deviation.
Synthesis guarantees that the supervisor responds correctly to whichever
outcome is reported. The reliability of the underlying capability, and whether
its fault-detection layer reports the correct outcome under degraded rather
than fully lost sensing, are properties of the capability implementation and
the flight stack rather than of the supervisor considered in this paper.

Four gates, labeled `A'-`D', each with two directional ``doors,''
 labeled `1' (bottom) and `2' (top),
 are defined within the
tracked workspace (see Figs.~\ref{fig:pihawk-flight}
and~\ref{fig:gates-layout}).
We consider defined
forward and reverse (`R') directions through each door for
sixteen traversal checkpoints.
The objective is to synthesize a supervisory controller that plans and
executes five sequential trajectory segments,
Home$\rightarrow$G1$\rightarrow$G2$\rightarrow$G3$\rightarrow$G4$\rightarrow$Home,
through four designated gates.
For example, a flight `\mbox{A1-B2R-D2-C2R}' would travel first forward through `A1',
then in reverse through `B2', and so forth, as shown in \refig{crazyflie-flexbe}.

The trajectory generation capability is implemented as a ROS~2 action server that computes
seventh-order polynomial segments with snap continuity~\cite{MellingerKumar11, flexible-drones26, flexible-drones-git}.
The optimization considers obstacle avoidance around the gate structure.
The synthesized high-level controller must request each segment generation,
handle potential planning failures,
compose segments into a consistent flight plan, and coordinate execution
under reactive assumptions about action outcomes.

Synchronization between the discrete supervisor and continuous flight
execution follows FlexBE's standard action-client pattern. On entering a
state, the FlexBE implementation sends the corresponding ROS~2 action goal
(e.g., segment generation or trajectory execution) and polls the action
client's status on each subsequent execution cycle rather than blocking. This
keeps the supervisor responsive to operator input (\refig{crazyflie-flexbe})
while the action runs. Feedback messages update internal state but do not
themselves trigger a transition. Only the action's terminal result, mapped
to the corresponding activation-outcome proposition (\resec{syn_spec}),
causes the state implementation to return an outcome and the synthesized
strategy to select the next state.
The full message-level sequencing between the supervisor and the underlying
ROS~2 action interfaces, including goal, feedback, and result exchanges
across the five-segment flight, is documented in the companion Flexible
Drones system description~\cite{flexible-drones26}. This paper's
contribution is the synthesis pipeline and the correctness properties
governing the discrete decisions made at each step, not the underlying
ROS~2 action architecture.

\begin{figure*}
    \centering
    \includegraphics[width=0.75\linewidth]{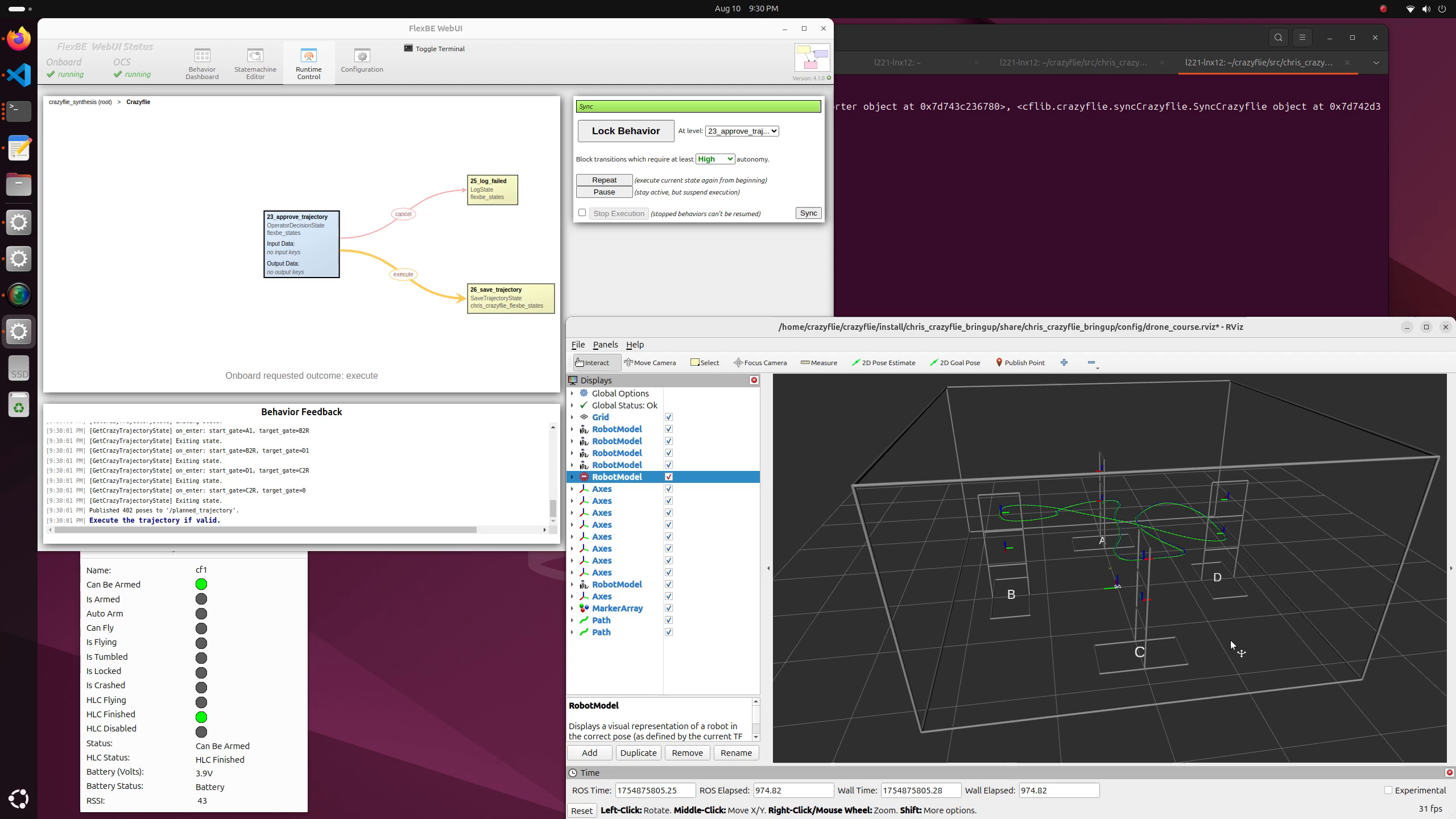}
    \includegraphics[width=0.75\linewidth]{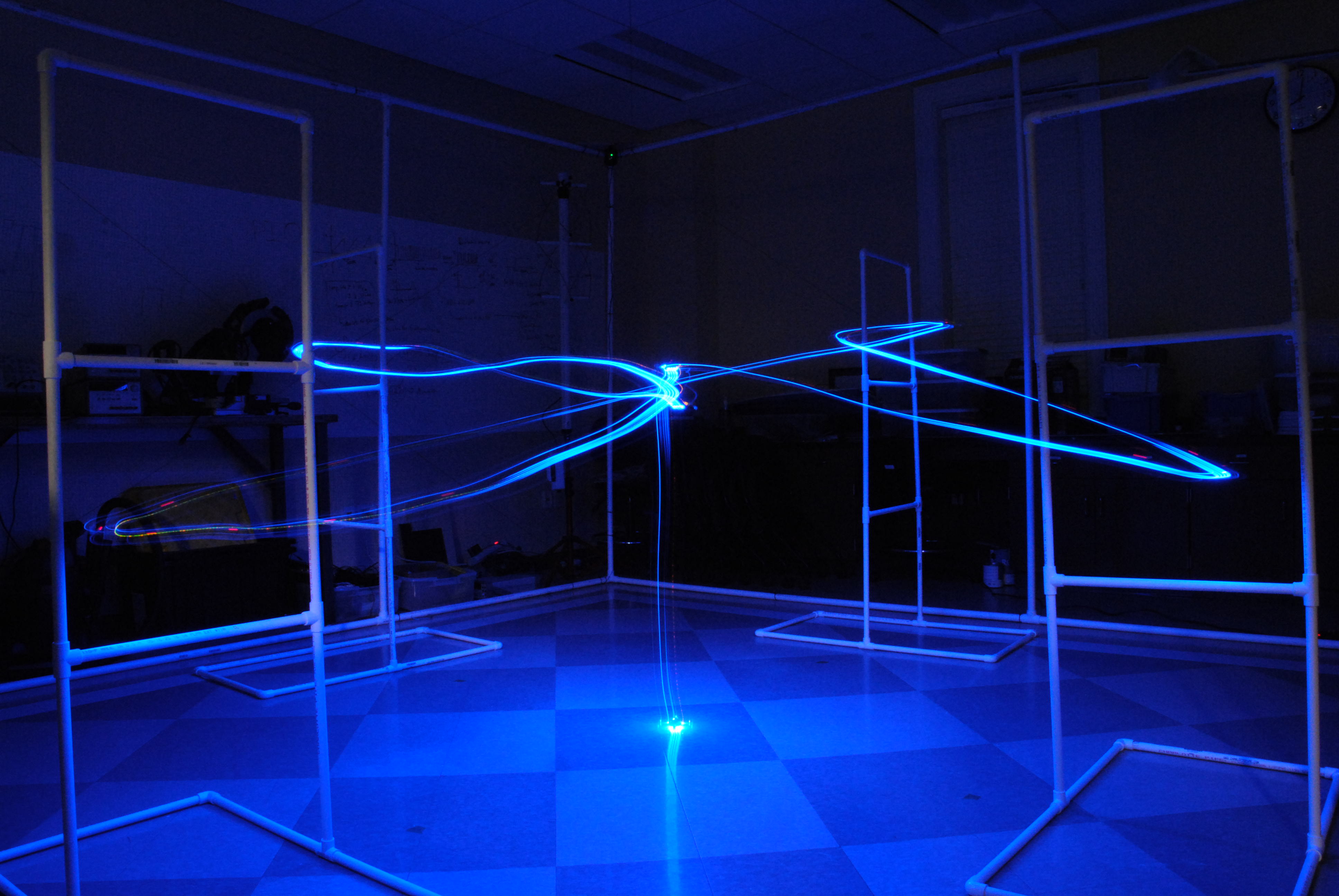}
    \caption{ The screenshot (top) shows FlexBE execution using collaborative autonomy
    to solicit operator approval before
    executing the planned trajectory on a Crazyflie nanocopter (bottom).}
    \label{fig:crazyflie-flexbe}
\end{figure*}

Prior to execution, the complete planned trajectory is published for operator approval in RViz,
providing a simple demonstration of FlexBE’s collaborative autonomy (\refig{crazyflie-flexbe}).
Upon approval, under nominal operation, the controller uploads the trajectory, arms the vehicle, performs takeoff,
executes the plan, returns home, lands, and disarms.
Capability-level failures are logged, and how the flight proceeds afterwards
depends on which capability failed. A pre-flight failure, such as segment
generation, trajectory upload, or arming, is logged and terminates the
behavior. An in-flight trajectory-execution failure sequences a
recovery landing instead, as shown in \relist{crazyflie-config}. A takeoff or
return-home failure defers to the operator, who may either confirm that the
vehicle is nonetheless airborne, or at the route start, and continue, or
abort.

\begin{lstlisting}[
float={*ht},
basicstyle=\normalsize\ttfamily,
columns=fullflexible,
caption={Capability configuration excerpt for the hardware demonstration
(\texttt{drones\_gates.yaml}). An in-flight \texttt{execute\_trajectory}
failure forces a transition to \texttt{log\_execution\_failed}, whose
postcondition \texttt{execution\_failure\_logged} is in turn a precondition of
\texttt{recover\_land}. The recovery landing is therefore sequenced by the
specification itself, rather than by handler logic inside any one state
implementation.},
label={lst:crazyflie-config}]
execute_trajectory:
    interface: ExecuteTrajectoryState
    parameters:
        timeout: 30.0
    preconditions:
        - flying
        - at_home
        - uploaded
        - '!execution_complete'
    postconditions:
        completed:
            - execution_complete
    transition_relation:
        failure:
            - log_execution_failed
log_execution_failed:
    interface: LogState
    parameters:
        text: Trajectory execution failed
    preconditions:
        - execute_trajectory_f
    postconditions:
        - execution_failure_logged
recover_land:
    interface: LandState
    preconditions:
        - flying
        - execution_failure_logged
    postconditions:
        - '!flying'
        - recovery_landed
\end{lstlisting}

This experiment, and the one pictured in \refig{pihawk-flight},
 highlights the nontrivial aspects of synthesis in hardware:
environment-controlled action outcomes (e.g., planner success/failure),
liveness assumptions that affect realizability, and
tight coupling between symbolic decision-making and real-time flight control.
Both Figs.~\ref{fig:pihawk-flight} and~\ref{fig:crazyflie-flexbe} show
interfacing with system capabilities and
 autonomous execution under supervision of the synthesized state machine.
Quantitative comparisons across encoding and liveness
variants are presented in \resec{discussion}.

\FloatBarrier

%% file: Pipeline.tex
\section{Synthesis Pipeline}
\label{sec:pipeline}

\resec{pipeline-overview} introduced this pipeline's stages and the
capability vocabulary needed to read the demonstrations
(\resec{demonstrations}). This section follows the stages shown in
\refig{pipeline-overview}: preprocessing, capability-based specification
generation, specification analysis, the \texttt{slugs} synthesis wrapper
that produces the raw strategy audited in \resec{auditor}, and controller
realization after auditing and reduction.

The \texttt{synthesis\_manager} is a ROS~2 node that interfaces with the FlexBE WebUI
via a ROS~2 action interface
and orchestrates the pipeline stages introduced in \resec{overview-stages},
operating over the capability and capability-configuration-file vocabulary
defined in \resec{overview-capability}.
\label{sec:architecture}

Named pipeline modules are declared in YAML with explicit input-output
interfaces, enabling validation and extensibility. Users can implement
custom preprocessing or synthesis modules without modifying the core
manager. \relist{preprocess-module} and \relist{process-module} show
representative examples of each. Both are excerpted from the Coffee
demonstration's pipeline configuration and use the same named, typed
input/output convention.

\begin{lstlisting}[
float={*t},
basicstyle=\normalsize\ttfamily,
columns=fullflexible,
caption={A preprocessing module declaration (\resec{pipeline_config}): the
\texttt{generate\_discrete\_abstraction} stage of the preprocessing chain, with its
typed input and output interface.},
label={lst:preprocess-module}]
- entry_point:
    name: generate_discrete_abstraction
    inputs:
        system_capabilities: SystemCapabilities
        state_implementations_used: StateImplementationsUsed
        behaviors_used: BehaviorsUsed
    outputs:
        discrete_abstraction: DiscreteAbstraction
\end{lstlisting}

\begin{lstlisting}[
float={*t},
basicstyle=\normalsize\ttfamily,
columns=fullflexible,
caption={A request-time processor module declaration from the same
pipeline: the core \texttt{slugs} synthesis stage
(\resec{synthesis-module}), showing the same typed interface
convention applied to a later pipeline stage.},
label={lst:process-module}]
- entry_point:
    name: slugs_synthesizer
    inputs:
        specs_output_dir_path: FilePathStr
        spec_name: NameStr
        slugs_timeout_s: float
    outputs:
        realizability_result: dict
        raw_strategy: MealyStrategy
        error_code: SynthesisErrorCode
\end{lstlisting}

\subsection{Preprocessing}
\label{sec:pipeline_config}

At startup, the manager inspects the ROS workspace for known FlexBE state
implementations and combines those discovered interfaces with the
demonstration-specific capability configuration files. Capability configuration
(\resec{overview-capability}) is domain-specific. It binds a named capability
to a FlexBE state implementation and may define its preconditions,
postconditions, transition relations, fixed parameter values, and symbolic
parameter names, as illustrated by the Coffee and Two-Rivers excerpts in
\relist{coffee-config} and \relist{two-rivers-config}. Pipeline configuration, by
contrast, specifies which preprocessing and request-time modules run and the
input/output data each stage declares.

This division is important for the examples in \resec{demonstrations}.
The Coffee and Two-Rivers examples use compact abstractions to isolate encoding
and liveness effects, the PyRoboSim example adds custom symbolic environment
dynamics, and the quadcopter hardware demonstrations bind the same synthesis interface to
physical robot execution.
In each case, the pipeline mechanics remain the same while the capability
catalog and domain-specific specifications determine the concrete supervisor
that is synthesized.

One such preprocessing module, \texttt{generate\_discrete\_abstraction},
compiles each capability configuration into the per-capability form that
specification generation (\resec{syn_spec}) consumes directly. This form
contains a FlexBE state class declaration and an explicit mapping from FlexBE outcomes to
capability outcome propositions. \relist{discrete-abstraction} shows this
compiled form for the Coffee demonstration's \texttt{gr} (grind) capability,
compiled from the configuration in \relist{coffee-config}. Its
\texttt{state\_outcome\_mapping} records exactly the \texttt{gr\_c}/\texttt{gr\_f}
outcome propositions used by the one-hot encoding of \resec{one-hot-encoding}.
The same compiled block is consumed again during controller realization;
\texttt{class\_decl} determines which FlexBE state implementation is
instantiated, while \texttt{state\_outcome\_mapping} deterministically
normalizes each concrete state outcome before the generated HFSM transition is
wired to the corresponding reduced-automaton successor. In
\relist{discrete-abstraction}, for example, \texttt{grind} maps to
\texttt{gr\_c} and \texttt{fail} maps to \texttt{gr\_f}. Other domains use
their own concrete outcome names, such as \texttt{done}, \texttt{true},
\texttt{failed}, or \texttt{timeout}, but the realization step follows the
same shared mapping rather than relying on a hand-authored transition table.

\begin{lstlisting}[
float={*t},
basicstyle=\normalsize\ttfamily,
columns=fullflexible,
caption={The compiled discrete-abstraction excerpt for the Coffee
demonstration's \texttt{gr} capability
(\texttt{coffee\_maker\_discrete\_abstraction.yaml}), generated by
preprocessing from the configuration in \relist{coffee-config}.
\texttt{class\_decl} names the bound FlexBE state and its parameters.
\texttt{state\_outcome\_mapping} records the per-capability outcome
propositions consumed by specification generation.},
label={lst:discrete-abstraction}]
gr_a:
  class_decl:
    name: OperatorDecisionState
    parameters:
      outcomes:
      - grind
      - fail
      hint: Grind
      suggestion: grind
  state_outcome_mapping:
    gr_c:
    - grind
    gr_f:
    - fail
  autonomy:
    gr_c: 2
    gr_f: 3
\end{lstlisting}

\subsection{Capability-Based Specification Generation}
\label{sec:syn_spec}
\label{sec:specification-encoding}

At request time, the processing pipeline modules load the relevant
capabilities from the preprocessed catalog, instantiate the propositions
needed for the request's initial conditions, goal, and selected encoding, and
emit the \texttt{structuredslugs} input file.
Because capability configurations can expose both
fixed parameter values and symbolic parameter names, the generated specification
can let the controller choose a capability and selected parameter values within
the same synthesis problem, rather than relying only on fixed state
instantiations.

As discussed in \resec{overview-capability}, each system capability is
accessed through a FlexBE state implementation. This mapping is not necessarily
one-to-one: multiple state implementations could interface with a given system
capability, or a single state implementation can interface with multiple
capabilities providing the same ROS~2 interface.
We use the term ``capability'' for either the underlying system capability
or the FlexBE state implementation/system capability pair.
Let $\mathcal{C}$ be the collection of all such pairings unioned with the collection of
persistent terminal behavior outcomes.

\subsubsection{Specifications}
\label{sec:specifications}

For each capability $\pi \in \mathcal{C}$:
\begin{itemize}
\item The system controls its activation $\pi_a \in AP_O$
\item The environment player determines the abstract outcomes $O(\pi) \subset AP_I$
\end{itemize}
In other words, the synthesized high-level controller will choose to activate
an instance of a FlexBE state implementation that will engage
the system capability, and the result of that engagement will determine
the outcome of that FlexBE state as an environmental input ($AP_I$).

We adopt an activation-outcome abstraction similar to~\cite{Maniatopoulos2016},
but modify the formulation to support more general specifications.
Each activation is followed in the next discrete transition by exactly
one of the capability's declared outcomes (e.g., \texttt{completed},
\texttt{failure}, or \texttt{offline}), so a discrete step corresponds to an
HFSM transition rather than a fixed interval of elapsed time, abstracting away
execution time within a state.
Execution-time limits are therefore enforced by the underlying system
capability implementation or timeouts within the FlexBE state implementation
itself. The supervisor sees only the reported capability outcome; the model requires
an eventual outcome for each activated FlexBE state.

\paragraph{Outcome Vocabulary}
\label{sec:outcome-vocabulary}
Nothing in $O(\pi)$ restricts a capability to two outcomes. As
in the activation-outcome formulation of~\cite{Maniatopoulos2016}
that we follow here, a capability may in general report any number of
environment-determined outcomes. Throughout this paper we restrict every
demonstration to a two-outcome global vocabulary, $\{\texttt{completed},
\texttt{failure}\}$, for exposition. This is a modeling choice, not a
limitation of the safety assumptions or of the encodings in \resec{encodings}.
A capability configuration's raw FlexBE outcomes are first normalized onto
this global vocabulary. \relist{discrete-abstraction} shows the corresponding
capability-specific tokens for \texttt{gr}: \texttt{grind} maps to the
completion token \texttt{gr\_c}, and \texttt{fail} maps to the failure
token \texttt{gr\_f}. Representative examples of this normalization
elsewhere in our configurations include:
\begin{itemize}
\item \texttt{done}, \texttt{finished}, \texttt{received}, \texttt{true} $\to$ \texttt{completed}
\item \texttt{aborted}, \texttt{canceled}, \texttt{failed}, \texttt{timeout}, \texttt{unavailable} $\to$ \texttt{failure}
\end{itemize}
The pipeline's only requirement on the global vocabulary itself is that its
members be distinguishable by first letter. The one-hot and enumerated
encodings (\resec{encodings}) tag each outcome proposition with that
character (e.g., \texttt{gr\_c}, \texttt{gr\_f}), and preprocessing rejects a
vocabulary in which two outcomes share one, such as \texttt{completed} and
\texttt{canceled} both tagging \texttt{c}. A third outcome, e.g.
\texttt{offline} $\to$ \texttt{o}, can therefore be added to the global
vocabulary without changing underlying specification equations described below.
We simply did not need more than $\{\texttt{c},
\texttt{f}\}$ for the demonstrations in this paper.

\paragraph{System Safety Guarantees}
The system safety guarantees, $\varphi_s^g$, require the supervisor to select exactly one
capability activation at each discrete strategy step:
\begin{subequations}
\label{eqn:sys-guarantees}
\begin{align}
G\bigvee_{\pi \in \mathcal{C}} \pi_a\,,\\
\bigwedge_{\{\pi,\pi'\}\subset \mathcal{C}}
G\neg\left(\pi_a \land \pi'_a\right)\,.
\end{align}
\end{subequations}
The first disjunction, which is written as a single safety specification in
the \texttt{SYS\_TRANS} block generated for \texttt{slugs},
rules out idle supervisor steps where no FlexBE state is active while the behavior is running.
The second conjunction rules out
concurrent activations, and is written as multiple pairwise statements.
Because this paper targets sequential FlexBE state execution, these
formulas encode the one-active-state abstraction used by the synthesized
supervisor. Future plans include synthesis with concurrent state activations
using FlexBE's \texttt{ConcurrencyContainer}, and more general hierarchical constructs.

\paragraph{Environment Safety Assumptions}
As mentioned above, the corresponding
environment safety assumptions, $\varphi_s^a$, require an activated capability to
produce exactly one declared outcome in the next strategy step, while ruling
out spurious outcomes for inactive capabilities:
\begin{subequations}
\label{eqn:env-assumptions}
\begin{align}
\bigwedge_{\pi \in \mathcal{C}}
G\!\left(\pi_a \rightarrow X\!\bigvee_{o \in O(\pi)} o\right)\,,\\
\bigwedge_{\pi\in\mathcal C}
\left(\bigwedge_{\{o,o'\}\subseteq O(\pi)}
G\!\left(\neg \big(Xo\land Xo'\big)\right)\right)\,,\\
\bigwedge_{\pi \in \mathcal{C}}
\left(\bigwedge_{o \in O(\pi)}
G\!\left(\neg \pi_a \rightarrow \neg Xo\right)\right).
\end{align}
\end{subequations}
Each \emph{Globally}, $G\left(\cdot\right)$, term is written as a separate specification in
the \texttt{ENV\_TRANS} block generated for \texttt{slugs}.
These generated constraints restrict only what can happen within a single
transition step. Unlike the activation--outcome encoding of~\cite{Maniatopoulos2016},
which requires an activation proposition to be false in the next step after an outcome is returned, our
activation--outcome constraints permit the supervisor to reactivate the same capability on consecutive steps,
which supports retry behaviors and parameter-dependent restrictions.

The activation-outcome constraints above can be compiled using different
Boolean encodings before they are passed to \texttt{slugs}.
In \resec{encodings}, this paper presents the
one-hot capability encoding used in prior capability-based synthesis pipelines,
and proposes a compact enumerated capability encoding~\cite{wgcf24, eit26-synthesis}.
Both encodings represent the
same high-level interface from \resec{overview-capability}. The supervisor
chooses a capability, supplies any symbolic parameters, and then observes one
declared outcome from the underlying FlexBE state.

\paragraph{Terminal Outcomes}
\label{sec:terminal-outcomes}

A state machine's own execution may terminate, a notion distinct from any
single capability's outcome set $O(\pi)$ (\resec{specifications}). The state
machine designer designates a subset of capabilities
$\Theta \subset \mathcal{C}$ as its \emph{terminal outcomes}.
For each such $\theta \in \Theta$, we write an accepting self-transition (e.g.
\texttt{finished -> finished\textquotesingle}) in the system guarantees.
Once activated, $\theta$ is \emph{persistent}, remaining activated on every
subsequent step, and \emph{repetitive}, since that self-transition is its
only transition thereafter.
Because the system, not the environment, controls a terminal outcome's
activation, terminal outcomes are themselves capabilities in the sense of
\resec{specifications}'s activation/outcome split.
For most of the examples in this paper
$\Theta = \{\mathit{finished}, \mathit{failed}\}$, but this is illustrative
rather than required. If a state machine retries until success and has no
separate failure terminal outcome, then
$\Theta = \{\mathit{finished}\}$ alone. A
state machine with no terminal output at all, designed to run forever, as
in the Coffee demonstration (\resec{coffee}), declares $\Theta = \emptyset$.
\resec{auditor} generalizes this case via an environment-reported
completion input in place of a terminal system output.

\paragraph{On-Demand Propositions}
\label{sec:on-demand-spec}

To limit the number of atomic propositions, two kinds of auxiliary
propositions are introduced only when a capability configuration or the
selected liveness formulation actually needs them, rather than declaring the
full set up front.

\emph{Memory propositions} ($\pi^m$) are free auxiliary variables.
Typically these are declared
directly in a capability's configuration as pre- and post-conditions for
a particular capability. In this case, the variables are controlled by the system such that $\pi^m \in AP_O$
each with an explicit initial
condition.  In our examples, the capability's own precondition and postcondition formulas set
and clear them, so a memory proposition lets a demonstration author encode
arbitrary persistent context, e.g., whether a precondition-relevant
capability has previously completed. See \relist{coffee-config} for an example.
The system supports defining auxiliary environment variables,
$x^m \in AP_I$, or system variables $y^m \in AP_O$.
The pipeline does not generate the transition dynamics
automatically in this case, so they must be loaded as additional specifications.
In this way, the pipeline does not commit to one
fixed semantics for what ``memory'' means.

\emph{Pending propositions} ($\pi_a^p \in AP_O$), by contrast, are optionally generated
automatically with fixed dynamics whenever both a completed and a
non-completed outcome are declared for $\pi$. The proposition $\pi_a^p$ is set when
$\pi$ is activated, held while $\pi$ continues to report a non-completed
outcome, and cleared on completion unless $\pi$ is reactivated in the same
step, with $\pi_a^p$ initially \texttt{false}:
\begin{equation}
\begin{aligned}
&G\big(X\pi_a \rightarrow X\pi_a^p\big) \;\land\\
&G\Big(\pi_a \land X\!\left(\bigvee_{o \in O(\pi)\setminus\{c\}} o\right) \rightarrow X\pi_a^p\Big) \;\land\\
&G\big(\neg X\pi_a \land \pi_a \land X\pi_a^c \rightarrow \neg X\pi_a^p\big) \;\land\\
&G\big(\neg(X\pi_a \lor \pi_a) \rightarrow (X\pi_a^p \leftrightarrow \pi_a^p)\big)
\end{aligned}
\label{eqn:pending}
\end{equation}
where $c$ denotes $\pi$'s \texttt{completed} outcome.
\retab{pending-trace} works \reqn{pending} through a concrete six-step
trace for the Coffee \texttt{gr} capability under the one-hot encoding, in
which each row exercises a different one of the equation's four
conjuncts. Activation ($\tau{=}1$) sets \texttt{gr\_p} via the first
conjunct. A failure outcome with no reactivation ($\tau{=}2$) holds it via
the second. A retry ($\tau{=}3$) holds it again via the first. A completion
outcome with no reactivation ($\tau{=}4$) clears it via the third. The idle
step ($\tau{=}5$) leaves it cleared via the fourth (inertia) conjunct.

\begin{table}[ht]
\centering
\caption{A six-step \texttt{gr\_a}/\texttt{gr\_c}/\texttt{gr\_f}/\texttt{gr\_p}
trace of \reqn{pending} for the Coffee \texttt{gr} capability. Each
\texttt{gr\_p} value is annotated with the conjunct of \reqn{pending} that
determines it.}
\label{tbl:pending-trace}
\begin{tabular}{cllll}
\toprule
$\tau$ & \texttt{gr\_a} & outcome observed & \texttt{gr\_p} & rule \\
\midrule
0 & F & --- & F & (initial) \\
1 & T (activate) & --- & T & conjunct 1 \\
2 & F & \texttt{gr\_f} (failure) & T & conjunct 2 \\
3 & T (retry) & --- & T & conjunct 1 \\
4 & F & \texttt{gr\_c} (completed) & F & conjunct 3 \\
5 & F & --- & F & conjunct 4 \\
\bottomrule
\end{tabular}
\end{table}

Currently the pipeline optionally enables the pending variables,
but they are created for all capabilities with failure outcomes if enabled.

Both kinds of auxiliary propositions reduce $|AP|$ relative to declaring the
full set up front, since each only appears for the capabilities whose
configuration or liveness formulation actually requires it.

\subsubsection{Encodings}
\label{sec:encodings}

This paper investigates two potential encodings for the specifications.

\paragraph{One-Hot Capability Encoding}
\label{sec:one-hot-encoding}

In the one-hot encoding, each capability activation and each capability-specific
outcome is represented by a separate Boolean
proposition~\cite{Maniatopoulos2016, Hayhurst2018, wgcf24, eit26-synthesis}.
This scales linearly with $|\mathcal{C}|$
and requires explicit mutual exclusion constraints.

For the extended
Coffee demonstration's \texttt{gr} capability, the system output contains an
activation bit \texttt{gr\_a}. The environment inputs include distinct outcome
bits \texttt{gr\_c} and \texttt{gr\_f}, corresponding to the \texttt{completed}
and \texttt{failure} outcomes in \resec{overview-capability}. If a capability's
outcome instead depends on a symbolic parameter, an additional
system-controlled proposition records the requested value, as with the
Two-Rivers \texttt{move\_state} capability's \texttt{move\_dest} parameter
(\resec{two-rivers}). With this encoding, \reqn{env-assumptions} requires
every \texttt{gr\_a} step to be followed by exactly one of the two outcome
propositions, and prevents either outcome from appearing after a step that did
not activate \texttt{gr}.

This representation is direct and easy to inspect in the generated Mealy
machine. A state labeled with \texttt{gr\_a} activates the grind capability,
and a successor labeled with \texttt{gr\_c} or \texttt{gr\_f} records the
environment's response. The cost is that
the encoding introduces one activation proposition per capability, one outcome
proposition per capability-outcome pair, and explicit mutual-exclusion
constraints to ensure that exactly one capability is active and at most one
outcome is reported for the activated capability.

When a capability also carries a pending proposition (\resec{on-demand-spec}),
the one-hot outcome propositions above are exactly the $O(\pi)\setminus\{c\}$
and $c$ terms \reqn{pending} refers to. \relist{pending-trans-example} shows
the generated \texttt{SYS\_TRANS} form of \reqn{pending} for the extended
Coffee demonstration's \texttt{gr} capability under this encoding.

\begin{lstlisting}[
float={*t},
basicstyle=\normalsize\ttfamily,
columns=fullflexible,
caption={Pending-proposition dynamics from the extended Coffee
demonstration's generated \texttt{CoffeeSM.structuredslugs} (\resec{coffee}):
the \texttt{gr\_p} initial condition and its four auto-generated
\texttt{SYS\_TRANS} rules, the one-hot instantiation of \reqn{pending}.},
label={lst:pending-trans-example}]
[SYS_INIT]
!gr_p                              # pending IC

[SYS_TRANS]
# set when selected
(gr_a') -> gr_p'

 # keep/set on non-completed outcome
((gr_a) & (gr_f')) -> gr_p'

# clear on completion unless re-selected
((!(gr_a')) & ((gr_a) & (gr_c'))) -> !gr_p'

 # pending inertia otherwise
!((gr_a) | (gr_a')) -> (gr_p' <-> gr_p)
\end{lstlisting}

\paragraph{Enumerated Capability Encoding}
\label{sec:enumerated}

In this work, we introduce an \emph{enumerated capability encoding}\footnote{The
open-source pipeline's config and file names call this encoding
\texttt{parsed} (short for \texttt{parsed-binary}) rather than
\emph{enumerated}, e.g.\ \texttt{p0\_parsed} and
\texttt{full\_spec\_parsed} in \resec{demonstrations}. The two names refer
to the same encoding, and this paper uses \emph{enumerated}.},
with three effects:
\begin{itemize}
    \item Reduces the atomic propositions in $AP_O$ to $\lceil\log_2(|\mathcal{C}|+1)\rceil$
    bits, where the extra value is the synthetic startup slot described below
    (domains that declare an explicit startup state, such as \texttt{begin\_game},
    use it as that slot)
    \item {Uses shared outcome propositions (e.g., \texttt{completed}, \texttt{failure})
    that replace per-capability outcome variables in $AP_I$}
    \item {Eliminates explicit mutual exclusion constraints (2b)}
\end{itemize}

For example, the one-hot activation proposition \texttt{gr\_a} is encoded
as the shared \texttt{capability} variable taking the value for
\texttt{gr}. Symbolic parameters can be carried the same way as in the
one-hot encoding, as with the Two-Rivers \texttt{move\_dest} parameter
(\resec{two-rivers}). The encoding also replaces capability-specific outcome
inputs such as \texttt{gr\_c} and \texttt{gr\_f} with shared outcome values
such as \texttt{completed} and \texttt{failure}. The previous
\texttt{capability} value identifies which active state those shared outcomes
belong to.

When \texttt{gr} also carries a pending proposition, the enumerated encoding
generates the same four \reqn{pending} rules, but over the shared
\texttt{capability} selection variable and shared outcome propositions in
place of the one-hot \texttt{gr\_a}/\texttt{gr\_c}/\texttt{gr\_f}.
\relist{pending-trans-enumerated-example} shows the generated form for the
same extended Coffee \texttt{gr} capability under this encoding, with
\texttt{gr} bound to \texttt{capability=3}.

\begin{lstlisting}[
float={*t},
basicstyle=\footnotesize\ttfamily,
columns=fullflexible,
caption={Pending-proposition dynamics from the extended Coffee
demonstration's generated \texttt{CoffeeSM.structuredslugs}
(\resec{coffee}) under the enumerated encoding: the \texttt{[OUTPUT]}
capability-number mapping, the \texttt{gr\_p} initial condition, and its
four auto-generated \texttt{SYS\_TRANS} rules, the enumerated instantiation
of \reqn{pending}. Per the mapping, \texttt{gr} is bound to
\texttt{capability=3}, and \texttt{completed}/\texttt{failure} are the
shared outcome propositions gated by that binding, in place of the one-hot
\texttt{gr\_c}/\texttt{gr\_f} in \relist{pending-trans-example}.},
label={lst:pending-trans-enumerated-example}]
[OUTPUT]
capability:0...3
# 0: null (startup slot)
# 1: bd
# 2: br
# 3: gr

[SYS_INIT]
!gr_p                              # pending IC

[SYS_TRANS]
((capability'=3)) -> gr_p'                                  # set when selected
(((capability=3)) & (((capability=3) & failure'))) -> gr_p' # keep/set on non-completed outcome
((!((capability'=3))) & (((capability=3)) & (((capability=3) & completed')))) -> !gr_p' # clear on completion
!(((capability=3)) | ((capability'=3))) -> (gr_p' <-> gr_p) # pending inertia otherwise
\end{lstlisting}

The \texttt{0: null} slot visible in the listing above is a synthetic
startup index, not a third disjunct alongside \reqn{sys-guarantees}'s two
system safety guarantees. \texttt{SYS\_INIT} fixes \texttt{capability=0}
only at the initial state, since a bounded integer output variable must be
assigned some value before the supervisor makes its first real choice. A
single generated \texttt{SYS\_TRANS} rule, \texttt{capability'!=0}, then
forbids returning to that slot on every subsequent step. This rule is the
enumerated encoding's translation of \reqn{sys-guarantees}'s non-idle
disjunction (2a), $\bigvee_{\pi\in\mathcal C}\pi_a$. Instead of requiring
at least one of $|\mathcal{C}|$ one-hot activation bits to be true, it
requires the single \texttt{capability} variable to hold a non-null value.
The mutual-exclusion guarantee (2b) needs no separate translation, since a
single bounded integer variable can hold only one value at a time by
construction.

Quantitative comparisons with the one-hot baseline from prior work
are provided in \resec{discussion}, where we discuss the impact of this
approach. The synthesis pipeline presented in this paper works with either
encoding, specified by the appropriate named pipeline module.

\subsubsection{Liveness Modeling}
\label{sec:liveness}

Realizability depends critically on liveness (justice) assumptions \cite{Bloem2012, Ehlers2016}:
the environment liveness assumption $\varphi_l^a$ and the system liveness
condition $\varphi_l^g$ from \reqn{varphi}.
Let $g$ denote this pipeline's \emph{configured goal condition}. For a
state machine with terminal outputs, $g$ is the disjunction of their
activation propositions from \resec{terminal-outcomes},
$g=\bigvee_{\theta \in \Theta}\theta_a$, or \emph{Globally
Finally} the state machine reaches one of its programmed outcomes.
For a state machine designed to run forever ($\Theta = \emptyset$),
the pipeline may instead configure $g$ as an environment-reported
recurring completion event,
matching the more general goal-label treatment of the auditor's
goal-unreachable-trap check (\resec{auditor}); the extended Coffee
demonstration (\resec{coffee}) uses its final capability's completion,
\texttt{br\_c}, this way. A state machine with neither a terminal output
nor any configured recurrence goal uses $g=\False$ and omits the
corresponding disjunct entirely. The base system liveness condition is
$\varphi_l^g = g$; when $g=\False$, no base goal clause is emitted, though
System-Goal Liveness \reqn{system-goal-liveness} may still contribute
the non-completion disjuncts below, so a \texttt{SYS\_LIVENESS} clause
can still result.
Unfortunately, this base liveness condition is not enough for failure-prone
capabilities. Any capability with both
\texttt{completed} and \texttt{failure} as possible outcomes
permits the adversarial environment to continually
return \texttt{failure} and make the specifications unrealizable.
Let $\mathcal{C}_f \subseteq \mathcal{C}$ denote this set of \emph{failure-prone}
capabilities, that is, those $\pi$ for which \texttt{failure} is a possible
outcome.

We implement two alternative liveness specification-generation modules and compare their impact.
Both liveness formulations below are stated over $\mathcal{C}_f$.

\paragraph{Fair-Outcome Liveness.}
This formulation is intended to express the fairness intuition that repeated
capability attempts should eventually yield \texttt{completed}.
It places that fairness intuition on the environment and supports retry-based recovery behaviors.
The antecedent differs depending on whether the pipeline introduced a pending
proposition $\pi_a^p$ for $\pi$ (\resec{on-demand-spec}). Let
$\mathcal{C}_f^p \subseteq \mathcal{C}_f$ denote the failure-prone capabilities for
which it did, so that $\mathcal{C}_f \setminus \mathcal{C}_f^p$ are those still using the
plain activation proposition $\pi_a$. The completion side of the implication is
always the same next-step proposition, $X\pi_a^c$, regardless of which
antecedent is used, matching the one-step activation-to-outcome delay of
\reqn{env-assumptions}.
The environment liveness assumption $\varphi_l^a$ is then
\begin{equation}
\begin{aligned}
    \varphi_l^a \;=\;
    &\bigwedge_{\pi \,\in\, \mathcal{C}_f \setminus \mathcal{C}_f^p} \big(\pi_a \rightarrow X\pi_a^c\big) \\
    &\;\land\;
    \bigwedge_{\pi \,\in\, \mathcal{C}_f^p} \big(\pi_a^p \rightarrow X\pi_a^c\big)
\end{aligned}
\label{eqn:fair-outcome-liveness}
\end{equation}
that is, one conjunct per failure-prone capability, itself implicitly
\emph{Globally Finally}, using $\pi_a^p$ as the antecedent exactly where a
pending proposition tracks that capability, and $\pi_a$ otherwise.
$\mathrm{GF}(\pi_a^p \rightarrow X\pi_a^c)$ does not simply mean ``once
pending, completion must eventually occur''. Its exact reading is that
infinitely many steps satisfy $\neg\pi_a^p \lor X\pi_a^c$, which, by the
standard $\mathrm{GF}(A\lor B)\Leftrightarrow\mathrm{GF}A\lor\mathrm{GF}B$
identity, holds whenever \emph{either} $\pi_a^p$ is false infinitely often
\emph{or} $\pi_a^c$ holds infinitely often. Under \reqn{pending}'s
inertia rule, however, abandoning a capability after a failure does
\emph{not} make $\pi_a^p$ false; once set, it remains asserted until a
completion is observed, and a capability that is never reactivated again
never reports one. If activation permanently ceases after even one
failure, $\pi_a^p$ is instead permanently \emph{true}, so neither
disjunct of $\neg\pi_a^p \lor X\pi_a^c$ can hold infinitely often; thus, the
environment can no longer satisfy this liveness assumption at all.
Because activation is system-controlled, the system can withhold this
outcome itself by declining to reactivate the capability.
Thus, a liveness condition that appears to encode a reasonable
retry fairness assumption exposes a failure mode under ordinary GR(1)
implication semantics; once the environment-assumption side becomes
unsatisfiable due to system inaction,
the controller's guarantees are satisfied vacuously. This is exactly the
assumption-falsification risk identified by well-separation
(\resec{synthesis}) and manifested by the goal-unreachable traps of
\resec{auditor}. This formulation augments only the environment liveness assumption
$\varphi_l^a$, leaving the system liveness condition at its base form,
$\varphi_l^g = g$.

Concretely, in the extended Coffee demonstration (\resec{coffee}), the
synthesized controller cycles through button-detection (\texttt{bd}),
grinding (\texttt{gr}), and brewing (\texttt{br}) in sequence
(\refig{coffee_sm}), and \texttt{gr}'s \texttt{failure} postcondition
re-asserts \texttt{bd} rather than clearing it (\relist{coffee-config}), so
a failed grind is retried rather than abandoned. When \texttt{gr}
additionally uses a pending proposition $gr_a^p$ (\resec{on-demand-spec})
to record that an activation is still owed a completion, Fair-Outcome's
environment assumption becomes $\mathrm{GF}(gr_a^p \rightarrow Xgr_a^c)$.
Because \texttt{gr\_p} is cleared only by a completed \texttt{gr} outcome,
keeping it asserted makes retrying \texttt{gr} the only way for the supervisor
to discharge the pending obligation.
Intuitively, a pending grind is expected eventually to report completion.
\relist{pending-liveness-example} shows the resulting
\texttt{ENV\_LIVENESS} clause as generated for this configuration. Because
\texttt{gr} carries a pending proposition, its antecedent is $gr_a^p$ rather
than the plain-activation form $\mathrm{GF}(\neg gr_a \lor gr_c')$ that
\reqn{fair-outcome-liveness}'s other branch would produce for a
non-pending capability. In the extended demonstration, \texttt{bd} and
\texttt{br} also carry pending propositions, so all three
\texttt{ENV\_LIVENESS} conjuncts take this $\pi_a^p$-antecedent form. None
is ever generated in the raw-activation form. But $gr_a^p$ is itself a
system-controlled output, set and cleared by the same strategy this
assumption constrains. The resulting environment promise therefore depends
on the supervisor's retry bookkeeping, rather than only on behavior
controlled by the environment.

\begin{lstlisting}[
float={*t},
basicstyle=\normalsize\ttfamily,
columns=fullflexible,
caption={Fair-Outcome \texttt{ENV\_LIVENESS} excerpt from the extended
Coffee demonstration's generated \texttt{CoffeeSM.structuredslugs}
(\resec{coffee}). The \texttt{gr\_c} clause uses the pending proposition
\texttt{gr\_p} as its antecedent, per \reqn{fair-outcome-liveness}'s
$\mathcal{C}_f^p$ branch.},
label={lst:pending-liveness-example}]
[ENV_LIVENESS]
(!bd_p) | bd_c'
(!br_p) | br_c'
(!gr_p) | gr_c'
\end{lstlisting}

\paragraph{System-Goal Liveness.}
The second formulation augments the system liveness condition with terms
satisfied on an activation followed by a non-completion outcome.
Specifically, the system liveness condition becomes
\begin{equation}
    \varphi_l^g \;=\; g \;\lor\;
    \bigvee_{\pi \,\in\, \mathcal{C}_f} \neg(\pi_a \rightarrow X\pi_a^c)
\label{eqn:system-goal-liveness}
\end{equation}
that is, a single clause, itself implicitly \emph{Globally Finally}, as
with all \texttt{slugs} liveness clauses, augmenting the configured goal
condition $g$ with one additional disjunct per failure-prone capability.
Consequently, if the environment withholds completion for a failure-prone
capability indefinitely, the corresponding disjunct
$\neg(\pi_a \rightarrow X\pi_a^c)$ is itself satisfied infinitely often,
discharging $\varphi_l^g$ without $g$ ever being satisfied. System-Goal
Liveness therefore does not guarantee that the state machine reaches a
terminal outcome against an adversarial capability. It guarantees that the
specification remains realizable regardless of whether the environment
ever cooperates, so the system is never held responsible for a capability
that fails forever. The intended reading is conditional. Given that
failure-prone capabilities behave reasonably, either eventually returning
\texttt{completed} or exercising an explicit, designer-specified bounded-failure
recovery outcome rather than failing indefinitely with no exit,
System-Goal Liveness places no obstruction of its own between activation
and $g$. \resec{auditor} checks the synthesized strategy itself for
such self-introduced obstructions, deadlock, protocol violation, or a
goal-unreachable region, rather than for the capability's own reliability,
which is outside this paper's scope (\resec{crazyflie}).
Unlike the Fair-Outcome formulation, this condition does not introduce a
separate environment fairness assumption for each capability. It leaves
the environment liveness assumption at its default, $\varphi_l^a = \True$.

For the same extended Coffee \texttt{gr} capability, System-Goal's system
liveness condition \reqn{system-goal-liveness} is instantiated as the
single disjunctive \texttt{SYS\_LIVENESS} clause in
\relist{system-goal-liveness-example}. A negated $gr_a \rightarrow Xgr_a^c$
disjunct sits alongside \texttt{bd}'s and \texttt{br}'s own terms, so
\texttt{gr}'s retry needs no dedicated environment-side promise to
entangle with, unlike the Fair-Outcome case above.

\begin{lstlisting}[
float={*t},
basicstyle=\normalsize\ttfamily,
columns=fullflexible,
caption={System-Goal \texttt{SYS\_LIVENESS} excerpt from the extended
Coffee demonstration's generated \texttt{CoffeeSM.structuredslugs}
(\resec{coffee}), one-hot no-pending configuration (\textbf{3S} in
\retab{coffee-1hot-results-summary}). The single disjunctive clause instantiating
\reqn{system-goal-liveness} over \texttt{bd}, \texttt{gr}, and \texttt{br}.},
label={lst:system-goal-liveness-example}]
[SYS_LIVENESS]
# Reach the configured goal or observe non-completion
((br_c) | !(bd_a -> bd_c') | !(br_a -> br_c') | !(gr_a -> gr_c'))
\end{lstlisting}

This kind of assumption entanglement is exactly what well-separation
(\resec{well-separation-module}) is designed to catch before synthesis
runs, and what the post-synthesis auditor (\resec{auditor}) checks for
directly in the extracted strategy since well-separation alone is not
sufficient. These alternative liveness definitions significantly affect
realizability, the time required to determine it, and state machine size.
\resec{discussion} reports the concrete cases and compares the resulting
realizability, synthesis cost, and controller structure across both
formulations.
As shown in \resec{discussion}, these results motivate using System-Goal
liveness for the demonstrations presented in this paper.

\subsection{Specification Analysis Module}
\label{sec:well-separation-module}

After GR(1) specification generation and before ordinary synthesis, the pipeline can invoke
an analysis module on the compiled \texttt{structuredslugs} input.
Following~\cite{MaozRingert2016}, we implement the well-separation algorithm
of Maoz and Ringert as an extension to \texttt{slugs}.
The module temporarily replaces the system side of \reqn{varphi} with
$\varphi_i^g=\True$, $\varphi_s^g=\True$, and $\varphi_l^g=\False$, then
reuses the existing winning-region fixed point analysis in \texttt{slugs} to ask whether the system
can force violation of the environment assumptions
$(\varphi_i^a,\varphi_s^a,\varphi_l^a)$ from an environment-reachable state.
The ordinary synthesis path is unchanged.

The module emits a machine-readable diagnostic rather than a controller. For
well-separated specifications it records a passing verdict. For
NWS specifications it reports the Maoz--Ringert case class
(\texttt{P-all} or \texttt{P-reach}, paired with \texttt{E-ini},
\texttt{E-safe}, or \texttt{E-just}), a representative reachable witness
state when available, and the responsible environment assumptions~\cite{MaozRingert2016}.
For the experiments in \resec{discussion}, we additionally enable core minimization
for selected configurations, which repeatedly reruns the diagnosis over
subsets of named environment assumptions while keeping the original reachable
state set fixed, yielding a small assumption core that explains the
NWS verdict. These pre-synthesis diagnostics are optional and are complementary to the
post-synthesis auditor discussed next. The module can optionally be
configured to treat an NWS or incomplete verdict as fatal
(\texttt{fail\_on\_non\_well\_separated} and \texttt{fail\_on\_incomplete}
in the analyzer's module configuration), but every configuration evaluated in this paper
runs with both disabled.
Well-separation is reported, but never blocks
synthesis, auditing, or reduction. \resec{pipeline-integration} states the
resulting three-status policy (unrealizable, NWS, strategy audit failure)
this pipeline follows in full.

\subsection{Synthesis}
\label{sec:synthesis-module}

Our pipeline is generic, but is currently built using \texttt{slugs}~\cite{Ehlers2016}.
The pipeline module is a wrapper that invokes our customized version of
the \texttt{slugs} executable~\cite{CNU_slugs}.
It takes in the specifications from \resec{syn_spec} written in \texttt{structuredslugs} form,
and compiles them into the required \texttt{slugsin} format using a Python~3
port of the original compiler for \texttt{slugs}.  The wrapper invokes \texttt{slugs} in its
standard GR(1) synthesis mode, captures the tool's realizability verdict and
internal timing output, and parses the extracted Mealy strategy when the
specification is realizable.

An unrealizable result terminates the FlexBE synthesis request with a structured failure record,
including the generated specification and \texttt{slugs} diagnostics. A realizable result
continues through the pipeline as a raw Mealy strategy (\relist{slugs-example-json}) whose states carry the
input and output labels.  The raw output is converted to a standard YAML form used
by the remaining modules in our pipeline.
Keeping the \texttt{slugs} wrapper behind the same YAML-declared module
interface as the other stages allows the pipeline to swap synthesis configurations in the future
without changing the downstream modules.

%% file: Auditor.tex
\subsection{Post-Synthesis Strategy Auditor}
\label{sec:auditor}

Realizability alone does not certify that a synthesized strategy makes the
progress a designer intends~\cite{KleinPnueli2010,MaozRingert2016,KindControllers24}.
A GR(1) strategy need only satisfy its guarantees
for environment behaviors that satisfy the stated assumptions. Once an
assumption is violated the strategy is unconstrained, and even when
assumptions hold throughout, a winning strategy may still contain cyclic
behavior that revisits non-terminal states forever. As shown in
\resec{discussion}, this gap is not hypothetical. The Fair-Outcome liveness
formulation admits realizable strategies whose retry structure never
guarantees the designer-intended terminal condition.

We address this with two complementary checks. The pipeline's
well-separation module (\resec{well-separation-module}) analyzes the
specification-level side of this assumption-falsification risk before
synthesis. It is a symbolic check on the GR(1) game that detects
specifications whose winning strategies can force an assumption violation.
We complement that pre-synthesis view with a post-synthesis, explicit-state
\emph{strategy auditor} that checks the Mealy strategy \texttt{slugs}
extracts for two classes of semantic defects that realizability checking does
not rule out and that well-separation does not certify away. The first class
comprises strict capability activation--outcome protocol violations, deadlocks,
and bounded-failure limit violations. The second is reachable
regions from which a designer's goal becomes structurally unreachable.

In the experiments below, every \texttt{GOAL\_UNREACHABLE\_TRAP} occurred
in a specification already flagged as non-well-separated, but the converse
did not hold; several non-well-separated specifications still produced
auditor-valid controllers. Whether well-separation is necessary for this
trap class in general remains open; our data only show that it was a
reliable pre-synthesis warning for the trap instances observed here. We
therefore use well-separation as an early specification-level warning and
the auditor as a post-synthesis check of the particular strategy selected by
\texttt{slugs}. The former diagnoses that the game admits
assumption-falsifying behavior; the latter determines whether the extracted
controller actually exhibits one of the concrete strategy-level defects
defined here.

Because the auditor is explicit-state, it walks the finite Mealy automaton
\texttt{slugs} produces rather than the BDD-encoded GR(1) formula, so the
check is independent of encoding (\resec{encodings}) and synthesizer, and of
whether state-merging reduction has run. The primary audit runs immediately
after synthesis, where it enforces the concrete activation--outcome protocol on
the raw strategy. The same graph analysis can also be applied after reduction
as a confirmatory regression check, where it instead runs as a structural
progress check, since merged states may inherit input-label metadata from
several unreduced contexts and can no longer be held to a strict per-state
protocol. Both applications
reduce to the same core technique, the classical automata-theoretic search
for a reachable accepting cycle violating a GF-liveness property, here
phrased as a reachable strongly connected component (SCC) with no path back
to the goal, following the SCC-based approach to cycle detection in model
checking~\cite{ClarkeGrumbergPeled1999, Courcoubetis1992} and the
safety/liveness distinction it rests on~\cite{AlpernSchneider1985}.

Unlike well-separation, which applies established prior
work~\cite{MaozRingert2016} within our pipeline, the strategy auditor itself
is a new contribution of this work, albeit based on classical results. 
The remainder of this subsection states
its core soundness and completeness results and their complexity. The
underlying semantic model, full proofs, and pipeline-integration details are
given in Appendix~\ref{app:auditor-proofs}.

\begin{table*}[t]
\centering
\small
\setlength{\tabcolsep}{4pt}
\renewcommand{\arraystretch}{1.25}
\caption{Complementary roles of specification-level well-separation and the
post-synthesis strategy auditor for the retry-liveness pathology studied in
this paper.}
\label{tbl:auditor-well-separation}
\begin{tabular}{p{0.23\textwidth}p{0.34\textwidth}p{0.34\textwidth}}
\toprule
\textbf{Question} & \textbf{Well-separation / assumption-falsification analysis} & \textbf{Post-synthesis strategy auditor} \\
\midrule
Artifact checked &
The GR(1) specification and game before synthesis. &
The extracted Mealy strategy after synthesis. \\
Main failure exposed &
Whether the specification lets the system win by falsifying or depending on
environment assumptions, including broader game-level fairness hazards. &
Whether the concrete strategy violates the activation--outcome protocol or
enters a reachable cyclic region from which the configured goal is
structurally unreachable. \\
What it catches in our examples &
The specification-level source of the Fair-Outcome retry hazard; that is, progress can
depend on assumptions about future outcomes that the controller does not force. &
The realized symptom in the generated controller; that is, pending-without-activation
cycles in which no completion outcome can occur and the task goal is cut off. \\
What it does not certify &
That a particular extracted or reduced strategy obeys the FlexBE capability
protocol, respects bounded-failure counters, or remains free of concrete
strategy-level traps. &
That every possible livelock has been excluded. SCCs with an available but
unforced exit to a goal remain a game- or fairness-aware analysis problem. \\
Benefit &
Diagnoses the modeling weakness before committing to a controller. &
Validates the actual artifact that the pipeline will execute. \\
\bottomrule
\end{tabular}
\end{table*}

The auditor's two checks use standard automata-theoretic reachability
analysis over the explicit synthesized strategy rather than a new
verification technique. The safety check (Lemma~\ref{lem:safety}) performs
exhaustive reachability over a product of the strategy and an
activation--outcome monitor (Appendix~\ref{app:auditor-proofs},
Definition~\ref{def:product}). The goal-unreachable-trap check
(Theorem~\ref{thm:goal-unreachable-trap}) uses the classical SCC
construction for detecting reachable accepting cycles in liveness
checking~\cite{ClarkeGrumbergPeled1999, Courcoubetis1992}, specialized to
the concrete goal condition generated by our pipeline rather than a
general B\"uchi acceptance condition. The contribution lies in this
post-synthesis application. Because the auditor runs on the explicit Mealy
strategy \texttt{slugs} extracts, it is sound and complete for the strict protocol
violations, deadlocks, bounded-failure violations, and goal-unreachable
traps defined below, independent of the specification encoding or
\texttt{slugs} internals. It is not a general liveness or fairness-aware
progress verifier.

Assuming strict protocol monitoring and completed reachable-product exploration,
the safety check reports exactly the strict protocol violations, bounded-failure
violations, and deadlocks reachable in the product graph
(Lemma~\ref{lem:safety}).

A \emph{deadlock} (Definition~\ref{def:protocol}, Appendix~\ref{app:auditor-proofs})
is a reachable state with no outgoing transition, already caught by the
safety check above. The goal-unreachable-trap check targets a structural
subclass of a different failure mode, \emph{livelock}: execution can
continue forever after reaching a region from which the configured goal
is no longer reachable.
Here and below, ``goal'' means the explicitly configured terminal or
recurrence label supplied to the auditor -- the base goal condition $g$
of \resec{liveness}, before System-Goal liveness's failure-prone-capability
augmentation \reqn{system-goal-liveness}. Depending on the domain, this
label denotes successful task completion or acceptable programmed
termination. \retab{auditor-goals} lists these configured goal labels.
Under System-Goal liveness specifically, the liveness condition can also
be satisfied by the failure-prone-capability non-completion disjuncts in
\reqn{system-goal-liveness}; those disjuncts are not semantic task goals.
This distinction does not affect the goal-unreachable-trap results
reported in \resec{discussion}; in this paper, they occur only under Fair-Outcome
liveness, where $\varphi_l^g = g$ exactly and the table's entries are the
complete goal condition.

\begin{table*}[t]
\centering
\small
\setlength{\tabcolsep}{4pt}
\renewcommand{\arraystretch}{1.15}
\caption{Configured goal condition $g$ (\resec{liveness}) supplied to the
post-synthesis strategy auditor for each domain.}
\label{tbl:auditor-goals}
\begin{tabular}{p{0.18\textwidth}p{0.23\textwidth}p{0.47\textwidth}}
\toprule
\textbf{Domain} & \textbf{Goal condition $g$} & \textbf{Interpretation} \\
\midrule
Coffee & \texttt{br\_c} & Recurring successful brew completion in a controller designed to run forever. \\
Two-Rivers & \texttt{finished} & Successful terminal output after all entities reach the far bank. \\
PyRoboSim P0/P1 & \texttt{finished} & Successful terminal output after the delivery task is complete. \\
Quadcopter & \texttt{finished} $\lor$ \texttt{failed} & Programmed terminal output; the audit certifies reachability of termination, not mission success alone. \\
\bottomrule
\end{tabular}
\end{table*}

Formally, a strategy has a reachable node from which the goal is unreachable
if and only if the reachable product graph contains a
\emph{goal-unreachable trap}, a cyclic strongly connected component (SCC)
with no path, in the condensed component graph, to any cyclic
goal-containing component (Theorem~\ref{thm:goal-unreachable-trap}). Every
goal-unreachable trap admits an infinite non-goal execution once the reachable
product graph is deadlock-free, so a flagged component is a genuine
structural livelock witness. Whether such a path is fair under additional
environment assumptions is outside this check. The converse is intentionally
narrower: an SCC with an available but unforced exit toward the goal is left unflagged
(Corollary~\ref{cor:unforced-exit}),
since deciding whether the environment must take that exit requires game-
or fairness-aware analysis. The check is therefore sound and complete for
goal-unreachable
traps specifically, not for every possible livelock, and is narrower than
the specification-level property checked by well-separation
(\resec{synthesis}).

Because completion outcomes are activation-gated (\resec{syn_spec}), a
reachable cyclic region in which activation has permanently ceased has no
path to a completion-requiring goal, so the unforced-exit gap noted above
does not apply to it. This is the causal chain behind the
pathology described in \resec{discussion}. The auditor provably flags
exactly the pending-without-activation regions found there, as part of the
pipeline rather than through manual strategy tracing.

Both checks are linear-time graph algorithms in the reachable product
automaton $P$ (Appendix~\ref{app:auditor-proofs}, Definition~\ref{def:product}),
so the audit's cost is bounded by \reqn{auditor-complexity}
(Appendix~\ref{app:auditor-proofs}). Since none of the configurations
evaluated in this paper uses a finite failure bound, that expression reduces
to $O((n+m)(k+1))$, linear in the strategy's state and transition counts for
a fixed number of capabilities $k$. The auditor
therefore occupies a different computational regime from symbolic GR(1)
synthesis rather than a strictly cheaper one. It is linear in the explicit
reachable graph, which can itself grow exponentially in the number of
bounded-failure capabilities, but it never touches the BDD representation
synthesis uses. The bounded-failure branch is covered by targeted unit
tests on hand-constructed strategies. The
per-configuration auditing-time rows already reported
for Coffee, Two-Rivers, PyRoboSim, and Crazyflie (\resec{coffee-results},
\resec{two-rivers-results}, \resec{pyrobosim-results},
\resec{crazyflie-results}) are broadly consistent with this scaling within
each domain, where $k$ and the strategy's branching factor stay fixed.
Comparing raw audit times across domains is noisier, since capability
count and branching differ by domain and both enter the bound alongside
state count.

The auditor runs as a pipeline module. Because the
check is exhaustive graph analysis rather than sampled execution, its
coverage does not depend on how many outcomes were exercised during
testing. Controller size across our case studies ranges from single digits
(Coffee) to 677 states (the largest quadcopter configuration,
\textbf{2FP}'s raw strategy, \seedetail{crazyflie-reliability-app}), and the
audit covers all of it. Appendix~\ref{app:auditor-proofs} gives the full
pipeline-integration details, including the wall-clock budget policy,
error-code semantics, and the optional post-reduction run.

%% file: Reduction.tex
\subsection{Strategy State-Merging Reducer}
\label{sec:reduction}

The optional state-merging step replaces the raw Mealy strategy
\texttt{slugs} extracts with a smaller automaton before it is realized as an
executable FlexBE HFSM. The reduced automaton, not the raw one, is
what actually runs on the robot. Thus, collapsing Mealy strategy states is not
merely a cosmetic simplification. It substitutes the synthesized transition
system for another.
This section
gives the semantic argument for why that substitution is safe. In particular,
it explains why the reduced automaton preserves the executable
capability-level behavior of the strategy \texttt{slugs} proved winning and
identifies what the reduction does not preserve.

\subsubsection{Core and Pending Outputs}

Recall from \resec{on-demand-spec} that pending propositions
$\pi_a^p \in AP_O$ are introduced only to let the Fair-Outcome liveness
formulation (\resec{liveness}) distinguish a freshly issued activation from
one already in flight. They are bookkeeping internal to the GR(1)
specification, not part of what a capability actually does. Let
$AP_O^p \subseteq AP_O$ denote the set of all such pending propositions, and
for $o \subseteq AP_O$ write $o {\downarrow} = o \setminus AP_O^p$ for its
\emph{core} projection, retaining every nonpending system-output
proposition, including capability activations, parameter selections, and
any auxiliary system-memory outputs.
The reducer's merge test compares Mealy
strategy states by $\mathit{out}(s){\downarrow}$ rather than $\mathit{out}(s)$,
so two states whose outputs agree on every proposition except which pending
flags are set are eligible to merge; Definition~\ref{def:merge-test} gives
the second, successor-matching condition the states must also satisfy
before a merge actually happens. Although $AP_O^p$ affects the environment-fairness
assumption used at synthesis time (\resec{liveness}), no downstream
consumer -- the HFSM realization in particular -- branches on pending
propositions once synthesis has completed (\resec{pipeline_config}).
Appendix~\ref{app:reduction-proofs} gives the formal output-trace
equivalence relation used by the proof.

\subsubsection{The Reduction Procedure}

The pipeline's reduction stage does two
things, in order. It performs an iterated indegree-zero pruning pass
(Appendix~\ref{app:reduction-proofs} notes this is sound but not always
complete), then it merges Mealy strategy states satisfying a
concrete, checkable approximation of Definition~\ref{def:trace-equiv}.

For readability, we use the appendix notation here: $A=(S,\delta,s_0)$ is
the raw strategy automaton, $\mathit{in}(s)$ and $\mathit{out}(s)$ are the
arrival-input and output labels on state $s$, and $\tau_0(s)$ maps each
outgoing arrival label to its successor after pruning. At merge stage $G_k$,
$\tau_k$ retains those exact input keys while redirecting their targets to
current representatives. The relation $u \approx w$
denotes output-trace equivalence, defined formally in
Appendix~\ref{app:reduction-proofs}. As in the JSON strategy representation
of \relist{slugs-example-json}, each serialized node corresponds to one Mealy
strategy state in $S$. We use \emph{Mealy strategy state} here to distinguish
these automaton states from realized FlexBE HFSM states and from the split
visual states in \refig{slugs-example-mealy}.

The formal statements and proofs are given in
Appendix~\ref{app:reduction-proofs}. Pruning is behavior preserving as no state
removed by Definition~\ref{def:pruning} lies on any $\delta$-path from
$s_0$, so removing it changes neither the set of executions reachable from
$s_0$ nor any label along them (Lemma~\ref{lem:pruning-sound}).

Definition~\ref{def:merge-test} is exactly the equality check implemented by
the reducer. It compares masked output valuations and compares the two
states' current outcome-indexed target maps $\tau_k(\cdot)$
(Definition~\ref{def:outcome-targets}). Their exact input keys are fixed once
immediately after pruning, before any merge can make a target's arrival label
ambiguous; only their target values are redirected. Initializing the maps is
one additional linear pass over the raw edge set, so it does not change the
sweep's asymptotic cost.

This construction means the reducer consults $\mathit{in}(\cdot)$ only on
successor states, and only before merging begins. The procedure applies
Definition~\ref{def:merge-test} left to right over the states remaining
after pruning, updating the current automaton in place after each merge.
The implementation may repeat this same merge sweep until a sweep makes no
change, but each sweep still uses the same sufficient local merge test rather
than a partition-refinement decision procedure such as Hopcroft's
algorithm~\cite{Hopcroft2001}.
Definition~\ref{def:merge-test} is a sufficient, checkable condition for
$\approx$, not a decision procedure for it.

Appendix~\ref{app:reduction-proofs} formalizes this argument and later returns
to its cost. Every pair $(u,w)$ the reducer merges satisfies $u \approx w$ in
the original raw automaton $A$ (Lemma~\ref{lem:merge-sound}).

Together, pruning and merge soundness give the headline result
(Theorem~\ref{thm:reduction-behavior}).
Definition~\ref{def:reduced-automaton} constructs $A'$ explicitly: it first
prunes $A$ to $G_0$, then obtains each $G_{k+1}$ from $G_k$ by one merge and
global transition redirection, and finally sets $A'=G_m$. The resulting
automaton is
guaranteed to have the same core-output traces as $A$ from their respective
initial states. Its input-indexed transition function retains the exact raw
input labels as fixed keys while redirecting their targets to surviving
representatives; a merged representative's unioned arrival metadata is not
used to execute the reduced automaton. Concretely, $A$ and $A'$ admit exactly
the same environment input sequences from $s_0$ and $s_0'$, respectively;
executing any such sequence against both automata produces identical sequences
of core output valuations
$\mathit{out}(\cdot){\downarrow}$, and hence identical capability
activations and parameter choices at every step. Corollary~\ref{cor:guarantee-transfer}
is therefore a statement about the
formal object passed to controller realization; if the raw strategy $A$
witnesses realizability of \reqn{varphi}, then the reduced automaton $A'$
preserves the capability-level behavior guaranteed by the raw strategy,
within the core-output projection formalized in Appendix~\ref{app:reduction-proofs}.

The realized FlexBE HFSM is then generated mechanically from $A'$ and the
preprocessed discrete abstraction (\resec{pipeline_config}). Each
nonterminal Mealy strategy state of $A'$ that selects an executable
capability configuration becomes a corresponding FlexBE state; realization
instantiates the FlexBE state named by that configuration's
\texttt{class\_decl}; and each concrete FlexBE outcome is normalized through
the shared \texttt{state\_outcome\_mapping} before being wired to the
successor associated with the corresponding abstract outcome. Bootstrap/null
states and reduced states that differ only in terminal symbolic context are
not instantiated this way -- they fold into root-container wiring or a
shared FlexBE outcome instead, as \resec{controller-realization} details.
Thus the implementation step is a deterministic outcome-remapping
construction. The
paper does not include a machine-checked compiler proof for this generator,
but the construction is mechanical and directly exercised by targeted
generator unit tests, forced-outcome simulation, and hardware execution. The
formal proof establishes the raw-to-reduced strategy equivalence; these tests
validate the subsequent strategy-to-HFSM realization against the generated
transition wiring.

Corollary~\ref{cor:guarantee-transfer} is the precise sense in which reduction
``preserves correctness''. It is a representation change of an
already-winning strategy, not a new synthesis step whose winning status must
be independently re-established. This is also why \resec{auditor} treats the
post-reduction audit as a confirmatory structural check rather than as the
source of the correctness argument. That source is
Theorem~\ref{thm:reduction-behavior}, established here independently of the
auditor.

\subsubsection{What Reduction Does Not Preserve}

Theorem~\ref{thm:reduction-behavior} is deliberately a claim about
\emph{forward} behavior, not about how a merged state was reached. This
leaves two caveats.

The first caveat concerns the pending propositions. A merge preserves the
core output projection, but not the discarded pending bits, so $A'$ is not,
in general, a valid assignment to the original GR(1) propositions. Only its
core projection is meaningful, which is all
Corollary~\ref{cor:guarantee-transfer} claims.

The second caveat concerns arrival metadata. A merged state's incoming label
is the union of the metadata from the raw states folded into it, so
\resec{auditor}'s strict per-edge protocol checks cannot be applied
unmodified after reduction. The remaining structural checks do not require
exact per-edge arrival metadata for the goal-label kinds used by this
pipeline: deadlock is graph-structural, nonpending output goals are preserved
by the merge test's core-output equality, and outcome-token input goals are
tracked in the unioned incoming-label metadata. For those labels, reduction
can make a state appear more able to reach the goal, never less. It can mask
an existing trap but cannot manufacture a new one, which is why we
audit before reduction in the pipeline.
Appendix~\ref{app:reduction-proofs}
spells out this goal-label restriction; the argument does not extend to
arbitrary $AP_I$ goal propositions outside the outcome vocabulary.

Because the pipeline reduces only raw strategies that have already passed the
raw audit, the post-reduction audit confirms the reducer implementation
rather than supplying the correctness argument. The theory above already
rules out reduction introducing a genuinely new violation for the pipeline's
configured goal labels.
Appendix~\ref{app:reduction-proofs} gives the full argument and explains why
repeating the local merge sweep to a fixed point still need not produce the
smallest possible reduced automaton.

%% file: ControllerRealization.tex
\subsection{Controller Realization}
\label{sec:controller-realization}

After synthesis, auditing, and state-merging reduction
(\resec{auditor}, \resec{reduction}), the Controller Realization stage
projects the reduced strategy through the domain abstraction before
emitting FlexBE code. The \textbar{}Slugs SM\textbar{}
(where SM abbreviates state machine) and
\textbar{}Reduced\textbar{} rows reported later in this paper count states
in the explicit Mealy strategy returned by \texttt{slugs} and in the
reduced strategy automaton, respectively. They are therefore
strategy-level quantities, not necessarily the number of FlexBE state
instantiations in the realized controller.

Executable strategy states whose outputs select a concrete capability
become ordinary FlexBE states. For each such state, realization instantiates
the FlexBE state implementation named by preprocessing
(\resec{pipeline_config}), supplies any fixed or symbolic parameters, and
wires each concrete FlexBE outcome through the same
\texttt{state\_outcome\_mapping} used during specification generation. This
shared mapping is what connects a concrete outcome such as \texttt{done},
\texttt{true}, \texttt{failed}, or \texttt{timeout} back to the abstract
outcome proposition that selects the next reduced-automaton state.

Not every reduced strategy state becomes a separate executable FlexBE state.
Some reduced states exist only to initialize the generated behavior or to
hold the enumerated encoding's startup value. We refer to these as
bootstrap/null states. The initial reduced state becomes the root
container's start transition. The enumerated encoding's synthetic
\texttt{null} value, introduced as the startup slot in
\resec{enumerated}, is never emitted as a robot action.
Likewise, multiple reduced states that differ only in terminal symbolic
bookkeeping can map to the same FlexBE terminal outcome, such as
\texttt{finished} or \texttt{failed}. This projection changes the count of
realized FlexBE state instantiations, but it does not change the
capability-level behavior represented by the executable action states.

For this reason, differences between \textbar{}Reduced\textbar{} and the
number of realized FlexBE states must be interpreted through the projection
above. Small differences often reflect root-container bookkeeping, while
larger differences can arise when many reduced terminal contexts map to a
shared FlexBE outcome. The domain discussions below call out the cases
where this distinction affects how the strategy-size rows should be read.

%% file: Discussion.tex
\section{Results and Discussion}
\label{sec:discussion}
\FloatBarrier

This section evaluates the pipeline choices introduced above across the
released case-study suite, using the four domains of \resec{demonstrations}
to test the two hypotheses raised in \resec{intro}. The
implementation, demonstration artifacts, directions for running the
demonstrations, and raw per-seed experimental data are released open source
(\cite{flexbe-synthesis, flexbe-synthesis-demo}; see the Data and code
availability declaration for exact tags and commit identifiers). The raw data
support percentile, interquartile, and paired ordering-by-ordering analyses
beyond the compact summaries reported below.
The subsections below summarize the main per-domain findings in compact
tables focused on the key outcome and timing rows; full per-configuration
tables and supporting analyses appear in
\seedetail{coffee-results-app}, \seedetail{two-rivers-results-app},
\seedetail{pyrobosim-results-app}, and \seedetail{crazyflie-results-app}.

\paragraph{Strategy Non-Uniqueness.} Realizability is a property of the
specification, but nearly every quantitative measurement below -- extracted
and reduced state counts, BDD statistics, and timing -- is a property of the
particular winning strategy \texttt{slugs} extracts for a given declaration
order and CUDD configuration, not an invariant over every possible winning
strategy the specification admits. The comparisons below therefore measure
what \texttt{slugs}, under our fixed CUDD configuration, actually selects
across the tested orderings, not a property guaranteed to hold for every
winning strategy or under a different solver's tie-breaking.

The two hypotheses raised in \resec{intro} are as follows. \textbf{H1
(encoding)} predicts that the enumerated capability encoding's reduction in
atomic-proposition count (\resec{enumerated}) lowers symbolic synthesis
cost. \textbf{H2 (liveness)} predicts that Fair-Outcome liveness's retry semantics
(\resec{liveness}) preserve designer-intended task progress wherever the
specification remains realizable. Neither holds as stated.
\resec{summary-results} states what the evidence below shows for each hypothesis
across all four domains. Readers who want that conclusion first can
read that subsection directly and return to the Coffee, Two-Rivers,
PyRoboSim, and Crazyflie subsections below for the supporting per-domain
evidence.

\resec{demonstrations} introduced four case-study domains. For quantitative
comparison, Coffee contributes separate base and extended capability sets and
PyRoboSim contributes separate P0 and P1 models, yielding six comparisons:
Coffee base, Coffee extended, Two-Rivers, PyRoboSim P0, PyRoboSim P1, and the
quadcopter domain. Coffee base has no failure-prone capabilities, so its
liveness variants coincide and it informs only the encoding axis.
\retab{experimental-design} summarizes the design factors crossed across these
comparisons: capability encoding, liveness formulation, and pending memory under
a shared declaration-ordering and CUDD-configuration protocol.

\begin{table*}[b]
\centering
\small
\caption{Experimental design factors compared across all six
domain/subdomain comparisons (\resec{demonstrations}). Crossing encoding
$\times$ liveness $\times$ pending gives up to eight named configurations
per comparison, labeled by encoding digit (\textbf{1}/\textbf{2}/\textbf{3}/\textbf{4}),
liveness letter (\textbf{S}/\textbf{F}), and an optional \textbf{P} suffix
for pending memory, following this section's naming convention. Not every
combination is realizable or attempted in every domain, and each domain
additionally reports one or two hand-written baselines outside this grid.}
\label{tbl:experimental-design}
\setlength{\tabcolsep}{4pt}
\begin{tabular}{p{0.20\textwidth}p{0.67\textwidth}}
\toprule
\textbf{Factor} & \textbf{Levels tested} \\
\midrule
Capability encoding & One-hot (\resec{one-hot-encoding}) and enumerated (\resec{enumerated}) \\
Liveness formulation & System-Goal (\resec{liveness}) and Fair-Outcome (\resec{liveness}) \\
Pending memory & Without or with a persistent proposition tracking whether a completion is still owed (\resec{on-demand-spec}) \\
Comparisons & Coffee base and extended capability sets (\resec{coffee-results}), Two-Rivers (\resec{two-rivers-results}), PyRoboSim P0 without door modeling (\resec{pyrobosim-p0-results}), PyRoboSim P1 with door modeling (\resec{pyrobosim-p1-results}), and Crazyflie/PiHawk hardware (\resec{crazyflie-results}) \\
Declaration ordering & Fixed alphabetic, fixed capability-grouped (``domain-grouped''), and $N$ random-seed orderings ($N=40$ for Coffee/Two-Rivers and $N=20$ for PyRoboSim P0/Crazyflie). Each random seed shuffles the input declarations and the output declarations, except that the enumerated encoding's shared \texttt{capability} declaration is always written first in the output block. PyRoboSim P1's main table reports single fixed-ordering trials; Appendix~\ref{sec:pyrobosim-p1-ordering-app} adds a 20-random-seed sweep for \textbf{4S}/\textbf{4F}. \\
CUDD reordering policy & Threshold 250 for the principal experiment grid unless otherwise noted. PyRoboSim P1's original single-ordering tables (\retab{pyrobosim-p1-results}, \retab{pyrobosim-p1-results-app}) use the P1 configuration's uncapped (``auto'') policy; Appendix~\ref{sec:pyrobosim-p1-ordering-app} separately reports threshold-250 P1 ordering experiments for \textbf{4S}/\textbf{4F}. \\
\bottomrule
\end{tabular}
\end{table*}

\paragraph{Notation.} Per-domain configurations are named
\texttt{<encoding digit><liveness letter>[P]}, e.g.\ \textbf{2S} or
\textbf{3FP}. They are read as follows:
\begin{itemize}[nosep]
    \item \textbf{Encoding digit}: The capability encoding used. Each
    domain lists its baselines and encodings in its own order, so the
    digit-to-encoding mapping is assigned per domain rather than fixed
    globally (within a domain, the lower digit is one-hot and the higher
    digit is enumerated).
    \item \textbf{S}/\textbf{F}: System-Goal/Fair-Outcome liveness (\resec{liveness}).
    \item \textbf{P} suffix: Pending memory added to the activation specification (\resec{on-demand-spec}).
\end{itemize}

The summary tables in this section report specification sizes, strategy and
reduced state machine sizes, selected BDD statistics tracked internally by
\texttt{slugs}, and seven timing rows in pipeline order:
\textbf{Generate Specs},
\textbf{Well-Separation}, \textbf{Realizability}, \textbf{Extraction}, \textbf{Auditing},
\textbf{Realize HFSM}, and \textbf{Overall}. \retab{timing-rows} summarizes what each row
measures and how it is timed. In short, \textbf{Well-Separation}, \textbf{Realizability}, and
\textbf{Extraction} are internally reported times from \texttt{slugs} and so avoid the Python pipeline
wrapper's process and subprocess-invocation overhead, while the remaining four rows are module
wall-clock times measured by the Python pipeline harness and therefore include interpreter and
subprocess start-up overhead the three \texttt{slugs}-internal rows do not
(\resec{auditor-complexity} discusses this same effect for \textbf{Auditing} specifically). The
appendices give the full per-configuration rows behind these summaries. All
reported specification counts are taken from the compiled \texttt{slugsin}
format after enumerated types are expanded into binary encoded bits.
The peak-node statistic is the node storage CUDD has allocated, counted in
blocks of 1{,}022 nodes and including nodes on its free list. It is therefore
an exact multiple of 1{,}022 that slightly overestimates the true peak table
size, and it is coarse when peaks are only a few thousand nodes, as for
Coffee. The live-node statistic is the number of live nodes when the
realizability check ends. It is not a peak, and it is not quantized.
The CUDD reordering threshold is the live-node count at which the first
dynamic reordering fires. CUDD's default is 4{,}004 live nodes, and after each
reordering CUDD schedules the next one at about twice the live-node count, so
a threshold of 250 changes when the first reordering happens and the schedule
that follows, not a fixed cadence.

\begin{table*}[t]
\centering
\small
\caption{The seven timing rows reported for every synthesis configuration in
\resec{discussion}, in pipeline order. \textbf{Well-Separation}, \textbf{Realizability}, and
\textbf{Extraction} are internally reported times from \texttt{slugs}. The remaining four are Python
pipeline-harness wall-clock measurements and so include interpreter and subprocess start-up
overhead the three \texttt{slugs}-internal rows do not.}
\label{tbl:timing-rows}
\renewcommand{\arraystretch}{1.3}
\begin{tabular}{p{0.13\textwidth}p{0.55\textwidth}p{0.15\textwidth}}
\toprule
\textbf{Row} & \textbf{What It Measures} & \textbf{Timing Source} \\
\midrule
Generate Specs & Building the GR(1) specification from the request's capability subset (\resec{overview-stages}). & Python harness \\
Well-Separation & Assumption-falsification check on the compiled specification (\resec{well-separation-module}), a separate \texttt{slugs} invocation. & \texttt{slugs}-internal \\
Realizability & The \emph{Synthesis time} reported by \texttt{slugs}, the GR(1) fixed-point computation that decides realizability and, when realizable, yields the symbolic winning strategy as a byproduct rather than a separate pass. & \texttt{slugs}-internal \\
Extraction & The subsequent walk converting the symbolic strategy into the explicit Mealy machine \texttt{slugs} returns. Realizable rows only. & \texttt{slugs}-internal \\
Auditing & The post-synthesis auditor's pass over the extracted machine (\resec{auditor}). & Python harness \\
Realize HFSM & Translating the reduced strategy into an executable FlexBE HFSM and generating its initial GUI layout (\resec{overview-stages}). & Python harness \\
Overall & Full pipeline wall-clock time, including preprocessing and start-up overhead not attributed to any single stage above. It is \emph{not} simply the sum of the other six, and this fixed overhead can exceed their sum combined at millisecond scale. & Python harness \\
\bottomrule
\end{tabular}
\end{table*}

This data was gathered using Ubuntu 24.04 on a Dell Inc. Precision Tower 5810 with
Linux 6.17.0-35-generic, an Intel Xeon E5-1603 v3 CPU at 2.80GHz, 4 cores,
and 15GiB memory. Unless explicitly described as a reproduction on the
faster machine below, the tables and quantitative discussion report these
Tower 5810 results. As a further sanity check, a separate rerun on
another Tower 5810-class workstation reproduced the qualitative outcomes
and strategy sizes; differences were limited to wall-clock timing and
sub-1\% shifts in a few aggregate BDD-node means. This rerun is not
separately tabulated below.

To check whether the findings
below are an artifact of the specific hardware rather than the
specification structure they are attributed to,
we additionally reproduced the full experiment matrix, across all four
domains, on a third, substantially faster machine: a Dell Pro Max Tower
T2 with an Intel Core Ultra 9 285K CPU (24 cores) and 64\,GiB memory.
This is the
same machine used for the quadcopter hardware-synthesis validation of
\resec{crazyflie-failure-validation}. Realizability,
well-separation, strategy size, and audit outcome matched across every
domain and configuration, as with the same-class rerun above. Wall-clock
timing differed by a roughly consistent factor noted at the end of each
domain's discussion below, and BDD-node means again shifted only
slightly, without changing any comparison or conclusion. The
one exception is enumerated \textbf{2FP} for the quadcopter, which did not
resolve within 24\,h on the original machine but reached a verdict in 16.6\,h
on this faster one (\resec{crazyflie-reliability}).

Reported means and standard deviations are compact summaries, not a claim of
normality. For random-ordering summary rows, each reported mean and sample
standard deviation is computed over all trials in the column, so the trial
count is orderings times repeats. Coffee and Two-Rivers use 40 orderings with
5 repeats each (200 trials per column), while PyRoboSim P0, the PyRoboSim P1
\textbf{4S}/\textbf{4F} sweep, and the quadcopter use 20 orderings with one
trial each. Repeats of one ordering are not independent orderings. Counts are
identical across repeats, and repeats only average out timing noise, so the
number of orderings sets the effective sample size. Several distributions
below are skewed, with outliers noted per domain where they matter most
(\resec{crazyflie-reliability} discusses the most extreme case).
The Coffee ordering-robustness sweep gives the largest direct test of this
issue: across 7{,}140 synthesis trials, an explicit CUDD reordering threshold
removed most initial declaration-order sensitivity where such sensitivity
appeared, making the threshold a more reliable practical lever than
hand-tuning declaration order at Coffee scale (\resec{coffee-results},
\seedetail{coffee-ordering-app}). That advice does not carry over unchanged
to larger models, where the threshold changed PyRoboSim P1's \textbf{4S} cost
by more than an order of magnitude relative to the uncapped policy
(\seedetail{pyrobosim-p1-ordering-app}).

One asymmetry limits the random-ordering comparison across encodings. The
enumerated encoding's shared \texttt{capability} declaration is always
written first, whereas every one-hot proposition is shuffled, so the random
sweeps sample a narrower ordering space for enumerated encoding, and encoding
comparisons under random ordering should be read with that in mind
(\seedetail{two-rivers-ordering-app}).

\input{CoffeeDiscussion}

\input{TwoRiversDiscussion}

\input{PyRoboSimDiscussion}

\input{CrazyFlieDiscussion}

\subsection{Summary}
\label{sec:summary-results}

Across the evaluated domains, the results demonstrate that reducing
proposition count via the enumerated encoding does
not automatically translate into a single, predictable effect on
strategy size or symbolic synthesis cost. The direction of the effect is
domain- and configuration-dependent rather than a uniform trade-off.
Two-Rivers shows the pattern a proposition-count argument would predict in
reverse under the random-seed mean, but not under fixed declaration order,
where enumerated is instead faster and smaller for this domain under those
fixed orderings -- a genuine ordering-sensitivity asymmetry, not an
encoding-superiority reversal (\resec{two-rivers-results},
\seedetail{two-rivers-ordering-app}).
Coffee and the quadcopter case study instead show enumerated encoding
synthesizing faster, with mixed but comparable symbolic footprints, despite
producing a \emph{larger} unreduced strategy in most configurations
(\resec{coffee-results}, \resec{crazyflie-results}). After state reduction,
Coffee's reduced-strategy sizes stay within one state across encodings
(3.5 vs.\ 4 in the base set and 3 vs.\ 4 in the extended set), but the
quadcopter case study's do not. Enumerated encoding's reduced strategy
remains substantially larger than one-hot's even after reduction
(106 vs.\ 76.7 states for System-Goal, \resec{crazyflie-results}).
Controller realization shrinks the enumerated strategy far more than the
one-hot one, reversing the gap. In the alphabetic row of the appendix
realization table, \textbf{1S} realizes to 69 FlexBE states against 54 for
\textbf{2S}; since \textbf{1S}'s reduced size varies with ordering
($76.7 \pm 5.1$), this comparison should not be read as fixed by the
encoding alone (\resec{controller-realization},
\seedetail{crazyflie-realization-app}).
Reduced-strategy size therefore should not be read as a proxy for the size of
the executable state machine.
The reducer used for these runs can repeat its local state-merge sweep to a
fixed point, but this did not change the reported reduced-strategy sizes. Every
experiment reached that fixed point after at most one productive merge sweep;
any additional sweep only confirmed that no further local merges were enabled.
PyRoboSim P0 shows enumerated encoding as both smaller and faster than
one-hot across 20 random-seed orderings; the P1 single-trial comparison,
run under the uncapped reordering policy, shows the same direction, with a
narrower threshold-250 appendix sweep confirming the reported
\textbf{4S}/\textbf{4F} realizability verdicts.
No single direction, smaller-but-slower or otherwise, holds across all four
domains studied here, although PyRoboSim and the quadcopter case study show
substantial synthesis-cost savings from the enumerated encoding.

Within each domain, the liveness formulation had a clear practical
effect on synthesis cost and reliability. System-Goal Liveness was the
preferred formulation on those measures.
Fair-Outcome Liveness increased synthesis difficulty in the main
realizable pending comparisons and most dramatically in the quadcopter case
study, while small flat cases and unrealizable no-pending checks do not follow
a uniform larger-and-slower pattern.

Inspection of the synthesized strategies also showed that formal realizability
does not by itself establish designer-intended progress.
In particular, the interaction among persistent pending memory,
activation-gated outcomes, and environment fairness assumptions admitted
cyclic retry structures that did not ensure the intended terminal condition.
First identified in PyRoboSim (\resec{pyrobosim-p0-results},
\resec{pyrobosim-p1-results}), the same \texttt{GOAL\_UNREACHABLE\_TRAP}
pattern was also confirmed at quadcopter hardware scale
(\resec{crazyflie-reliability}), recurring in both the one-hot and enumerated
encodings of the Fair-Outcome-with-pending configuration.
This is an instance of a known GR(1) hazard, a strategy winning by driving
the environment out of its own assumptions, reviewed in \resec{synthesis}.
Our pipeline checks this risk before synthesis with the well-separation
module (\resec{well-separation-module}) and after synthesis with the auditor
(\resec{auditor}) that detects concrete progress failures on extracted
strategies rather than requiring manual inspection.
In our experiments, every \texttt{GOAL\_UNREACHABLE\_TRAP} audit failure
occurred in a specification that was already non-well-separated, though
whether this pattern holds in general is open (\resec{auditor}).
The converse did not hold; several NWS specifications still produced
auditor-valid strategies that were realized as FlexBE HFSMs.

\begin{table*}[bt]
\centering
\caption{System-Goal liveness without pending memory, both encodings,
across every domain/subdomain studied. This is the only formulation confirmed
as a realized FlexBE HFSM across all six domain/subdomain comparisons under
both encodings. \resec{coffee-results}, \resec{two-rivers-results},
\resec{pyrobosim-p0-results}, \resec{pyrobosim-p1-results}, and
\resec{crazyflie-results} report the corresponding Fair-Outcome and
pending-memory comparisons in full. Bold marks the smaller value in each
one-hot/enumerated pair per metric (ties left unbolded). PyRoboSim P1 rows
are single alphabetic trials under the uncapped (``auto'') reordering policy,
while the other rows use threshold 250. Coffee's base
capability set has no failure-prone capabilities, so its liveness variants
coincide and that row informs only the encoding axis.}
\label{tbl:summary-sg-nopending}
\input{tables/summary_sg_nopending}
\end{table*}

The six rows treat Coffee's base and extended capability sets, Two-Rivers,
PyRoboSim P0, PyRoboSim P1, and the quadcopter domain as separate
domain/subdomain comparisons; the same synthesized controller flies on both
quadcopter platforms, so that domain contributes one comparison, not two.
As in the experimental-design summary above, Coffee base is included only as an
encoding comparison because its liveness variants coincide.
The winning encoding is not consistent across these comparisons, and
reduced-strategy size converges to exactly the same value regardless of
encoding only in Two-Rivers and the two PyRoboSim models.
System-Goal with pending memory is not established the same way, because its
one-hot PyRoboSim P1 configuration did not finish within the 10\,h budget,
which is not evidence of unrealizability.

Within this System-Goal, no-pending slice, the same encoding-direction
pattern holds, including the same Two-Rivers ordering-sensitivity
exception noted above. Enumerated is no slower than one-hot on
realizability time in five of the six comparisons under their
reported means, and is
in fact the smaller and faster choice on Two-Rivers too under fixed
declaration order (\resec{two-rivers-results}, \seedetail{two-rivers-ordering-app}).

Among the formulations evaluated, System-Goal Liveness without pending
memory is therefore the preferred formulation (\retab{summary-sg-nopending}).
Fair-Outcome
without pending fails to realize at all in Two-Rivers and both PyRoboSim
configurations (\resec{two-rivers-results}, \resec{pyrobosim-p0-results},
\resec{pyrobosim-p1-results}). Where it does realize -- Coffee and
Crazyflie -- System-Goal also produced shorter realizability times, except
for Coffee's flat sub-millisecond base cases, where the timing differences
are statistically indistinguishable.\footnote{Those Coffee base-case
differences sit well inside each mean's own standard deviation
(\resec{coffee-results}). Symbolic-size differences are likewise case-specific
rather than uniform: Coffee's BDD-node counts are identical
or near-identical except for the extended Fair-Outcome-with-pending case
(\textbf{3FP}), where System-Goal is 2--3$\times$ smaller, while the
one-hot quadcopter case is a separate exception on raw strategy size, with
System-Goal's raw extracted strategy larger despite its realizability-time
advantage (\resec{crazyflie-results}).} This preference
is not unique to the no-pending case. Where System-Goal with pending memory
completed the pipeline, its main observed difference was cost, most sharply
in Crazyflie, where
pending memory destabilizes one-hot synthesis time by two orders of
magnitude (\textbf{1S}: 19.7\,s vs.\ \textbf{1SP}: 2764.5\,s, \resec{crazyflie-reliability}).
The problematic retry structures, by contrast, were observed only in the
Fair-Outcome strategies examined here.
This is not because System-Goal guarantees task completion against an
indefinitely uncooperative capability. No liveness formulation studied
here can force that, since the environment is adversarial by
construction. It is because System-Goal, unlike Fair-Outcome, introduces
no environment-side promise for the supervisor's own retry logic to
entangle with, so it cannot exhibit \emph{this
particular} self-inflicted pathology. A capability that fails forever
still prevents progress under either formulation, but that outcome is
attributable to the capability, not to a defect the supervisor
introduces.
This empirical preference is specific to the capability and outcome model
studied here and is not intended as a general proof of superiority
over all GR(1) specifications.

These results demonstrate that a specification should be evaluated not only
by its realizability and proposition count, but also by its symbolic synthesis
cost, the operational validity of its environment assumptions, and the
progress properties of the extracted strategy.

%% file: CoffeeDiscussion.tex
\subsection{Coffee Demonstration}
\label{sec:coffee-results}

For the Coffee demonstration, we compare the hand-written baseline against
generated capability specifications under the two capability sets used
throughout this section:
\begin{enumerate}
    \item the hand-written baseline,
    \item the base generated capability specification, and
    \item the extended generated capability specification with completion and failure outcomes.
\end{enumerate}

\retab{coffee-1hot-results-summary} and \retab{coffee-enumerated-results-summary}
summarize the one-hot and enumerated results, respectively. The full detail
tables and accompanying discussion appear in
\seedetail{coffee-results-app}.

\refig{coffee-extended-reduction} illustrates what state-merging reduction
(\resec{reduction}) concretely does to one of these strategies before the
tables below quantify it across all configurations.

\begin{table*}[ht]
\centering
\caption{\textbf{[$N=40$ random-seed orderings.]} Coffee synthesis results with one-hot labeling. Reported values are the
mean over 40 random declaration orderings under a CUDD reordering threshold of
250. Integer-valued count means indicate identical samples, while non-identical
count means are shown with one decimal digit, with a companion row giving
$\pm$ one standard deviation wherever the underlying trials disagree. Column~\textbf{1} is the
hand-written baseline. The leading digit
\textbf{2}/\textbf{3} denotes the base/extended capability set, \textbf{S}/\textbf{F}
denotes System-Goal/Fair-Outcome liveness, and \textbf{P} adds pending
propositions.}
\label{tbl:coffee-1hot-results-summary}
\small
\renewcommand{\arraystretch}{1.25}
\setlength{\tabcolsep}{3pt}
\input{tables/coffee_one_hot_summary}
\end{table*}

\begin{table*}[ht]
\centering
\caption{\textbf{[$N=40$ random-seed orderings.]} Coffee synthesis results with the enumerated capability encoding
(same reporting convention and column legend as
\retab{coffee-1hot-results-summary}). The enumerated encoding has no
hand-written baseline, so column~\textbf{1} is omitted.}
\label{tbl:coffee-enumerated-results-summary}
\renewcommand{\arraystretch}{1.4}
\setlength{\tabcolsep}{3pt}
\input{tables/coffee_enumerated-summary}
\end{table*}

\begin{figure*}[t]
    \centering
    \textbf{Symbolic strategy extracted by \texttt{slugs} (7 states)}\par
    \includegraphics[width=\linewidth]{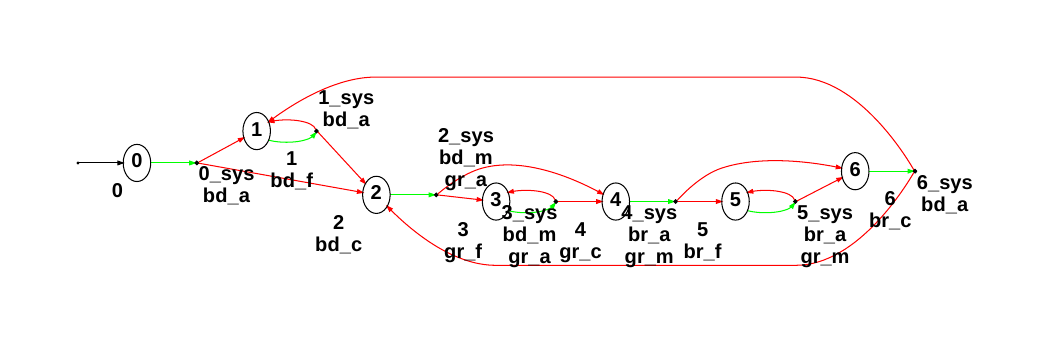}\par
    \vspace{6pt}
    \textbf{After state-merging reduction (3 states)}\par
    \includegraphics[width=\linewidth]{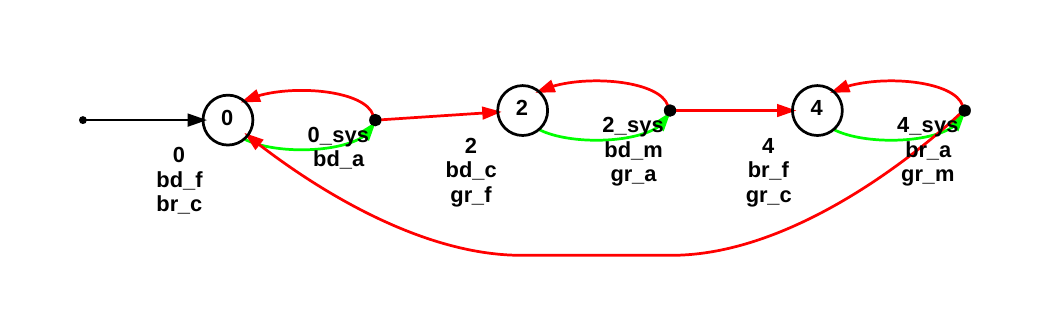}
    \caption{The extended Coffee example's raw synthesized Mealy strategy and
    its state-merged reduction (\resec{reduction}), using the
    \texttt{mealy2dot} visual convention introduced in
    \refig{slugs-example-mealy} (\resec{slugs-example}). Each of the three capabilities'
    \texttt{fail} outcome (\texttt{bd\_f}, \texttt{gr\_f}, \texttt{br\_f})
    loops back to re-activate the same capability rather than advancing, so
    the raw strategy repeats the same activation--fail--reactivate pattern
    at three points in its cycle. Reduction collapses each of those points to
    a single state carrying its own self-loop, leaving exactly the three
    states realized as the executable FlexBE state machine in
    \refig{coffee_sm}. This one-hot extended-capability, System-Goal,
    no-pending configuration reduces from 7 to 3 states identically across
    all 40 declaration orderings (\textbf{3S} in
    \retab{coffee-1hot-results-summary}) -- unlike the smaller non-uniqueness
    case discussed below, this reduction is deterministic.}
    \label{fig:coffee-extended-reduction}
\end{figure*}

Across Tables~\ref{tbl:coffee-1hot-results-summary}
and~\ref{tbl:coffee-enumerated-results-summary},
all Coffee configurations are realizable and produce a
valid FlexBE controller, with the extended Fair-Outcome-pending
configuration (\textbf{3FP}) the sole exception to well-separation (below).
Strategy size follows the expected encoding tradeoff in this small domain:
the enumerated encoding uses fewer input propositions but introduces
slightly larger raw extracted and reduced strategies. Realizability time favors
the enumerated encoding in the reported \texttt{slugs} timings, though the
base-capability differences are sub-millisecond and should not be
overinterpreted; the clearest gaps appear in the extended-capability
columns. This comparison, and every symbolic-cost
comparison in this paper, reflects the CUDD-based BDD representation used by
\texttt{slugs} specifically.
\resec{conclusion} discusses extending the pipeline to other GR(1) backends
to test whether these encoding-cost effects generalize beyond it.
Peak node counts are quantized in blocks of 1{,}022 nodes
(\resec{discussion}). This is negligible for the multi-million-node peaks of
the larger domains, but Coffee peaks range only from 1{,}022 to 9{,}198, so a
Coffee peak difference is one or a few blocks and should not be over-read.
\seedetail{coffee-results-app} reports the full
specification-size and BDD-node breakdown behind
these headline numbers, including proposition and clause counts,
live and peak BDD nodes, and the remaining per-stage timing rows.

The well-separation rows isolate a different issue from realizability or
controller extraction. The extended Fair-Outcome-with-pending specification
(\textbf{3FP}) is the sole Coffee configuration that is not well-separated
under either encoding, though all Coffee specifications remain realizable
and all extracted strategies pass the post-synthesis auditor. The
Fair-Outcome pending rule ties the environment's grind-completion
obligation to a system-controlled retry context: the \texttt{P-all/E-just}
case in the Maoz--Ringert classification~\cite{MaozRingert2016}, exactly the
assumption-falsification risk well-separation is designed to expose.
Still, this is a specification diagnostic,
not an unusable controller. The synthesized \textbf{3FP} controller is
itself small, valid, and executable. The check remains inexpensive
throughout, sub-millisecond and well under one percent of \textbf{Overall}
even at its highest observed value. \seedetail{coffee-results-app} gives
the minimized-core assumption breakdown and per-configuration check timing.

Variable declaration order also affects \emph{which} of several equally realizable
strategies \texttt{slugs} extracts, not only the size of the symbolic
representation examined below. The non-integer \textbar{}Slugs SM\textbar{}
and \textbar{}Reduced\textbar{} means reported for the one-hot
base-capability columns (\textbf{2S}/\textbf{2F}/\textbf{2SP}/\textbf{2FP},
4.5 and 3.5 respectively) in
\retab{coffee-1hot-results-summary}
reflect genuine strategy non-uniqueness under the
generated specification, not measurement noise. An unconstrained ordering
choice between two capability activations lets \texttt{slugs} legally
extract either a direct four-state cycle or a five-state variant with one
redundant re-activation, both equally valid strategies for the identical
specification, and the choice correlates exactly with declaration order
across the 40 one-hot \textbf{2S} random-seed trials
(\refig{coffee-nonuniqueness}). 
The enumerated encoding, with its fixed numeric assignment, does not show
the same non-uniqueness in this comparison. \seedetail{coffee-results-app}
gives the specification-level mechanism and the exact per-ordering split.

\begin{figure*}[t]
    \centering
    \textbf{Direct four-state cycle (22/40 seeds; \texttt{gr\_a} not declared first)}\par
    \includegraphics[width=.90\linewidth]{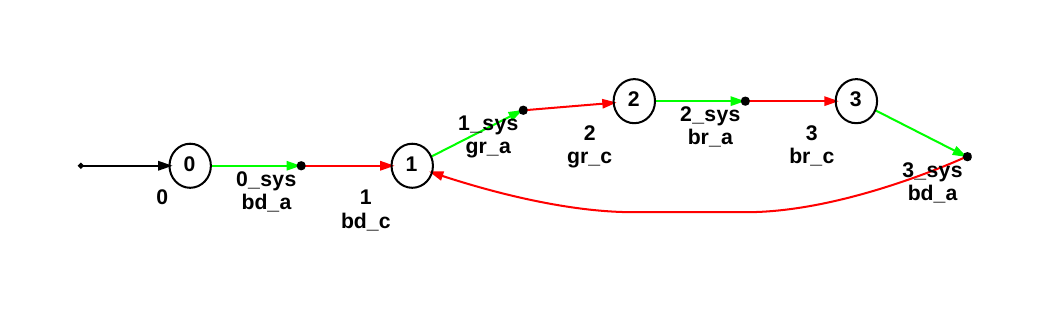}\par
    \vspace{2pt}
    \textbf{Reduced to three states}\par
    \includegraphics[width=.90\linewidth]{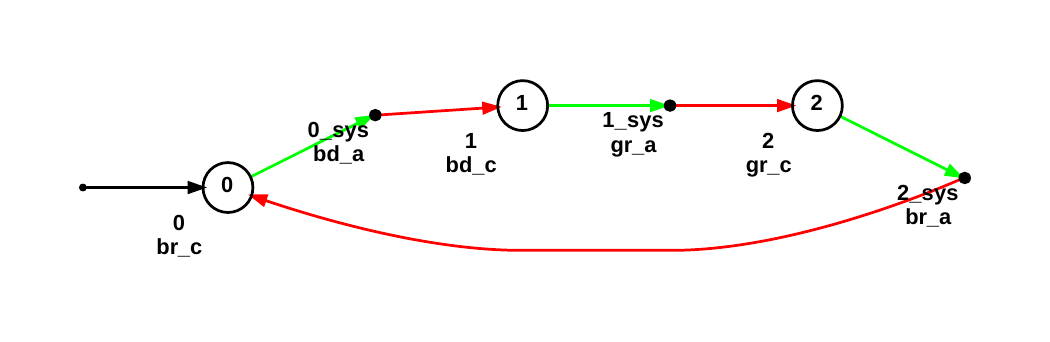}\par
    \vspace{5pt}
    \textbf{Five-state variant with redundant \texttt{bd} activation (18/40 seeds; \texttt{gr\_a} declared first)}\par
    \includegraphics[width=.90\linewidth]{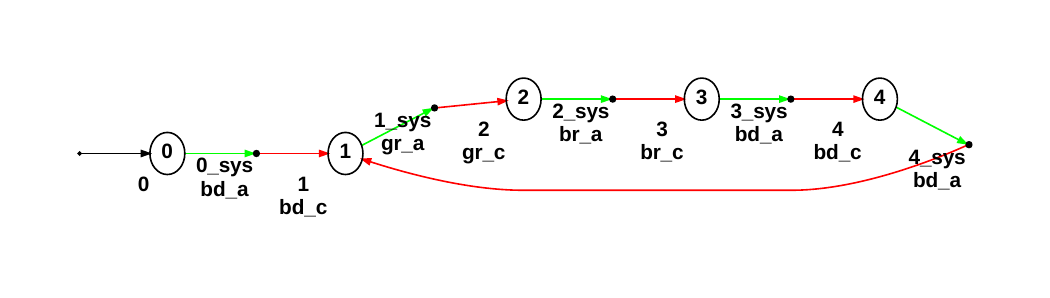}\par
    \vspace{2pt}
    \textbf{Reduced to four states}\par
    \includegraphics[width=.90\linewidth]{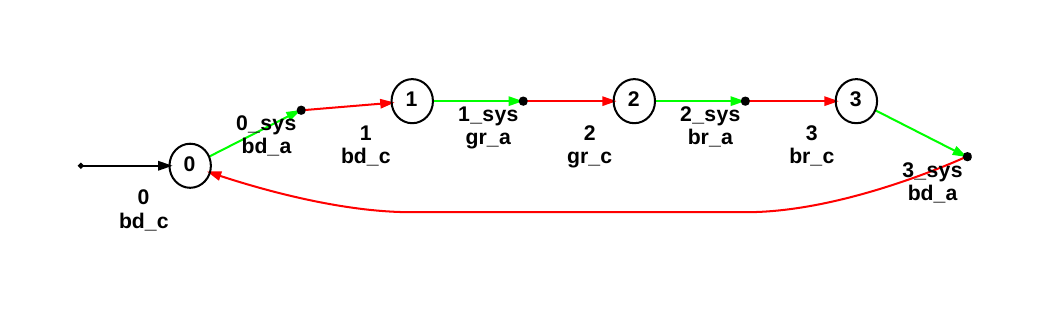}\par
    \caption{Representative symbolic (raw, pre-reduction) and reduced Mealy
    graphs for the two strategy shapes discussed above, both from the
    one-hot \textbf{2S} configuration, using the \texttt{mealy2dot} visual
    convention introduced in \refig{slugs-example-mealy}
    (\resec{slugs-example}). In the top pair, runs in which
    \texttt{gr\_a} is not declared first among the output propositions yield
    the direct four-state cycle, which state-merging reduces to three states.
    In the bottom pair, declaring \texttt{gr\_a} first yields a fifth,
    redundant \texttt{bd} re-activation state before grinding. This extra
    state survives reduction, leaving four states rather than three. Both
    reduced strategies are valid and executable for the identical
    specification.}
    \label{fig:coffee-nonuniqueness}
\end{figure*}

Because one-hot encoding declares more Boolean variables than the
enumerated encoding in these Coffee configurations, we additionally checked
whether the resulting larger declaration space itself affects synthesis cost,
beyond the fixed CUDD reordering policy used throughout this section. A
sweep across all 16 generated Coffee configurations plus the hand-written
baseline, under 42 declaration orderings (two fixed and 40 random) and two CUDD
reordering settings (7{,}140 trials, zero failures), shows the same
effect. Ordering
sensitivity is concentrated in the extended one-hot configurations, though
not confined to them, and is most pronounced where Fair-Outcome liveness
combines with pending memory; an explicit CUDD reordering threshold largely
removes the initial-order gap at this scale.
\seedetail{coffee-results-app} reports the full sweep.

We confirmed this whole picture on the faster machine introduced in
\resec{discussion}. The same 7{,}140-trial Coffee sweep, reproduced there,
reached the same realizability, well-separation, and strategy-size
results throughout, with realizability and overall pipeline times roughly
1.4--3$\times$ faster across configurations.

\FloatBarrier

%% file: tables/coffee_one_hot_summary.tex
\begin{tabular}{|c|r|r|r|r|r|r|r|r|r|}
\hline
 & \textbf{1} & \textbf{2S} & \textbf{3S} & \textbf{2F} & \textbf{3F} & \textbf{2SP} & \textbf{3SP} & \textbf{2FP} & \textbf{3FP} \\
\hline
\rule{0pt}{4ex}\textbf{\shortstack{Well\\Separated}} & Yes & Yes & Yes & Yes & Yes & Yes & Yes & Yes & No \\
\hline
\textbf{Realizable} & Yes & Yes & Yes & Yes & Yes & Yes & Yes & Yes & Yes \\
\hline
\rule{0pt}{4ex}\textbf{\shortstack{FlexBE HFSM\\Realized}} & Yes & Yes & Yes & Yes & Yes & Yes & Yes & Yes & Yes \\
\hline
\textbf{$\left|AP_I\right|$} & 4 & 3 & 6 & 3 & 6 & 3 & 6 & 3 & 6 \\
\hline
\textbf{$\left|AP_O\right|$} & 3 & 3 & 5 & 3 & 5 & 3 & 8 & 3 & 8 \\
\hline
\textbf{\textbar{}Slugs SM\textbar} & 7 & 4.5 & 7 & 4.5 & 7 & 4.5 & 7 & 4.5 & 7 \\
\hline
 &  & $\pm$ 0.5 &  & $\pm$ 0.5 &  & $\pm$ 0.5 &  & $\pm$ 0.5 &  \\
\hline
\textbf{\textbar{}Reduced\textbar} & 3 & 3.5 & 3 & 3.5 & 3 & 3.5 & 3 & 3.5 & 3 \\
\hline
 &  & $\pm$ 0.5 &  & $\pm$ 0.5 &  & $\pm$ 0.5 &  & $\pm$ 0.5 &  \\
\hline
\rule{0pt}{4ex}\textbf{\shortstack{Realizability\\(ms)}} & 0.231 & 0.081 & 2.149 & 0.081 & 3.465 & 0.088 & 2.782 & 0.082 & 6.316 \\
\hline
 & $\pm$ 0.252 & $\pm$ 0.011 & $\pm$ 0.758 & $\pm$ 0.012 & $\pm$ 1.591 & $\pm$ 0.084 & $\pm$ 1.301 & $\pm$ 0.015 & $\pm$ 2.230 \\
\hline
\rule{0pt}{4ex}\textbf{\shortstack{Realize\\HFSM (ms)}} & 126.492 & 119.206 & 128.289 & 120.229 & 128.522 & 119.985 & 129.209 & 119.671 & 128.766 \\
\hline
 & $\pm$ 15.333 & $\pm$ 7.081 & $\pm$ 10.274 & $\pm$ 8.214 & $\pm$ 6.426 & $\pm$ 7.705 & $\pm$ 7.239 & $\pm$ 7.128 & $\pm$ 6.209 \\
\hline
\rule{0pt}{4ex}\textbf{\shortstack{Overall\\(ms)}} & 199.273 & 187.268 & 208.771 & 186.761 & 210.066 & 188.341 & 218.585 & 188.579 & 221.405 \\
\hline
 & $\pm$ 23.069 & $\pm$ 8.482 & $\pm$ 11.877 & $\pm$ 8.207 & $\pm$ 7.618 & $\pm$ 8.925 & $\pm$ 9.269 & $\pm$ 7.976 & $\pm$ 7.997 \\
\hline
\end{tabular}

%% file: tables/coffee_enumerated-summary.tex
\begin{tabular}{|c|r|r|r|r|r|r|r|r|}
\hline
 & \textbf{2S} & \textbf{3S} & \textbf{2F} & \textbf{3F} & \textbf{2SP} & \textbf{3SP} & \textbf{2FP} & \textbf{3FP} \\
\hline
\rule{0pt}{4ex}\textbf{\shortstack{Well\\Separated}} & Yes & Yes & Yes & Yes & Yes & Yes & Yes & No \\
\hline
\textbf{Realizable} & Yes & Yes & Yes & Yes & Yes & Yes & Yes & Yes \\
\hline
\rule{0pt}{4ex}\textbf{\shortstack{FlexBE HFSM\\Realized}} & Yes & Yes & Yes & Yes & Yes & Yes & Yes & Yes \\
\hline
\textbf{$\left|AP_I\right|$} & 2 & 2 & 2 & 2 & 2 & 2 & 2 & 2 \\
\hline
\textbf{$\left|AP_O\right|$} & 2 & 4 & 2 & 4 & 2 & 7 & 2 & 7 \\
\hline
\textbf{\textbar{}Slugs SM\textbar} & 6 & 8 & 6 & 8 & 6 & 8 & 6 & 8 \\
\hline
\textbf{\textbar{}Reduced\textbar} & 4 & 4 & 4 & 4 & 4 & 4 & 4 & 4 \\
\hline
\rule{0pt}{4ex}\textbf{\shortstack{Realizability\\(ms)}} & 0.050 & 0.059 & 0.048 & 0.127 & 0.049 & 0.625 & 0.049 & 3.127 \\
\hline
 & $\pm$ 0.029 & $\pm$ 0.004 & $\pm$ 0.004 & $\pm$ 0.068 & $\pm$ 0.010 & $\pm$ 0.703 & $\pm$ 0.008 & $\pm$ 1.015 \\
\hline
\rule{0pt}{4ex}\textbf{\shortstack{Realize\\HFSM (ms)}} & 118.138 & 126.725 & 117.459 & 127.876 & 118.096 & 127.853 & 117.283 & 128.133 \\
\hline
 & $\pm$ 7.580 & $\pm$ 5.731 & $\pm$ 6.042 & $\pm$ 7.035 & $\pm$ 6.359 & $\pm$ 6.357 & $\pm$ 7.725 & $\pm$ 6.654 \\
\hline
\rule{0pt}{4ex}\textbf{\shortstack{Overall\\(ms)}} & 188.318 & 211.843 & 188.635 & 212.031 & 189.694 & 220.097 & 188.992 & 218.873 \\
\hline
 & $\pm$ 7.810 & $\pm$ 7.989 & $\pm$ 6.781 & $\pm$ 9.077 & $\pm$ 7.940 & $\pm$ 8.789 & $\pm$ 8.193 & $\pm$ 8.544 \\
\hline
\end{tabular}

%% file: TwoRiversDiscussion.tex
\subsection{Two-Rivers Demonstration}
\label{sec:two-rivers-results}

For the Two-Rivers demonstration, we consider three variants:

\begin{enumerate}
    \item {A full specification with one-hot encoded movements
    (e.g. \texttt{move\_goat\_a}, \texttt{move\_farmer\_a}) and an
    enumerated location value for each item
    (e.g. \texttt{farmer:0...2}, encoded with two bits).
    }
    \item {A shared one-hot movement proposition,
    \texttt{move\_state\_a}, with an enumerated item parameter
    \texttt{move\_item:0...3}.}
    \item {The proposed enumerated capability encoding, otherwise matching
    item 2.}
\end{enumerate}

Item 1 is included only as a System-Goal, no-pending baseline. To save
space, \retab{rivers-results-summary} itself excludes both hand-written baselines,
\texttt{full\_spec} (item 1's per-item one-hot movement encoding) and
\texttt{full\_spec\_parsed} (hand-written using item 2/3's shared
\texttt{move\_state\_a}/\texttt{move\_item} structure). Both are realizable,
well-separated, and produce a valid FlexBE HFSM across all 40 random
declaration orderings. Their symbolic cost is in the same order of
magnitude as the capability-generated columns.
\seedetail{two-rivers-results-app} gives the exact comparison.

\retab{rivers-results-summary} summarizes the Two-Rivers results by realizability,
well-separation, strategy size, and the three headline timing rows. The full
specification-size and BDD-node breakdown, and accompanying discussion, appear in
\seedetail{two-rivers-results-app}.
The table compares the automatically generated one-hot capability encoding
(2) with the proposed enumerated capability encoding (3). Each is evaluated
under System-Goal (S) and Fair-Outcome (F) liveness, with and without
pending (P) bits. In both generated encodings, the specification logs the
finished state and any failure outcomes. Without pending bits, the
Fair-Outcome liveness conditions make both specifications
\emph{unrealizable}; the System-Goal specifications do not require them.

\begin{table*}[htbp]
\centering
\caption{\textbf{[$N=40$ random-seed orderings.]} Two-Rivers synthesis results. Reported values are means over 40 random
declaration orderings under a CUDD reordering threshold of 250 ($N=200$ trials
per column, 40 orderings $\times$ 5 repeats). Integer-valued count means
indicate identical samples, while non-identical count means are shown with one
decimal digit, with a companion row giving $\pm$ one standard
deviation wherever the underlying trials disagree. Columns marked
unrealizable have no extracted strategy.}
\label{tbl:rivers-results-summary}
\renewcommand{\arraystretch}{1.3}
\setlength{\tabcolsep}{3pt}
\input{tables/two_rivers_summary}
\end{table*}

Across the realizable Two-Rivers variants, enumerated encoding reduces
proposition counts and yields smaller raw extracted strategies
(\textbar{}Slugs SM\textbar{} drops from 37 to 26, while
\textbar{}Reduced\textbar{} ties at 23), but it does not
uniformly reduce symbolic synthesis cost. Under System-Goal liveness,
enumerated encoding increases the random-ordering mean peak BDD size
relative to one-hot encoding. This reversal does not hold under fixed
declaration order; \seedetail{two-rivers-ordering-app} traces it to an
ordering-sensitivity asymmetry between the encodings on this domain, not to
a genuine encoding-superiority reversal. Mean realizability time under the
random orderings is likewise higher for enumerated encoding in all three
realizable pairs (53.8 vs.\ 90.7\,ms, 59.5 vs.\ 67.9\,ms, and 76.6 vs.\
82.9\,ms), though each gap lies within one standard deviation of the more
variable column.

The Fair-Outcome formulation without pending memory is unrealizable for both
encodings. Pending memory restores realizability, but within each encoding it
produces the largest peak BDDs among the reported realizable
capability-based variants. Together
with the ordering-sensitivity reversal above, these results highlight the
sensitivity of GR(1) synthesis performance to specification structure
rather than merely problem size.
\seedetail{two-rivers-results-app} reports the exact BDD-node figures behind
these headline numbers.

The well-separation rows isolate the same effect seen in Coffee's extended
Fair-Outcome-pending case (\resec{coffee-results}). Every System-Goal
specification is well-separated, while every Fair-Outcome specification is
not, the same \texttt{P-all/E-just} violation tying the environment's
completion obligation to a system-controlled activation or pending
context. Pending memory repairs realizability for \textbf{2FP}/\textbf{3FP}
without repairing this underlying assumption-structure issue. Both remain
NWS despite being realizable and auditor-valid, a specification-level
hazard rather than a synthesizer failure. The check itself remains cheap
relative to the rest of the pipeline throughout, though less uniformly so
than in Coffee. \seedetail{two-rivers-results-app} gives the representative
minimized well-separation cores and per-configuration check timing.

The same reproduction on the faster machine (\resec{discussion})
confirmed Two-Rivers' realizability, well-separation, and strategy-size
pattern, including the random-seed encoding reversal traced to ordering
sensitivity above, with realizability times roughly 1.7--3$\times$ faster
across configurations.

\FloatBarrier

%% file: tables/two_rivers_summary.tex
\begin{tabular}{|c|r|r|r|r|r|r|r|r|}
\hline
 & \textbf{2S} & \textbf{3S} & \textbf{2F} & \textbf{3F} & \textbf{2SP} & \textbf{3SP} & \textbf{2FP} & \textbf{3FP} \\
\hline
\rule{0pt}{4ex}\textbf{\shortstack{Well\\Separated}} & Yes & Yes & No & No & Yes & Yes & No & No \\
\hline
\textbf{Realizable} & Yes & Yes & No & No & Yes & Yes & Yes & Yes \\
\hline
\rule{0pt}{4ex}\textbf{\shortstack{FlexBE HFSM\\Realized}} & Yes & Yes & No & No & Yes & Yes & Yes & Yes \\
\hline
\textbf{$\left|AP_I\right|$} & 13 & 10 & 13 & 10 & 13 & 10 & 13 & 10 \\
\hline
\textbf{$\left|AP_O\right|$} & 10 & 8 & 10 & 8 & 11 & 9 & 11 & 9 \\
\hline
\textbf{\textbar{}Slugs SM\textbar} & 37 & 26 & -- & -- & 37 & 26 & 37 & 26 \\
\hline
\textbf{\textbar{}Reduced\textbar} & 23 & 23 & -- & -- & 23 & 23 & 23 & 23 \\
\hline
\rule{0pt}{4ex}\textbf{\shortstack{Realizability\\(ms)}} & 53.787 & 90.677 & 8.803 & 16.953 & 59.478 & 67.922 & 76.616 & 82.875 \\
\hline
 & $\pm$ 9.562 & $\pm$ 44.829 & $\pm$ 2.997 & $\pm$ 8.673 & $\pm$ 10.509 & $\pm$ 23.562 & $\pm$ 20.325 & $\pm$ 28.650 \\
\hline
\rule{0pt}{4ex}\textbf{\shortstack{Realize\\HFSM (ms)}} & 671.875 & 665.201 & -- & -- & 676.240 & 666.855 & 674.775 & 672.307 \\
\hline
 & $\pm$ 13.727 & $\pm$ 8.893 &  &  & $\pm$ 10.789 & $\pm$ 21.182 & $\pm$ 10.395 & $\pm$ 24.959 \\
\hline
\rule{0pt}{4ex}\textbf{\shortstack{Overall\\(ms)}} & 858.225 & 901.953 & 129.821 & 144.565 & 875.333 & 883.916 & 890.593 & 906.965 \\
\hline
 & $\pm$ 19.771 & $\pm$ 56.821 & $\pm$ 12.336 & $\pm$ 13.946 & $\pm$ 17.532 & $\pm$ 41.879 & $\pm$ 23.688 & $\pm$ 53.419 \\
\hline
\end{tabular}

%% file: PyRoboSimDiscussion.tex
\subsection{PyRoboSim Demonstration}
\label{sec:pyrobosim-results}

This section reports quantitative synthesis results for the two PyRoboSim
capability configurations introduced in \resec{pyrobosim}. P0
abstracts door state away entirely and is our baseline. \resec{pyrobosim-p0-results}
evaluates it across the same encoding/liveness/pending matrix used for the
Coffee and Two-Rivers case studies (\resec{coffee-results},
\resec{two-rivers-results}), applied to the larger PyRoboSim logistics
domain, which tracks six movable objects. P1 then adds the \texttt{open} capability and
door state on top of P0 (\resec{pyrobosim}). \resec{pyrobosim-p1-results}
uses it to test whether P0's conclusions hold as domain complexity grows,
rather than to repeat the full sweep. Per-trial synthesis cost at P1's
scale rules out that full sweep in any case (\resec{pyrobosim-p1-results}),
so the two configurations are reported as separate subsections with
separate tables rather than columns of a single matrix.

\subsubsection{P0: Baseline (No Door Modeling)}
\label{sec:pyrobosim-p0-results}

A single fixed-configuration synthesis of the P0 world was previously
reported for this domain~\cite{eit26-synthesis}. Here, we extend that
single data point into the same encoding/liveness sweep used for the
Coffee and Two-Rivers case studies (\resec{coffee-results},
\resec{two-rivers-results}), applied to the PyRoboSim logistics domain
of \resec{pyrobosim}.

For the PyRoboSim P0 demonstration, we consider the following configurations.
\begin{enumerate}
    \item Two hand-written baselines for the P0 world:
    \begin{enumerate}
        \item[a)] a reduced move-only specification
        (\texttt{p0\_move\_only\_spec}, \textbf{MoveOnly}) that tracks only
        robot location, move outcomes, and the finished state.
        \item[b)] a full hand-written specification (\texttt{p0\_full\_spec},
        \textbf{Full}).
    \end{enumerate}
    \item An automatically generated one-hot capability encoding
    (\texttt{p0\_capabilities}), evaluated under both System-Goal (S) and
    Fair-Outcome (F) liveness.
    \item The proposed enumerated capability encoding
    (\texttt{p0\_parsed}), evaluated under the same two liveness formulations.
\end{enumerate}
The capability-based configurations are further evaluated with and without
pending propositions (P), giving eight capability-generated configurations in
total alongside the two hand-written baselines. In the tables below, the
two baselines appear as the \textbf{MoveOnly} and \textbf{Full} columns, in
that order, and the leading digit \textbf{2}/\textbf{3} denotes the
one-hot/enumerated capability encoding, matching items 2 and 3 above.

At this demonstration's complexity, a 40-ordering sweep is not tractable, so
each capability-based configuration uses two fixed declaration orders
(alphabetic and capability-grouped) plus 20 random-seed orderings under a
CUDD reordering threshold of 250, giving a distribution for characterizing
ordering variance.

\retab{pyrobosim-p0-1hot-summary} and \retab{pyrobosim-p0-enumerated-summary}
summarize the P0 matrix by realizability, well-separation, strategy size,
and the three headline timing rows, following the hand-written
baseline/one-hot and enumerated split used for Coffee
(\resec{coffee-results}). The full specification-size and BDD-node
breakdown, and accompanying discussion, appear in
\seedetail{pyrobosim-p0-results-app}. Reported values are means over the 20
random-seed declaration orderings, matching the Coffee and Two-Rivers
reporting convention. The alphabetic and domain-grouped fixed orderings are
not part of this mean.

\begin{table*}[t]
\centering
\caption{\textbf{[$N=20$ random-seed orderings.]} PyRoboSim P0 synthesis results, full hand-written baseline and one-hot
capability encoding. \textbf{Full} is the hand-written baseline shown here
(item 1b above); the \textbf{MoveOnly} sanity-check baseline (item 1a
above) appears in \retab{pyrobosim-p0-results-app}. The leading digit \textbf{2} denotes the
one-hot capability encoding (item 2). \textbf{S}/\textbf{F} denotes
System-Goal/Fair-Outcome liveness, and \textbf{P} adds pending propositions.
Reported values are means over 20
random-seed declaration orderings under a CUDD reordering threshold of 250,
with a companion row giving $\pm$ one standard deviation wherever the
underlying trials disagree. Columns marked unrealizable have no extracted
strategy. \textbf{2FP} is realizable but shows \textbf{FlexBE HFSM
Realized: No} because the post-synthesis auditor (\resec{auditor}) halts
the pipeline before reduction, discussed below.}
\label{tbl:pyrobosim-p0-1hot-summary}
\renewcommand{\arraystretch}{1.4}
\setlength{\tabcolsep}{3pt}
\input{tables/pyrobosim_p0_one_hot_summary}
\end{table*}

\begin{table*}[t]
\centering
\caption{\textbf{[$N=20$ random-seed orderings.]} PyRoboSim P0 synthesis results, enumerated capability encoding
(item 3 above). The leading digit \textbf{3} denotes this encoding. The
remaining reporting convention and column legend match
\retab{pyrobosim-p0-1hot-summary}. The enumerated encoding has no
hand-written baseline. \textbf{3FP} is realizable but shows
\textbf{FlexBE HFSM Realized: No} for the same reason as \textbf{2FP}.}
\label{tbl:pyrobosim-p0-enumerated-summary}
\renewcommand{\arraystretch}{1.4}
\setlength{\tabcolsep}{3pt}
\input{tables/pyrobosim_p0_enumerated_summary}
\end{table*}

Without pending memory, Fair-Outcome liveness is \emph{unrealizable} for the
P0 capability specification under both encodings (\textbf{2F}/\textbf{3F}),
across all 22 declaration orderings. This parallels the Two-Rivers result
(\resec{two-rivers-results}). There too, Fair-Outcome without pending memory
was unrealizable, and realizability was restored only once pending
propositions were added. Adding pending propositions to P0 confirms the
same fix applies here. \textbf{2FP} and \textbf{3FP} are realizable across
all 22 orderings.

Realizability, however, is not the end of the story. The strategy auditor of
\resec{auditor} flags a \texttt{GOAL\_UNREACHABLE\_TRAP} on \emph{every one}
of the 44 \textbf{2FP}/\textbf{3FP} trials (both encodings, all 22
orderings). The interaction of persistent pending memory
with activation-gated environment outcomes admits a realizable strategy
that is not obligated to ever complete the task.
Manual inspection of an early strategy confirmed the same pathology and
motivated this automated check. Fair-Outcome-with-pending in P0 is therefore
realizable, and \texttt{slugs} (correctly) formally admits a strategy given
the specification, but that strategy still fails the task-level progress
property the designer intended. The pipeline halts once the raw-strategy
audit reports a confirmed failure. No executable controller is ever produced
from either configuration.

\refig{pyrobosim-trap} shows a concrete instance of one such trap, taken
directly from the raw synthesized strategy for the one-hot,
Fair-Outcome-with-pending, alphabetic-ordering trial (\textbf{2FP}). Once
inside the highlighted three-state SCC (strategy states 72, 73, and 83),
the controller repeatedly activates \texttt{place\_item}, alternating
between completion and failure outcomes, while the pending flag for
\texttt{move\_robot} remains asserted at every state in the cycle and
\texttt{finished} never becomes true. This is the concrete form of the
causal chain described in \resec{auditor}. It is a region
reachable only after activation for one capability has permanently ceased
while its pending flag persists, cyclic, with no completion outcome that
could ever satisfy the remaining liveness obligation.
\seedetail{pyrobosim-p0-results-app} gives the strategy's full trap count
and the specific states involved.

\begin{figure*}[t]
    \centering
    \includegraphics[width=0.85\linewidth]{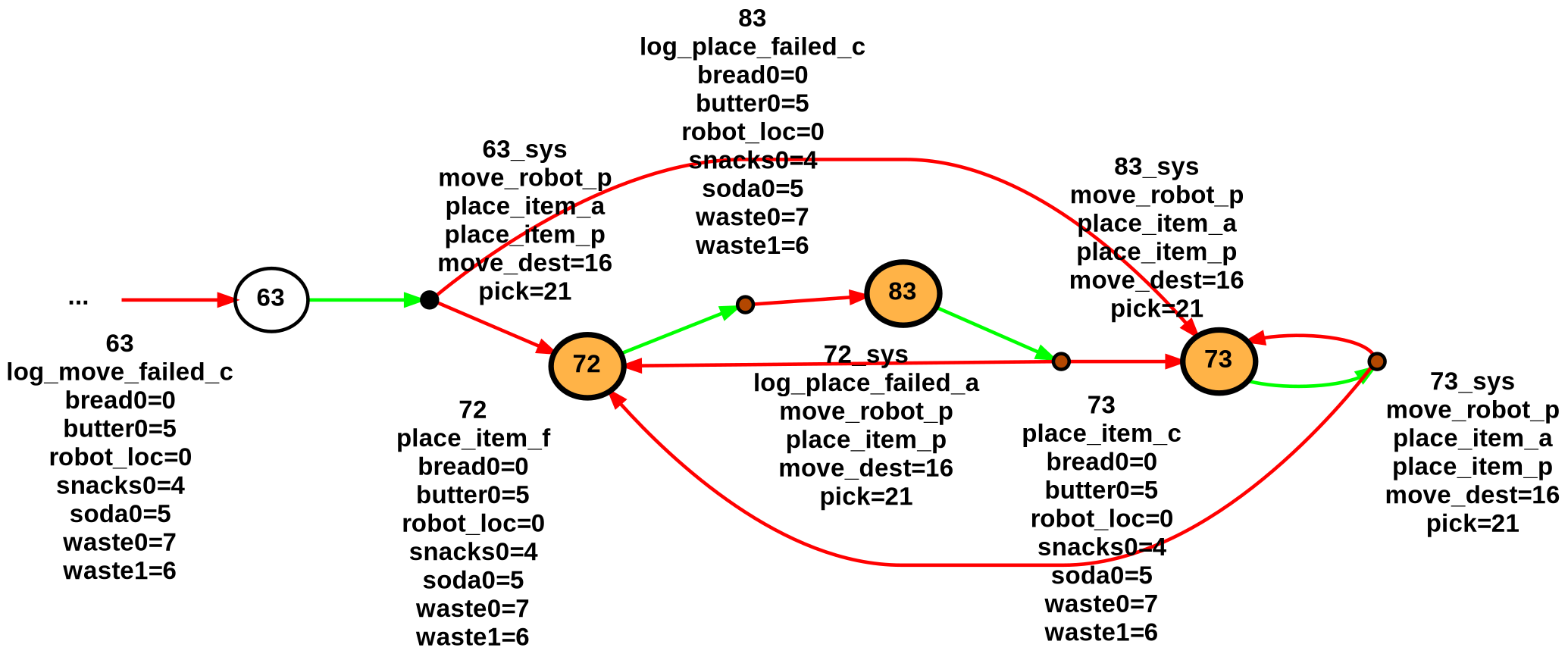}
    \caption{
    A goal-unreachable trap in the synthesized strategy for the P0 one-hot
    Fair-Outcome-with-pending configuration (\textbf{2FP}, alphabetic ordering).
    States 72, 73, and 83 (orange) form a closed strongly-connected
component reachable from state 63 (white). No edge leaves the highlighted
    cycle. State 63 is itself entered from elsewhere in the strategy,
    marked with a ``\textbf{\ldots}'' stub rather than expanded, since this
    excerpt is a small cut-out of the full strategy graph rather than the
    whole reachable state machine. Uses the \texttt{mealy2dot} visual
    convention introduced in \refig{slugs-example-mealy}
    (\resec{slugs-example}). Rendered with \texttt{mealy2dot}
    on the raw (unreduced) strategy.}
    \label{fig:pyrobosim-trap}
\end{figure*}

With real seed coverage, cost metrics do not show the sharp warning sign
their eventual defect would seem to warrant. \textbf{2FP}/\textbf{3FP}
remain within 6\% of their System-Goal counterparts on peak BDD nodes, and
their realizability times differ by about a factor of two. Nothing in
proposition count, BDD size, or synthesis time specifically signals that
\textbf{2FP}/\textbf{3FP} can never guarantee task completion the way the
auditor's structural check does directly. Realizability and symbolic cost
metrics remain insufficient measures of a synthesized supervisor's
usefulness even once ordering variance is properly accounted for.
\seedetail{pyrobosim-p0-results-app} gives the exact BDD-node and timing
comparisons.

The System-Goal columns give a clean PyRoboSim encoding comparison:
enumerated encoding is smaller and faster than one-hot before reduction,
both under the fixed alphabetic and domain orderings and averaged over the
20 random seeds reported above. The random-seed mean does carry substantial
spread, so any individual random draw remains a poor proxy for the typical
case, fixed or random. This matches the broader lesson of the Coffee
ordering-robustness study
(\seedetail{coffee-ordering-app}): declaration order drives substantial
variation on its own, so encoding comparisons drawn from a single ordering
should be treated cautiously regardless of domain. After state reduction,
the strategies converge to the same size regardless of encoding
(\textbar{}Reduced\textbar{} = 19 for both \textbf{2S} and \textbf{3S}).
\seedetail{pyrobosim-p0-results-app} reports the exact peak-BDD and
realizability multiples behind this comparison.

Comparing the capability-generated specifications to the hand-written
baselines, the full hand-written specification (\textbf{Full}) is not
uniformly faster than the capability-generated \textbf{2S} configuration
despite having fewer output propositions. It is faster under the fixed
orderings but is in fact the slowest of the three System-Goal
configurations once averaged over the 20 random seeds, driven by a small
number of especially slow random draws. The hand-written baseline also shows
the ordering sensitivity that affects the generated encodings.
The move-only baseline, which tracks only robot location and the finished
flag, is trivially cheap by comparison and serves mainly as a sanity check
that most of the symbolic cost in this domain comes from item-manipulation
logic rather than navigation. \seedetail{pyrobosim-p0-results-app} gives
the exact realizability figures and standard deviations behind this
comparison.

The well-separation picture for PyRoboSim differs qualitatively from
Coffee and Two-Rivers (\resec{coffee-results}, \resec{two-rivers-results}).
Only the trivial move-only baseline (\textbf{MoveOnly}) is well-separated,
while every other configuration -- including the realizable System-Goal
configurations and the hand-written \textbf{Full} baseline -- is NWS,
versus only the Fair-Outcome-with-pending configuration failing in the
smaller domains. A core-minimization pass confirms this is a small,
structurally stable violation (one or two assumptions, identical across
all 22 orderings per configuration) rather than a diffuse one.
Well-separation failure does not by itself predict auditor failure here.
Five of the seven NWS-yet-realizable configurations pass the
post-synthesis auditor cleanly, and the exceptions
(\textbf{2FP}/\textbf{3FP}) fail for the goal-unreachable trap discussed
above, not their well-separation status -- confirming across all three
domains that well-separation and the auditor catch complementary classes
of specification risk. In these experiments every goal-unreachable-trap
failure arose from an NWS specification, but NWS status alone was not enough
to determine whether the synthesized strategy would fail the auditor.
\seedetail{pyrobosim-p0-results-app}
gives the full minimized-core breakdown and per-configuration
well-separation timing.

The same reproduction on the faster machine (\resec{discussion})
confirmed this full P0 picture -- realizability, well-separation,
strategy size, and the goal-unreachable-trap audit outcome all matched
-- with realizability times roughly 1.7--3.7$\times$ faster across the
20-seed sweep.

\subsubsection{P1: Adding Door Modeling}
\label{sec:pyrobosim-p1-results}

The P0 capability model of \resec{pyrobosim-p0-results} omits door state
entirely (\resec{pyrobosim}). The fridge and pantry are always accessible,
and the supervisor never reasons about opening them. P1 extends
the same domain with an \texttt{open} capability and initializes both
containers closed, so retrieving an item first requires opening its
container (\resec{pyrobosim}). We evaluate P1 to test whether the encoding
conclusions of \resec{pyrobosim-p0-results} hold as domain complexity grows
on an axis distinct from encoding choice itself. It adds a capability and
its associated propositions to an already-validated domain, rather than
comparing encodings within a fixed domain.

Per-trial cost at this scale makes the 20-random-seed sweep of
\resec{pyrobosim-p0-results} intractable for P1. \retab{pyrobosim-p1-results}
reports a single trial per configuration under a fixed alphabetic
declaration ordering, not a random-seed mean. These values are therefore
not comparable to \retab{pyrobosim-p0-1hot-summary}/\retab{pyrobosim-p0-enumerated-summary}'s
means in the way the P0 configurations are comparable to each other, and
any comparison drawn between the tables should be read with that in mind.
\retab{pyrobosim-p1-results} summarizes this matrix following the same
condensed row selection as the P0 tables above. The full
specification-size and BDD-node breakdown, and accompanying discussion,
appear in \seedetail{pyrobosim-p1-results-app}.
Under the 10-hour per-trial synthesis budget, \textbf{2S} (one-hot,
System-Goal, no pending) is the only one-hot P1 configuration with a
completed run, finishing in 25842.0~s. The remaining one-hot
configurations (\textbf{2F}, \textbf{2SP}, \textbf{2FP}) have no completed
run at this budget, so their columns are omitted.

We consider the hand-written \textbf{FullP1} baseline, the one-hot
capability encoding under System-Goal liveness (\textbf{2S}), and the
enumerated capability encoding under System-Goal (\textbf{4S}),
Fair-Outcome (\textbf{4F}), and both liveness formulations with pending
propositions (\textbf{4SP}, \textbf{4FP}). The column digit \textbf{4}
distinguishes the enumerated columns from P0's (\textbf{3S}, etc.) despite
using the same encoding, since the two are evaluated over different
domains and are not meant to be read as the same configuration. The
one-hot column, by contrast, keeps P0's digit \textbf{2}: it is the same
one-hot/System-Goal/no-pending treatment applied to the larger P1 domain,
with the P1 table and section context distinguishing it from the P0
\textbf{2S} row.

\begin{table*}[t]
\centering
\small
\caption{\textbf{[Exploratory: budget-limited, single fixed ordering
($n=1$). See \resec{pyrobosim-p1-results}.]} PyRoboSim P1 (door-modeling) synthesis results. \textbf{FullP1} is
the hand-written baseline. \textbf{2S} is the one-hot capability encoding
(System-Goal liveness only -- \textbf{2F}/\textbf{2SP}/\textbf{2FP} were
not completed at this budget). The leading digit \textbf{4} denotes the
enumerated capability encoding evaluated over the P1 domain (distinct from
P0's \textbf{3}-family columns despite sharing an encoding).
\textbf{S}/\textbf{F} denotes System-Goal/Fair-Outcome
liveness, and \textbf{P} adds pending propositions. Unlike the P0 tables,
reported values are a single trial under a
fixed alphabetic declaration ordering ($n=1$), not a random-seed mean;
therefore no $\pm$ standard-deviation rows are shown --
see \resec{pyrobosim-p1-results} for why this is not directly comparable
to the P0 tables. These single-ordering rows also use P1's uncapped
(``auto'') CUDD reordering policy; \seedetail{pyrobosim-p1-ordering-app}
reports the corresponding threshold-250 ordering experiment for
\textbf{4S}/\textbf{4F}. \textbf{FullP1}'s well-separation check did not
complete under a dedicated 43200~s (12\,h) cap, so
its \textbf{Well Separated} row reports \textbf{TIME OUT} rather than a
completed verdict. The full specification-size and BDD-node
breakdown appears in \seedetail{pyrobosim-p1-results-app}.}
\label{tbl:pyrobosim-p1-results}
\renewcommand{\arraystretch}{1.4}
\setlength{\tabcolsep}{3pt}
\input{tables/pyrobosim_p1_summary}
\end{table*}

P1 replicates every qualitative P0 Fair-Outcome finding
(\resec{pyrobosim-p0-results}). \textbf{4F} is unrealizable, and
\textbf{4FP} is realizable but immediately flagged by the raw-strategy
audit with the identical \texttt{GOAL\_UNREACHABLE\_TRAP} kind reported
for \textbf{2FP}/\textbf{3FP}. This confirms that the
pathology is a property of the specification pattern, not an artifact of
the specific P0 domain.
It also shows that the auditor's automated check catches the defect after a
new capability is added, without fresh hand-tracing.

\textbf{2S} lets us extend the encoding comparison of
\resec{pyrobosim-p0-results} to P1. Under the tested P1 ordering, the gap
between one-hot and enumerated is much larger. In the System-Goal,
no-pending comparison, P0's one-hot \textbf{2S} column has
$1.6\times$ the peak BDD nodes and $3.0\times$ the realizability time of
enumerated \textbf{3S}. After door modeling is added, P1's one-hot
\textbf{2S} column rises to $8.2\times$ the peak BDD nodes and
$109.9\times$ the realizability time of enumerated \textbf{4S}. Both P1
columns use the uncapped (``auto'') reordering policy, so this ratio compares
like with like, but it is not comparable to the P0 ratios above, which use
threshold 250. As in P0, this cost gap closes entirely after reduction.
\textbf{2S} and \textbf{4S} both reduce to 27 states
(\retab{pyrobosim-p1-results}), matching P0's encoding-independent
convergence (19 states for both \textbf{2S} and \textbf{3S}) on a domain
with roughly 50\% more input propositions. In these capability-generated
rows, the reduced strategy size and realized FlexBE state count coincide;
the separate \textbf{FullP1} baseline is the PyRoboSim case where
Controller Realization collapses several terminal-success contexts into one
shared \texttt{finished} outcome (\resec{controller-realization}).
Proposition count remains a poor predictor of reduced strategy size or
realized FlexBE state count.
Because this P1 comparison uses
fixed-ordering trials, and because \textbf{4S} itself shows large
declaration-order sensitivity below, the multipliers should be read as
directional evidence under the tested ordering rather than as an isolated
encoding effect.

Growth from P0 to P1 must be read against the reordering policy. P0 and the
P1 ordering sweep use threshold 250, while the P1 main table uses the
uncapped policy, and at this scale the two differ sharply. For \textbf{4S}
the alphabetic trial takes 235.2\,s and 5.6M peak nodes under the uncapped
policy but 3679.6\,s and 25.7M under threshold 250, whereas \textbf{4F}
moves from 31.3\,s to 24.0\,s (\seedetail{pyrobosim-p1-ordering-app}).
Dividing P1 uncapped values by P0 threshold-250 values therefore mixes
policies, and we do not report such ratios as effect sizes. Where both use
threshold 250, \textbf{4S} costs $7.1\times$ the peak BDD nodes and
$11.0\times$ the realizability time of \textbf{3S} under the alphabetic
ordering (25.7M vs.\ 3.6M peak nodes; 3679.6\,s vs.\ 335\,s), and $8.0\times$
and $61\times$ on the random-ordering means. Door
modeling thus raises System-Goal cost substantially even without pending
memory. The pending columns (\textbf{4SP}, \textbf{4FP}) and \textbf{FullP1}
have only uncapped-policy P1 trials. Against P0 they show $25.1\times$ and
$43.0\times$ the realizability time of \textbf{3SP} and \textbf{3FP}, and
roughly $7\times$ for \textbf{FullP1} against \textbf{Full}, which suggests a
larger pending penalty, but these comparisons mix policies and are
directional only.
The cost of door modeling is therefore not confined to pending-memory
configurations, since System-Goal without pending memory also grows
several-fold under a matched policy, but the pending columns appear to grow
more, most where pending memory meets Fair-Outcome liveness. The P1 pending
columns are single trials under a different policy from P0, so these
multipliers are directional rather than calibrated effect sizes.

P1's single-trial configurations also completed substantially faster on
the faster machine (\resec{discussion}), with the same
realizable/unrealizable and audit outcomes throughout. Given the
single-trial design here, we report this qualitatively rather than as a
calibrated speedup factor. \textbf{FullP1}'s well-separation check is the
exception: on both the canonical-machine run reported in
\retab{pyrobosim-p1-results} and the faster-machine reproduction, it hits
a dedicated 43{,}200\,s (12\,h) cap and returns
\texttt{ANALYSIS\_INCOMPLETE}, even though synthesis completes. This is
consistent with well separation being a distinct winning-region computation
over a transformed game, so its symbolic cost need not track ordinary
synthesis cost~\cite{MaozRingert2016,KindControllers24}.

A follow-up 22-ordering sweep (2 fixed + 20 random, threshold-250
reordering) for \textbf{4S}/\textbf{4F} confirms both configurations'
realizability verdicts exactly, while showing that \textbf{4S}'s cost is
far more ordering-sensitive than the single alphabetic trial above could
reveal. It has a $99\times$ spread between its fastest and slowest observed
ordering, the widest found anywhere in this paper. \textbf{4F}, in
contrast, stays uniformly fast and tightly clustered regardless of
ordering (\seedetail{pyrobosim-p1-ordering-app}) as it detects that the
specifications are unrealizable.

%% file: tables/pyrobosim_p0_one_hot_summary.tex
\begin{tabular}{|c|r|r|r|r|r|}
\hline
 & \textbf{Full} & \textbf{2S} & \textbf{2F} & \textbf{2SP} & \textbf{2FP} \\
\hline
\rule{0pt}{4ex}\textbf{\shortstack{Well\\Separated}} & No & No & No & No & No \\
\hline
\textbf{Realizable} & Yes & Yes & No & Yes & Yes \\
\hline
\rule{0pt}{4ex}\textbf{\shortstack{FlexBE HFSM\\Realized}} & Yes & Yes & No & Yes & No \\
\hline
\textbf{$\left|AP_I\right|$} & 33 & 34 & 34 & 34 & 34 \\
\hline
\textbf{$\left|AP_O\right|$} & 16 & 17 & 17 & 20 & 20 \\
\hline
\textbf{\textbar{}Slugs SM\textbar} & 116 & 47 & -- & 47 & 108 \\
\hline
\textbf{\textbar{}Reduced\textbar} & 66 & 19 & -- & 19 & -- \\
\hline
\rule{0pt}{4ex}\textbf{\shortstack{Realizability\\(s)}} & 570.184 & 449.174 & 58.466 & 570.332 & 620.068 \\
\hline
 & $\pm$ 399.504 & $\pm$ 257.022 & $\pm$ 13.832 & $\pm$ 209.194 & $\pm$ 346.331 \\
\hline
\rule{0pt}{4ex}\textbf{\shortstack{Realize\\HFSM (s)}} & 3.193 & 0.474 & -- & 0.468 & -- \\
\hline
 & $\pm$ 0.435 & $\pm$ 0.017 &  & $\pm$ 0.013 &  \\
\hline
\rule{0pt}{4ex}\textbf{\shortstack{Overall\\(s)}} & 580.581 & 455.548 & 61.122 & 577.649 & 627.717 \\
\hline
 & $\pm$ 401.087 & $\pm$ 258.170 & $\pm$ 14.031 & $\pm$ 210.151 & $\pm$ 346.342 \\
\hline
\end{tabular}

%% file: tables/pyrobosim_p0_enumerated_summary.tex
\begin{tabular}{|c|r|r|r|r|}
\hline
 & \textbf{3S} & \textbf{3F} & \textbf{3SP} & \textbf{3FP} \\
\hline
\rule{0pt}{4ex}\textbf{\shortstack{Well\\Separated}} & No & No & No & No \\
\hline
\textbf{Realizable} & Yes & No & Yes & Yes \\
\hline
\rule{0pt}{4ex}\textbf{\shortstack{FlexBE HFSM\\Realized}} & Yes & No & Yes & No \\
\hline
\textbf{$\left|AP_I\right|$} & 25 & 25 & 25 & 25 \\
\hline
\textbf{$\left|AP_O\right|$} & 12 & 12 & 15 & 15 \\
\hline
\textbf{\textbar{}Slugs SM\textbar} & 38 & -- & 38 & 84 \\
\hline
\textbf{\textbar{}Reduced\textbar} & 19 & -- & 19 & -- \\
\hline
\rule{0pt}{4ex}\textbf{\shortstack{Realizability\\(s)}} & 149.430 & 25.051 & 164.243 & 300.756 \\
\hline
 & $\pm$ 72.034 & $\pm$ 8.835 & $\pm$ 33.784 & $\pm$ 266.725 \\
\hline
\rule{0pt}{4ex}\textbf{\shortstack{Realize\\HFSM (s)}} & 0.466 & -- & 0.470 & -- \\
\hline
 & $\pm$ 0.014 &  & $\pm$ 0.014 &  \\
\hline
\rule{0pt}{4ex}\textbf{\shortstack{Overall\\(s)}} & 154.511 & 27.657 & 169.211 & 307.086 \\
\hline
 & $\pm$ 72.554 & $\pm$ 9.238 & $\pm$ 34.026 & $\pm$ 268.133 \\
\hline
\end{tabular}

%% file: tables/pyrobosim_p1_summary.tex
\begin{tabular}{|c|r|r|r|r|r|r|}
\hline
 & \textbf{FullP1} & \textbf{2S} & \textbf{4S} & \textbf{4F} & \textbf{4SP} & \textbf{4FP} \\
\hline
\rule{0pt}{4ex}\textbf{\shortstack{Well\\Separated}} & TIME OUT & No & No & No & No & No \\
\hline
\textbf{Realizable} & Yes & Yes & Yes & No & Yes & Yes \\
\hline
\rule{0pt}{4ex}\textbf{\shortstack{FlexBE HFSM\\Realized}} & Yes & Yes & Yes & No & Yes & No \\
\hline
\textbf{$\left|AP_I\right|$} & 48 & 52 & 40 & 40 & 40 & 40 \\
\hline
\textbf{$\left|AP_O\right|$} & 14 & 19 & 13 & 13 & 17 & 17 \\
\hline
\textbf{\textbar{}Slugs SM\textbar} & 559 & 59 & 46 & -- & 46 & 118 \\
\hline
\textbf{\textbar{}Reduced\textbar} & 344 & 27 & 27 & -- & 27 & -- \\
\hline
\rule{0pt}{4ex}\textbf{\shortstack{Realizability\\(s)}} & 3871.901 & 25841.977 & 235.193 & 31.282 & 4129.791 & 12919.358 \\
\hline
\rule{0pt}{4ex}\textbf{\shortstack{Realize\\HFSM (s)}} & 8.912 & 0.574 & 0.592 & -- & 0.590 & -- \\
\hline
\rule{0pt}{4ex}\textbf{\shortstack{Overall\\(s)}} & 47111.691 & 28917.231 & 365.211 & 157.286 & 4315.238 & 13123.356 \\
\hline
\end{tabular}

%% file: CrazyFlieDiscussion.tex
\subsection{Quadcopter Hardware-Integration Results}
\label{sec:crazyflie-results}

We consider two capability encodings:

\begin{enumerate}
    \item {A version with 51 one-hot activation propositions, one per
    capability state including the startup slot.}
    \item {A version using one 6-bit enumerated capability value
    (\texttt{capability:0...50}, 51 values).}
\end{enumerate}

\retab{crazyflie-results} summarizes the results of the quadcopter
hardware demonstrations described in this paper. Each configuration
is evaluated under a fixed alphabetic ordering, a capability-grouped
domain ordering, and 20 random-seed orderings, under a CUDD reordering
threshold of 250 and a 36{,}000\,s (10\,h) per-trial synthesis budget.
Reported values are means over the 20 random-seed orderings only, excluding
the alphabetic and domain-grouped fixed
orderings.\footnote{The \texttt{flexible\_drones}
capability package used here differs from the one reported in our earlier
conference paper~\cite{eit26-synthesis}. The two sets of numbers are not directly
comparable.} All six configurations (\textbf{1S}, \textbf{2S},
\textbf{1F}, \textbf{2F}, \textbf{1SP}, \textbf{2SP}) complete all 20
trials at this budget. \resec{crazyflie-reliability} below reports how
synthesis reliability for the one-hot Fair-Outcome and pending
configurations compares to the other configurations under smaller budgets.
The full specification-size and BDD-node breakdown appears in
\seedetail{crazyflie-results-app}.

\begin{table*}[htbp]
\centering
\caption{\textbf{[$N=20$ random-seed orderings for the six
configurations shown. \textbf{1FP}/\textbf{2FP} excluded from this
20-seed summary; additional Fair-Outcome-with-pending results appear in
\resec{crazyflie-reliability}.]}
Quadcopter supervisory-synthesis results. The leading digit
\textbf{1}/\textbf{2} denotes the one-hot/enumerated capability encoding
(the reverse of PyRoboSim's \textbf{2}/\textbf{3} convention, since each
domain assigns its own digit-to-encoding mapping;
\resec{discussion}). Reported values are means
over 20 random-seed declaration orderings under a CUDD reordering threshold
of 250 and a 36{,}000\,s per-trial synthesis budget, with a companion row giving
$\pm$ one standard deviation wherever the underlying trials disagree.
Alphabetic and domain-grouped fixed orderings are excluded from the mean.
All six configurations complete all 20 trials at this budget, including
\textbf{1SP} (\resec{crazyflie-reliability}), and \textbf{1SP} is
well-separated across all 22 declaration orderings.
Integer-valued count means indicate identical samples,
while non-identical count means are shown with decimal digits.}
\label{tbl:crazyflie-results}
\renewcommand{\arraystretch}{1.15}
\setlength{\tabcolsep}{3pt}
\input{tables/crazyflie_summary}
\end{table*}

As in the other domains, enumerated variants (2*) reduce input
AP count. Unlike Coffee, Two-Rivers, and PyRoboSim,
enumerated encoding does not reduce the extracted strategy here --
\textbar{}Slugs SM\textbar{} and \textbar{}Reduced\textbar{} are consistently
\emph{larger} under enumerated encoding across all three liveness variants,
most dramatically so for the pending case -- while realizability time
favors enumerated encoding across all three liveness variants in this
domain. With all 20 random
orderings completing for both encodings at the 36{,}000\,s budget
(\resec{crazyflie-reliability}), the pending comparison is a true 20-seed
mean on both sides, reinforcing that enumerated encoding dominates on
realizability time in this domain. Within each encoding, Fair-Outcome
liveness still dominates overall difficulty relative to System-Goal,
confirming liveness pressure, not encoding, as the main complexity driver
here. \seedetail{crazyflie-results-app} reports the exact figures behind
these comparisons, along with the \textbf{Extraction}/\textbf{Auditing}
timing rows.
These are strategy-level counts. After Controller Realization, the
realized FlexBE HFSM can be smaller because terminal-output contexts
and bootstrap/null states become shared FlexBE outcomes or root-container
wiring rather than separate executable states
(\resec{controller-realization}). We audited the representative
alphabetic rows directly against the generated artifacts. The large
Crazyflie gaps are terminal-context projection rather than omitted
capability execution states. The appendix gives the detailed
reduced-strategy versus realized-HFSM accounting
(\seedetail{crazyflie-realization-app}).

The same 20-seed sweep, reproduced on the faster machine introduced in
\resec{discussion}, confirmed this full picture --
realizability, well-separation, strategy size, and audit outcome all
matched -- with realizability times roughly 2.3--3.2$\times$ faster across
configurations.

\subsubsection{Budget Sensitivity of One-Hot Synthesis}
\label{sec:crazyflie-reliability}

The 20-seed sweep surfaces a reliability difference between encodings that a
single random seed cannot reveal. Enumerated encoding completes all 20
orderings for every configuration in the six-column sweep with zero
timeouts even at the smallest budget tested (900\,s),
while one-hot's completion rate is strongly budget-dependent. The
Fair-Outcome configuration (\textbf{1F}) needs 3600\,s to reach 100\%
completion, and the pending configuration (\textbf{1SP}) needs the full
36{,}000\,s budget to do the same. This is the same
budget-sensitivity phenomenon as \textbf{1F}, with a substantially heavier tail,
not a structurally distinct failure mode or evidence of unrealizability. The
alphabetic and domain-grouped fixed orderings complete well within budget
throughout. The practical implication for a deployment pipeline is that
one-hot encoding here carries a substantial risk that synthesis, even for
a realizable specification, may require an order of magnitude more
wall-clock time to reach a verdict than enumerated encoding needs for the
equivalent configuration, unless the declaration order is chosen carefully
or a correspondingly larger budget is provisioned.
\seedetail{crazyflie-results-app} gives the full per-budget completion
figures behind this comparison.

The theoretical hazard reviewed in \resec{synthesis} applies directly to
the Fair-Outcome-with-pending configuration (\textbf{1FP}/\textbf{2FP}), which appears in
neither \retab{crazyflie-results} nor the sweep above. Because completion
events are only permitted when the corresponding capability is activated, a
Fair-Outcome-with-pending strategy can in principle enter regions where
pending remains asserted while activation has ceased, relieving the system
of its own liveness guarantee under GR(1) semantics. One-hot \textbf{1FP}
needs a large budget to resolve. Under a 24\,h (86{,}400\,s) budget it is
realizable, in 33{,}937\,s (9.4\,h) of \texttt{slugs} realizability time and
36{,}540\,s (10.2\,h) overall, with $|AP_I|=84$, $|AP_O|=124$, and a peak BDD of 52.9M
nodes, well above any one-hot configuration in \retab{crazyflie-results},
consistent with the proposition-count blowup one-hot encoding already
shows throughout this paper. The raw-strategy audit flags it with a
\texttt{GOAL\_UNREACHABLE\_TRAP}, the same failure kind already confirmed
in PyRoboSim (\resec{pyrobosim-p0-results}, \resec{pyrobosim-p1-results}).

Enumerated \textbf{2FP} did not resolve at any budget up to 24\,h on the
original machine. It resolved on the faster machine introduced in
\resec{discussion}, in 16.6\,h. It is realizable, with
the raw-strategy audit confirming the identical
\texttt{GOAL\_UNREACHABLE\_TRAP} kind found for \textbf{1FP} and
throughout PyRoboSim (\resec{pyrobosim-p0-results},
\resec{pyrobosim-p1-results}). The resolution is consistent with machine
speed alone. Realizability times on the faster machine were roughly
2.3--3.2$\times$ shorter across the 20-seed sweep, so \textbf{2FP}'s
59{,}674\,s corresponds to roughly 38--53\,h on the original machine,
beyond its 24\,h budget. The pathology is thus confirmed for
both encodings of the Fair-Outcome-with-pending configuration, not just
\textbf{1FP}.
\seedetail{crazyflie-results-app}
gives the detailed specification-size, timing, and BDD-node figures for
both configurations.

\subsubsection{Well-Separation}
\label{sec:crazyflie-well-separation}

Well-separation checks reached a settled status for all eight
quadcopter configurations. \textbf{1S}, \textbf{2S}, \textbf{1F},
\textbf{2F}, \textbf{1SP}, and \textbf{2SP} are well-separated across all
22 declaration orderings with no exceptions; this is a sharp contrast to Coffee,
Two-Rivers, and PyRoboSim, each of which had at least one NWS configuration
(\resec{coffee-results}, \resec{two-rivers-results},
\resec{pyrobosim-results}). \textbf{1SP}'s check simply needs a longer
timeout than the other five to resolve every ordering
(\seedetail{crazyflie-well-separation-app}). \textbf{1FP} and \textbf{2FP} are
both \texttt{NON\_WELL\_SEPARATED}, with the same \texttt{P-all/E-just}
violation signature identified for the Fair-Outcome-with-pending
configurations elsewhere. Both are realizable
(\resec{crazyflie-reliability}), and for both this specification-level
signature is confirmed by a concrete extracted strategy exhibiting the
\texttt{GOAL\_UNREACHABLE\_TRAP} pathology directly. \seedetail{crazyflie-results-app} gives the exact
per-ordering check timings.

\subsubsection{Hardware Execution and Failure-Handling Validation}
\label{sec:crazyflie-failure-validation}

We ran the synthesis algorithms on a faster workstation, a Dell Pro Max Tower~T2
with an Intel Core Ultra~9 285K CPU and 64\,GiB memory. This is a different,
and substantially faster, machine than the one used for
\retab{crazyflie-results} and the other domains' reported sweeps
(\resec{discussion}), so the timing here is not directly comparable to
that table.

We ran the basic quick-start command
(\texttt{ros2 launch drone\_demo\_flexbe\_synthesis drones\_gates.launch.py})
from our open source demonstration, which
synthesizes under the \textbf{1S} configuration -- one-hot encoding,
System-Goal liveness, no pending \cite{flexbe-synthesis-demo}.
The system successfully synthesized the FlexBE HFSM which we executed with our
Flexible Drones system on both Crazyflie (\refig{crazyflie-flexbe})
and PiHawk drones (\refig{pihawk-static})~\cite{flexible-drones-git}.
Before the flights, we also ran each flight-tested synthesized controller
in our kinematic Gazebo-based simulation and used FlexBE to force
unexpected capability outcomes, confirming that the realized HFSM followed
the audited recovery transitions.

During hardware execution under nominal
operating conditions the FlexBE HFSM-based controller successfully executed
the desired behavior as shown in Figs.~\ref{fig:pihawk-flight} and~\ref{fig:crazyflie-flexbe}.  During
a number of runs of the PiHawk drone, a temporary external pose dropout caused the onboard software to abort
trajectory execution.  In each such case, the \texttt{execute\_trajectory} capability
returned a \texttt{failure} outcome, and the FlexBE state machine responded correctly
every time, logging the failure and supervising the proper land-disarm behavior.

These flights are exploratory, qualitative hardware validation, not a
statistically powered reliability campaign. The synthesis-result flight tests
comprised more than a dozen individual flights rather than a logged
success-rate study by platform and configuration. Platform availability and
OptiTrack coverage bounded the number of trials, and the interruptions we
observed were attributable to that surrounding tracking and vehicle
infrastructure rather than to the synthesized supervisor itself. The
correctness argument for hardware execution of the realized HFSM
accordingly rests on the
auditor's exhaustive graph-level coverage of the synthesized strategy
(\resec{auditor}) and the simulated forced-outcome checks,
not on a flight-count success rate. The pose-dropout events above recurred
across multiple flights and were handled correctly by the supervisor every
time, giving repeated confirmation of that already-audited
failure-handling behavior on real hardware rather than a formal
reliability estimate.

This correctness argument rests on five distinct layers of evidence, each
checking a different translation step rather than any one layer
establishing end-to-end deployed correctness.
\begin{itemize}
\item \textbf{Synthesis:} \texttt{slugs} establishes that the raw strategy
satisfies the GR(1) guarantees under the stated environment assumptions.
\item \textbf{Strategy auditing and reduction:} the auditor
(\resec{auditor}) exhaustively checks the specified defect classes against
the \emph{abstract synthesized strategy} -- every reachable state and
transition, not a sampled subset -- and the reduction proof
(\resec{reduction}) establishes trace equivalence between the raw and
reduced automata.
\item \textbf{Shared outcome abstraction:} preprocessing generates the
\texttt{class\_decl} and \texttt{state\_outcome\_mapping} artifact consumed by
both specification generation and HFSM realization. This makes the outcome
vocabulary consistent by construction: concrete state outcomes such as
\texttt{done} or \texttt{true} map to the configured success proposition, and
\texttt{failed} or \texttt{timeout} map to the configured failure proposition
when that is the mapping declared for the system.
\item \textbf{HFSM realization tests:} the kinematic Gazebo-based simulation
and FlexBE forced-outcome tests sample the \emph{strategy-to-HFSM}
transition wiring. They check that the realized FlexBE HFSM responds to a
forced concrete capability outcome by taking the reduced-automaton
transition selected by the configured outcome mapping.
\item \textbf{Hardware execution:} the hardware flights sample the
\emph{HFSM-to-robot} link. They check that the deployed HFSM, running on
real hardware, drives the vehicle through the response the state machine
actually computed.
\end{itemize}
The hardware evidence above speaks specifically to this final
HFSM-to-robot link. It demonstrates integration of the synthesized
supervisor with real software capability outcomes on physical hardware,
rather than formally establishing correctness of either the synthesized
strategy or the strategy-to-HFSM translation. The auditor provides
exhaustive guarantees for the specified strategy-level defect classes
(\resec{auditor}), the reduction proof establishes equivalence between the
raw and reduced automata (\resec{reduction}), the shared abstraction makes
the outcome vocabulary consistent between specification generation and
realization, and the forced-outcome tests provide empirical validation of the
subsequent transition wiring into FlexBE.

We additionally synthesized the equivalent enumerated configuration,
\textbf{2S}, on the same deployment workstation.
Both configurations synthesize end-to-end in well under 15\,s
on this faster workstation (\textbf{1S}: 10.6\,s; \textbf{2S}: 10.5\,s), though the two realized
FlexBE HFSMs differ in size -- 69 states for \textbf{1S} versus 54 states
for \textbf{2S}. The corresponding reduced strategies have 83 and 106
states; the realized-HFSM counts are smaller for the terminal-context
projection described in \resec{controller-realization}.
\seedetail{crazyflie-results-app} gives the stage-by-stage breakdown.
Simulation and hardware flight tests of \textbf{2S} gave the same
results as \textbf{1S}. The same all-success and pose-dropout/land-disarm
outcomes occurred under both configurations, consistent with the
interruptions being a property of the tracking and vehicle infrastructure
rather than of either capability encoding.

%% file: tables/crazyflie_summary.tex
\begin{tabular}{|c|r|r|r|r|r|r|}
\hline
 & \textbf{1S} & \textbf{2S} & \textbf{1F} & \textbf{2F} & \textbf{1SP} & \textbf{2SP} \\
\hline
\rule{0pt}{4ex}\textbf{\shortstack{Well\\Separated}} & Yes & Yes & Yes & Yes & Yes & Yes \\
\hline
\textbf{Realizable} & Yes & Yes & Yes & Yes & Yes & Yes \\
\hline
\rule{0pt}{4ex}\textbf{\shortstack{FlexBE HFSM\\Realized}} & Yes & Yes & Yes & Yes & Yes & Yes \\
\hline
\textbf{$\left|AP_I\right|$} & 84 & 2 & 84 & 2 & 84 & 2 \\
\hline
\textbf{$\left|AP_O\right|$} & 91 & 46 & 91 & 46 & 124 & 79 \\
\hline
\textbf{\textbar{}Slugs SM\textbar} & 119.3 & 131 & 102 & 131 & 187.0 & 347 \\
\hline
 & $\pm$ 10.2 &  &  &  & $\pm$ 9.2 &  \\
\hline
\textbf{\textbar{}Reduced\textbar} & 76.7 & 106 & 68 & 106 & 131.0 & 318 \\
\hline
 & $\pm$ 5.1 &  &  &  & $\pm$ 4.6 &  \\
\hline
\rule{0pt}{4ex}\textbf{\shortstack{Realizability\\(s)}} & 19.730 & 9.275 & 638.479 & 31.814 & 2764.539 & 166.559 \\
\hline
 & $\pm$ 9.490 & $\pm$ 2.643 & $\pm$ 471.598 & $\pm$ 15.260 & $\pm$ 3321.603 & $\pm$ 49.516 \\
\hline
\rule{0pt}{4ex}\textbf{\shortstack{Realize\\HFSM (s)}} & 3.532 & 4.193 & 4.217 & 4.193 & 5.997 & 7.492 \\
\hline
 & $\pm$ 0.365 & $\pm$ 0.017 & $\pm$ 0.028 & $\pm$ 0.023 & $\pm$ 1.075 & $\pm$ 0.023 \\
\hline
\rule{0pt}{4ex}\textbf{\shortstack{Overall\\(s)}} & 40.184 & 23.171 & 666.082 & 45.867 & 4228.671 & 321.273 \\
\hline
 & $\pm$ 10.864 & $\pm$ 2.866 & $\pm$ 474.724 & $\pm$ 15.613 & $\pm$ 4124.537 & $\pm$ 48.982 \\
\hline
\end{tabular}

%% file: tables/summary_sg_nopending.tex
\begin{tabular}{llrrcr}
\toprule
Domain & Encoding & Peak BDD Nodes & Realizability & HFSM Realized & \textbar{}Reduced\textbar{} \\
\midrule
Coffee (base) & One-Hot & 1{,}022 & 0.081\,ms & Yes & \textbf{3.5} \\
Coffee (base) & Enumerated & 1{,}022 & \textbf{0.050\,ms} & Yes & 4 \\
\midrule
Coffee (extended) & One-Hot & 1{,}022 & 2.149\,ms & Yes & \textbf{3} \\
Coffee (extended) & Enumerated & 1{,}022 & \textbf{0.059\,ms} & Yes & 4 \\
\midrule
Two-Rivers & One-Hot & \textbf{13{,}133} & \textbf{53.787\,ms} & Yes & 23 \\
Two-Rivers & Enumerated & 19{,}597 & 90.677\,ms & Yes & 23 \\
\midrule
PyRoboSim P0 & One-Hot & 5{,}102{,}846 & 449.174\,s & Yes & 19 \\
PyRoboSim P0 & Enumerated & \textbf{3{,}278{,}525} & \textbf{149.430\,s} & Yes & 19 \\
\midrule
PyRoboSim P1 & One-Hot & 46{,}278{,}204 & 25{,}841.977\,s & Yes & 27 \\
PyRoboSim P1 & Enumerated & \textbf{5{,}643{,}484} & \textbf{235.193\,s} & Yes & 27 \\
\midrule
Crazyflie & One-Hot & \textbf{854{,}545} & 19.730\,s & Yes & \textbf{76.7} \\
Crazyflie & Enumerated & 1{,}020{,}365 & \textbf{9.275\,s} & Yes & 106 \\
\bottomrule
\end{tabular}

%% file: Conclusion.tex
\section{Conclusion}
\label{sec:conclusion}

This paper presents an open-source, executable synthesis pipeline that
generates GR(1) specifications from a capability-based abstraction of robot
software interfaces, checks the generated specification for well-separation
defects, synthesizes a reactive strategy with \texttt{slugs}, audits the
extracted strategy for protocol and progress defects, reduces it with a
correctness-preserving state-merging step, and translates the result into an
executable FlexBE HFSM. The central contribution is the full system, rather
than any single synthesis algorithm. We evaluate this pipeline
across four domains: an extended coffee-maker example,
a river-crossing puzzle, a simulated mobile-manipulation logistics task, and
hardware-integration demonstrations on two different quadcopter platforms.
The pipeline is released as open source for
reuse~\cite{flexbe-synthesis, flexbe-synthesis-demo}.

The experiments show why executable synthesis pipelines need analyses beyond
realizability. Two choices that look attractive at the specification level
behaved differently once pushed through synthesis, extraction, reduction, and
deployment-oriented validation. First, reducing the apparent size of the
specification did not reliably reduce every downstream artifact: the
enumerated encoding consistently used fewer atomic propositions, but did not
uniformly produce smaller extracted strategies. It did, however, usually
improve synthesis time for the \texttt{slugs}/CUDD backend studied here,
including substantial savings in PyRoboSim and the hardware-integration
demonstrations. Second, retry logic that is formally realizable can still
permit cyclic behavior that never reaches the intended terminal condition.
Thus the key question is not only whether a specification is realizable, but
whether the realized strategy has the protocol, progress, and deployment
properties expected of a robot controller.

Under the \texttt{slugs}/CUDD symbolic backend used throughout, synthesis cost
depended more on specification structure than on problem scale or proposition
count alone, consistent with the broader finding that symbolic reasoning
complexity depends on constraint structure rather than proposition count alone
\cite{Cheeseman91,Selman199617,Ran3SatThicken01,RandSATBeyond04}.
System-Goal liveness without pending memory was the only formulation confirmed
as a realized FlexBE HFSM under both encodings across the reported comparison
grid (\resec{summary-results}). This makes System-Goal without pending memory the
practical default for the
capability and outcome model studied here. This is an empirical result, not a
general GR(1) theorem; no liveness formulation can force completion against a
capability that behaves adversarially forever. System-Goal avoids the specific
self-inflicted non-well-separated pathology introduced by Fair-Outcome in
these experiments.

These empirical effects are why the pipeline does not stop at
realizability. Well-separation is a pre-synthesis diagnostic that identifies
specifications that permit assumption-falsifying strategies.
In these experiments every \texttt{GOAL\_UNREACHABLE\_TRAP} audit failure
occurred in a specification already flagged as non-well-separated, though
whether this pattern holds in general remains open (\resec{auditor}). The
converse did not hold; several NWS specifications still yielded
auditor-valid controllers.

The post-synthesis explicit-state auditor answers a different question,
checking the particular extracted raw strategy for protocol violations,
deadlocks, bounded-failure violations, and goal-unreachable traps, with
soundness and completeness proofs for each of these structural checks. The
state-merging reduction step is proven to preserve output-trace equivalence
with the raw synthesized strategy (\resec{reduction}), so the guarantees
established for the raw strategy transfer to the reduced automaton from which
the FlexBE HFSM is generated. The mechanical translation from the reduced
automaton into that HFSM is validated empirically by the forced-outcome
simulation and hardware tests described in \resec{crazyflie-results},
particularly \resec{crazyflie-failure-validation}, not by this proof.
Together, the well-separation module, the auditor, and the reduction proof
rule out several concrete classes of deployability defect that
realizability alone does not exclude. They narrow, but do not close, the gap
between a formally realizable strategy and a controller that has passed
explicit protocol and structural-progress checks.

Future work should separate specification-level effects from effects specific
to the \texttt{slugs}/CUDD implementation, through ROBDD-oriented encoding
refinements~\cite{GR1Patterns15} and comparison with backends such as
gr1c~\cite{Livingston-gr1c}, Strix~\cite{MeyerSL2018}, SAT-based
synthesizers~\cite{FaymonvilleFT2017}, and kind-controller
variants~\cite{KindControllers24}. The analysis stage should also move from
diagnosis toward synthesis guidance by preferring strategies that do not
exploit non-well-separated assumptions and by detecting reachable cycles with
available but unused exits toward the goal. Contemporaneous work similarly
shows that correct-by-construction synthesis does not ensure that a GR(1)
specification captures its author's intent, using scenario-based validation
and unnecessary-variable and clone detection~\cite{Maoz2026}. Applying those
specification-level techniques to our machine-generated specifications would
complement our deployment-oriented checks, which carry validation from
generated specifications to executable ROS~2 behaviors demonstrated in
simulation and on physical hardware (\resec{crazyflie-results}).

More broadly, specification alternatives must be evaluated on symbolic
synthesis cost, the operational validity of their environment assumptions,
and the task-level progress exhibited by the resulting strategies. The
resulting controller must also carry a correctness argument from the
synthesized strategy to the code that runs on the robot.
By making the full pipeline, well-separation module, auditor, and reduction
proof openly available and demonstrating them on real hardware, this work
provides both a system that generates executable ROS~2 behaviors from
specification and specification-design guidance for developing
failure-aware high-level robotic supervisors.

The practical bottom line for developers building on this pipeline is direct.
Prefer the enumerated capability encoding as the empirical starting point for
this \texttt{slugs}/CUDD backend, pair it with System-Goal liveness without
pending memory rather than the Fair-Outcome alternative considered here, and
audit every realized state machine. These results show that realizability should
be treated as the beginning of controller validation, rather than its endpoint,
when synthesized supervisors are intended for execution on physical robots.

%% file: Appendices.tex
\ifIEEEtranloaded
  \appendices
\else
  \begin{appendices}
\fi

\makeatletter
\@ifundefined{theHtable}{}{%
  \renewcommand{\theHtable}{appendix.\thesection.\arabic{table}}%
}
\makeatother

\input{AuditorAppendix}

\input{ReductionAppendix}

\input{CoffeeAppendix}

\input{TwoRiversAppendix}

\input{PyRoboSimAppendix}

\input{CrazyFlieAppendix}

\ifIEEEtranloaded
\else
  \end{appendices}
\fi

%% file: AuditorAppendix.tex
\section{Proofs for the Strategy Auditor (\resec{auditor})}
\label{app:auditor-proofs}

\subsection{Semantic Model}

The auditor works directly on an explicit Mealy strategy of the kind emitted by
\texttt{slugs}. As the main-body JSON example in
\relist{slugs-example-json} illustrates, each JSON node is a strategy state.
The \texttt{variables} item fixes the proposition order for the per-node
\texttt{state} array, and \texttt{trans} lists the node's successors.
Each state therefore records both the environment input observed on arrival
and the system output asserted there. This state-labeled representation gives
the activation--outcome convention used throughout the appendix: a state's
output selects a capability, and the successor state's input reports its
outcome. It is the strategy-level realization of the generated constraints
\reqn{sys-guarantees} and \reqn{env-assumptions}. The raw-strategy audit checks
this protocol on every reachable transition before applying the
bounded-failure and goal-unreachable-trap analyses.

The optional post-reduction audit has a narrower role. In the reduced
automaton, a state may carry merged input-label metadata from several
unreduced states; therefore this second audit disables strict protocol
monitoring and acts only as a confirmatory structural check for deadlocks
and goal-unreachable traps.

The product automaton, audited safety violations, and goal-unreachable traps that
make Lemma~\ref{lem:safety} and Theorem~\ref{thm:goal-unreachable-trap}
precise are defined below. Unless noted otherwise, these results assume the
reachable product-graph exploration completes within its configured time
budget.

\begin{definition}[Product Automaton]
\label{def:product}
Let $S$ denote the finite set of states of the synthesized Mealy strategy,
with transition relation $\delta \subseteq S \times S$ and initial state
$s_0 \in S$. Each state $s \in S$ carries labels
$\mathit{in}(s) \subseteq AP_I$ and $\mathit{out}(s) \subseteq AP_O$. For each
capability $\pi \in \mathcal{C}$, let $O(\pi) \subseteq AP_I$ denote its
declared outcome tokens, partitioned into success tokens $O^s(\pi)$ and
failure tokens $O^f(\pi)$, and let $b_\pi \in \mathbb{N} \cup \{\infty\}$
denote its configured bound on consecutive failures. When unconfigured,
$b_\pi = \infty$, meaning no such bound is checked for $\pi$ at all; all
experiments reported in this paper use this unbounded setting.
Here $\mathcal{C}$ denotes the monitored capabilities, those with declared
outcome tokens. The terminal outcomes $\Theta$ of
\resec{terminal-outcomes} are persistent system outputs with no declared
outcome tokens ($O(\theta)=\emptyset$), so the monitor does not treat them as
capabilities. Asserting a terminal outcome leaves $p'=\bot$, does not count
toward the multiple-activation check, and places no missing-outcome
obligation on the edges leaving a terminal state.

The auditor augments each strategy state with a monitor state: $p$ records
the capability whose outcome is expected on the next edge, and $\vec k$
records consecutive failures for capabilities with finite bounds.
Capability $\pi$ is \emph{activated} at $s$ if $\pi_a \in \mathit{out}(s)$,
and an outcome of $\pi$ is \emph{observed on arrival} at $s$ if
$\mathit{in}(s) \cap O(\pi) \neq \emptyset$. Let
$\mathcal{C}_\bot = \mathcal{C} \cup \{\bot\}$ denote the pending-capability
domain, where $\bot$ denotes no capability pending. For $\pi \in \mathcal{C}$
with $b_\pi < \infty$, let $K_\pi = \{0, \dots, b_\pi + 1\}$ track $\pi$'s
current consecutive-failure count. Values $0, \dots, b_\pi$ mean that many
consecutive failures observed so far, still within bound, and the extra
value $b_\pi + 1$ is a saturating sentinel meaning the bound has been
exceeded, so $|K_\pi| = b_\pi + 2$. For $\pi$ with $b_\pi = \infty$, fix
$K_\pi = \{0\}$ instead. With no bound to check, this component has nothing
to track. The \emph{product automaton} has state space
\[
    P \;=\; S \times \mathcal{C}_\bot \times \textstyle\prod_{\pi \in \mathcal{C}} K_\pi .
\]
Each strategy edge $(s,s')\in\delta$ whose successor activates at most one
capability deterministically updates the monitor.
The successor component $p'$ is determined by the activation label at $s'$
and names the capability whose outcome is expected on the following edge. If
$s'$ activates exactly one capability $\pi$, then
$p'=\pi$; if $s'$ activates no capability, then $p'=\bot$. If $s'$
activates multiple capabilities, $p'$ is ambiguous, so the edge does not
extend $\delta_P$ and is recorded as a strict protocol violation
(Definition~\ref{def:protocol}(iv)). For every remaining edge, the monitor
successor is defined even when the edge will be reported as another safety
violation. Writing $\mathrm{act}(s')$ for the unique capability activated at
$s'$, or $\bot$ if none is,
$\big((s,p,\vec{k}),\,(s',p',\vec{k}')\big) \in \delta_P$ iff
$(s,s')\in\delta$, $s'$ activates at most one capability,
$p'=\mathrm{act}(s')$, and $\vec k'$ follows the counter rule below, in
particular $k'_\pi = 0$ for every $\pi \neq p'$. If $p'=\bot$, no counter is carried forward. If $p' \neq \bot$, then
$k'_{p'}=0$ unless the same capability is immediately retried after a
finite-bound failure. The only nonzero update is
$k'_{p'}=\min(k_p,b_p){+}1$ when $p'=p$, $b_p<\infty$,
$\mathit{in}(s') \cap O^s(p) = \emptyset$, and
$\mathit{in}(s') \cap O^f(p) \neq \emptyset$.
Only the capability becoming pending next can carry a nonzero counter.
$k_\pi$ counts failures within $\pi$'s current unbroken streak of immediate
re-activation and resets whenever that streak breaks. The initial product
state is $(s_0,p_0,\vec 0)$, where the configured initial state $s_0$
implicitly fixes the initial active capability $p_0$, or $p_0=\bot$ if no
capability is initially active.
\end{definition}

\begin{definition}[Audited Safety Violations]
\label{def:protocol}
A strategy edge $(s,s') \in \delta$ considered from a reachable product node
$(s,p,\vec k)$ has a \emph{strict protocol violation} if any of the following
holds:
\begin{enumerate}[label=(\roman*)]
  \item \emph{missing outcome}: $p\neq\bot$ and
    $\mathit{in}(s')\cap O(p)=\emptyset$;
  \item \emph{multiple outcomes}: $p\neq\bot$ and
    $|\mathit{in}(s')\cap O(p)|>1$;
  \item \emph{unexpected outcome}: $\mathit{in}(s')\cap O(\pi)\neq\emptyset$
    for some capability $\pi\neq p$; or
  \item \emph{multiple activations}:
    $|\{\pi\in\mathcal C:\pi_a\in\mathit{out}(s')\}|>1$.
\end{enumerate}
Condition (iv) is checked before constructing the product successor because
multiple activation makes $p'$ ambiguous. Otherwise, the edge has a
\emph{bounded-failure violation} if its monitor successor satisfies
$p\neq\bot$, $p'=p$, $b_p<\infty$, and $k'_p>b_p$. A state $s$ with no
outgoing edge under $\delta$ is a \emph{deadlock}.
\end{definition}

\begin{definition}[Goal and Goal-Unreachable Region]
\label{def:goal-unreachable-region}
The auditor receives an explicitly configured set of \emph{goal} labels
$G \subseteq AP_I \cup AP_O$. A product node $(s,p,\vec k)$ is a \emph{goal
node} if $\mathit{in}(s) \cap G \neq \emptyset$ or
$\mathit{out}(s) \cap G \neq \emptyset$. A reachable product node $q$ is
\emph{goal-unreachable} if no path in $\delta_P$ from $q$ reaches a
\emph{cyclic} goal-containing SCC of the reachable subgraph of $P$. Paths
may have length zero, so every node of a cyclic goal-containing SCC is
goal-reachable and such a component is never a trap. A
component is cyclic iff it has more than one node or is a single node with a
self-loop. A reachable cyclic SCC all of whose nodes are goal-unreachable is
a \emph{goal-unreachable trap}.
\end{definition}

\begin{remark}[Why cyclic goals]
\label{rem:goal-kinds}
The cyclic-component requirement is automatic for terminal goals guarded by
a persistence guarantee $g \rightarrow g'$ in the system safety guarantees,
as every terminal state-machine outcome is guarded this way in this pipeline.
It matters for environment-reported recurrence goals with no system-side
persistence guarantee, where a goal node in a non-cyclic component may be
visited only once. Definition~\ref{def:goal-unreachable-region} covers both
cases.
\end{remark}

\begin{remark}[Encoding independence]
The definitions use normalized capability-specific tokens. Under the one-hot
encoding, these tokens appear directly; for example, Coffee capability
\texttt{gr} uses activation \texttt{gr\_a} and outcomes \texttt{gr\_c} and
\texttt{gr\_f} (\relist{coffee-config}, \relist{discrete-abstraction}). Under
the enumerated encoding (\resec{enumerated}), the auditor remaps the shared raw
values back to the same named tokens using the specification's recorded
integer-to-capability mapping (e.g., \texttt{3: gr} in
\relist{pending-trans-enumerated-example}).
\end{remark}

\begin{lemma}[Safety-check soundness and completeness]
\label{lem:safety}
Assume strict protocol monitoring is enabled and reachable-product exploration
completes. The auditor reports exactly the strict protocol and bounded-failure
violations on edges out of reachable product nodes, and exactly the deadlocks
at reachable product nodes.
\end{lemma}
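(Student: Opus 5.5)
The plan is to make the auditor's safety check precise as an explicit algorithm and then argue two inclusions. The algorithm is a worklist (breadth-first) exploration of the reachable product graph from $(s_0,p_0,\vec 0)$ under $\delta_P$ of Definition~\ref{def:product}. On popping a node $(s,p,\vec k)$ it iterates over every strategy edge $(s,s')\in\delta$ and proceeds as follows. It first checks condition (iv) of Definition~\ref{def:protocol}; if that condition holds, it reports and does not extend. Otherwise it computes the monitor successor $(s',p',\vec k')$, evaluates conditions (i)--(iii) and the bounded-failure predicate, reports any that hold, and enqueues the successor if it is unvisited. If $s$ has no outgoing edge, it reports a deadlock. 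The whole argument rests on one invariant: the set of nodes ever popped equals $R$, the set of product nodes reachable from the initial node in $\delta_P$.

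\textbf{Step 1 (the invariant).} Finiteness comes first: $S$, $\mathcal{C}_\bot$, and each $K_\pi$ are finite, and the counter update saturates at $b_\pi+1$ via the $\min$, so $P$ is finite. Exploration therefore terminates, and the hypothesis that it completes within budget means it is not cut off. Two inclusions are then needed. For popped $\subseteq R$, induct on insertion order: a node is enqueued only as a $\delta_P$-successor of a popped node. For $R\subseteq$ popped, induct on the length of a shortest $\delta_P$-path. The key point is that the algorithm enqueues exactly the $\delta_P$-successors, because the only edges it declines to extend are those with multiple activations, and Definition~\ref{def:product} excludes exactly those edges from $\delta_P$. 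Edges that exhibit (i)--(iii) or bounded-failure violations are still extended, which matches the definition's clause that the monitor successor exists ``even when the edge will be reported''.

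\textbf{Step 2 (soundness).} Every report is issued while processing some popped node, which lies in $R$ by Step 1. Each report is produced by literally evaluating the predicate of Definition~\ref{def:protocol} on the values $(p,\mathit{in}(s'),\mathit{out}(s'),k'_p)$. Those values are exactly the ones the definition refers to, since $p'$ and $\vec k'$ are computed by the deterministic rule of Definition~\ref{def:product}. Hence every reported violation is genuine and lies on an edge out of a reachable product node. The same reasoning covers deadlocks: a reported deadlock is a reachable node whose strategy state has no outgoing edge.

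\textbf{Step 3 (completeness).} Take any edge $(s,s')$ out of a node $q\in R$ that violates a clause of Definition~\ref{def:protocol}. By Step 1, $q$ is popped, and all of its $\delta$-edges are iterated, including edges whose successor would not be in $\delta_P$. The predicates are evaluated in the same order as the definition, with (iv) checked before the product successor is built. For a bounded-failure violation, (iv) is by definition false, so the successor, and with it $k'_p$, is computed and tested. Therefore the violation is reported. Deadlocks follow identically.

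I expect the main obstacle to be the bookkeeping in Step 1 around edges that are violating but still extended, versus edges with multiple activations that are not extended. The risk is that reachability in the implementation could diverge from $\delta_P$ in either direction. I would address this by first stating, as a separate lemma, that the algorithm's enqueue relation coincides with $\delta_P$, and then deriving the invariant from it. A smaller subtlety is the treatment of terminal outcomes. They must be excluded from $\mathcal{C}$ so that asserting $\theta$ gives $p'=\bot$ and triggers neither (iv) nor (i), consistent with the conventions stated in Definition~\ref{def:product}.
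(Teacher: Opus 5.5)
Your proposal is correct and takes essentially the same route as the paper. Both arguments use a breadth-first exploration from $(s_0,p_0,\vec 0)$ that reports deadlocks and reports condition (iv) without extending; otherwise the exploration builds the deterministic monitor successor, evaluates the local checks, and extends, so soundness and completeness follow from each predicate being local plus every edge out of every reachable node being examined. Your explicit invariant that the popped nodes coincide with the $\delta_P$-reachable set, since the only non-extended edges are exactly those Definition~\ref{def:product} excludes from $\delta_P$, makes precise a step the paper's proof leaves implicit.
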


\begin{proof}[Proof of Lemma~\ref{lem:safety}]
Every strategy execution prefix whose successor states activate at most one
capability lifts to a unique product path
$(s_0,p_0,\vec 0) \to \cdots \to (s,p,\vec k)$. The pending capability and
failure counters at each step are a deterministic function
(Definition~\ref{def:product}) of the previous monitor state and the labels
observed at that step, so there is exactly one monitor state consistent with
each such strategy prefix. The auditor performs a breadth-first exploration
from the initial product node. At each reachable product node
$(s,p,\vec k)$, it reports a deadlock if $s$ has no outgoing strategy edge;
otherwise, it examines every edge $(s,s') \in \delta$ once as a candidate
audited edge. If $s'$ activates multiple capabilities, the auditor
reports Definition~\ref{def:protocol}(iv) directly on that strategy edge and
does not add a $\delta_P$ successor. Otherwise, the unique monitor successor
$(s',p',\vec k')$ is defined by Definition~\ref{def:product}, the remaining
local protocol and bounded-failure checks are evaluated against it, and the
successor is added to the reachable subgraph of $P$. Soundness (only real
violations are reported) and completeness (every reachable violation is
found) follow because each
violation condition is local and decidable on the candidate edge and its
monitor state, and the exploration examines every candidate strategy edge
out of every reachable product node.
\end{proof}

\begin{theorem}[SCC characterization of goal-unreachable traps]
\label{thm:goal-unreachable-trap}
If the reachable subgraph of $P$ contains a goal-unreachable trap -- a
cyclic SCC with no path, in the condensed component graph, to any cyclic
goal-containing component -- then the strategy has a reachable
goal-unreachable node (Definition~\ref{def:goal-unreachable-region}).
Conversely, if the reachable subgraph of $P$ is deadlock-free, then any
reachable goal-unreachable node implies the existence of such a
goal-unreachable trap.
\end{theorem}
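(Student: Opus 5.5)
The plan is to prove both directions by elementary reasoning on the finite reachable subgraph $R$ of $P$ and its condensation (the DAG of SCCs). Throughout, goal-reachability of a node means a path, possibly of length zero, in $\delta_P$ to some node of a cyclic goal-containing SCC, as in Definition~\ref{def:goal-unreachable-region}. The key structural fact I will use repeatedly is \emph{closure under successors}. If $q$ is goal-unreachable and $q \to q'$ in $\delta_P$, then $q'$ is goal-unreachable, since any goal path from $q'$ prefixed by the edge would be one from $q$. Dually, goal-reachability is constant on an SCC: any two nodes of an SCC reach each other, so one reaches a goal component iff all do.

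For the forward direction, let $C$ be a reachable cyclic SCC with no path in the condensation to a cyclic goal-containing component. Pick any $q \in C$. If $q$ had a $\delta_P$-path to a node of some cyclic goal-containing SCC $D$, that path would project to a path from $C$ to $D$ in the condensation. When $D \neq C$ this contradicts the hypothesis. When $D = C$, the component $C$ would itself be cyclic and goal-containing, which the hypothesis also excludes, since the trivial path counts. Hence $q$ is goal-unreachable. Since $q$ lies in a reachable SCC it is reachable, and its strategy component $s$ gives the reachable goal-unreachable strategy state. The only care needed here is making explicit that ``no path in the condensation'' includes the length-zero case. I would state that reading explicitly, matching the paper's convention that such a component ``is never a trap''.

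For the converse, assume $R$ is deadlock-free and let $q$ be a reachable goal-unreachable node. Consider the set $U$ of nodes reachable from $q$ in $R$. By closure under successors, every node of $U$ is goal-unreachable. Now take the sub-DAG of the condensation induced by SCCs meeting $U$; since $U$ is closed under successors, these SCCs lie entirely in $U$. This finite DAG has a sink component $C$, i.e.\ one with no edge to another component. Deadlock-freeness is now used. Deadlocks are defined on strategy states, so I must argue that a product node with no $\delta_P$-successor cannot occur. Such a node would arise either from an $s$ with no strategy edge, which is a deadlock, or from every edge of $s$ leading to a multiple-activation successor. I would read ``the reachable subgraph of $P$ is deadlock-free'' as every reachable product node having a $\delta_P$-successor, and state that interpretation explicitly.

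Granting this, any node in the sink $C$ has a successor, which must lie in $C$. So either $|C| > 1$, or $C = \{c\}$ and $c$'s only successors are $c$ itself, giving a self-loop. In both cases $C$ is cyclic. All its nodes are goal-unreachable and reachable, so $C$ is a goal-unreachable trap. It has no condensation path to a cyclic goal component, for otherwise its nodes would be goal-reachable. This completes the converse.

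The main obstacle is the mismatch between the strategy-level notion of deadlock (no outgoing $\delta$ edge) and the product-level need for every reachable node to have a $\delta_P$-successor, because edges into multiple-activation states are dropped from $\delta_P$. I would handle it by making the product-level reading of the hypothesis explicit. An alternative is to note that under strict monitoring such edges are already reported as violations by Lemma~\ref{lem:safety}, so the trap analysis is only meaningful once that lemma's checks pass.
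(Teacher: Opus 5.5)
Your proof is correct and matches the paper's argument: the forward direction projects any goal path from the trap onto the condensation, and the converse combines closure of the goal-unreachable set under $\delta_P$ with finiteness and deadlock-freeness to find a cyclic SCC inside that set (you take a sink SCC among the components reachable from $q$, whereas the paper walks forward until a node repeats and takes the SCC of that cycle, an immaterial difference). Your explicit product-level reading of ``deadlock-free'' is, if anything, more careful than the paper's closing appeal to Lemma~\ref{lem:safety}, whose strategy-level deadlock report does not by itself exclude a reachable product node all of whose strategy edges are dropped from $\delta_P$ as multiple-activation violations.
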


\begin{proof}[Proof of Theorem~\ref{thm:goal-unreachable-trap}]
Let $R \subseteq P$ be the set of reachable product nodes with no path in
$\delta_P$ to a cyclic goal-containing component; by Definition~\ref{def:goal-unreachable-region},
$R$ is exactly the set of goal-unreachable nodes, and a goal-unreachable
trap exists iff some reachable cyclic SCC is a subset of $R$.
The deadlock-free assumption is guaranteed in the pipeline because this
check runs only after Lemma~\ref{lem:safety}'s exploration reports no
deadlock. The proof uses two facts: $R$ is closed under transitions, and a
finite, closed, deadlock-free graph contains a cycle.

\emph{$R$ is closed under $\delta_P$.} If $x \in R$ and $(x,y) \in \delta_P$,
then $y \in R$: otherwise $y$ would have a path to a cyclic goal-containing
component, and prefixing the edge $(x,y)$ would give $x$ such a path,
contradicting $x \in R$.

\emph{Trap implies goal-unreachable node.} Let $Q$ be a reachable cyclic SCC
with no path to a cyclic goal-containing component, and let $q \in Q$. If $q$ had a path to a cyclic
goal-containing component, that path would witness a path from $Q$'s
component to it in the condensed graph, contradicting the hypothesis. So $q$
has no such path, i.e., $q \in R$, so $R \neq \emptyset$ and the strategy has
a reachable goal-unreachable node. This direction uses no assumption about
deadlocks; the assumption is only needed for the converse below.

\emph{Goal-unreachable node implies trap.} Suppose some reachable $q \in R$.
Since the reachable subgraph is deadlock-free, $q$ has an outgoing edge; by closure that
successor lies in $R$, and inductively every node reached by repeatedly
following outgoing edges from $q$ stays in $R$. As $R$ is finite, this
process must revisit a node, producing a cycle entirely within $R$. Let $Q$
be the SCC containing that cycle. From a node on the cycle, every node of
$Q$ is reachable; closure therefore gives $Q\subseteq R$. Hence no node of
$Q$ has a path to a
cyclic goal-containing component, so $Q$ has no path to one in the condensed
graph. $Q$ is reachable (via $q$) and cyclic by construction, so it is a
goal-unreachable trap. If $q$ were instead a deadlock, Lemma~\ref{lem:safety}
would already have reported it before this check runs, which is exactly
why the pipeline sequences the deadlock check first.
\end{proof}

\begin{corollary}
\label{cor:unforced-exit}
An SCC from which a cyclic goal-containing component is reachable is never
flagged, even if an adversarial environment could avoid every such exit
indefinitely.
\end{corollary}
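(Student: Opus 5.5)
The corollary follows directly from Definition~\ref{def:goal-unreachable-region} together with the trap criterion of Theorem~\ref{thm:goal-unreachable-trap}; no game-theoretic reasoning is needed, and that is exactly the point being recorded. I will show that the auditor's flagging condition refers only to \emph{existence} of a path in $\delta_P$, never to which player controls the branching along it. It is therefore blind to whether the environment is forced to take an exit.

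\textbf{Step 1.} Let $Q$ be a reachable SCC of the reachable subgraph of $P$ such that some node $q\in Q$ has a $\delta_P$-path to a cyclic goal-containing component $C$.

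\textbf{Step 2.} Every node $q'\in Q$ reaches $q$ inside $Q$, by strong connectivity. Concatenating that path with the path from $q$ to $C$ gives $q'$ a path to $C$. Hence every node of $Q$ is goal-reachable, so no node of $Q$ lies in the set $R$ used in the proof of Theorem~\ref{thm:goal-unreachable-trap}. In condensed-graph terms, $Q$'s component has a path to $C$'s component.

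\textbf{Step 3.} A goal-unreachable trap is, by definition, a reachable cyclic SCC \emph{all} of whose nodes are goal-unreachable. The auditor flags exactly such components (Theorem~\ref{thm:goal-unreachable-trap}). Therefore $Q$ is never flagged. This covers both cases: if $Q$ is a cyclic goal-containing component itself (length-zero path), or if it merely has an outgoing route to one.

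\textbf{Step 4.} I will justify the ``even if'' clause with a witness. Nothing in the argument above refers to which edges are environment-chosen, meaning which successor in $\mathit{trans}$ corresponds to which outcome. So I will describe a strategy in which $Q$ contains a retry self-loop on a \texttt{failure} outcome, and its only exit to the goal is via \texttt{completed}. An environment reporting \texttt{failure} forever stays in $Q$ indefinitely, yet $Q$ satisfies the hypothesis of Step 1 and is unflagged.

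The only delicate point, which is minor, is stating precisely that ``avoid every exit'' is a property of infinite plays under an environment choice function. That property is absent from Definition~\ref{def:goal-unreachable-region}. Making it explicit is what establishes that this is an intended incompleteness relative to livelock, not a gap in Theorem~\ref{thm:goal-unreachable-trap}.
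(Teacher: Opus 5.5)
Your proof is correct and follows the paper's own argument. The paper's proof is the one-line observation that a path to a cyclic goal-containing component places the SCC outside the goal-unreachable region of Definition~\ref{def:goal-unreachable-region}; your Steps 2--3 spell out the strong-connectivity propagation the paper leaves implicit, and your Step 4 witness illustrates the ``even if'' clause but is not logically required, since the definition never refers to which player resolves a branch.
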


\begin{proof}
Such a path excludes the SCC from the goal-unreachable region by
Definition~\ref{def:goal-unreachable-region}.
\end{proof}

\subsection{Practical Classification and an Example}

The auditor's practical classification follows the constructive direction of
Theorem~\ref{thm:goal-unreachable-trap}. It computes the SCCs of the reachable
subgraph of $P$ using Tarjan's algorithm~\cite{Tarjan1972}, forms the
condensed acyclic component graph, and marks every component that can reach a
\emph{cyclic} goal-containing component by backward reachability seeded from
those cyclic goal components. It then flags as
\texttt{GOAL\_UNREACHABLE\_TRAP} exactly the reachable cyclic components not
marked. By Theorem~\ref{thm:goal-unreachable-trap}, and given the preceding
deadlock-free safety check, this flags a component if and only if it is a
goal-unreachable trap. Thus the check is sound and complete relative to
Definition~\ref{def:goal-unreachable-region} for both termination and
task-completion audits and for both terminal and recurrence-type goals,
without needing to distinguish any of these at classification time.
\refig{pyrobosim-trap} shows a representative example found in the PyRoboSim
case study, namely a closed, reachable cycle with no edge leaving it, flagged exactly
as \texttt{GOAL\_UNREACHABLE\_TRAP} predicts.

\subsection{Complexity}
\label{sec:auditor-complexity}

Let $n=|S|$, $m=|\delta|$, $k=|\mathcal C|$, and
$B=\prod_{\pi:\,b_\pi<\infty}(b_\pi+2)$, the joint domain size of the
configured failure counters. Writing $P$ for the product automaton of
Definition~\ref{def:product} and $E_P$ for its transition set,
$|P| \le n(k{+}1)B$ and $|E_P| \le m(k{+}1)B$. The product extends $S$ with a
pending-capability slot ($k{+}1$ choices, one per capability plus none
pending) and the failure counters (jointly contributing $B$). Each audit phase
is linear in its graph: product-graph construction (Lemma~\ref{lem:safety}),
Tarjan's algorithm~\cite{Tarjan1972}, and the
condensation-plus-backward-reachability pass
(Theorem~\ref{thm:goal-unreachable-trap}). Thus the audit completes in
\begin{equation}
    O\big((n+m)(k+1)B\big).
    \label{eqn:auditor-complexity}
\end{equation}

A capability with finite bound $b_\pi$ contributes $b_\pi+2$ counter values,
including the saturating ``bound exceeded'' value. An unbounded capability
contributes $1$, since no counter is tracked; it is not interpreted as the
limit $b_\pi\to\infty$. Hence $B$ can grow exponentially in the number of
bounded-failure capabilities and dominate the audit cost.

No evaluated configuration has a finite failure bound, so $B=1$ here and
the bound becomes $O((n+m)(k+1))$. The $(k+1)$ pending-capability factor
remains, although $k$ is a small constant within each case study. The
bounded-failure branch is therefore exercised by unit tests on
hand-constructed strategies, not by the reported synthesis experiments.

The auditor operates in a different computational regime from symbolic
GR(1) synthesis rather than a strictly cheaper one. Its graph algorithms are
linear in the reachable explicit product graph, not necessarily in the compact
capability description. GR(1) realizability checking is already
the tractable restriction of full LTL synthesis, dropping doubly-exponential
worst-case cost~\cite{PnueliRosner90,Vardi1994} to singly
exponential~\cite{Bloem2012,Ehlers2016}; that bound, however, is in $|AP|$
against the symbolic BDD representation, whose practical size, as
\resec{discussion} shows, depends on variable interaction and ordering
structure rather than proposition count alone. The auditor never touches
that BDD representation. It operates entirely on the explicit strategy
that synthesis has already extracted. The auditing-time rows for Coffee,
Two-Rivers, PyRoboSim, and
Crazyflie (\resec{coffee-results}, \resec{two-rivers-results},
\resec{pyrobosim-results}, \resec{crazyflie-results}) are broadly consistent
with this scaling within each domain. Comparing audit times across domains
is noisier, since $k$ and branching factor differ by domain and both enter
the bound alongside $n$. Among configurations with an auditing row, auditing
remained a small fraction of the reported end-to-end \textbf{Overall} time.
The \textbf{Auditing} rows are Python-harness wall-clock times
(\retab{timing-rows}), so they include interpreter and subprocess start-up
overhead and overstate the cost of the graph analysis itself.

\subsection{Pipeline Integration}
\label{sec:pipeline-integration}

The auditor is a pipeline module with the same YAML-declared input/output
interface as other pipeline stages (\resec{architecture}), taking the
synthesized automaton, capability metadata already produced earlier in the
pipeline, and an optional goal-outcome override, and returning a structured
result together with a \texttt{SynthesisErrorCode}. A configured wall-clock
budget bounds worst-case audit cost. A budget overrun reports a distinct
\texttt{SynthesisErrorCode.AUDIT\_INCOMPLETE} status together with an
explicit \texttt{incomplete: true} flag and a warning. An incomplete
audit is neither a pass nor a failure, but unknown. It is the case excluded
by Lemma~\ref{lem:safety}'s completion assumption, so the completeness
guarantee no longer applies; any violation already reported remains sound.

The pipeline manager treats \texttt{AUDIT\_INCOMPLETE} as non-fatal by
default, allowing later stages to produce a state machine for inspection or
testing while carrying the status through to the final result. This
continuation is not audit certification. Under the audit policy, a confirmed
pass permits deployment with respect to the defined defect classes, and a
confirmed safety or goal-unreachable-trap failure rejects the strategy. An
incomplete result is configurable as fatal or non-fatal; deployment should
normally treat it as fatal unless a designer explicitly accepts the unknown.

Every evaluated configuration completes within budget, so
\texttt{AUDIT\_INCOMPLETE} does not appear in the results. The status prevents
larger strategies from silently blocking controller generation when audit
cost exceeds the configured budget.

Well-separation (\resec{well-separation-module}) is a separate, independent
status, not interchangeable with the audit verdict above. It diagnoses the
specification before synthesis, not the extracted strategy. Its module
exposes \texttt{fail\_on\_non\_well\_separated} and
\texttt{fail\_on\_incomplete} as opt-in analyzer configuration,
but every configuration evaluated in this paper runs with both disabled.
An NWS verdict is reported alongside the case class and responsible
assumptions, never blocking synthesis, auditing, or reduction. This is not
merely a permissive default; several NWS specifications in
\resec{discussion} realize controllers that pass the raw-strategy audit
cleanly (\resec{coffee-results}, \resec{two-rivers-results},
\resec{pyrobosim-p0-results}), confirming the distinction is real. The
pipeline therefore tracks three statuses that are not interchangeable:
\emph{unrealizable} (no winning strategy exists, \resec{synthesis-module});
\emph{NWS} (the specification permits the system to win by forcing an
environment-assumption violation, a property of the game, not of any one
extracted strategy); and \emph{strategy audit failure} (this particular
extracted strategy contains a confirmed safety violation or
goal-unreachable trap). Of these, NWS and strategy-audit failure can
co-occur for the same extracted strategy; an unrealizable specification
produces no strategy to audit. A realizable, NWS specification can still
audit clean, exactly
because well-separation asks whether the system \emph{could} win by
forcing an assumption violation, not whether the specific strategy
\texttt{slugs} extracted \emph{does}. Because NWS is a property of the
game rather than of any single extracted strategy, a differently-ordered
synthesis run could in principle produce a strategy that exploits the same
NWS assumption differently than the one tested here, even where the
tested strategy's audit is clean; the auditor's guarantee is scoped to the
concrete strategy it examines, not to every strategy the same NWS
specification could yield.

Running the audit a second time after the optional state-merging reduction
step (\resec{reduction}) confirms that the reduction did not introduce new
traps for the configured goal labels used by this pipeline. Because the
pipeline only reduces a strategy whose raw audit of the \texttt{slugs}
strategy has already passed, the second audit confirms that the reducer's
implementation respects the guarantee established in \resec{reduction} and
\S\ref{sec:reduction-remarks}. A correctly implemented reduction step
cannot turn an already trap-free automaton into one with a new deadlock or
goal-unreachable trap, for the goal-label kinds that guarantee is scoped
to, even though the union-based incoming labels it introduces can, in
principle, mask a trap that was already there. The reduced
automaton is explicitly marked as reduced in the pipeline; when that marker
is present, the auditor skips strict protocol checks because merged state
labels are not single concrete outcome observations.

Because the audit performs exhaustive graph analysis rather than sampled
trial execution, when it completes, its coverage of the defined defect
classes does not depend on how many outcomes were manually exercised
during testing. Every reachable state and transition of
the synthesized strategy is checked for strict protocol violations,
bounded-failure violations, deadlock, and goal-unreachable traps
(\resec{auditor-complexity}). Controller size,
reported per domain in \resec{coffee-results}, \resec{two-rivers-results},
\resec{pyrobosim-results}, and \resec{crazyflie-results}, ranges from single
digits for Coffee up to 677 states for the largest quadcopter
configuration (\textbf{2FP}'s raw strategy, \resec{crazyflie-reliability-app});
the audit covers all of it, not a sample. Manually forcing every declared
capability-failure outcome at least once during execution for Coffee and
Two-Rivers, and selected outcomes for PyRoboSim and the quadcopter case
study, instead confirms that the running HFSM implements the corresponding
audited transitions; the audit is the primary evidence of coverage.

%% file: ReductionAppendix.tex
\section{Proofs for State-Merging Reduction (\resec{reduction})}
\label{app:reduction-proofs}

Throughout this appendix, let $A = (S, \delta, s_0)$ denote the raw
strategy automaton produced by \texttt{slugs}, where $S$ is the finite set
of Mealy strategy states, $\delta \subseteq S \times S$ is the transition
relation, and $s_0 \in S$ is the initial state. Each Mealy strategy state
$s \in S$ carries labels $\mathit{in}(s) \subseteq AP_I$ and
$\mathit{out}(s) \subseteq AP_O$, the environment-input valuation observed
on arrival at $s$ and the system-output valuation asserted at $s$. A valuation
is represented by the propositions assigned true; omission denotes false, so
matching two valuations below requires equality, not set inclusion. Let
$AP_O^p \subseteq AP_O$ be the set of pending-output propositions, and for
any $o \subseteq AP_O$ write $o{\downarrow}=o\setminus AP_O^p$ for the core
output projection that discards those pending propositions.

\begin{definition}[Output-Trace Equivalence]
\label{def:trace-equiv}
An environment input sequence is \emph{admissible} from a state $s$
if replaying it against $\delta$ from $s$ never encounters a step whose
input matches no outgoing edge, i.e., every step matches some
$(s,t) \in \delta$ with $\mathit{in}(t)$ equal to that step's input.

Let $\approx$ be the largest relation $R \subseteq S \times S$ such that
$(s,s') \in R$ implies:
\begin{enumerate}[label=(\roman*)]
    \item $\mathit{out}(s){\downarrow} = \mathit{out}(s'){\downarrow}$; and
    \item for every $(s,t) \in \delta$ there is $(s',t') \in \delta$ with
    $\mathit{in}(t) = \mathit{in}(t')$ and $(t,t') \in R$, and
    symmetrically with $s,s'$ exchanged.
\end{enumerate}
States $s, s' \in S$ are \emph{output-trace equivalent}, written
$s \approx s'$, iff $(s,s') \in {\approx}$. Clause (ii) is the closure
condition of a deterministic bisimulation: related states match successors
under the same input, and those successors must again be related. The identity
relation satisfies both clauses, and taking converses or relational
compositions preserves them. Hence the largest such relation $\approx$ is
reflexive, symmetric, and transitive.

Because the extracted strategy is deterministic
(Definition~\ref{def:outcome-targets}), no two edges from one state share an
input label. Iterating clause (ii) therefore amounts to replaying an
admissible input sequence. Equivalently, $s\approx s'$ iff the two states
admit the same admissible input sequences and, along each one, produce
identical sequences of core output valuations forever.
\end{definition}

\begin{definition}[Reachable Pruning]
\label{def:pruning}
During pruning, a state $s \neq s_0$ with no incoming edge under the
current transition relation cannot be reachable from $s_0$. Such states are
removed, together with the outgoing edges this removal makes dangling. The
process repeats until no such state remains.
\end{definition}

\begin{definition}[Outcome-Indexed Targets]
\label{def:outcome-targets}
After pruning reaches a fixed point, for each surviving state $s$, let
$\tau_0(s)$ be the partial function mapping $\mathit{in}(t) \mapsto t$ for
every remaining edge $(s,t)$. Because \texttt{slugs} extracts a
deterministic Mealy strategy, a fixed state and a fixed next environment
input determine a unique successor, so $\tau_0(s)$ is well-defined: no two
edges from $s$ share a label. $\tau_0(s)$ is computed once, immediately
after pruning and before any equivalence-merging begins; at that point
every label $\mathit{in}(t)$ is still exactly the raw input that reaches
$t$, since no target has yet had another target's arrival metadata folded
into it. The reduction sequence below redirects the targets of these maps
but never changes their fixed input keys.
\end{definition}

\begin{definition}[Merge Test]
\label{def:merge-test}
Let $G=(S_G,\tau_G,s_G)$ denote the current automaton at some point during
the reduction merge phase, where $\tau_G$ is its input-indexed partial
transition function. Two states $u,w \in S_G$ \emph{pass the merge test},
written $u \sim_G w$, iff
$\mathit{out}(u){\downarrow} = \mathit{out}(w){\downarrow}$ and
$\tau_G(u,\cdot) = \tau_G(w,\cdot)$
(Definition~\ref{def:outcome-targets}). When
$u \sim_G w$, the reducer keeps one representative, redirects every edge
into $w$, and every transition entry valued $w$, to that representative,
and discards $w$ as defined precisely below.
\end{definition}

\begin{remark}[Redirection is global, not sweep-order-dependent]
\label{rem:merge-order}
The redirection in Definition~\ref{def:merge-test} updates every edge into
$w$, and every transition entry valued $w$, at the moment of the merge,
regardless of whether the edge's source state has itself already been
tested against another state earlier in the sweep. The sweep order only
determines which pairs of states are tested against each other; it does
not gate which edges, or which transition entries, a given merge
updates. Consequently, a target merge can make two source maps equal and
enable those sources to merge later in the same or a subsequent sweep,
without weakening equality of their exact input keys.
\end{remark}

\begin{definition}[Reduction Sequence and Reduced Automaton]
\label{def:reduced-automaton}
Apply Definition~\ref{def:pruning} to $A$ to a fixed point, and let $S_r$
and $\delta_r$ be the surviving states and edges. Definition~\ref{def:outcome-targets}
then gives $\tau_0$. Set
$G_0=(S_r,\tau_0,s_0)$ and let $q_0:S_r\to S_r$ be the identity map.
At each stage, $q_k$ maps every post-pruning raw state to its current
representative; the map $r_k$ below performs the next single merge.

Suppose the $(k{+}1)$-th merge chooses $u,w\in S_k$ with
$u\sim_{G_k}w$ and keeps $u$. Define the representative map
$r_k:S_k\to S_{k+1}=S_k\setminus\{w\}$ by $r_k(w)=u$ and
$r_k(x)=x$ for $x\neq w$, and set $q_{k+1}=r_k\circ q_k$. The next
transition function is defined for every
$x\in S_k$ and $i\in\operatorname{dom}(\tau_k(x,\cdot))$ by
\[
  \tau_{k+1}(r_k(x),i)=r_k(\tau_k(x,i)).
\]
This is well-defined when $r_k(x)=u$ has the two possible preimages
$x=u,w$, because the merge test requires
$\tau_k(u,\cdot)=\tau_k(w,\cdot)$. The equation redirects every target
globally, including entries belonging to sources already examined in the
current sweep, while preserving every exact input key $i$.
Thus
$G_{k+1}=(S_{k+1},\tau_{k+1},q_{k+1}(s_0))$ is obtained from $G_k$
by precisely this merge and global redirection.

Continue through the reducer's configured left-to-right merge sweeps. If
sweeps are repeated to convergence, repetition stops when a complete sweep
performs no merge. Let $m$ be the total number of merges performed. The
reduced automaton is the final current automaton
\[
  A'=G_m=(S',\tau',s_0'),
  \qquad S'=S_m,\quad \tau'=\tau_m,\quad s_0'=q_m(s_0).
\]
If an unlabeled graph relation is needed, it is derived as
$\delta'=\{(u,\tau'(u,i))\mid u\in S',\ i\in
\operatorname{dom}(\tau'(u,\cdot))\}$. An input sequence is admissible
from $u\in S'$ exactly when each successive application of $\tau'$ is
defined.

Because each $u\in S'$ is itself a raw state, $\mathit{out}(u)$ retains its
original, immutable meaning. Its raw arrival label $\mathit{in}(u)$ also
remains unchanged, but execution of $A'$ uses the input key $i$ of $\tau'$,
not that representative's own arrival label. This is distinct from the
incoming-label \emph{metadata} the implementation separately tracks for
each $u\in S'$, the union of the $\mathit{in}(\cdot)$-derived metadata
of every raw state folded into $u$'s class; \S\ref{sec:reduction-remarks}
explains why that metadata is a superset of what any single raw
predecessor context actually observed and is therefore not used for
execution or strict per-edge protocol checking after reduction.
\end{definition}

\begin{lemma}[Pruning preserves behavior]
\label{lem:pruning-sound}
No state removed by Definition~\ref{def:pruning} lies on any $\delta$-path
from $s_0$, so removing it changes neither the set of executions reachable
from $s_0$ nor any label along them.
\end{lemma}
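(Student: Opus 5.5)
The proof is an induction on the iterations of Definition~\ref{def:pruning}, carrying the invariant that every state removed so far is unreachable from $s_0$ in the original relation $\delta$. The base case is the first iteration. A removed state $s\neq s_0$ has no incoming $\delta$-edge, so no $\delta$-path from $s_0$ can end at $s$. Any such path would have length at least one, because $s\neq s_0$, and its last edge would enter $s$.

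The inductive step handles a later iteration, which removes a state $s\neq s_0$ that has no incoming edge in the \emph{current} relation. Suppose some $\delta$-path $s_0=x_0\to x_1\to\cdots\to x_j=s$ existed. Take the last edge $(x_{j-1},s)$. It is absent from the current relation only if it was deleted as a dangling edge, which happens only when its source $x_{j-1}$ was removed earlier. The induction hypothesis says $x_{j-1}$ is unreachable from $s_0$. But the prefix $x_0\to\cdots\to x_{j-1}$ is a $\delta$-path from $s_0$, which is a contradiction.

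Two side points close the step. First, $s_0$ is never removed, since the definition excludes it, so the prefix path is well-formed. Second, the current relation after pruning is exactly $\delta$ restricted to surviving states, because only edges incident to removed states are deleted. Termination is immediate, since each iteration removes at least one state and $S$ is finite.

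From the invariant, every state on a $\delta$-path from $s_0$ survives. I also need every edge of such a path to survive. Each edge on the path has a surviving source, so it is never deleted as a dangling edge. A surviving target likewise never causes its incoming edges to be removed. Hence every $\delta$-path from $s_0$ is a path in $\delta_r$. Conversely, $\delta_r\subseteq\delta$, so the $\delta_r$-paths are exactly the $\delta$-paths from $s_0$.

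Labels $\mathit{in}(\cdot)$ and $\mathit{out}(\cdot)$ are attached to states and are never modified by pruning. The executions reachable from $s_0$ and their label sequences are therefore identical before and after pruning.

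The main obstacle is the inductive step: it must show that an edge missing from the current relation can only be missing because its source was already removed. This fact must come from the precise reading of "dangling" in Definition~\ref{def:pruning}, namely that only edges incident to removed states are deleted. I will state that reading explicitly before running the induction.
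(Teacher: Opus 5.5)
Your proof is correct and uses the same key fact as the paper's: an edge into a state is deleted only when its source is removed, so a state reachable from $s_0$ always keeps an incoming edge and is never pruned. The paper inducts on path length from $s_0$ (states reachable within $k$ steps survive throughout), whereas you induct on removal order in contrapositive form (states removed so far are unreachable); this is only a reorganization, and your explicit check that path edges survive and that $\delta_r\subseteq\delta$ spells out the execution-level conclusion the paper leaves implicit.
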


\begin{proof}[Proof of Lemma~\ref{lem:pruning-sound}]
By induction on path length $k$. For $k=0$, $s=s_0$, which is never removed
(it is protected). For the inductive step, suppose every state reachable
from $s_0$ within $k$ steps survives pruning throughout, and let $s$ be
reached by a path $s_0 \to \cdots \to p \to s$ of length $k{+}1$, so $p$ is
itself reached within $k$ steps. By the induction hypothesis $p$ is never
removed, so its outgoing edge to $s$ is never among the edges a removal
retracts (Definition~\ref{def:pruning} only retracts the outgoing edges of a
state that is itself removed). Hence $s$ retains this one incoming edge
throughout pruning and is therefore never a zero-indegree state, so $s$ is
never removed. Every reachable state survives by induction, and every
removed state was unreachable throughout.
\end{proof}

\begin{remark}[Pruning is sound but not complete]
\label{rem:pruning-incomplete}
Definition~\ref{def:pruning} is sound but not necessarily complete: it
removes a state only once iterated indegree-zero deletion reaches it,
which need not include every state unreachable from $s_0$. A cycle
disconnected from $s_0$ (e.g., $u \to v \to u$ with no edge from any
reachable state into $\{u,v\}$) has every member state with a nonzero
indegree supplied from within the cycle itself, so the procedure leaves
it in place even though neither $u$ nor $v$ is reachable. This does not
weaken Lemma~\ref{lem:pruning-sound} or
Theorem~\ref{thm:reduction-behavior}: a state that remains unreachable
from $s_0$ is never visited regardless of whether pruning removes it, so
any left-over unreachable structure is inert with respect to executable
behavior, not a correctness gap. It can, however, leave $A'$ larger than
strictly necessary.
\end{remark}

\begin{lemma}[Quotient transition invariant]
\label{lem:quotient-transition}
For every reduction stage $G_k$ and every post-pruning raw state $s\in S_r$,
\[
  \operatorname{dom}(\tau_k(q_k(s),\cdot))
    =\operatorname{dom}(\tau_0(s,\cdot)),
\]
and, for every input $i$ in this common domain,
\[
  \tau_k(q_k(s),i)=q_k(\tau_0(s,i)).
\]
Thus taking representatives preserves both the available input keys and
their transition targets.
\end{lemma}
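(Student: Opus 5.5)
The plan is a proof by induction on the stage index $k$, using only Definition~\ref{def:reduced-automaton} and the base data of Definition~\ref{def:outcome-targets}.

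\emph{Base case} $k=0$. Here $q_0$ is the identity on $S_r$ and $\tau_0$ is the original outcome-indexed map. Both claims therefore hold trivially: the domains coincide, and $\tau_0(q_0(s),i)=\tau_0(s,i)=q_0(\tau_0(s,i))$.

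\emph{Inductive step.} Assume the invariant holds at stage $k$, and let the $(k{+}1)$-th merge keep $u$ and discard $w$ with $u\sim_{G_k}w$. Fix $s\in S_r$ and put $x=q_k(s)\in S_k$, so that $q_{k+1}(s)=r_k(x)$.

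For the domain claim, the defining equation $\tau_{k+1}(r_k(x),i)=r_k(\tau_k(x,i))$ ranges over exactly the keys $i\in\operatorname{dom}(\tau_k(x,\cdot))$. The map $r_k$ is surjective, and its only non-singleton fibre is $\{u,w\}$, whose two members carry identical maps $\tau_k(u,\cdot)=\tau_k(w,\cdot)$ by the merge test. Hence $\operatorname{dom}(\tau_{k+1}(r_k(x),\cdot))=\operatorname{dom}(\tau_k(x,\cdot))$, and this equals $\operatorname{dom}(\tau_0(s,\cdot))$ by the induction hypothesis.

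For the target claim, take $i$ in this common domain. The defining equation and the induction hypothesis give
\[
\tau_{k+1}(q_{k+1}(s),i)=r_k(\tau_k(q_k(s),i))=r_k(q_k(\tau_0(s,i)))=q_{k+1}(\tau_0(s,i)),
\]
where the last equality is the definition $q_{k+1}=r_k\circ q_k$.

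The only step that needs care is well-definedness when $x\in\{u,w\}$. The same key $i$ could in principle be assigned two targets, $r_k(\tau_k(u,i))$ and $r_k(\tau_k(w,i))$. These agree because the merge test makes $\tau_k(u,\cdot)$ and $\tau_k(w,\cdot)$ the same function, so the equation specifies a single function $\tau_{k+1}$. One should also check that $q_k(s)$ always lands in $S_k$; this follows inductively, since each $r_k$ maps $S_k$ onto $S_{k+1}$. Once these two points are settled, the rest is routine bookkeeping.
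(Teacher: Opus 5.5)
Your proof is correct and follows essentially the same induction as the paper: a trivial base case with $q_0$ the identity, domain preservation because the only nontrivial fibre $\{u,w\}$ of $r_k$ carries identical maps $\tau_k(u,\cdot)=\tau_k(w,\cdot)$ by the merge test, and the target identity via $\tau_{k+1}(r_k(q_k(s)),i)=r_k(\tau_k(q_k(s),i))=r_k(q_k(\tau_0(s,i)))=q_{k+1}(\tau_0(s,i))$. Your added remarks on well-definedness and on $q_k$ landing in $S_k$ only make explicit points the paper already settles in its definition of the reduction sequence.
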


\begin{proof}[Proof of Lemma~\ref{lem:quotient-transition}]
For $k=0$, $q_0$ is the identity, so both statements are immediate. Assume
they hold at stage $k$. Definition~\ref{def:reduced-automaton} preserves all
input keys when applying $r_k$; if two source states are identified, the
merge test requires their maps, and hence their domains, to be equal.
Therefore
$\operatorname{dom}(\tau_{k+1}(q_{k+1}(s),\cdot))$
remains $\operatorname{dom}(\tau_0(s,\cdot))$. For every $i$ in that domain,
$q_{k+1}=r_k\circ q_k$ and the transition update give
\[
\tau_{k+1}(r_k(q_k(s)),i)
  =r_k(\tau_k(q_k(s),i))
  =r_k(q_k(\tau_0(s,i))).
\]
The left and right sides are respectively
$\tau_{k+1}(q_{k+1}(s),i)$ and
$q_{k+1}(\tau_0(s,i))$, proving the next stage. Both statements therefore
hold through the final stage $G_m=A'$.
\end{proof}

\begin{lemma}[Merge soundness]
\label{lem:merge-sound}
Every pair $(u,w)$ the reduction procedure merges satisfies $u \approx w$ in
the original raw automaton $A$.
\end{lemma}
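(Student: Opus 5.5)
The plan is to show that the kernel of the final representative map is an output-trace bisimulation on the raw automaton. Let $R_k=\{(s,s')\in S_r\times S_r : q_k(s)=q_k(s')\}$. If $u$ and $w$ are merged at stage $k{+}1$ with $u$ kept, then $q_{k+1}(u)=r_k(u)=u=r_k(w)=q_{k+1}(w)$, since $u,w\in S_k$ are fixed by $q_k$ (each $q_k$ is idempotent on surviving representatives). Later maps only compose further $r_j$'s, so $q_m(u)=q_m(w)$, i.e.\ $(u,w)\in R_m$. It therefore suffices to prove $R_m\subseteq{\approx}$. Since $\approx$ is the largest relation satisfying clauses (i) and (ii) of Definition~\ref{def:trace-equiv}, I will show that $R_m$ satisfies both clauses with respect to the raw $\delta$.

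\emph{Clause (i).} I will prove by induction on $k$ that $\mathit{out}(s){\downarrow}=\mathit{out}(q_k(s)){\downarrow}$ for every $s\in S_r$.
\begin{itemize}
\item The case $k=0$ is trivial.
\item In the step, $q_{k+1}(s)$ differs from $q_k(s)$ only when $q_k(s)=w$, in which case it equals $u$.
\item The merge test $u\sim_{G_k}w$ gives $\mathit{out}(u){\downarrow}=\mathit{out}(w){\downarrow}$, so the invariant is preserved.
\end{itemize}
Hence $q_m(s)=q_m(s')$ forces $\mathit{out}(s){\downarrow}=\mathit{out}(s'){\downarrow}$. This uses the fact, noted after Definition~\ref{def:reduced-automaton}, that representatives are raw states whose $\mathit{out}$ labels are never altered.

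\emph{Clause (ii).} First I will check that the raw edges out of a surviving state are exactly its $\tau_0$ entries. Suppose $s\in S_r$ and $(s,t)\in\delta$. Then $s$ is never removed, so by the argument of Lemma~\ref{lem:pruning-sound} the edge $(s,t)$ is never retracted. Thus $t$ always has positive indegree and survives, so $(s,t)\in\delta_r$ and $\tau_0(s,\mathit{in}(t))=t$. Now let $(s,s')\in R_m$ and $(s,t)\in\delta$, and set $i=\mathit{in}(t)$.
\begin{itemize}
\item By Lemma~\ref{lem:quotient-transition}, $\operatorname{dom}\tau_0(s,\cdot)=\operatorname{dom}\tau_m(q_m(s),\cdot)=\operatorname{dom}\tau_0(s',\cdot)$. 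So $t'=\tau_0(s',i)$ exists, with $(s',t')\in\delta$ and $\mathit{in}(t')=i=\mathit{in}(t)$, because the keys of $\tau_0$ are exact raw arrival labels (Definition~\ref{def:outcome-targets}).
\item Again by Lemma~\ref{lem:quotient-transition}, $q_m(t)=\tau_m(q_m(s),i)=\tau_m(q_m(s'),i)=q_m(t')$, so $(t,t')\in R_m$.
\item The symmetric direction follows in the same way.
\end{itemize}
Therefore $R_m$ satisfies (ii), hence $R_m\subseteq{\approx}$ and $u\approx w$.

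The main obstacle I expect is the bookkeeping for clause (ii): the bisimulation must be taken with respect to the original $\delta$ on $A$, not the pruned and redirected automaton. Two points make this work. The pruning-closure observation shows that no raw edge leaves $S_r$ from a surviving state. The fixed input keys fixed before merging (Definition~\ref{def:outcome-targets}) ensure that matching keys really are equal raw arrival labels, and are not merged metadata. With these in place, Lemma~\ref{lem:quotient-transition} carries the whole successor-matching argument.
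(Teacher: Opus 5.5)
Your proof is correct, but it is organized differently from the paper's. The paper argues by induction on the number of merges and carries the invariant $q_k(s)\approx s$, together with $q_k(x)=x$ on current representatives. At each merge it applies Lemma~\ref{lem:quotient-transition} at stage $k$ and uses symmetry and transitivity of $\approx$ to obtain $t_1\approx q_k(t_1)=q_k(t_2)\approx t_2$. It then invokes maximality once per merge, by checking that ${\approx}\cup\{(u,w),(w,u)\}$ satisfies the closure clauses. You instead exhibit one explicit bisimulation, the kernel $R_m$ of the final quotient map, and invoke maximality only once. Your inductive invariant concerns only core outputs, not $\approx$.

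What your route buys:
\begin{itemize}
\item A leaner argument. You never need $\approx$ to be an equivalence relation, which the paper must assert separately via closure of the clauses under identity, converse, and composition.
\item An explicit pruning-closure fact: every raw $\delta$-successor of a surviving state survives and appears as a $\tau_0$ entry. The paper's proof uses this only implicitly, when it identifies raw successors $t_1,t_2$ and applies its invariant to them.
\end{itemize}

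What the paper's route buys is the per-stage statement $q_k(s)\approx s$, delivered directly, which is what Theorem~\ref{thm:reduction-behavior} consumes. Your argument gives the same thing, since $q_m$ fixes $S_m$ and hence $(s,q_m(s))\in R_m$, and the same reasoning works at any intermediate $k$. You should say so explicitly if you want the lemma to feed the theorem.

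The one step you leave unproved is that $q_k(x)=x$ for every $x\in S_k$. This is a one-line induction, $q_{k+1}(x)=r_k(q_k(x))=r_k(x)=x$ for $x\neq w$, and it is exactly the paper's property (ii).
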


\begin{proof}[Proof of Lemma~\ref{lem:merge-sound}]
The idea is to show that each merge joins states with equal core outputs
whose successors, for every possible input, are already known to be
output-trace equivalent. Lemma~\ref{lem:quotient-transition} supplies the
connection between a current transition and the raw successors it
represents.

Proceed by induction on the number $k$ of merges already performed. Maintain
the following two properties of the representative map
$q_k:S_r\to S_k$ from Definition~\ref{def:reduced-automaton}:
\begin{enumerate}[label=(\roman*)]
  \item $q_k(s)\approx s$ for every raw state $s\in S_r$; and
  \item $q_k(x)=x$ for every current representative $x\in S_k$.
\end{enumerate}
At $k=0$, $q_0$ is the identity, so both properties hold by reflexivity of
$\approx$. The preceding pruning step is handled separately by
Lemma~\ref{lem:pruning-sound}; this induction concerns only states in $S_r$.

Assume both properties hold for $G_k$, and suppose the next merge keeps
$u\in S_k$ and discards $w\in S_k$, where $u\sim_{G_k}w$. The merge test
first gives
$\mathit{out}(u){\downarrow}=\mathit{out}(w){\downarrow}$, which is
clause (i) of Definition~\ref{def:trace-equiv}.

It remains to establish clause (ii). The same merge test gives
$\tau_k(u,\cdot)=\tau_k(w,\cdot)$, so these maps have the same fixed input
keys. Choose any such key $i$. Because current representatives are fixed
points of $q_k$ by property (ii), the fixed domains introduced in
Definition~\ref{def:outcome-targets} identify raw successors $t_1$ of $u$
and $t_2$ of $w$ with
$\mathit{in}(t_1)=i=\mathit{in}(t_2)$. Lemma~\ref{lem:quotient-transition}
and equality of the two current maps give
\[
  q_k(t_1)=\tau_k(u,i)=\tau_k(w,i)=q_k(t_2).
\]
Property (i), symmetry, and transitivity of $\approx$ therefore give
\[
  t_1\approx q_k(t_1)=q_k(t_2)\approx t_2.
\]
Thus successors reached under the same raw input $i$ are equivalent. In
particular, equality is tested on the original input keys, so two different
inputs cannot become matched merely because their targets were merged
earlier. Since the maps have equal domains, the same argument covers every
successor of both $u$ and $w$, establishing clause (ii) in both directions.

Now let $R={\approx}\cup\{(u,w),(w,u)\}$. The relation $\approx$ already
satisfies both clauses of Definition~\ref{def:trace-equiv}; the preceding
two paragraphs show that the only newly added pairs do as well, with their
successor pairs lying in $\approx\subseteq R$. Hence $R$ satisfies the same
closure condition. Because $\approx$ is the largest relation satisfying
that condition, $R\subseteq{\approx}$, and therefore $u\approx w$.

Finally, consider the updated map $q_{k+1}=r_k\circ q_k$. If
$q_k(s)\neq w$, then $q_{k+1}(s)=q_k(s)\approx s$. If $q_k(s)=w$, then
$q_{k+1}(s)=u\approx w=q_k(s)\approx s$. Thus property (i) is preserved.
For each $x\in S_{k+1}$, property (ii) at stage $k$ and $x\neq w$ give
$q_{k+1}(x)=r_k(q_k(x))=r_k(x)=x$, so property (ii) is preserved as well.
The induction proves that every performed merge is sound.
\end{proof}

\begin{theorem}[Reduction preserves executable behavior]
\label{thm:reduction-behavior}
The reduced automaton $A'$ (Definition~\ref{def:reduced-automaton}) has the
same core-output traces as $A$ from their respective initial states. An
environment input sequence is admissible from $s_0$ in $A$ if and only if it
is admissible from $s_0'$ in $A'$, and executing any such sequence against
both automata produces identical sequences of core output valuations
$\mathit{out}(\cdot){\downarrow}$, and hence identical capability
activations and parameter choices at every step.
\end{theorem}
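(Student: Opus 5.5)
The plan is to combine pruning soundness (Lemma~\ref{lem:pruning-sound}), the quotient transition invariant (Lemma~\ref{lem:quotient-transition}), and merge soundness (Lemma~\ref{lem:merge-sound}). We then argue by induction on the length of an input sequence.

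\textbf{Step 1: pass from $A$ to the pruned automaton.} By Lemma~\ref{lem:pruning-sound}, every state reachable from $s_0$ survives pruning, together with all of its outgoing edges. Admissibility from $s_0$ and the label sequences along admissible sequences are therefore identical in $A$ and in $G_0=(S_r,\tau_0,s_0)$. Since the extracted strategy is deterministic, executing an admissible sequence $i_1 i_2\cdots$ from $s_0$ in $G_0$ yields a unique raw run $s_0=t_0,t_1,\dots$ with $t_{j+1}=\tau_0(t_j,i_{j+1})$.

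\textbf{Step 2: compare runs in $G_0$ and $A'$.} We prove the following claim by induction on $j$. The sequence $i_1\cdots i_j$ is admissible from $s_0'=q_m(s_0)$ in $A'$ iff it is admissible from $s_0$ in $G_0$. When it is, the $A'$-run reaches exactly $q_m(t_j)$.
\begin{itemize}
\item The base case $j=0$ holds by definition of $s_0'$.
\item For the inductive step, Lemma~\ref{lem:quotient-transition} gives $\operatorname{dom}(\tau'(q_m(t_j),\cdot))=\operatorname{dom}(\tau_0(t_j,\cdot))$. So $i_{j+1}$ is admissible in one automaton iff it is in the other. When it is, $\tau'(q_m(t_j),i_{j+1})=q_m(\tau_0(t_j,i_{j+1}))=q_m(t_{j+1})$.
\end{itemize}
Both directions of the admissibility equivalence follow. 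Inadmissibility is detected at the same first step in both automata, so no separate argument is needed for it.

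\textbf{Step 3: match the outputs.} It remains to show $\mathit{out}(q_m(t_j)){\downarrow}=\mathit{out}(t_j){\downarrow}$ for every $j$. The proof of Lemma~\ref{lem:merge-sound} maintains the invariant $q_k(s)\approx s$ for all $s\in S_r$, so at the final stage $q_m(t_j)\approx t_j$. Clause (i) of Definition~\ref{def:trace-equiv} then gives equal core outputs. We also note that each $u\in S'$ is a raw state whose $\mathit{out}(u)$ is unchanged by the reduction, so the output executed in $A'$ is exactly this raw core output. Identical core output valuations at every step imply identical capability activations and parameter choices, because these are nonpending outputs retained by ${\downarrow}$.

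\textbf{Main obstacle.} The main obstacle is bookkeeping rather than mathematics. Lemma~\ref{lem:merge-sound} is stated only for merged pairs, so I would extract the stronger invariant $q_m(s)\approx s$ explicitly from its proof, or restate it as a corollary. The proof must also make clear that execution of $A'$ uses the fixed input keys of $\tau'$, not the unioned arrival metadata of merged states. This is exactly what Lemma~\ref{lem:quotient-transition} guarantees.
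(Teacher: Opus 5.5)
Your proposal is correct and follows the paper's proof essentially step for step. It uses pruning soundness to pass to $G_0$, Lemma~\ref{lem:quotient-transition} at the final stage to drive an induction on prefix length that yields both admissibility directions and $\tau'(q(s_j),i_{j+1})=q(s_{j+1})$, and the invariant $q_m(s)\approx s$ together with clause (i) of Definition~\ref{def:trace-equiv} to match core outputs. The paper likewise reads that invariant off the inductive argument in the proof of Lemma~\ref{lem:merge-sound} rather than off its statement, so the bookkeeping point you flag is handled the same way there.
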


\begin{proof}[Proof of Theorem~\ref{thm:reduction-behavior}]
By Lemmas~\ref{lem:pruning-sound} and~\ref{lem:merge-sound}, pruning changes
nothing reachable from $s_0$, and every merge collapses only a pair of
raw states already related by $\approx$, so the
final quotient map $q=q_m$ from $S_r$ onto the states of $A'$ satisfies
$q(s) \approx s$ for every reachable $s$. Lemma~\ref{lem:quotient-transition}
also gives, for every reachable raw state $s$, equality between the input
keys available at $s$ and at $q(s)$, with
$\tau'(q(s),i)=q(\tau_0(s,i))$ for each such key $i$.

Now fix any environment input sequence, and let $s_j$ be the raw state reached
after its first $j$ inputs whenever that prefix is defined. Starting from
$s_0'=q(s_0)$, induction on $j$ shows that the next input is defined in one
automaton if and only if it is defined in the other. Whenever it is defined,
the raw successor $s_{j+1}=\tau_0(s_j,i_{j+1})$ and the reduced successor satisfy
$\tau'(q(s_j),i_{j+1})=q(s_{j+1})$. Thus the sequence is admissible in
either both automata or neither; when admissible, their executions visit
$s_0,s_1,\ldots$ and $q(s_0),q(s_1),\ldots$, respectively. Since
$q(s_j)\approx s_j$, clause (i) of
Definition~\ref{def:trace-equiv} gives
$\mathit{out}(q(s_j)){\downarrow}=\mathit{out}(s_j){\downarrow}$ at every
step. These core output valuations determine which capability is activated
and with which parameters (\resec{syn_spec}), proving the claim. The exact
keys of $\tau'$ preserve the environment valuations themselves; the unioned
arrival metadata of a representative plays no role in this execution.
\end{proof}

\begin{corollary}[Guarantee transfer to the reduced executable automaton]
\label{cor:guarantee-transfer}
If the raw strategy $A$ witnesses realizability of \reqn{varphi} (i.e.,
for every environment-input sequence whose execution with $A$ satisfies
$\varphi_i^a \land \varphi_s^a \land \varphi_l^a$, that execution satisfies
$\varphi_s^g \land \varphi_l^g$), then under every such sequence $A'$ issues
the same capability activations and parameter choices and observes the same
reported outcomes. It therefore satisfies the capability-level consequences
of the guarantees.
\end{corollary}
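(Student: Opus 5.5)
The plan is to derive Corollary~\ref{cor:guarantee-transfer} directly from Theorem~\ref{thm:reduction-behavior}, treating it as a statement about executions indexed by environment input sequences. First, fix an environment input sequence $\sigma$ whose execution with $A$ satisfies $\varphi_i^a\land\varphi_s^a\land\varphi_l^a$. That execution exists as an infinite run, so $\sigma$ is admissible from $s_0$ in $A$. By Theorem~\ref{thm:reduction-behavior}, $\sigma$ is then also admissible from $s_0'$ in $A'$. The two runs therefore visit $s_0,s_1,\ldots$ and $q(s_0),q(s_1),\ldots$ with $\mathit{out}(q(s_j)){\downarrow}=\mathit{out}(s_j){\downarrow}$ at every step. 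Since the input keys of $\tau'$ are exactly the raw arrival labels (Lemma~\ref{lem:quotient-transition}), the reported outcomes observed along the $A'$ run are literally the entries of $\sigma$. This yields the first two claims: identical capability activations and parameter choices, and identical observed outcomes.

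Second, I would make precise what ``capability-level consequences of the guarantees'' means. The natural definition is any property expressible over the trace of pairs $(\mathit{in},\mathit{out}{\downarrow})$, i.e.\ over $AP_I\cup(AP_O\setminus AP_O^p)$ with $G$, $F$, and $X$. For such a property $\psi$, satisfaction depends only on that projected trace. The $A$ run satisfies $\varphi_s^g\land\varphi_l^g$ by hypothesis, and hence satisfies every consequence $\psi$ of those guarantees that mentions no pending proposition. The $A'$ run has the same projected trace, so it satisfies $\psi$ as well. Concretely, the activation protocol \reqn{sys-guarantees}, the terminal persistence guarantees, the goal condition $g$, and the System-Goal disjuncts \reqn{system-goal-liveness} all refer only to core outputs and inputs, and so transfer verbatim.

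The main obstacle is scoping. The assumption-side antecedent and some guarantees can mention pending bits $\pi_a^p$, which $A'$ does not preserve (the first caveat in \resec{reduction}). The consequence I aim for therefore has to be stated with two restrictions. The set of sequences considered is fixed by the $A$ run, not re-evaluated on $A'$. Only the pending-free projections of the guarantees are claimed. I would handle the pending-dependent Fair-Outcome antecedent by quantifying over $\sigma$ through $A$'s run, as the corollary statement does, rather than attempting to evaluate it on $A'$.
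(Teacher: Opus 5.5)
Your proposal is correct and follows the paper's proof. It fixes an input sequence whose $A$-execution satisfies the assumptions, invokes Theorem~\ref{thm:reduction-behavior} to obtain identical inputs and core outputs along the $A'$ run, and confines the transferred guarantees to pending-free content because the pending bits and their dynamics are not preserved. Your explicit reading of ``capability-level consequences'' as pending-free temporal consequences over the projected trace $(\mathit{in},\mathit{out}{\downarrow})$ sharpens what the paper leaves informal.
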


\begin{proof}[Proof of Corollary~\ref{cor:guarantee-transfer}]
Fix such an input sequence. Because $A$ witnesses realizability and its
execution satisfies the assumptions, that execution satisfies
$\varphi_s^g\land\varphi_l^g$. Theorem~\ref{thm:reduction-behavior} says that
$A'$ reproduces its observed inputs and core outputs. These determine the
capability-activation, parameter-selection, and outcome-observation events
constrained by the capability-level guarantees (\resec{syn_spec}).
Theorem~\ref{thm:reduction-behavior} does not preserve the auxiliary
pending propositions or the system-transition constraints that define their
dynamics. Consequently, $A'$ need not itself be a winning strategy over the
full original GR(1) proposition valuation. Rather, it preserves the
capability-level consequences of the winning raw strategy, including its
activation and parameter-selection traces under the same observed outcome
sequence.
\end{proof}

\begin{remark}[Scope of the guarantee]
Corollary~\ref{cor:guarantee-transfer} concerns the reduced automaton $A'$
itself, not the FlexBE HFSM generated from it. The pipeline realizes $A'$
as an HFSM through a mechanical translation of strategy states and
transitions into FlexBE states and outcome wiring. The forced-outcome tests
(\resec{crazyflie-failure-validation}) validate the fidelity of that
translation empirically, rather than as part of the proof above.
\end{remark}

\subsection{What Reduction Does Not Preserve}
\label{sec:reduction-remarks}

Theorem~\ref{thm:reduction-behavior} is deliberately a claim about
\emph{forward} behavior (what the automaton does from a state onward) and
says nothing about \emph{how} a merged state was reached.

First, $\mathit{out}(\cdot){\downarrow}$ is preserved exactly by every merge
(the merge test requires it), but the discarded pending bits $\mathit{out}(\cdot)
\cap AP_O^p$ are not. The surviving representative keeps whichever pending
valuation it happened to carry, arbitrarily with respect to the merged
partner. This is harmless for Theorem~\ref{thm:reduction-behavior} because no
downstream consumer branches on $AP_O^p$ (\resec{on-demand-spec}), but it does mean $A'$ is not,
in general, a valid strategy over the full original GR(1) proposition
valuation; only its core projection is meaningful, which is all
Corollary~\ref{cor:guarantee-transfer}
claims.

Second, \resec{auditor} disables strict protocol monitoring after
reduction for the same reason. The implementation represents a merged
state's incoming label as the union of the $\mathit{in}(\cdot)$-derived
metadata from every raw state folded into it. This union may be a
superset of what any single raw predecessor context actually observed.
Definition~\ref{def:trace-equiv} says nothing about this backward
direction (it is not part of $\approx$), so Definition~\ref{def:protocol}'s
per-edge checks, which reason about a concrete single outcome observed
on arrival, cannot be applied unmodified to $A'$; this is exactly the
restriction \resec{auditor} already states.

The post-reduction deadlock check remains valid because it depends only on
the redirected transition relation, which the reducer updates explicitly.
The structural goal-reachability check instead depends on which goal labels
are \emph{present} at a state, and
Definition~\ref{def:goal-unreachable-region} permits a goal label from
either $AP_I$ or $AP_O$. On the output side, nonpending $AP_O$ goal
labels are preserved by the merge test's core-output equality. If such a
goal label appears in any raw state folded into a representative, it also
appears in that representative's core output. Pending output labels are
outside this preservation claim, as discussed above. On the input side,
the implementation's unioned incoming-label metadata covers the
outcome-token propositions that the pipeline's own goal configurations
(\resec{demonstrations}) actually draw from; an arbitrary $AP_I$
proposition outside that category is instead inherited from whichever
raw state serves as the representative, not unioned across the merged
partners, so this argument does not extend to it. For the goal-label
kinds the pipeline actually uses, the set of labels relevant to the
post-reduction structural audit does not shrink. Nonpending output goals
are preserved, and outcome-token input goals can only be added by the
unioned metadata.

Reduction also preserves the paths needed by this check. Repeated application
of Lemma~\ref{lem:quotient-transition} maps every reachable raw strategy path
to a reduced path, and every reachable raw cycle to a reduced closed walk (a
self-loop if the whole cycle collapses). In the pipeline's unbounded-failure
setting, the product monitor adds no nontrivial counters, and its
pending-capability component is determined by the preserved activation
outputs (Definition~\ref{def:product}), so this mapping lifts to the product
graph used by the structural audit. Consequently, a
path to a cyclic goal-containing component cannot disappear. Adding goal
labels can only make more components able to reach a goal. Thus reduction
cannot introduce a goal-unreachable trap into an automaton whose raw form
had none (Theorem~\ref{thm:goal-unreachable-trap}).
Reduction can mask an existing trap by folding it into a goal label it
did not, in the raw automaton, actually earn, but it cannot manufacture a
trap that was not already present in some form in $A$.

In the deployed pipeline, however, this masking case does not arise because
the pipeline only ever reduces a strategy whose raw-automaton audit
(\resec{auditor}) has already reported zero traps. A correctly
implemented reducer, applied to an already trap-free raw automaton,
cannot produce a reduced automaton that has a trap.
The post-reduction audit therefore functions as a regression check on the
reducer's own implementation rather than as a search for a violation the
theory above leaves open; the pipeline treats the raw-automaton audit as
the source of the correctness argument and the reduced-automaton audit as
a confirmatory pass on top of Theorem~\ref{thm:reduction-behavior}'s
guarantee, not a substitute for it.

Finally, even when Definition~\ref{def:merge-test} is applied until the
local sweep reaches a fixed point, the reduced automaton $A'$ need not be
the coarsest (fewest-state) quotient of $A$ under $\approx$. A fixed point
only says that no
remaining pair satisfies this concrete equality test against the current
representatives; it does not decide the largest output-trace equivalence
relation. Failure to achieve the minimum state count affects compactness,
not correctness: Lemma~\ref{lem:merge-sound} only certifies merges that are
already sound. In the experiments
reported here, fixed-point iteration did not reduce any controller beyond the
first productive merge sweep: all additional sweeps were convergence checks.
Achieving the coarsest quotient would require a partition-refinement procedure
in the style of Hopcroft's algorithm~\cite{Hopcroft2001}, adapted to the
output labels and outcome-indexed successor maps of these Mealy strategies.
Such an implementation would improve the theoretical merge-phase bound from
repeated pairwise sweeps, whose conservative worst case is cubic in the number
of states up to comparison cost, to the near-linearithmic bounds typical of
efficient partition refinement on sparse explicit automata.

%% file: CoffeeAppendix.tex
\section{Coffee Demonstration}
\label{sec:coffee-results-app}

\resec{coffee-results} summarizes the Coffee results by realizability,
well-separation, strategy size, and the three headline timing rows.
This appendix gives the full specification-size and BDD-node breakdown
behind those summary tables, the per-configuration well-separation timing,
the specification-level mechanism behind the strategy non-uniqueness noted
there, and the full variable-ordering sweep.

\begin{table*}[t]
\centering
\caption{\textbf{[$N=40$ random-seed orderings.]} Coffee synthesis results with one-hot labeling, full detail
underlying \retab{coffee-1hot-results-summary}. Reported values are the
mean over 40 random declaration orderings under a CUDD reordering threshold
of 250; integer-valued count means indicate identical samples, while
non-identical count means are shown with one decimal digit, with a
companion row giving $\pm$ one standard deviation wherever the underlying
trials disagree. Column~\textbf{1} is the hand-written baseline; the
leading digit \textbf{2}/\textbf{3} denotes the base/extended capability
set, \textbf{S}/\textbf{F} denotes System-Goal/Fair-Outcome liveness, and
\textbf{P} adds pending propositions.}
\label{tbl:coffee-1hot-results-app}
\small
\renewcommand{\arraystretch}{1.25}
\setlength{\tabcolsep}{3pt}
\input{tables/coffee_one_hot}
\end{table*}

\begin{table*}[t]
\centering
\caption{\textbf{[$N=40$ random-seed orderings.]} Coffee synthesis results with the enumerated capability encoding,
full detail underlying \retab{coffee-enumerated-results-summary} (same
reporting convention and column legend as \retab{coffee-1hot-results-app}).
The enumerated encoding has no hand-written baseline, so column~\textbf{1}
is omitted.}
\label{tbl:coffee-enumerated-results-app}
\renewcommand{\arraystretch}{1.4}
\setlength{\tabcolsep}{3pt}
\input{tables/coffee_enumerated}
\end{table*}

Across both tables, moving from the basic (2) to the extended (3) capability
set consistently increases symbolic complexity, and adding pending memory
(P) further grows the output alphabet and transition-formula counts. The
largest BDD footprints and realizability times occur in the extended
Fair-Outcome pending case (one-hot \textbf{3FP}: 3193.8 peak nodes and 6.316\,ms
realizability), indicating that outcome constraints and richer output
alphabets, rather than problem size or encoding alone, drive most of the
observed complexity. The enumerated encoding is consistently leaner in
symbolic size (input/output alphabets, transition-formula counts, and both
live and peak BDD nodes); its one larger dimension is strategy size
(\textbar{}Slugs SM\textbar{} and \textbar{}Reduced\textbar{} run higher),
reflecting the extra branching a shared numeric value
introduces before reduction -- the same trade \resec{coffee-results}
reports at the headline level.

\textbf{Well Separation} isolates a different issue from realizability or
controller extraction. All Coffee specifications are realizable, and all
extracted strategies passed the post-synthesis auditor, but the extended
Fair-Outcome-with-pending specification (\textbf{3FP}) is not well-separated
under either encoding. Minimized-core well-separation checks over all
\textbf{3FP} orderings consistently identify the grind-completion
environment liveness assumption as the source of the violation; in the
one-hot encoding the corresponding grind-completion safety assumption is
also part of the minimal core. In specification terms, the Fair-Outcome
pending rule says that once grinding is pending, the environment must
eventually report grind completion, while the system controls the
activation and pending structure that makes this obligation relevant --
the environment fairness assumption is entangled with a system-controlled
retry context, the \texttt{P-all/E-just} case in the Maoz--Ringert
classification~\cite{MaozRingert2016}. This is exactly the kind of
assumption-falsification risk well-separation is designed to expose, but it is not
the same as unrealizability or an unusable controller. The synthesized
Coffee controllers here are small, valid, and executable. The check should
be read as a specification diagnostic, complementary to the post-synthesis
auditor, rather than a binary indication that synthesis has failed.

Measured as internal check time in \texttt{slugs}
rather than Python wall-clock time around it
(\retab{timing-rows}), the values are sub-millisecond across
every configuration. The extended Fair-Outcome-pending case is the largest
entry in both encodings, but the separation is much clearer for one-hot
(\textbf{3FP}: 0.220\,ms against 0.025--0.096\,ms for the other one-hot
columns) than for enumerated (\textbf{3FP}: 0.086\,ms against
0.041--0.077\,ms elsewhere, within the observed run-to-run variation).
\textbf{3FP} is also the one non-well-separated configuration in each
table, but that status comes from the minimized-core assumption analysis
above, not from this timing magnitude; for the enumerated encoding in
particular, these sub-millisecond timing differences are well within
expected measurement variation and should not be read as corroborating
evidence.

The non-integer \textbar{}Slugs SM\textbar{} and \textbar{}Reduced\textbar{}
means reported for the one-hot base-capability columns
(\textbf{2S}/\textbf{2F}/\textbf{2SP}/\textbf{2FP}, 4.5 and 3.5 respectively,
\resec{coffee-results}) trace to a single unconstrained ordering choice in
the generated specification. That specification constrains capability
activation only in one direction. The clauses \texttt{!bd\_c' -> !gr\_a'} and
\texttt{!gr\_c' -> !br\_a'} prevent grinding or brewing before the preceding
step completes, but no corresponding \texttt{bd\_c' -> gr\_a'} clause forces
grinding to start as soon as button-detection completes. Once
\texttt{bd\_c} holds, \texttt{slugs} may therefore legally re-issue
\texttt{bd\_a} once more before moving on to \texttt{gr\_a} instead of
proceeding directly, and both choices are equally valid strategies for the
identical specification. Across the 40 one-hot \textbf{2S} random-seed
orderings, 22 extract the direct four-state cycle (\texttt{bd} $\to$
\texttt{gr} $\to$ \texttt{br} $\to$ \texttt{bd}) and 18 extract a five-state
variant with one redundant repeated \texttt{bd} activation before
\texttt{gr}; state-merging reduces these to three and four states
respectively, so the extra state survives reduction rather than being an
artifact reduction happens to remove. \refig{coffee-nonuniqueness}
(\resec{coffee-results}) shows a representative symbolic and reduced Mealy
graph pair for each case. The two strategy shapes correlate exactly with
declaration order in this sample. Every five-state trial requests a
declaration order with \texttt{gr\_a} first among the output propositions,
and no four-state trial does. This is a distinct sense in which declaration
order matters, beyond the BDD-size sensitivity examined next. Here it
selects among multiple behaviorally different but equally realizable
strategies for the same specification, rather than among differently sized
symbolic representations of one strategy. The enumerated encoding does not
show this particular non-uniqueness in the same comparison (\textbf{2S}
reports a clean \textbar{}Slugs SM\textbar{} of 6).

\subsection{Variable-Ordering Robustness}
\label{sec:coffee-ordering-app}

The Discussion summary (\resec{coffee-results}) states this sweep's headline
finding: variable-ordering sensitivity in Coffee is concentrated in the
extended configurations, especially the extended one-hot configurations,
and an explicit CUDD reordering threshold largely neutralizes it. This
subsection gives the full sweep behind that summary and the per-treatment
BDD-node means in \retab{coffee-ordering-summary-app}.
We swept all 16
Coffee configurations (base and extended capabilities $\times$ one-hot and
enumerated encodings $\times$ System-Goal and Fair-Outcome liveness $\times$
with/without pending), together with the hand-written baseline, against
42 variable-declaration orderings (alphabetic, a capability-grouped
``domain'' ordering, and 40 random seeds) crossed with two CUDD
dynamic-reordering settings (disabled, and a first-reordering threshold of
250 live nodes), each repeated 5 times, for 7{,}140 total synthesis trials, with
zero synthesis failures across the entire sweep. We omit CUDD's default
automatic reordering. A preliminary sweep confirmed it is effectively
inactive at these BDD sizes (indistinguishable from disabled), so an
explicit low threshold is the only setting that actually triggers
reordering here. CUDD's default first trigger is 4{,}004 live nodes, above even the
allocated peak-node count in about 95\% of the Coffee trials. This observation is specific to
Coffee's small BDDs. In
PyRoboSim P1 the uncapped policy and threshold 250 differ by more than an
order of magnitude on one configuration
(\seedetail{pyrobosim-p1-ordering-app}).

\begin{table*}[t]
\centering
\setlength{\tabcolsep}{3pt}
\caption{Mean CUDD peak BDD nodes across variable-ordering and
CUDD-reordering treatments, for all 16 Coffee configurations plus the
hand-written baseline ($N=420$ trials per row: 5 repeats $\times$
($2$ fixed + $40$ random declaration orderings) $\times$ 2 reordering
settings; 0 failures throughout). ``thr250'' is the first-reordering threshold of 250 live nodes;
integer-valued means indicate identical samples, while non-identical means are
shown with one decimal digit.}
\label{tbl:coffee-ordering-summary-app}
\begin{tabular}{llllrrrrrr}
\toprule
Cap. & Enc. & Live. & Pend. & \shortstack{Alpha\\none} & \shortstack{Alpha\\thr250} & \shortstack{Dom.\\none} & \shortstack{Dom.\\thr250} & \shortstack{Rand.\\none} & \shortstack{Rand.\\thr250} \\
\midrule
Base & full & \textemdash & \textemdash & 1022 & 1022 & 1022 & 1022 & 1022 & 1022 \\
Base & enum. & F & N & 1022 & 1022 & 1022 & 1022 & 1022 & 1022 \\
Base & enum. & F & Y & 1022 & 1022 & 1022 & 1022 & 1022 & 1022 \\
Base & enum. & S & N & 1022 & 1022 & 1022 & 1022 & 1022 & 1022 \\
Base & enum. & S & Y & 1022 & 1022 & 1022 & 1022 & 1022 & 1022 \\
Base & 1-hot & F & N & 1022 & 1022 & 1022 & 1022 & 1022 & 1022 \\
Base & 1-hot & F & Y & 1022 & 1022 & 1022 & 1022 & 1022 & 1022 \\
Base & 1-hot & S & N & 1022 & 1022 & 1022 & 1022 & 1022 & 1022 \\
Base & 1-hot & S & Y & 1022 & 1022 & 1022 & 1022 & 1022 & 1022 \\
Ext. & enum. & F & N & 1022 & 1022 & 1022 & 1022 & 1022 & 1022 \\
Ext. & enum. & F & Y & 3066 & 2044 & 3066 & 2044 & 3014.9 & 2044 \\
Ext. & enum. & S & N & 1022 & 1022 & 1022 & 1022 & 1022 & 1022 \\
Ext. & enum. & S & Y & 1022 & 1022 & 1022 & 1022 & 1022 & 1022 \\
Ext. & 1-hot & F & N & 2044 & 1022 & 2044 & 1022 & 2044 & 1047.6 \\
Ext. & 1-hot & F & Y & 6132 & 3066 & 6132 & 3066 & 7000.7 & 3193.8 \\
Ext. & 1-hot & S & N & 1022 & 1022 & 1022 & 1022 & 1303.1 & 1022 \\
Ext. & 1-hot & S & Y & 3066 & 1022 & 3066 & 1022 & 3883.6 & 1047.6 \\
\bottomrule
\end{tabular}
\end{table*}

\retab{coffee-ordering-summary-app} shows that variable-ordering
sensitivity is concentrated, not general. Every basic-capability
configuration and the hand-written baseline are completely flat (1022 peak
nodes regardless of declaration order or reordering setting), and so are
three of the eight extended configurations. Sensitivity is concentrated in
the extended one-hot cases but not confined to them. Extended enumerated
Fair-Outcome-with-pending (\textbf{3FP}) also moves: mildly with
declaration order when reordering is disabled (3066 alphabetic/domain-grouped,
3014.9 random), and down to 2044 once the reordering threshold is enabled. That
2044--3066 swing is narrower than either one-hot pending case below,
though wider than extended one-hot System-Goal without pending (1022 to
1303.1). Sensitivity is most pronounced where Fair-Outcome liveness
combines with pending memory under one-hot encoding (the same combination
identified above as the most expensive). Without reordering there, a
random declaration order inflates the peak relative to either fixed
ordering (extended one-hot, F, pending: 7000.7 mean peak nodes for random
vs.\ 6132 fixed; System-Goal pending: 3883.6 vs.\ 3066). Enabling the
reordering threshold collapses that gap: random falls to 3193.8 and
1047.6, nearly matching the fixed orderings' own reordered results (3066 and
1022). It also shrinks the fixed orderings themselves, by half for the
Fair-Outcome case and to one third for the System-Goal case. Once an
explicit threshold is enabled, the peak becomes almost independent of the
initial declaration order.

This suggests that, for this capability model, an explicit CUDD reordering
threshold is a more reliable lever than hand-tuned variable declaration
order. Enabling it recovers nearly all of the benefit of the best fixed
ordering regardless of where synthesis started, on the configurations where
ordering matters at all. CUDD's default automatic reordering does not help
here; it never triggers at these BDD sizes, so the explicit threshold, not
automatic reordering, is what does the work. For manual ordering, the
evidence supports a narrower takeaway: in this dataset, effort may be better
spent identifying which specification patterns introduce sensitivity in the
first place (Fair-Outcome liveness with pending) than on tuning declaration
order directly.

\FloatBarrier

%% file: tables/coffee_one_hot.tex
\begin{tabular}{|c|r|r|r|r|r|r|r|r|r|}
\hline
 & \textbf{1} & \textbf{2S} & \textbf{3S} & \textbf{2F} & \textbf{3F} & \textbf{2SP} & \textbf{3SP} & \textbf{2FP} & \textbf{3FP} \\
\hline
\rule{0pt}{4ex}\textbf{\shortstack{Well\\Separated}} & Yes & Yes & Yes & Yes & Yes & Yes & Yes & Yes & No \\
\hline
\textbf{Realizable} & Yes & Yes & Yes & Yes & Yes & Yes & Yes & Yes & Yes \\
\hline
\rule{0pt}{4ex}\textbf{\shortstack{FlexBE HFSM\\Realized}} & Yes & Yes & Yes & Yes & Yes & Yes & Yes & Yes & Yes \\
\hline
\textbf{$\left|AP_I\right|$} & 4 & 3 & 6 & 3 & 6 & 3 & 6 & 3 & 6 \\
\hline
\textbf{$\left|AP_O\right|$} & 3 & 3 & 5 & 3 & 5 & 3 & 8 & 3 & 8 \\
\hline
\textbf{$\left|\varphi_i^a\right|$} & 3 & 3 & 6 & 3 & 6 & 3 & 6 & 3 & 6 \\
\hline
\textbf{$\left|\varphi_i^g\right|$} & 3 & 3 & 5 & 3 & 5 & 3 & 8 & 3 & 8 \\
\hline
\textbf{$\left|\varphi_s^a\right|$} & 3 & 6 & 12 & 6 & 12 & 6 & 12 & 6 & 12 \\
\hline
\textbf{$\left|\varphi_s^g\right|$} & 7 & 6 & 19 & 6 & 19 & 6 & 31 & 6 & 31 \\
\hline
\textbf{$\left|\varphi_l^a\right|$} & 1 & 1 & 1 & 1 & 3 & 1 & 1 & 1 & 3 \\
\hline
\textbf{$\left|\varphi_l^g\right|$} & 1 & 1 & 1 & 1 & 1 & 1 & 1 & 1 & 1 \\
\hline
\rule{0pt}{4ex}\textbf{\shortstack{CUDD Live\\BDD Nodes}} & 136.5 & 89.1 & 250.5 & 89.1 & 258.4 & 89.1 & 414.1 & 89.1 & 621.0 \\
\hline
 & $\pm$ 8.4 & $\pm$ 8.1 & $\pm$ 13.5 & $\pm$ 8.1 & $\pm$ 14.3 & $\pm$ 8.1 & $\pm$ 32.8 & $\pm$ 8.1 & $\pm$ 73.3 \\
\hline
\rule{0pt}{4ex}\textbf{\shortstack{CUDD Peak\\BDD Nodes}} & 1022 & 1022 & 1022 & 1022 & 1047.6 & 1022 & 1047.6 & 1022 & 3193.8 \\
\hline
 &  &  &  &  & $\pm$ 160.0 &  & $\pm$ 160.0 &  & $\pm$ 468.8 \\
\hline
\textbf{\textbar{}Slugs SM\textbar} & 7 & 4.5 & 7 & 4.5 & 7 & 4.5 & 7 & 4.5 & 7 \\
\hline
 &  & $\pm$ 0.5 &  & $\pm$ 0.5 &  & $\pm$ 0.5 &  & $\pm$ 0.5 &  \\
\hline
\textbf{\textbar{}Reduced\textbar} & 3 & 3.5 & 3 & 3.5 & 3 & 3.5 & 3 & 3.5 & 3 \\
\hline
 &  & $\pm$ 0.5 &  & $\pm$ 0.5 &  & $\pm$ 0.5 &  & $\pm$ 0.5 &  \\
\hline
\rule{0pt}{4ex}\textbf{\shortstack{Generate\\Specs (ms)}} & 5.166 & 5.098 & 10.290 & 4.816 & 9.588 & 5.247 & 12.947 & 5.206 & 12.710 \\
\hline
 & $\pm$ 1.977 & $\pm$ 2.169 & $\pm$ 3.470 & $\pm$ 1.516 & $\pm$ 2.901 & $\pm$ 2.291 & $\pm$ 3.744 & $\pm$ 2.206 & $\pm$ 3.170 \\
\hline
\rule{0pt}{4ex}\textbf{\shortstack{Well Separation\\(ms)}} & 0.077 & 0.025 & 0.080 & 0.025 & 0.096 & 0.026 & 0.074 & 0.026 & 0.220 \\
\hline
 & $\pm$ 0.188 & $\pm$ 0.003 & $\pm$ 0.118 & $\pm$ 0.003 & $\pm$ 0.012 & $\pm$ 0.004 & $\pm$ 0.012 & $\pm$ 0.003 & $\pm$ 0.128 \\
\hline
\rule{0pt}{4ex}\textbf{\shortstack{Realizability\\(ms)}} & 0.231 & 0.081 & 2.149 & 0.081 & 3.465 & 0.088 & 2.782 & 0.082 & 6.316 \\
\hline
 & $\pm$ 0.252 & $\pm$ 0.011 & $\pm$ 0.758 & $\pm$ 0.012 & $\pm$ 1.591 & $\pm$ 0.084 & $\pm$ 1.301 & $\pm$ 0.015 & $\pm$ 2.230 \\
\hline
\rule{0pt}{4ex}\textbf{\shortstack{Extraction\\(ms)}} & 0.369 & 0.203 & 0.530 & 0.202 & 0.558 & 0.234 & 0.757 & 0.200 & 0.952 \\
\hline
 & $\pm$ 0.221 & $\pm$ 0.044 & $\pm$ 0.178 & $\pm$ 0.041 & $\pm$ 0.228 & $\pm$ 0.467 & $\pm$ 0.158 & $\pm$ 0.053 & $\pm$ 0.594 \\
\hline
\rule{0pt}{4ex}\textbf{\shortstack{Auditing\\(ms)}} & 0.446 & 0.444 & 0.451 & 0.336 & 0.479 & 0.482 & 0.500 & 0.426 & 0.420 \\
\hline
 & $\pm$ 0.682 & $\pm$ 0.887 & $\pm$ 0.792 & $\pm$ 0.118 & $\pm$ 0.840 & $\pm$ 0.996 & $\pm$ 0.796 & $\pm$ 0.855 & $\pm$ 0.276 \\
\hline
\rule{0pt}{4ex}\textbf{\shortstack{Realize\\HFSM (ms)}} & 126.492 & 119.206 & 128.289 & 120.229 & 128.522 & 119.985 & 129.209 & 119.671 & 128.766 \\
\hline
 & $\pm$ 15.333 & $\pm$ 7.081 & $\pm$ 10.274 & $\pm$ 8.214 & $\pm$ 6.426 & $\pm$ 7.705 & $\pm$ 7.239 & $\pm$ 7.128 & $\pm$ 6.209 \\
\hline
\rule{0pt}{4ex}\textbf{\shortstack{Overall\\(ms)}} & 199.273 & 187.268 & 208.771 & 186.761 & 210.066 & 188.341 & 218.585 & 188.579 & 221.405 \\
\hline
 & $\pm$ 23.069 & $\pm$ 8.482 & $\pm$ 11.877 & $\pm$ 8.207 & $\pm$ 7.618 & $\pm$ 8.925 & $\pm$ 9.269 & $\pm$ 7.976 & $\pm$ 7.997 \\
\hline
\end{tabular}

%% file: tables/coffee_enumerated.tex
\begin{tabular}{|c|r|r|r|r|r|r|r|r|}
\hline
 & \textbf{2S} & \textbf{3S} & \textbf{2F} & \textbf{3F} & \textbf{2SP} & \textbf{3SP} & \textbf{2FP} & \textbf{3FP} \\
\hline
\rule{0pt}{4ex}\textbf{\shortstack{Well\\Separated}} & Yes & Yes & Yes & Yes & Yes & Yes & Yes & No \\
\hline
\textbf{Realizable} & Yes & Yes & Yes & Yes & Yes & Yes & Yes & Yes \\
\hline
\rule{0pt}{4ex}\textbf{\shortstack{FlexBE HFSM\\Realized}} & Yes & Yes & Yes & Yes & Yes & Yes & Yes & Yes \\
\hline
\textbf{$\left|AP_I\right|$} & 2 & 2 & 2 & 2 & 2 & 2 & 2 & 2 \\
\hline
\textbf{$\left|AP_O\right|$} & 2 & 4 & 2 & 4 & 2 & 7 & 2 & 7 \\
\hline
\textbf{$\left|\varphi_i^a\right|$} & 2 & 2 & 2 & 2 & 2 & 2 & 2 & 2 \\
\hline
\textbf{$\left|\varphi_i^g\right|$} & 1 & 3 & 1 & 3 & 1 & 6 & 1 & 6 \\
\hline
\textbf{$\left|\varphi_s^a\right|$} & 4 & 7 & 4 & 7 & 4 & 7 & 4 & 7 \\
\hline
\textbf{$\left|\varphi_s^g\right|$} & 3 & 16 & 3 & 16 & 3 & 28 & 3 & 28 \\
\hline
\textbf{$\left|\varphi_l^a\right|$} & 1 & 1 & 1 & 3 & 1 & 1 & 1 & 3 \\
\hline
\textbf{$\left|\varphi_l^g\right|$} & 1 & 1 & 1 & 1 & 1 & 1 & 1 & 1 \\
\hline
\rule{0pt}{4ex}\textbf{\shortstack{CUDD Live\\BDD Nodes}} & 51.1 & 182.1 & 51.1 & 217.1 & 51.1 & 301.3 & 51.1 & 523.7 \\
\hline
 & $\pm$ 1.0 & $\pm$ 1.0 & $\pm$ 1.0 & $\pm$ 1.0 & $\pm$ 1.0 & $\pm$ 20.0 & $\pm$ 1.0 & $\pm$ 47.3 \\
\hline
\rule{0pt}{4ex}\textbf{\shortstack{CUDD Peak\\BDD Nodes}} & 1022 & 1022 & 1022 & 1022 & 1022 & 1022 & 1022 & 2044 \\
\hline
\textbf{\textbar{}Slugs SM\textbar} & 6 & 8 & 6 & 8 & 6 & 8 & 6 & 8 \\
\hline
\textbf{\textbar{}Reduced\textbar} & 4 & 4 & 4 & 4 & 4 & 4 & 4 & 4 \\
\hline
\rule{0pt}{4ex}\textbf{\shortstack{Generate\\Specs (ms)}} & 6.413 & 16.093 & 6.654 & 15.767 & 6.846 & 21.816 & 6.453 & 20.856 \\
\hline
 & $\pm$ 2.035 & $\pm$ 3.301 & $\pm$ 2.453 & $\pm$ 3.664 & $\pm$ 2.676 & $\pm$ 4.067 & $\pm$ 1.767 & $\pm$ 3.979 \\
\hline
\rule{0pt}{4ex}\textbf{\shortstack{Well Separation\\(ms)}} & 0.041 & 0.042 & 0.041 & 0.048 & 0.041 & 0.045 & 0.077 & 0.086 \\
\hline
 & $\pm$ 0.007 & $\pm$ 0.004 & $\pm$ 0.004 & $\pm$ 0.005 & $\pm$ 0.004 & $\pm$ 0.037 & $\pm$ 0.493 & $\pm$ 0.011 \\
\hline
\rule{0pt}{4ex}\textbf{\shortstack{Realizability\\(ms)}} & 0.050 & 0.059 & 0.048 & 0.127 & 0.049 & 0.625 & 0.049 & 3.127 \\
\hline
 & $\pm$ 0.029 & $\pm$ 0.004 & $\pm$ 0.004 & $\pm$ 0.068 & $\pm$ 0.010 & $\pm$ 0.703 & $\pm$ 0.008 & $\pm$ 1.015 \\
\hline
\rule{0pt}{4ex}\textbf{\shortstack{Extraction\\(ms)}} & 0.197 & 0.303 & 0.203 & 0.329 & 0.188 & 0.388 & 0.205 & 0.540 \\
\hline
 & $\pm$ 0.061 & $\pm$ 0.086 & $\pm$ 0.149 & $\pm$ 0.127 & $\pm$ 0.035 & $\pm$ 0.071 & $\pm$ 0.088 & $\pm$ 0.207 \\
\hline
\rule{0pt}{4ex}\textbf{\shortstack{Auditing\\(ms)}} & 0.437 & 0.541 & 0.495 & 0.436 & 0.391 & 0.478 & 0.456 & 0.450 \\
\hline
 & $\pm$ 0.464 & $\pm$ 0.869 & $\pm$ 1.036 & $\pm$ 0.262 & $\pm$ 0.223 & $\pm$ 0.363 & $\pm$ 1.002 & $\pm$ 0.384 \\
\hline
\rule{0pt}{4ex}\textbf{\shortstack{Realize\\HFSM (ms)}} & 118.138 & 126.725 & 117.459 & 127.876 & 118.096 & 127.853 & 117.283 & 128.133 \\
\hline
 & $\pm$ 7.580 & $\pm$ 5.731 & $\pm$ 6.042 & $\pm$ 7.035 & $\pm$ 6.359 & $\pm$ 6.357 & $\pm$ 7.725 & $\pm$ 6.654 \\
\hline
\rule{0pt}{4ex}\textbf{\shortstack{Overall\\(ms)}} & 188.318 & 211.843 & 188.635 & 212.031 & 189.694 & 220.097 & 188.992 & 218.873 \\
\hline
 & $\pm$ 7.810 & $\pm$ 7.989 & $\pm$ 6.781 & $\pm$ 9.077 & $\pm$ 7.940 & $\pm$ 8.789 & $\pm$ 8.193 & $\pm$ 8.544 \\
\hline
\end{tabular}

%% file: TwoRiversAppendix.tex
\section{Two-Rivers Demonstration}
\label{sec:two-rivers-results-app}

\resec{two-rivers-results} summarizes the Two-Rivers results by
realizability, well-separation, strategy size, and the three headline
timing rows. This appendix gives the full specification-size and
BDD-node breakdown behind that summary, the exact hand-written baseline
comparison, the representative minimized well-separation cores, and the
per-configuration well-separation timing.

As in \resec{two-rivers-results}, \retab{rivers-results-app} excludes both
hand-written baselines to save space. These are \texttt{full\_spec} (item
1's per-item one-hot movement encoding) and \texttt{full\_spec\_parsed}
(hand-written using item 2/3's shared
\texttt{move\_state\_a}/\texttt{move\_item} structure). Both are realizable, well-separated, and
produce a valid FlexBE HFSM across all 40 random declaration orderings
(200/200 trials each), with symbolic cost in the same order of magnitude as
the capability-generated columns, not simply assumed to match them.
\textbar{}Reduced\textbar{} = 22 for both hand-written variants, against 23
for \textbf{2S}/\textbf{3S}; mean realizability 0.086--0.102\,s and mean
peak BDD 17{,}195--23{,}685 nodes, against 0.054--0.091\,s and
13{,}133--19{,}597 nodes for \textbf{2S}/\textbf{3S}.

\begin{table*}[t]
\centering
\caption{\textbf{[$N=40$ random-seed orderings.]} Two-Rivers synthesis results, full detail underlying
\retab{rivers-results-summary}. Reported values are means over 40 random
declaration orderings under a CUDD reordering threshold of 250 ($N=200$
trials per column: 40 orderings $\times$ 5 repeats). Integer-valued count
means indicate identical samples, while non-identical count means are shown
with one decimal digit, with a companion row giving $\pm$ one standard
deviation wherever the underlying trials disagree. Columns marked
unrealizable have no extracted strategy.}
\label{tbl:rivers-results-app}
\renewcommand{\arraystretch}{1.3}
\setlength{\tabcolsep}{3pt}
\input{tables/two_rivers}
\end{table*}

Across the realizable Two-Rivers variants, enumerated encoding reduces
proposition counts and yields smaller raw extracted strategies
(\textbar{}Slugs SM\textbar{} drops from 37 to 26, while
\textbar{}Reduced\textbar{} ties at 23),
but the symbolic synthesis cost is not uniformly reduced.
Under System-Goal liveness, enumerated encoding increases the random-ordering
mean peak BDD size relative to one-hot encoding (19596.9 vs.\ 13132.7 without
pending, and 15917.7 vs.\ 12723.9 with pending). This reversal is
specific to the random-seed mean and does not hold under fixed declaration
order, as \seedetail{two-rivers-ordering-app} shows. Mean realizability time
is likewise higher for enumerated encoding in all three realizable pairs
(53.8 vs.\ 90.7\,ms, 59.5 vs.\ 67.9\,ms, and 76.6 vs.\ 82.9\,ms), with each
gap inside one standard deviation of the more variable column. The Fair-Outcome formulation
without pending memory is unrealizable for both encodings; adding pending memory
restores realizability, but within each encoding it produces the largest peak
BDDs among the reported realizable capability-based variants
(16224.3 for one-hot and 21308.7 for enumerated). These results
highlight the sensitivity of GR(1) synthesis
performance to specification structure rather than merely problem size.

The well-separation results separate the two effects of the Fair-Outcome
formulation. Every System-Goal specification, with or without pending memory,
is well-separated, while every Fair-Outcome specification is not
well-separated. Thus pending memory repairs realizability for the
\textbf{2FP}/\textbf{3FP} columns but does not repair the underlying
assumption-structure issue. In the generated specifications, the
Fair-Outcome rule requires the environment to eventually report completion of
the move capability once that capability is active, or once its pending
proposition is set. This ties the environment's liveness obligation to a
system-controlled activation or pending context, producing the same
\texttt{P-all/E-just} well-separation violation identified in the Coffee
extended Fair-Outcome-with-pending case. Representative minimized-core checks
for the Two-Rivers Fair-Outcome columns always include the move-completion
environment liveness assumption; some encodings additionally include
capability outcome or variable-bound safety constraints. As with Coffee, this
diagnoses a specification-level assumption-fairness hazard, not simply a
failure of the synthesizer. The \textbf{2F}/\textbf{3F} specifications are
unrealizable, but the \textbf{2FP}/\textbf{3FP} specifications are realizable
and yield auditor-valid controllers despite remaining NWS.

With \textbf{Well Separation} measured as internal check time in \texttt{slugs}
(\retab{timing-rows}), values range 6.22--9.88\,ms across all
eight columns, an order of magnitude above Coffee's sub-millisecond figures
but consistent with Two-Rivers' larger proposition and constraint counts
throughout. Unlike Coffee's extended Fair-Outcome-pending case, well-separation
time here does not cleanly track well-separation status. The highest value
(\textbf{3F}, 9.88\,ms) is non-well-separated, but the lowest (\textbf{3SP},
6.22\,ms) is well-separated, and both classes appear on either side of the
range. The check remains cheap relative to the rest of the pipeline overall,
though less uniformly so than in Coffee. It stays under 1\% of the
corresponding \textbf{Overall} time for five of the eight columns
(\textbf{2S}, \textbf{2SP}, \textbf{3SP}, \textbf{2FP}, \textbf{3FP}), rises
to 1.02\% for \textbf{3S}, and reaches 5.15\%/6.83\% for the unrealizable
\textbf{2F}/\textbf{3F} columns, where \textbf{Overall} itself is much
smaller since no extraction or controller realization occurs.

\subsection{Variable-Ordering Sensitivity}
\label{sec:two-rivers-ordering-app}

The System-Goal, no-pending realizability comparison above uses the same
40-random-seed mean reported throughout this section
(\textbf{2S}/\textbf{3S} in \retab{rivers-results-app}), where enumerated
encoding appears slower and larger than one-hot. Isolating the fixed
alphabetic and domain-grouped declaration orderings from that same sweep
(both under the CUDD reordering threshold of 250 used throughout) shows
the opposite result:

\begin{table*}[t]
\centering
\caption{Two-Rivers System-Goal, no-pending realizability and peak BDD
nodes under fixed declaration order, versus the 40-random-seed mean
already reported for \textbf{2S}/\textbf{3S} in \retab{rivers-results-app}.
Fixed orderings are single deterministic trials.}
\label{tbl:two-rivers-ordering-fixed-app}
\begin{tabular}{llrr}
\toprule
Ordering & Encoding & Realizability (ms) & Peak BDD Nodes \\
\midrule
Alphabetic & One-Hot & 68.1 & 13{,}286 \\
Alphabetic & Enumerated & 51.8 & 10{,}220 \\
Domain-grouped & One-Hot & 70.3 & 13{,}286 \\
Domain-grouped & Enumerated & 57.5 & 10{,}220 \\
Random (mean, $N{=}200$) & One-Hot & 53.8 & 13{,}132.7 \\
Random (mean, $N{=}200$) & Enumerated & 90.7 & 19{,}596.9 \\
\bottomrule
\end{tabular}
\end{table*}

Under both fixed orderings, enumerated encoding is the faster
($\sim$18--24\%) and smaller ($\sim$23\%) choice; the apparent reversal in
\retab{rivers-results-app} is therefore specific to the random-seed mean,
not a general property of the two encodings on this domain.

The distribution behind that mean explains why.
\retab{two-rivers-ordering-dist-app} summarizes the 200 per-trial
realizability times (40 orderings $\times$ 5 repeats) underlying
\textbf{2S}/\textbf{3S}:

\begin{table*}[t]
\centering
\caption{Realizability-time distribution across the 200 random-seed
trials underlying \textbf{2S}/\textbf{3S} in \retab{rivers-results-app}
(System-Goal liveness, no pending memory).}
\label{tbl:two-rivers-ordering-dist-app}
\begin{tabular}{lrrrrr}
\toprule
Encoding & Mean (ms) & Median (ms) & Std.\ Dev.\ (ms) & Min (ms) & Max (ms) \\
\midrule
One-Hot & 53.8 & 53.9 & 9.6 & 29.3 & 86.3 \\
Enumerated & 90.7 & 77.8 & 44.8 & 41.4 & 229.5 \\
\bottomrule
\end{tabular}
\end{table*}

One-hot's mean and median nearly coincide, with a standard deviation
less than a quarter of enumerated's. Realizability time under one-hot is
largely insensitive to declaration order on this domain. Enumerated's
mean exceeds its median by more, and its standard deviation is roughly
$4.7\times$ one-hot's, indicating a right-skewed distribution with a substantial
population of slow orderings, not merely a handful of extreme outliers.
\retab{two-rivers-ordering-trim-app} confirms this directly, by
progressively discarding enumerated's slowest trials and recomputing the
mean over what remains:

\begin{table}[t]
\centering
\caption{Enumerated mean realizability time (System-Goal, no pending)
after discarding the slowest random-seed trials, out of $N{=}200$.}
\label{tbl:two-rivers-ordering-trim-app}
\begin{tabular}{lrr}
\toprule
Trials Discarded & Trials Remaining & Enumerated Mean (ms) \\
\midrule
0 & 200 & 90.7 \\
5 (2.5\%) & 195 & 87.1 \\
10 (5\%) & 190 & 84.6 \\
20 (10\%) & 180 & 79.9 \\
40 (20\%) & 160 & 72.6 \\
\bottomrule
\end{tabular}
\end{table}

Even after discarding the slowest 20\% of enumerated trials, its mean
(72.6\,ms) remains roughly 35\% above one-hot's unfiltered mean
(53.8\,ms). In addition, 85 of the 200 enumerated trials (42.5\%) exceed
one-hot's single \emph{worst} observed trial (86.3\,ms). This is not a
handful of pathological seeds, but over a third of the enumerated
distribution sitting above where one-hot tops out entirely.

The reversal reported in \retab{rivers-results-app} is therefore a genuine
ordering-sensitivity asymmetry, not an artifact of averaging a few slow
outliers. Enumerated's compact encoding is only an advantage under a
favorable variable ordering. Both fixed orderings tested happen to be
favorable here, while one-hot's performance stays roughly flat regardless
of ordering. This is a different kind of finding from the
ordering-\emph{magnitude} sensitivity Coffee exhibits
(\seedetail{coffee-ordering-app}), where reordering affects absolute BDD
size but not which encoding wins. Here, random ordering can flip
\emph{which} encoding is faster, a robustness property distinct from, and
orthogonal to, the mean speed or size advantage a single fixed ordering
reports.

Why enumerated is the more ordering-sensitive of the two encodings is not
settled by the data above, but the pipeline's own random-ordering
implementation offers a plausible mechanism. Unlike one-hot's many
per-capability propositions, which the random-seed sweep shuffles
independently of one another and of every other declared proposition, the
enumerated encoding's shared \texttt{capability} variable is not shuffled
the same way. The pipeline always writes the \texttt{capability}
declaration (e.g., \texttt{capability:0...3},
\relist{pending-trans-enumerated-example}) first in the \texttt{[OUTPUT]}
block, regardless of the random seed. The random ordering therefore affects
the other output declarations (including the remaining bounded-integer ones,
such as \texttt{move\_dest} and \texttt{move\_item}) and the input
declarations separately, while leaving this high-fanout output declaration at
a fixed position. Because
\texttt{capability} is referenced by nearly every transition rule in the
specification, unlike any single one-hot activation or outcome flag, the
relative placement of the remaining declarations may couple more strongly to
BDD size, whereas one-hot's many smaller, more locally-coupled propositions
average out more evenly across a full reshuffle. We have not confirmed
this mechanism against the underlying BDD variable-index assignment
directly, so we report it as a plausible explanation for the asymmetry
above rather than a demonstrated cause.

\FloatBarrier

%% file: tables/two_rivers.tex
\begin{tabular}{|c|r|r|r|r|r|r|r|r|}
\hline
 & \textbf{2S} & \textbf{3S} & \textbf{2F} & \textbf{3F} & \textbf{2SP} & \textbf{3SP} & \textbf{2FP} & \textbf{3FP} \\
\hline
\rule{0pt}{4ex}\textbf{\shortstack{Well\\Separated}} & Yes & Yes & No & No & Yes & Yes & No & No \\
\hline
\textbf{Realizable} & Yes & Yes & No & No & Yes & Yes & Yes & Yes \\
\hline
\rule{0pt}{4ex}\textbf{\shortstack{FlexBE HFSM\\Realized}} & Yes & Yes & No & No & Yes & Yes & Yes & Yes \\
\hline
\textbf{$\left|AP_I\right|$} & 13 & 10 & 13 & 10 & 13 & 10 & 13 & 10 \\
\hline
\textbf{$\left|AP_O\right|$} & 10 & 8 & 10 & 8 & 11 & 9 & 11 & 9 \\
\hline
\textbf{$\left|\varphi_i^a\right|$} & 9 & 6 & 9 & 6 & 9 & 6 & 9 & 6 \\
\hline
\textbf{$\left|\varphi_i^g\right|$} & 6 & 3 & 6 & 3 & 7 & 4 & 7 & 4 \\
\hline
\textbf{$\left|\varphi_s^a\right|$} & 33 & 29 & 33 & 29 & 33 & 29 & 33 & 29 \\
\hline
\textbf{$\left|\varphi_s^g\right|$} & 30 & 23 & 30 & 23 & 34 & 27 & 34 & 27 \\
\hline
\textbf{$\left|\varphi_l^a\right|$} & 1 & 1 & 1 & 1 & 1 & 1 & 1 & 1 \\
\hline
\textbf{$\left|\varphi_l^g\right|$} & 1 & 1 & 1 & 1 & 1 & 1 & 1 & 1 \\
\hline
\rule{0pt}{4ex}\textbf{\shortstack{CUDD Live\\BDD Nodes}} & 3044.1 & 5852.9 & 1312.9 & 2254.7 & 3457.0 & 4049.9 & 4453.5 & 4032.8 \\
\hline
 & $\pm$ 793.4 & $\pm$ 3748.4 & $\pm$ 77.0 & $\pm$ 762.1 & $\pm$ 975.6 & $\pm$ 1774.0 & $\pm$ 890.1 & $\pm$ 1251.6 \\
\hline
\rule{0pt}{4ex}\textbf{\shortstack{CUDD Peak\\BDD Nodes}} & 13132.7 & 19596.9 & 3091.6 & 5723.2 & 12723.9 & 15917.7 & 16224.3 & 21308.7 \\
\hline
 & $\pm$ 2068.9 & $\pm$ 7200.8 & $\pm$ 160.0 & $\pm$ 1829.9 & $\pm$ 1817.7 & $\pm$ 4672.1 & $\pm$ 2349.7 & $\pm$ 6322.3 \\
\hline
\textbf{\textbar{}Slugs SM\textbar} & 37 & 26 & -- & -- & 37 & 26 & 37 & 26 \\
\hline
\textbf{\textbar{}Reduced\textbar} & 23 & 23 & -- & -- & 23 & 23 & 23 & 23 \\
\hline
\rule{0pt}{4ex}\textbf{\shortstack{Generate\\Specs (ms)}} & 18.036 & 25.897 & 19.334 & 24.967 & 19.292 & 27.964 & 19.287 & 28.854 \\
\hline
 & $\pm$ 3.486 & $\pm$ 3.286 & $\pm$ 5.864 & $\pm$ 3.314 & $\pm$ 3.819 & $\pm$ 3.677 & $\pm$ 3.999 & $\pm$ 6.121 \\
\hline
\rule{0pt}{4ex}\textbf{\shortstack{Well Separation\\(ms)}} & 6.512 & 9.164 & 6.688 & 9.876 & 7.862 & 6.218 & 7.723 & 7.365 \\
\hline
 & $\pm$ 4.734 & $\pm$ 4.685 & $\pm$ 4.519 & $\pm$ 4.999 & $\pm$ 4.826 & $\pm$ 5.999 & $\pm$ 4.309 & $\pm$ 6.211 \\
\hline
\rule{0pt}{4ex}\textbf{\shortstack{Realizability\\(ms)}} & 53.787 & 90.677 & 8.803 & 16.953 & 59.478 & 67.922 & 76.616 & 82.875 \\
\hline
 & $\pm$ 9.562 & $\pm$ 44.829 & $\pm$ 2.997 & $\pm$ 8.673 & $\pm$ 10.509 & $\pm$ 23.562 & $\pm$ 20.325 & $\pm$ 28.650 \\
\hline
\rule{0pt}{4ex}\textbf{\shortstack{Extraction\\(ms)}} & 9.349 & 8.087 & -- & -- & 9.809 & 6.307 & 10.452 & 7.014 \\
\hline
 & $\pm$ 2.287 & $\pm$ 3.472 &  &  & $\pm$ 1.968 & $\pm$ 1.550 & $\pm$ 2.029 & $\pm$ 2.046 \\
\hline
\rule{0pt}{4ex}\textbf{\shortstack{Auditing\\(ms)}} & 0.992 & 1.755 & -- & -- & 0.955 & 1.205 & 1.007 & 1.400 \\
\hline
 & $\pm$ 1.145 & $\pm$ 2.451 &  &  & $\pm$ 1.024 & $\pm$ 0.926 & $\pm$ 1.660 & $\pm$ 1.901 \\
\hline
\rule{0pt}{4ex}\textbf{\shortstack{Realize\\HFSM (ms)}} & 671.875 & 665.201 & -- & -- & 676.240 & 666.855 & 674.775 & 672.307 \\
\hline
 & $\pm$ 13.727 & $\pm$ 8.893 &  &  & $\pm$ 10.789 & $\pm$ 21.182 & $\pm$ 10.395 & $\pm$ 24.959 \\
\hline
\rule{0pt}{4ex}\textbf{\shortstack{Overall\\(ms)}} & 858.225 & 901.953 & 129.821 & 144.565 & 875.333 & 883.916 & 890.593 & 906.965 \\
\hline
 & $\pm$ 19.771 & $\pm$ 56.821 & $\pm$ 12.336 & $\pm$ 13.946 & $\pm$ 17.532 & $\pm$ 41.879 & $\pm$ 23.688 & $\pm$ 53.419 \\
\hline
\end{tabular}

%% file: PyRoboSimAppendix.tex
\section{PyRoboSim Demonstration}
\label{sec:pyrobosim-results-app}

\resec{pyrobosim-results} summarizes the PyRoboSim results by
realizability, well-separation, strategy size, and the headline timing
rows (\resec{pyrobosim-p0-results} for P0, \resec{pyrobosim-p1-results}
for P1). This appendix gives the full specification-size and BDD-node
breakdown behind the P0 summary tables, the exact hand-written baseline and
encoding comparisons, the per-configuration well-separation timing, and the
full well-separation minimized-core breakdown.

\subsection{P0: Baseline (No Door Modeling)}
\label{sec:pyrobosim-p0-results-app}

\retab{pyrobosim-p0-results-app} gives the full P0 matrix underlying
\retab{pyrobosim-p0-1hot-summary} and \retab{pyrobosim-p0-enumerated-summary}
(\resec{pyrobosim-p0-results}); reported values are means over the 20
random-seed declaration orderings, matching the Coffee and Two-Rivers
reporting convention (\resec{coffee-results}, \resec{two-rivers-results}).
The alphabetic and domain-grouped fixed orderings are not part of this
mean.

\begin{table*}[t]
\centering
\small
\caption{\textbf{[$N=20$ random-seed orderings.]} PyRoboSim P0 synthesis results, full detail underlying
\retab{pyrobosim-p0-1hot-summary} and \retab{pyrobosim-p0-enumerated-summary}.
\textbf{MoveOnly}/\textbf{Full} are the hand-written baselines; the leading
digit \textbf{2}/\textbf{3} denotes one-hot/enumerated capability encoding,
\textbf{S}/\textbf{F} denotes System-Goal/Fair-Outcome liveness, and
\textbf{P} adds pending propositions. Reported values are means over 20
random-seed declaration orderings under a CUDD reordering threshold of 250,
with a companion row giving $\pm$ one standard deviation wherever the
underlying trials disagree; the alphabetic and domain-grouped fixed
orderings are not part of this mean. Columns marked unrealizable have no
extracted strategy; \textbf{2FP}/\textbf{3FP} are realizable but show
\textbf{FlexBE HFSM Realized: No} because the post-synthesis auditor
(\resec{auditor}) halts the pipeline before reduction, discussed below.}
\label{tbl:pyrobosim-p0-results-app}
\renewcommand{\arraystretch}{1.4}
\setlength{\tabcolsep}{3pt}
\resizebox{\textwidth}{!}{%
\input{tables/pyrobosim}
}
\end{table*}

Realizability, however, is not the end of the story. The strategy auditor of
\resec{auditor} flags a \texttt{GOAL\_UNREACHABLE\_TRAP} on \emph{every one}
of the 44 \textbf{2FP}/\textbf{3FP} trials, covering both encodings and all 22 orderings.
All audits completed, and none timed out. We confirmed the specific failure
kind directly in the pipeline's own per-trial logs rather than inferring it
from the downstream symptom alone.

Manual inspection of an early P0 Fair-Outcome-with-pending strategy
confirmed the same pathology and motivated the automated
\texttt{GOAL\_UNREACHABLE\_TRAP} check.
\refig{pyrobosim-trap} (\resec{pyrobosim-p0-results}) shows a concrete
instance of one such trap, taken directly from the raw synthesized strategy
for the one-hot, Fair-Outcome-with-pending, alphabetic-ordering trial
(\textbf{2FP}). Direct Tarjan analysis of the raw state graph finds twelve reachable
goal-unreachable-trap SCCs. The one shown in that figure is a closed
three-state cycle among strategy states 72, 73, and 83, entered from state
63.

Because the pipeline halts before state-merging reduction once the
raw-strategy audit reports a confirmed failure (\resec{reduction}),
\retab{pyrobosim-p0-results-app} itself shows \textbf{FlexBE HFSM
Realized: No} for both \textbf{2FP} and \textbf{3FP}. No executable
controller is ever produced from either configuration. \textbf{Reduced} and
the post-reduction \textbf{Auditing} pass both show ``--'' accordingly,
since neither stage ever runs on a strategy the first audit already
rejected.

With real seed coverage, cost metrics do not show the sharp warning sign
their eventual defect would seem to warrant. For one-hot, \textbf{2FP} is
\emph{smaller} than the realizable \textbf{2S} configuration on peak BDD nodes
(4{,}949{,}035.0 vs.\ 5{,}102{,}846.0) despite taking longer to synthesize
(620.1~s vs.\ 449.2~s); for enumerated, \textbf{3FP} is larger and slower
than \textbf{3S} on both axes (3{,}462{,}995.9 vs.\ 3{,}278{,}524.9 peak nodes;
300.8~s vs.\ 149.4~s). The raw extracted strategy is already substantially
larger before any reduction is attempted (\textbar{}Slugs SM\textbar{} = 108
vs.\ 47 for one-hot, 84 vs.\ 38 for enumerated), and it is precisely this
configuration that the auditor flags as not guaranteeing progress --
correctly halting the pipeline before reduction ever runs, which is why
\textbf{Reduced} has no value to report for \textbf{2FP}/\textbf{3FP} at all rather than a
larger one. Nothing in proposition count, BDD size, or a roughly
1.4--2.0$\times$ synthesis-time difference specifically signals that \textbf{2FP}/\textbf{3FP}
can never guarantee task completion the way the auditor's structural check
does directly; realizability and symbolic cost metrics remain insufficient
measures of a synthesized supervisor's usefulness even once ordering
variance is properly accounted for.

The System-Goal columns give a clean PyRoboSim encoding comparison:
enumerated encoding is smaller and faster than one-hot before reduction,
both under the fixed alphabetic and domain orderings (peak nodes
3{,}629{,}122 vs.\ 5{,}844{,}818; realizability 335--336~s vs.\
622--651~s) and averaged over the 20 random seeds reported in
\retab{pyrobosim-p0-results-app} (3{,}278{,}524.9 vs.\
5{,}102{,}846.0 peak nodes; 149.4~s vs.\ 449.2~s realizability).
The random-seed mean does carry substantial spread
(realizability standard deviation 72.0~s for enumerated, 257.0~s for
one-hot), so any individual random draw remains a poor proxy for the
typical case, fixed or random. This matches the broader lesson of the Coffee
ordering-robustness study (\seedetail{coffee-ordering-app}): declaration
order drives substantial variation on its own, so encoding comparisons
drawn from a single ordering should be treated cautiously regardless of
domain. After state reduction, the strategies converge to the
same size regardless of encoding (\textbar{}Reduced\textbar{} = 19 for both
\textbf{2S} and \textbf{3S}), as in Two-Rivers. Coffee's reduced sizes
differ by up to one state across encodings and the quadcopter's differ
substantially, so exact convergence is not a general pattern.
For the realized rows in P0, the only reduced-strategy versus realized-HFSM
size difference is the expected root-container bookkeeping in the
\textbf{MoveOnly} and \textbf{Full} baselines; the capability-generated
rows have matching reduced and realized sizes.

Comparing the capability-generated specifications to the hand-written
baselines, the full hand-written specification (\textbf{Full}) realizes in
402--403~s under the fixed alphabetic and domain orderings. This is faster
than the capability-generated \textbf{2S} configuration under the same
fixed orderings (622--651~s), despite \textbf{Full}
having fewer output propositions (16 vs.\ 17) and fewer safety clauses in
several categories. But \textbf{Full}'s mean over the 20 random seeds is 570.2~s,
the slowest of the three System-Goal configurations in
\retab{pyrobosim-p0-results-app} (449.2~s for \textbf{2S}, 149.4~s for
\textbf{3S}), driven by a realizability standard deviation of 399.5~s on
that mean, i.e.\ a small
number of especially slow random draws. The hand-written baseline is
therefore not immune to the ordering sensitivity that affects the generated
encodings; on this evidence it is more exposed to it here, not less. The
move-only baseline, which tracks only robot location and the finished flag,
is trivially cheap by comparison (0.016~s mean realizability, 3{,}832.5
peak BDD nodes over the 20 random seeds) and serves mainly as a sanity check
that most of the symbolic cost in this domain comes from item-manipulation
logic rather than navigation.

The well-separation picture for PyRoboSim differs qualitatively from Coffee
and Two-Rivers (\resec{coffee-results}, \resec{two-rivers-results}). Only the
trivial move-only baseline (\textbf{MoveOnly}) is well-separated. Every
other configuration in \retab{pyrobosim-p0-results-app}, including the
realizable System-Goal configurations
\textbf{2S}/\textbf{3S}/\textbf{2SP}/\textbf{3SP} and the hand-written
\textbf{Full} baseline, is NWS. The unrealizable
Fair-Outcome-without-pending configurations
(\textbf{2F}/\textbf{3F}) are NWS as well. The analyzer runs on the
compiled specification before synthesis, so the check applies regardless
of whether the specification later turns out to be realizable. In Coffee
and Two-Rivers, only the Fair-Outcome-with-pending configuration failed the
check. Here, the violation is already present in configurations with no
Fair-Outcome liveness at all.

The reported case type explains why this broader NWS result should be read
as a specification diagnostic rather than an auditor-failure prediction.
Every NWS PyRoboSim configuration includes a \texttt{P-reach/E-safe} case,
where the system can force an environment safety assumption. The Fair-Outcome configurations
(\textbf{2F}/\textbf{3F}/\textbf{2FP}/\textbf{3FP}) additionally show the
\texttt{P-all/E-just} case already identified in Coffee and Two-Rivers,
appearing here regardless of whether pending memory is present.

A core-minimization pass (\texttt{--minimizeWellSeparationCore}),
analogous to the Coffee \textbf{3FP} diagnostic, was completed for
all 198 NWS PyRoboSim scenarios, the nine NWS configurations times 22
orderings each. It sharpens the diagnostic picture considerably. The
unminimized check reports 40--53 jointly implicated assumptions, but the
actual minimal witnessing core has only one or two assumptions. Within each
configuration, that core is identical across all 22 declaration orderings,
confirming that well-separation, unlike synthesis cost, is an
ordering-independent property of the specification rather than a
declaration-order artifact. For
\textbf{2S}/\textbf{3S}/\textbf{2F}/\textbf{3F}/\textbf{2SP}/\textbf{3SP}, a
single environment safety assumption is sufficient to witness
NWS status, and it is the \emph{same} underlying assumption (by
stable declaration position) regardless of encoding or of System-Goal versus
Fair-Outcome liveness. Only the hand-written \textbf{Full} baseline is
witnessed by a different single safety assumption, consistent with its
independently authored specification structure. Adding pending memory to
Fair-Outcome (\textbf{2FP}/\textbf{3FP}) shifts the minimal witness to an
environment liveness assumption instead. A single liveness assumption
suffices for enumerated (\textbf{3FP}), while one-hot (\textbf{2FP})
requires that same liveness assumption together with one additional safety
assumption. The minimized cores therefore establish the scope of the
specification issue: the NWS verdict is attributable to a small,
structurally stable set of assumptions.

That limited scope is why the final comparison with the auditor matters.
As with Coffee and Two-Rivers, well-separation failure here does not by
itself predict auditor failure. Five of the seven NWS-yet-realizable
configurations (\textbf{Full}, \textbf{2S}, \textbf{3S}, \textbf{2SP},
\textbf{3SP}) pass the post-synthesis auditor cleanly, with no protocol
violations or deadlocks (\resec{auditor}). The exceptions are
\textbf{2FP}/\textbf{3FP}, which additionally fail the auditor with the
goal-unreachable trap discussed above. A specification can be
NWS and still produce a deployable controller. On the other hand, all
instances of the \texttt{GOAL\_UNREACHABLE\_TRAP} found by the auditor
arose only from NWS specifications.
For the P0 matrix, the well-separation check itself remains
inexpensive relative to synthesis, completing in 1.1--2.2~s versus tens to
hundreds of seconds for realizability. Across all three domains,
well-separation and the post-synthesis auditor catch different,
complementary classes of specification risk.

\subsection{P1: Adding Door Modeling}
\label{sec:pyrobosim-p1-results-app}

P1 extends the P0 domain with an \texttt{open} capability and initializes both
containers closed, so retrieving an item first requires opening its
container (\resec{pyrobosim}). We evaluate P1 to test whether the encoding
conclusions of \resec{pyrobosim-p0-results-app}
hold as domain complexity grows by adding a capability and its
associated propositions to an already-validated domain.

Per-trial cost at this scale makes the 20-random-seed sweep of
\resec{pyrobosim-p0-results-app} intractable for P1. \retab{pyrobosim-p1-results-app}
reports a single trial per configuration under a fixed alphabetic
declaration ordering, not a random-seed mean. These values are therefore
not comparable to \retab{pyrobosim-p0-results-app}'s means in the way the P0
configurations are comparable to each other, and any comparison drawn
between the two tables should be read with that in mind. Under the
10-hour per-trial synthesis budget, \textbf{2S} (one-hot, System-Goal, no
pending) is the only one-hot P1 configuration with a completed run,
finishing in 25842.0~s. The remaining one-hot configurations
(\textbf{2F}, \textbf{2SP}, \textbf{2FP}) have no completed run at this
budget, so their columns are omitted.

We consider the hand-written \textbf{FullP1} baseline, the one-hot
capability encoding under System-Goal liveness (\textbf{2S}), and the
enumerated capability encoding under System-Goal (\textbf{4S}),
Fair-Outcome (\textbf{4F}), and both liveness formulations with pending
propositions (\textbf{4SP}, \textbf{4FP}). The column digit \textbf{4}
distinguishes the enumerated columns from P0's (\textbf{3S}, etc.) in
\retab{pyrobosim-p0-results-app} despite using the same encoding, since the two
are evaluated over different domains and are not meant to be read as the
same configuration. The one-hot column, by contrast, keeps P0's digit
\textbf{2}: it is the same one-hot/System-Goal/no-pending treatment
applied to the larger P1 domain, with the P1 table and section context
distinguishing it from the P0 \textbf{2S} row.

\begin{table*}[t]
\centering
\small
\caption{\textbf{[Exploratory: budget-limited, single fixed ordering
($n=1$); see \resec{pyrobosim-p1-results}.]} PyRoboSim P1 (door-modeling) synthesis results, full detail
underlying \retab{pyrobosim-p1-results}. \textbf{FullP1} is
the hand-written baseline. \textbf{2S} is the one-hot capability encoding
(System-Goal liveness only; \textbf{2F}/\textbf{2SP}/\textbf{2FP} were
not completed at this budget). The leading digit \textbf{4} denotes the
enumerated capability encoding evaluated over the P1 domain (distinct from
P0's \textbf{3}-family columns in \retab{pyrobosim-p0-results-app} despite sharing
an encoding). \textbf{S}/\textbf{F} denotes System-Goal/Fair-Outcome
liveness, and \textbf{P} adds pending propositions. Unlike
\retab{pyrobosim-p0-results-app}, reported values are a single trial under a
fixed alphabetic declaration ordering ($n=1$), not a random-seed mean;
therefore no $\pm$ standard-deviation rows are shown.
These single-ordering rows use P1's uncapped (``auto'') CUDD reordering
policy; \retab{pyrobosim-p1-ordering-app} reports the corresponding
threshold-250 ordering experiment for \textbf{4S}/\textbf{4F}.
\textbf{FullP1}'s well-separation check did not complete under a 43200~s (12\,h) cap, so
its \textbf{Well Separated} row reports \textbf{TIME OUT} rather than a
completed verdict.}
\label{tbl:pyrobosim-p1-results-app}
\renewcommand{\arraystretch}{1.4}
\setlength{\tabcolsep}{3pt}
\input{tables/pyrobosim_p1}
\end{table*}

Fair-Outcome liveness without pending memory is again unrealizable
(\textbf{4F}), replicating the P0 and Two-Rivers result
(\resec{pyrobosim-p0-results-app}, \resec{two-rivers-results}) on a domain that
now also models door state. Under matched threshold-250 random orderings
(\seedetail{pyrobosim-p1-ordering-app}), \textbf{4F} sits modestly above the
corresponding P0 column (\textbf{3F}): its mean realizability time
(36.7\,s) is 1.3 \textbf{3F} standard deviations above the \textbf{3F}
mean (25.1\,s), and its mean peak BDD nodes (813{,}818.6) are 1.0 standard
deviation above (766{,}397.8). This Fair-Outcome/no-pending result therefore
shows little sensitivity to the added door-opening capability.

\textbf{4FP} checks whether pending propositions restore realizability, as
they do for P0 (\textbf{2FP}/\textbf{3FP}) and Two-Rivers, and whether the
same \texttt{GOAL\_UNREACHABLE\_TRAP} pathology persists after adding door
state. Both hold. \textbf{4FP} is realizable (12919.4~s, within the
36{,}000~s budget), and the raw-strategy audit reports the same failure kind
as \textbf{2FP}/\textbf{3FP}. The pipeline therefore halts before
state-merging reduction (\resec{reduction}). \textbf{4FP}'s raw strategy
has 118 states before reduction (versus 108 for \textbf{2FP} and 84 for
\textbf{3FP}), and \retab{pyrobosim-p1-results-app} reports
\textbf{FlexBE HFSM Realized: No} and \textbf{Reduced: --}. This confirms
that the auditor catches the same structural defect after the domain gains
a new capability, output propositions, and an environment-liveness
assumption.

\textbf{FullP1} exposes a separate scaling issue for well separation.
Synthesis completes in 3871.9~s, but the well-separation check still has
no verdict under a dedicated 43200~s cap, versus 1.1~s for the
corresponding P0 \textbf{Full} mean
(\retab{pyrobosim-p0-results-app}). Thus P1 can be synthesis-tractable
while this diagnostic remains unresolved; the table therefore reports
\textbf{TIME OUT} for \textbf{FullP1}'s
\textbf{Well Separated} row. The same non-verdict was reproduced on the
faster machine. This asymmetry is consistent with well separation being a
distinct winning-region computation over a transformed game, so its
symbolic cost need not track ordinary synthesis
cost~\cite{MaozRingert2016,KindControllers24}.

\textbf{2S} lets us extend the encoding comparison of
\resec{pyrobosim-p0-results-app} to P1. Under the tested P1 ordering, the
gap between one-hot and enumerated is much larger. In the System-Goal,
no-pending comparison, P0's one-hot \textbf{2S} column has
$1.6\times$ the peak BDD nodes and $3.0\times$ the realizability time of
enumerated \textbf{3S} (\retab{pyrobosim-p0-results-app}). After door
modeling is added, P1's one-hot \textbf{2S} column rises to $8.2\times$
the peak BDD nodes and $109.9\times$ the realizability time of enumerated
\textbf{4S}. Both P1 columns use the uncapped (``auto'') reordering policy,
so this ratio compares like with like, but it is not comparable to the P0
ratios above, which use threshold 250. As in P0, this cost gap closes
entirely after reduction:
\textbf{2S} and \textbf{4S} both reduce to 27 states
(\retab{pyrobosim-p1-results-app}), matching P0's
encoding-independent convergence (19 states for both \textbf{2S} and
\textbf{3S}) on a domain with roughly 50\% more input propositions.
These two capability-generated P1 rows also realize as 27-state FlexBE
HFSMs. The separate \textbf{FullP1} baseline is different: its
344-state reduced strategy consists of 336 executable action states plus
eight success-terminal contexts, which Controller Realization projects to
336 action states, one shared \texttt{finished} outcome, and the root
container (\resec{controller-realization}). Thus its realized HFSM has 338
state instantiations, with no executable action state omitted.
Proposition count remains a poor predictor of reduced strategy size or
realized FlexBE state count.
Because this P1 comparison uses fixed-ordering trials, and because
\textbf{4S} itself shows large declaration-order sensitivity below, the
multipliers should be read as directional evidence under the tested
ordering rather than as an isolated encoding effect.

Growth from P0 to P1 must be read against the reordering policy. P0 and the
P1 ordering sweep use threshold 250, while the P1 main table uses the
uncapped policy, and at this scale the two differ sharply. For \textbf{4S}
the alphabetic trial takes 235.2\,s and 5.6M peak nodes under the uncapped
policy but 3679.6\,s and 25.7M under threshold 250, whereas \textbf{4F}
moves from 31.3\,s to 24.0\,s (\seedetail{pyrobosim-p1-ordering-app}).
Dividing P1 uncapped values by P0 threshold-250 values therefore mixes
policies, and we do not report such ratios as effect sizes. Where both use
threshold 250, \textbf{4S} costs $7.1\times$ the peak BDD nodes and
$11.0\times$ the realizability time of \textbf{3S} under the alphabetic
ordering (25.7M vs.\ 3.6M peak nodes; 3679.6\,s vs.\ 335\,s), and $8.0\times$
and $61\times$ on the random-ordering means. Door
modeling thus raises System-Goal cost substantially even without pending
memory. The pending columns (\textbf{4SP}, \textbf{4FP}) and \textbf{FullP1}
have only uncapped-policy P1 trials. Against P0 they show $25.1\times$ and
$43.0\times$ the realizability time of \textbf{3SP} and \textbf{3FP}, and
roughly $7\times$ for \textbf{FullP1} against \textbf{Full}, which suggests a
larger pending penalty, but these comparisons mix policies and are
directional only.
The cost of door modeling is therefore not confined to pending-memory
configurations, since System-Goal without pending memory also grows
several-fold under a matched policy, but the pending columns appear to grow
more, most where pending memory meets Fair-Outcome liveness. The P1 pending
columns are single trials under a different policy from P0, so these
multipliers are directional rather than calibrated effect sizes.

\subsection{Declaration-Ordering Sensitivity (4S/4F)}
\label{sec:pyrobosim-p1-ordering-app}

\textbf{4SP}, \textbf{4FP}, \textbf{FullP1}, and \textbf{2S} remain
single-trial throughout this appendix, but \textbf{4S}/\textbf{4F} also have
a 22-ordering sweep (2 fixed + 20 random, under
the same 250 CUDD reordering threshold used throughout), the same shape as
PyRoboSim P0's own ordering sweep (\resec{pyrobosim-p0-results-app}) and
Two-Rivers' (\seedetail{two-rivers-ordering-app}). This sweep uses the
threshold-250 policy applied everywhere else in this paper, unlike
\retab{pyrobosim-p1-results-app}'s single alphabetic trial for
\textbf{4S}/\textbf{4F}, which uses P1's own uncapped (``auto'') reordering policy.
The two tables' alphabetic-ordering figures for the same column are
therefore not expected to agree. Because both policies were run on the same
alphabetic ordering, \textbf{4S} and \textbf{4F} give a direct policy
comparison, which the table reports as an additional pair of rows.

\begin{table*}[t]
\centering
\caption{PyRoboSim P1's \textbf{4S}/\textbf{4F} realizability and peak BDD
nodes under fixed vs.\ random declaration order, extending
\retab{pyrobosim-p1-results-app}'s single alphabetic trial with a
22-ordering sweep (2 fixed + 20 random, threshold-250 reordering). The
``Alphabetic, auto'' rows repeat the single uncapped-policy trial of
\retab{pyrobosim-p1-results-app} for a direct policy comparison.}
\label{tbl:pyrobosim-p1-ordering-app}
\begin{tabular}{llrr}
\toprule
Ordering & Column & Realizability (s) & Peak BDD Nodes \\
\midrule
Alphabetic, auto & \textbf{4S} & 235.2 & 5{,}643{,}484 \\
Alphabetic & \textbf{4S} & 3679.6 & 25{,}725{,}784 \\
Domain-grouped & \textbf{4S} & 3617.5 & 25{,}725{,}784 \\
Random (mean, $N{=}20$) & \textbf{4S} & 9164.0 & 26{,}155{,}841.6 \\
\midrule
Alphabetic, auto & \textbf{4F} & 31.3 & 713{,}356 \\
Alphabetic & \textbf{4F} & 24.0 & 751{,}170 \\
Domain-grouped & \textbf{4F} & 23.9 & 751{,}170 \\
Random (mean, $N{=}20$) & \textbf{4F} & 36.7 & 813{,}818.6 \\
\bottomrule
\end{tabular}
\end{table*}

This is the direct comparison of the uncapped policy and threshold 250
reported here, and it shows that the choice can matter at scale. On the same
alphabetic ordering, \textbf{4S} is $15.6\times$ slower (235.2\,s to
3679.6\,s) and reaches $4.6\times$ the peak BDD nodes under threshold 250,
whereas \textbf{4F} is $1.3\times$ faster (31.3\,s to 24.0\,s) with about 5\%
more nodes. The effect therefore depends on the configuration, and the
Coffee-scale observation that the uncapped policy is effectively inactive
(\seedetail{coffee-ordering-app}) does not extend to P1. The one-hot
\textbf{2S} column has no threshold-250 run, so the encoding comparison of the
P1 main table cannot be repeated under threshold 250.

\textbf{4S}'s random-seed mean (9164.0\,s) carries an enormous spread
(standard deviation 7629.1\,s, essentially the same order as the mean
itself). Four orderings complete in under 600\,s (269.0--545.1\,s), while
the remaining sixteen span nearly two orders of magnitude on their own,
from 3038.6\,s up to the slowest observed trial of 26{,}685.0\,s, a
$99\times$ range across just 22 orderings, the widest ordering-sensitivity
spread found anywhere in this paper. \textbf{4F}, by contrast, is
uniformly fast and tightly clustered (20.2--76.2\,s across all 22
orderings, standard deviation 14.3\,s), matching the stability already
seen for the other Fair-Outcome/no-pending columns throughout this paper.
In both cases, the qualitative outcome never varies across the sweep.
\textbf{4S} is realizable, audit-valid, and reduces to 27 states in all 22
trials (matching \retab{pyrobosim-p1-results-app}'s single-trial figure
exactly). \textbf{4F} is unrealizable in all 22, with the identical
\texttt{SPEC\_UNSYNTHESIZABLE} failure in every trial. The sweep therefore
confirms \retab{pyrobosim-p1-results-app}'s realizability verdicts exactly
while revealing that \textbf{4S}'s absolute cost is far more
ordering-dependent than its single alphabetic trial could show. This is
consistent in kind with the one-hot ordering-sensitivity finding of
\resec{crazyflie-reliability-app}, but here affecting a System-Goal
configuration under enumerated encoding rather than
Fair-Outcome-with-pending under one-hot.

%% file: tables/pyrobosim.tex
\begin{tabular}{|c|r|r|r|r|r|r|r|r|r|r|}
\hline
 & \textbf{MoveOnly} & \textbf{Full} & \textbf{2S} & \textbf{3S} & \textbf{2F} & \textbf{3F} & \textbf{2SP} & \textbf{3SP} & \textbf{2FP} & \textbf{3FP} \\
\hline
\rule{0pt}{4ex}\textbf{\shortstack{Well\\Separated}} & Yes & No & No & No & No & No & No & No & No & No \\
\hline
\textbf{Realizable} & Yes & Yes & Yes & Yes & No & No & Yes & Yes & Yes & Yes \\
\hline
\rule{0pt}{4ex}\textbf{\shortstack{FlexBE HFSM\\Realized}} & Yes & Yes & Yes & Yes & No & No & Yes & Yes & No & No \\
\hline
\textbf{$\left|AP_I\right|$} & 8 & 33 & 34 & 25 & 34 & 25 & 34 & 25 & 34 & 25 \\
\hline
\textbf{$\left|AP_O\right|$} & 8 & 16 & 17 & 12 & 17 & 12 & 20 & 15 & 20 & 15 \\
\hline
\textbf{$\left|\varphi_i^a\right|$} & 4 & 17 & 18 & 9 & 18 & 9 & 18 & 9 & 18 & 9 \\
\hline
\textbf{$\left|\varphi_i^g\right|$} & 3 & 6 & 10 & 3 & 10 & 3 & 13 & 6 & 13 & 6 \\
\hline
\textbf{$\left|\varphi_s^a\right|$} & 8 & 42 & 50 & 40 & 50 & 40 & 50 & 40 & 50 & 40 \\
\hline
\textbf{$\left|\varphi_s^g\right|$} & 12 & 34 & 57 & 28 & 57 & 28 & 69 & 40 & 69 & 40 \\
\hline
\textbf{$\left|\varphi_l^a\right|$} & 1 & 1 & 1 & 1 & 3 & 3 & 1 & 1 & 3 & 3 \\
\hline
\textbf{$\left|\varphi_l^g\right|$} & 1 & 1 & 1 & 1 & 1 & 1 & 1 & 1 & 1 & 1 \\
\hline
\rule{0pt}{4ex}\textbf{\shortstack{CUDD Live\\BDD Nodes}} & 899.0 & 1542036.8 & 1045982.4 & 799104.1 & 334161.4 & 276488.2 & 1277737.5 & 862481.3 & 1360755.6 & 1042385.3 \\
\hline
 & $\pm$ 313.0 & $\pm$ 759323.5 & $\pm$ 285227.3 & $\pm$ 386265.1 & $\pm$ 62476.5 & $\pm$ 32205.1 & $\pm$ 476895.3 & $\pm$ 286343.6 & $\pm$ 308782.2 & $\pm$ 215178.6 \\
\hline
\rule{0pt}{4ex}\textbf{\shortstack{CUDD Peak\\BDD Nodes}} & 3832.5 & 4303079.9 & 5102846.0 & 3278524.9 & 1182147.4 & 766397.8 & 5562030.6 & 3497130.7 & 4949035.0 & 3462995.9 \\
\hline
 & $\pm$ 1362.1 & $\pm$ 1535395.1 & $\pm$ 1522551.7 & $\pm$ 941354.6 & $\pm$ 218189.4 & $\pm$ 46494.5 & $\pm$ 1324553.2 & $\pm$ 462037.6 & $\pm$ 1131606.1 & $\pm$ 900419.4 \\
\hline
\textbf{\textbar{}Slugs SM\textbar} & 5 & 116 & 47 & 38 & -- & -- & 47 & 38 & 108 & 84 \\
\hline
\textbf{\textbar{}Reduced\textbar} & 3 & 66 & 19 & 19 & -- & -- & 19 & 19 & -- & -- \\
\hline
\rule{0pt}{4ex}\textbf{\shortstack{Generate\\Specs (s)}} & 0.012 & 0.036 & 0.037 & 0.050 & 0.035 & 0.050 & 0.040 & 0.055 & 0.035 & 0.057 \\
\hline
 & $\pm$ 0.004 & $\pm$ 0.004 & $\pm$ 0.005 & $\pm$ 0.004 & $\pm$ 0.003 & $\pm$ 0.005 & $\pm$ 0.005 & $\pm$ 0.004 & $\pm$ 0.002 & $\pm$ 0.006 \\
\hline
\rule{0pt}{4ex}\textbf{\shortstack{Well Separation\\(s)}} & 0.000 & 1.121 & 1.541 & 2.201 & 1.539 & 2.143 & 1.275 & 1.829 & 1.276 & 1.833 \\
\hline
 & $\pm$ 0.000 & $\pm$ 0.202 & $\pm$ 0.657 & $\pm$ 1.074 & $\pm$ 0.661 & $\pm$ 1.060 & $\pm$ 0.626 & $\pm$ 0.650 & $\pm$ 0.630 & $\pm$ 0.648 \\
\hline
\rule{0pt}{4ex}\textbf{\shortstack{Realizability\\(s)}} & 0.016 & 570.184 & 449.174 & 149.430 & 58.466 & 25.051 & 570.332 & 164.243 & 620.068 & 300.756 \\
\hline
 & $\pm$ 0.005 & $\pm$ 399.504 & $\pm$ 257.022 & $\pm$ 72.034 & $\pm$ 13.832 & $\pm$ 8.835 & $\pm$ 209.194 & $\pm$ 33.784 & $\pm$ 346.331 & $\pm$ 266.725 \\
\hline
\rule{0pt}{4ex}\textbf{\shortstack{Extraction\\(s)}} & 0.001 & 4.369 & 2.709 & 1.554 & -- & -- & 3.624 & 1.711 & 4.343 & 3.324 \\
\hline
 & $\pm$ 0.000 & $\pm$ 1.860 & $\pm$ 1.033 & $\pm$ 0.784 &  &  & $\pm$ 2.429 & $\pm$ 0.548 & $\pm$ 1.074 & $\pm$ 1.588 \\
\hline
\rule{0pt}{4ex}\textbf{\shortstack{Auditing\\(s)}} & 0.001 & 0.002 & 0.002 & 0.003 & -- & -- & 0.002 & 0.002 & -- & -- \\
\hline
 & $\pm$ 0.001 & $\pm$ 0.000 & $\pm$ 0.002 & $\pm$ 0.003 &  &  & $\pm$ 0.000 & $\pm$ 0.002 &  &  \\
\hline
\rule{0pt}{4ex}\textbf{\shortstack{Realize\\HFSM (s)}} & 0.183 & 3.193 & 0.474 & 0.466 & -- & -- & 0.468 & 0.470 & -- & -- \\
\hline
 & $\pm$ 0.008 & $\pm$ 0.435 & $\pm$ 0.017 & $\pm$ 0.014 &  &  & $\pm$ 0.013 & $\pm$ 0.014 &  &  \\
\hline
\rule{0pt}{4ex}\textbf{\shortstack{Overall\\(s)}} & 0.283 & 580.581 & 455.548 & 154.511 & 61.122 & 27.657 & 577.649 & 169.211 & 627.717 & 307.086 \\
\hline
 & $\pm$ 0.014 & $\pm$ 401.087 & $\pm$ 258.170 & $\pm$ 72.554 & $\pm$ 14.031 & $\pm$ 9.238 & $\pm$ 210.151 & $\pm$ 34.026 & $\pm$ 346.342 & $\pm$ 268.133 \\
\hline
\end{tabular}

%% file: tables/pyrobosim_p1.tex
\begin{tabular}{|c|r|r|r|r|r|r|}
\hline
 & \textbf{FullP1} & \textbf{2S} & \textbf{4S} & \textbf{4F} & \textbf{4SP} & \textbf{4FP} \\
\hline
\rule{0pt}{4ex}\textbf{\shortstack{Well\\Separated}} & TIME OUT & No & No & No & No & No \\
\hline
\textbf{Realizable} & Yes & Yes & Yes & No & Yes & Yes \\
\hline
\rule{0pt}{4ex}\textbf{\shortstack{FlexBE HFSM\\Realized}} & Yes & Yes & Yes & No & Yes & No \\
\hline
\textbf{$\left|AP_I\right|$} & 48 & 52 & 40 & 40 & 40 & 40 \\
\hline
\textbf{$\left|AP_O\right|$} & 14 & 19 & 13 & 13 & 17 & 17 \\
\hline
\textbf{$\left|\varphi_i^a\right|$} & 32 & 36 & 24 & 24 & 24 & 24 \\
\hline
\textbf{$\left|\varphi_i^g\right|$} & 2 & 12 & 3 & 3 & 7 & 7 \\
\hline
\textbf{$\left|\varphi_s^a\right|$} & 94 & 86 & 73 & 73 & 73 & 73 \\
\hline
\textbf{$\left|\varphi_s^g\right|$} & 42 & 94 & 50 & 50 & 66 & 66 \\
\hline
\textbf{$\left|\varphi_l^a\right|$} & 1 & 1 & 1 & 4 & 1 & 4 \\
\hline
\textbf{$\left|\varphi_l^g\right|$} & 1 & 1 & 1 & 1 & 1 & 1 \\
\hline
\rule{0pt}{4ex}\textbf{\shortstack{CUDD Live\\BDD Nodes}} & 8986921 & 14483032 & 1819700 & 419861 & 11052977 & 15491652 \\
\hline
\rule{0pt}{4ex}\textbf{\shortstack{CUDD Peak\\BDD Nodes}} & 30113230 & 46278204 & 5643484 & 713356 & 23878008 & 34117426 \\
\hline
\textbf{\textbar{}Slugs SM\textbar} & 559 & 59 & 46 & -- & 46 & 118 \\
\hline
\textbf{\textbar{}Reduced\textbar} & 344 & 27 & 27 & -- & 27 & -- \\
\hline
\rule{0pt}{4ex}\textbf{\shortstack{Generate\\Specs (s)}} & 0.062 & 0.058 & 0.108 & 0.110 & 0.108 & 0.101 \\
\hline
\rule{0pt}{4ex}\textbf{\shortstack{Well Separation\\(s)}} & -- & 3019.282 & 122.507 & 124.618 & 153.111 & 149.566 \\
\hline
\rule{0pt}{4ex}\textbf{\shortstack{Realizability\\(s)}} & 3871.901 & 25841.977 & 235.193 & 31.282 & 4129.791 & 12919.358 \\
\hline
\rule{0pt}{4ex}\textbf{\shortstack{Extraction\\(s)}} & 22.228 & 39.578 & 4.495 & -- & 24.485 & 41.270 \\
\hline
\rule{0pt}{4ex}\textbf{\shortstack{Auditing\\(s)}} & 0.004 & 0.001 & 0.002 & -- & 0.002 & -- \\
\hline
\rule{0pt}{4ex}\textbf{\shortstack{Realize\\HFSM (s)}} & 8.912 & 0.574 & 0.592 & -- & 0.590 & -- \\
\hline
\rule{0pt}{4ex}\textbf{\shortstack{Overall\\(s)}} & 47111.691 & 28917.231 & 365.211 & 157.286 & 4315.238 & 13123.356 \\
\hline
\end{tabular}

%% file: CrazyFlieAppendix.tex
\section{Quadcopter Demonstration Results}
\label{sec:crazyflie-results-app}

\resec{crazyflie-results} summarizes the results for the
quadcopter flight supervisor by
realizability, well-separation, strategy size, and the headline timing
rows. This appendix gives the full specification-size and BDD-node
breakdown behind that summary, the exact budget-sensitivity and
well-separation timing figures, and the current status of the
Fair-Outcome-with-pending (\textbf{1FP}/\textbf{2FP}) configurations.

\begin{table*}[htbp]
\centering
\caption{\textbf{[$N=20$ random-seed orderings for the six
configurations shown; \textbf{1FP}/\textbf{2FP} excluded from this
20-seed summary; additional Fair-Outcome-with-pending results appear in
\resec{crazyflie-reliability-app}.]}
Quadcopter supervisory-synthesis results, full detail underlying
\retab{crazyflie-results}. The leading digit \textbf{1}/\textbf{2} denotes
the one-hot/enumerated capability encoding (the reverse of PyRoboSim's
\textbf{2}/\textbf{3} convention, since each domain assigns its own
digit-to-encoding mapping; \resec{discussion}). Reported values are means
over 20 random-seed declaration orderings under a CUDD reordering threshold
of 250 and a 36{,}000\,s per-trial synthesis budget, matching the Coffee,
Two-Rivers, and PyRoboSim reporting convention, with a companion row giving
$\pm$ one standard deviation wherever the underlying trials disagree;
alphabetic and domain-grouped fixed orderings are excluded from the mean.
All six configurations complete all 20 trials at this budget, including
\textbf{1SP} (\resec{crazyflie-reliability-app}), and \textbf{1SP} is
well-separated across all 22 declaration orderings just like the other
five configurations, once its well-separation check is given a longer
timeout (\resec{crazyflie-well-separation-app}). Integer-valued count
means indicate identical samples, while non-identical count means are
shown with one decimal digit.}
\label{tbl:crazyflie-results-app}
\renewcommand{\arraystretch}{1.0}
\setlength{\tabcolsep}{3pt}
\input{tables/crazyflie}
\end{table*}

As in the other domains, enumerated variants (2*) reduce input
AP count ($84 \rightarrow 2$). The counts follow from the generated
specification. The quadcopter model has 51 capability states, including the
startup \texttt{begin\_game} slot. One-hot encoding declares 51 activation
outputs, and it observes 51 completion inputs and 33 failure inputs (only the
33 failure-prone capabilities report failure), giving $51+33=84$ input
propositions. Enumerated encoding replaces these with the two shared inputs
\texttt{completed} and \texttt{failure}. On the output side, one-hot
$|AP_O|=91$ is 51 activation outputs, 19 memory propositions, the two
terminal flags \texttt{failed} and \texttt{finished}, and 19 bits for the
five bounded-integer parameters (\texttt{gate1} to \texttt{gate4} at 4 bits
each and \texttt{route\_segment} at 3 bits). Enumerated encoding replaces the
51 activation outputs with a 6-bit \texttt{capability} value, giving 46. The
pending variants add one pending proposition per failure-prone capability
(33), giving 124 and 79. Fair-Outcome liveness raises the number of
environment-liveness clauses from 1 to 33, again one per failure-prone
capability. Because $\varphi_l^a=1$, not $0$, under
System-Goal liveness in the current \texttt{drones\_gates.yaml} capability
revision (\textbf{1S}/\textbf{2S}/\textbf{1SP}/\textbf{2SP} all carry one
environment-liveness clause regardless of outcome liveness), the environment
side of the specification is otherwise nearly identical between the two
System-Goal columns and their Fair-Outcome counterparts (\textbf{1S} vs.\
\textbf{1F}, \textbf{2S} vs.\ \textbf{2F}); only the additional
Fair-Outcome liveness clauses and their downstream synthesis cost change.
Unlike the Coffee, Two-Rivers, and PyRoboSim results, enumerated encoding
does not reduce the extracted strategy here: \textbar{}Slugs SM\textbar{}
and \textbar{}Reduced\textbar{} are consistently \emph{larger} under
enumerated encoding across all three liveness variants ($119.3 \rightarrow
131$, $76.7 \rightarrow 106$ under System-Goal; $102 \rightarrow
131$, $68 \rightarrow 106$ under Fair-Outcome; $187.0 \rightarrow 347$,
$131.0 \rightarrow 318$ under pending), most dramatically so for the
pending case. Realizability time, by contrast, favors enumerated encoding
across all three reported comparable pairs here ($19.7 \rightarrow
9.3$\,s under System-Goal, $638.5 \rightarrow 31.8$\,s under
Fair-Outcome, and $2764.5 \rightarrow 166.6$\,s with pending under
System-Goal). With all 20 random orderings completing for both encodings
at the 36{,}000\,s budget (\resec{crazyflie-reliability-app}), the
pending comparison is a true 20-seed mean on both sides and reinforces
that enumerated encoding dominates on realizability time in this domain.
Within each encoding, Fair-Outcome liveness still dominates overall
difficulty relative to System-Goal (one-hot: $19.7 \rightarrow 638.5$\,s;
enumerated: $9.3 \rightarrow 31.8$\,s), confirming liveness pressure, not
encoding, as the main complexity driver in this domain.

\subsection{Reduced Strategy Size versus Realized HFSM Size}
\label{sec:crazyflie-realization-app}

The \textbar{}Reduced\textbar{} row in \retab{crazyflie-results-app}
counts states in the reduced Mealy strategy, not FlexBE state
instantiations. For Crazyflie this distinction is unusually visible because
the reduced strategies retain many terminal-output contexts. Those
contexts matter at the strategy level, where different valuations of
terminal outputs can remain distinct after reduction, but they project to
shared terminal outcomes in the realized FlexBE HFSM. Controller
Realization therefore preserves executable action states while mapping
bootstrap/null states into root-container wiring and mapping terminal
\texttt{finished}/\texttt{failed} contexts to shared FlexBE outcomes
(\resec{controller-realization}).

\begin{table*}[t]
\centering
\small
\caption{Crazyflie reduced-strategy states versus realized FlexBE HFSM
state instantiations for representative alphabetic, threshold-250 rows.
Terminal/bootstrap contexts are reduced-strategy states that project to
root-container wiring or shared FlexBE terminal outcomes rather than
separate executable capability states.}
\label{tbl:crazyflie-realization-accounting-app}
\renewcommand{\arraystretch}{1.15}
\resizebox{\textwidth}{!}{%
\begin{tabular}{lrrp{0.36\textwidth}r}
\toprule
\textbf{Configuration} & \textbf{Reduced} & \textbf{Executable actions} & \textbf{Terminal/bootstrap contexts} & \textbf{Realized HFSM} \\
\midrule
\textbf{1F}  & 68  & 51  & 15 \texttt{failed} + 1 \texttt{finished} + 1 \texttt{begin\_game} & 54 \\
\textbf{1S}  & 83  & 66  & 15 \texttt{failed} + 1 \texttt{finished} + 1 \texttt{begin\_game} & 69 \\
\textbf{1SP} & 141 & 105 & 31 \texttt{failed} + 4 \texttt{finished} + 1 \texttt{begin\_game} & 108 \\
\textbf{2F}  & 106 & 51  & 53 \texttt{failed} + 1 \texttt{finished} + 1 null/bootstrap & 54 \\
\textbf{2S}  & 106 & 51  & 53 \texttt{failed} + 1 \texttt{finished} + 1 null/bootstrap & 54 \\
\textbf{2SP} & 318 & 86  & 227 \texttt{failed} + 4 \texttt{finished} + 1 null/bootstrap & 89 \\
\bottomrule
\end{tabular}
}
\end{table*}

We verified the accounting in \retab{crazyflie-realization-accounting-app}
against the generated run artifacts. The original- and reduced-strategy
auditors pass for these rows; the generation logs contain no missing
FlexBE-outcome declarations, skipped generated states, or skipped
transitions; and every non-realized reduced state is one of the terminal or
bootstrap/null contexts listed in the table. In the enumerated rows, many
terminal \texttt{failed} contexts also carry a capability value. That value
is retained symbolic context at the terminal state, not an additional
action that should be instantiated after failure. Thus the sharpest gaps,
especially \textbf{2S}/\textbf{2F}'s 106 reduced states versus 54 realized
HFSM states and \textbf{2SP}'s 318 versus 89, do not indicate lost robot
behavior.

\retab{crazyflie-results-app} reports \textbf{Extraction} and
\textbf{Auditing} alongside the other domains (\resec{coffee-results},
\resec{two-rivers-results}, \resec{pyrobosim-results}; timing-row definitions in
\resec{discussion}, \retab{timing-rows}). Auditing remains
sub-20\,ms across every column (6--18\,ms) despite $|\mathrm{Slugs\ SM}|$
reaching 347 for \textbf{2SP}, comparable to the largest strategies audited
elsewhere in this paper (559 states for PyRoboSim's hand-written
\textbf{FullP1} baseline, \resec{pyrobosim-p1-results}; 677 states for this
domain's own enumerated \textbf{2FP} raw strategy,
\resec{crazyflie-reliability-app}), consistent with the linear-in-strategy-size
auditing behavior \resec{auditor-complexity} establishes.

\subsection{Budget Sensitivity of One-Hot Synthesis}
\label{sec:crazyflie-reliability-app}

The 20-seed sweep surfaces a reliability difference between encodings that a
single random seed cannot reveal. Across per-trial budgets from 900\,s to
36{,}000\,s, enumerated encoding completed all 20 orderings for every
configuration in the six-column sweep with zero timeouts even at the
smallest budget, while one-hot
encoding's completion rate is strongly budget-dependent and differs sharply
between its two affected configurations. The Fair-Outcome configuration
(\textbf{1F}) reaches 100\% completion once the budget reaches
3600\,s; the pending configuration (\textbf{1SP}) needs the full 36{,}000\,s budget
(40$\times$ the smallest budget tested) to do the same, including one
ordering whose realizability time alone reached 12{,}558\,s (3.5\,h) and
whose total pipeline time reached 14{,}531\,s. The two configurations
differ in degree, not in kind: \textbf{1SP} is not a structurally distinct failure
mode from \textbf{1F}, but the same budget-sensitivity phenomenon with a
substantially heavier tail, requiring roughly an order of magnitude more
budget than \textbf{1F} to fully resolve. This is not a difference in whether a
solution exists: the alphabetic and domain-grouped fixed orderings for both
configurations complete well within budget at every setting tested,
comparable to the successful random trials. It is a difference in how much
wall-clock budget the space of declaration orders demands under one-hot
encoding specifically. The practical implication for a deployment pipeline
is that one-hot encoding here carries a substantial risk that synthesis,
even for a realizable specification, may require an order of magnitude more
wall-clock time to reach a verdict than enumerated encoding needs for the
equivalent configuration (2764.5\,s mean realizability for
\textbf{1SP} vs.\ 166.6\,s for \textbf{2SP}), unless the declaration order
is chosen carefully or a correspondingly larger budget is provisioned.

The theoretical hazard reviewed in \resec{synthesis} applies directly to
the Fair-Outcome-with-pending configuration (\textbf{1FP}/\textbf{2FP}), which appears in
neither \retab{crazyflie-results-app} nor the budget sweep above. Because
completion events are only permitted when the corresponding capability is
activated, a Fair-Outcome-with-pending strategy can in principle enter a
region where pending remains asserted while activation has ceased. That
region makes the environment's liveness obligation unsatisfiable and, under
GR(1) semantics, relieves the system of its own liveness guarantee. The
strategy would then win, in part, by steering the environment out of its own
assumptions, a known hazard in GR(1) synthesis.

Under alphabetic ordering and a 24\,h (86{,}400\,s) per-trial budget,
one-hot \textbf{1FP} reaches a realizability verdict in roughly ten hours
overall, while enumerated \textbf{2FP} does not resolve at any budget tested,
up to and including 24\,h, on the original machine.

\resec{pyrobosim-p0-results} reports where this pathology was first found,
by manual inspection of an early PyRoboSim strategy. That inspection
motivated the auditor's (\resec{auditor}) automated
\texttt{GOAL\_UNREACHABLE\_TRAP} check, which now confirms the same
pathology automatically across all 44 PyRoboSim trials. The quadcopter
domain exhibits it as well. For both encodings, once the budget is extended
far enough for realizability to resolve, the extracted
Fair-Outcome-with-pending strategy fails the same trap check.

\textbf{1FP} is realizable, in 33{,}937\,s (9.4\,h) of
\texttt{slugs} realizability time and 36{,}540\,s (10.2\,h) overall
pipeline time, comfortably inside budget. Its specification size is
$|AP_I|=84$, $|AP_O|=124$, with a peak BDD of 52.9M nodes -- well above any
one-hot configuration in \retab{crazyflie-results-app}, consistent with
the proposition-count blowup one-hot encoding already shows throughout
this paper. The raw-strategy audit flags the extracted strategy with a
\texttt{GOAL\_UNREACHABLE\_TRAP}, so \textbf{1FP} is not realized as a
FlexBE HFSM. It is realizable but not deployable, exactly the pattern
already established for PyRoboSim's \textbf{2FP}/\textbf{3FP}
(\resec{pyrobosim-p0-results}) and \textbf{4FP}
(\resec{pyrobosim-p1-results}).
This is the first quadcopter-scale confirmation that the theoretical
pending/activation-gated hazard produces the same concrete cyclic-retry
pathology already found in PyRoboSim's smaller simulated domains, for
\textbf{1FP}.

Enumerated \textbf{2FP} resolved on the faster
hardware described in \resec{crazyflie-failure-validation-app}. It is
realizable, in 59{,}674\,s (16.6\,h) of \texttt{slugs} realizability time and
59{,}769\,s (16.6\,h) overall pipeline time. Its specification size is
$|AP_I|=2$, $|AP_O|=79$, with a peak BDD of 52.9M nodes
(52{,}870{,}104, distinct from \textbf{1FP}'s 52{,}914{,}050) and 15.8M live
BDD nodes. The extracted raw strategy has 677 states and fails the raw
\texttt{GOAL\_UNREACHABLE\_TRAP} audit, the identical failure kind found
for \textbf{1FP}, so no reduced or realized FlexBE HFSM is reported.
The resolution is consistent with machine speed alone: the faster-machine
reproduction reported in \resec{crazyflie-failure-validation-app} was
roughly 2.3--3.2$\times$ faster across the 20-seed sweep, so
\textbf{2FP}'s 59{,}674\,s would correspond to roughly 38--53\,h on the
original machine, beyond its 24\,h budget. The pathology itself is
confirmed for both encodings of the Fair-Outcome-with-pending
configuration.
\S\ref{sec:crazyflie-well-separation-app} below reports the
well-separation status for both configurations, which is settled for both.

\subsection{Well-Separation}
\label{sec:crazyflie-well-separation-app}

Well-separation checks, of the kind reported for Coffee, Two-Rivers, and
PyRoboSim (\resec{coffee-results}, \resec{two-rivers-results},
\resec{pyrobosim-results}), are settled for all six quadcopter
configurations of the 20-seed sweep. \textbf{1S}, \textbf{2S}, \textbf{1F},
\textbf{2F}, \textbf{1SP}, and \textbf{2SP} have each been checked across
all 22 declaration orderings (alphabetic, domain-grouped, and 20 random
seeds) and are well-separated in every ordering, with no exceptions. This
is a sharp contrast to Coffee, Two-Rivers, and PyRoboSim, each of which had
at least one NWS configuration; on current evidence, the quadcopter domain
does not exhibit the \texttt{P-all/E-just} or \texttt{P-reach/E-safe}
violations found elsewhere among these six. \textbf{1SP} is well-separated
across all 22 orderings like the other five, not merely a majority of
them, but needs a longer, 21{,}600\,s (6\,h) well-separation timeout than
the other five configurations to resolve every ordering. Check time among
all 22 orderings
varies substantially (0.05--883.6\,s: near-instant for the alphabetic and
domain-grouped orderings, up to nearly 15\,min for the slowest random
seed), unlike the near-instant checks for the enumerated configurations
(\textless\,0.001\,s) or the sub-1\,s checks for the other one-hot
configurations (\textbf{1S}, \textbf{1F}); \textbf{1SP}'s well-separation
check is simply substantially more expensive than the other one-hot
configurations', not less reliable.

The more consequential result in this section is \textbf{1FP}/\textbf{2FP}.
Both configurations' well-separation checks resolve under a 21{,}600\,s
(6\,h) well-separation timeout and report \texttt{NON\_WELL\_SEPARATED},
with a \texttt{P-all/E-just} case implicating an environment-liveness
assumption -- the same violation signature already identified for the
Fair-Outcome-with-pending configurations in Coffee, Two-Rivers, and
PyRoboSim (\resec{coffee-results}, \resec{two-rivers-results},
\resec{pyrobosim-results}). \textbf{1FP}'s check takes 249.3\,s of
analyzer time reported by \texttt{slugs} (2184.6\,s of wrapper wall-clock
time); \textbf{2FP}'s takes 0.036\,s analyzer (72.8\,s wrapper), consistent
with one-hot's well-separation checks being substantially more expensive
than enumerated's throughout this domain (\textbf{1SP}: 0.05--883.6\,s
vs.\ enumerated's \textless\,0.001\,s, above).

The \textbf{2FP} well-separation verdict is especially informative because, on
this machine, \textbf{2FP}'s realizability itself remains unresolved under
the 24\,h synthesis budget reported in
\resec{crazyflie-reliability-app}; well-separation is a pre-synthesis
check on the specification alone. With \textbf{1FP} realizable on this
machine and \textbf{2FP} realizable on the faster machine, both are
audit-confirmed as \texttt{GOAL\_UNREACHABLE\_TRAP}
(\resec{crazyflie-reliability-app}), this
specification-level signature is no longer merely a prediction about what
an eventual extracted strategy might do for either configuration: it is
confirmed by a concrete extracted strategy exhibiting exactly that
pathology in both cases.

\subsection{Hardware Failure-Handling Validation}
\label{sec:crazyflie-failure-validation-app}

We additionally synthesized both \textbf{1S} and \textbf{2S} directly on
the deployment hardware used for the flight demonstrations described in
\resec{crazyflie-failure-validation}: a Dell Pro Max Tower T2 with an Intel
Core Ultra 9 285K CPU and 64\,GiB memory, a different and substantially
faster machine than the Dell Precision
Tower 5810 used for \retab{crazyflie-results-app} and the other domains'
reported sweeps (\resec{discussion}), so the timing figures here are not
directly comparable to that table's. The same six-column 20-seed sweep was
also rerun on this hardware. Realizability, well-separation, strategy
size, and audit outcome all matched \retab{crazyflie-results-app}; the
random-seed mean realizability times were lower by roughly 2.3--3.2$\times$
(\textbf{1S}: 7.2\,s; \textbf{2S}: 3.4\,s; \textbf{1F}: 260.3\,s;
\textbf{2F}: 11.1\,s; \textbf{1SP}: 1021.5\,s; \textbf{2SP}: 55.0\,s).
Both flight-tested configurations synthesized
end-to-end (preprocessing through FlexBE HFSM realization) in well under
15\,s in the alphabetic-ordering faster-machine rows: 10.6\,s for
\textbf{1S} and 10.5\,s for \textbf{2S}. Those same rows report
\texttt{WELL\_SEPARATED} with no violated assumption, consistent with
\textbf{1S}/\textbf{2S}'s status in
\S\ref{sec:crazyflie-well-separation-app} above. The corresponding reduced
strategies have 83 states for \textbf{1S} and 106 states for \textbf{2S},
while Controller Realization emits 69- and 54-state FlexBE HFSMs,
respectively, for the reasons detailed in
\S\ref{sec:crazyflie-realization-app}. Both
\textbf{1S} and \textbf{2S} were both flight-tested
(\resec{crazyflie-failure-validation}), with \textbf{2S} giving the same
results as \textbf{1S}.

\FloatBarrier

%% file: tables/crazyflie.tex
\begin{tabular}{|c|r|r|r|r|r|r|}
\hline
 & \textbf{1S} & \textbf{2S} & \textbf{1F} & \textbf{2F} & \textbf{1SP} & \textbf{2SP} \\
\hline
\rule{0pt}{4ex}\textbf{\shortstack{Well\\Separated}} & Yes & Yes & Yes & Yes & Yes & Yes \\
\hline
\textbf{Realizable} & Yes & Yes & Yes & Yes & Yes & Yes \\
\hline
\rule{0pt}{4ex}\textbf{\shortstack{FlexBE HFSM\\Realized}} & Yes & Yes & Yes & Yes & Yes & Yes \\
\hline
\textbf{$\left|AP_I\right|$} & 84 & 2 & 84 & 2 & 84 & 2 \\
\hline
\textbf{$\left|AP_O\right|$} & 91 & 46 & 91 & 46 & 124 & 79 \\
\hline
\textbf{$\left|\varphi_i^a\right|$} & 84 & 2 & 84 & 2 & 84 & 2 \\
\hline
\textbf{$\left|\varphi_i^g\right|$} & 77 & 27 & 77 & 27 & 110 & 60 \\
\hline
\textbf{$\left|\varphi_s^a\right|$} & 168 & 85 & 168 & 85 & 168 & 85 \\
\hline
\textbf{$\left|\varphi_s^g\right|$} & 1508 & 183 & 1508 & 183 & 1640 & 315 \\
\hline
\textbf{$\left|\varphi_l^a\right|$} & 1 & 1 & 33 & 33 & 1 & 1 \\
\hline
\textbf{$\left|\varphi_l^g\right|$} & 1 & 1 & 1 & 1 & 1 & 1 \\
\hline
\rule{0pt}{4ex}\textbf{\shortstack{CUDD Live\\BDD Nodes}} & 135724.9 & 117589.4 & 1392251.7 & 180870.9 & 3294766.7 & 873350.8 \\
\hline
 & $\pm$ 54516.6 & $\pm$ 18337.6 & $\pm$ 626130.8 & $\pm$ 46089.9 & $\pm$ 2513696.0 & $\pm$ 154130.7 \\
\hline
\rule{0pt}{4ex}\textbf{\shortstack{CUDD Peak\\BDD Nodes}} & 854545.3 & 1020364.8 & 7947480.8 & 1035950.3 & 10409427.7 & 3980127.9 \\
\hline
 & $\pm$ 246360.4 & $\pm$ 297257.6 & $\pm$ 3037765.9 & $\pm$ 293300.1 & $\pm$ 7417297.7 & $\pm$ 655188.2 \\
\hline
\textbf{\textbar{}Slugs SM\textbar} & 119.3 & 131 & 102 & 131 & 187.0 & 347 \\
\hline
 & $\pm$ 10.2 &  &  &  & $\pm$ 9.2 &  \\
\hline
\textbf{\textbar{}Reduced\textbar} & 76.7 & 106 & 68 & 106 & 131.0 & 318 \\
\hline
 & $\pm$ 5.1 &  &  &  & $\pm$ 4.6 &  \\
\hline
\rule{0pt}{4ex}\textbf{\shortstack{Generate\\Specs (s)}} & 0.342 & 1.308 & 0.340 & 1.317 & 0.375 & 1.452 \\
\hline
 & $\pm$ 0.005 & $\pm$ 0.013 & $\pm$ 0.006 & $\pm$ 0.010 & $\pm$ 0.006 & $\pm$ 0.014 \\
\hline
\rule{0pt}{4ex}\textbf{\shortstack{Well Separation\\(s)}} & 0.686 & 0.000 & 0.711 & 0.000 & 120.779 & 0.000 \\
\hline
 & $\pm$ 0.184 & $\pm$ 0.000 & $\pm$ 0.222 & $\pm$ 0.000 & $\pm$ 228.948 & $\pm$ 0.000 \\
\hline
\rule{0pt}{4ex}\textbf{\shortstack{Realizability\\(s)}} & 19.730 & 9.275 & 638.479 & 31.814 & 2764.539 & 166.559 \\
\hline
 & $\pm$ 9.490 & $\pm$ 2.643 & $\pm$ 471.598 & $\pm$ 15.260 & $\pm$ 3321.603 & $\pm$ 49.516 \\
\hline
\rule{0pt}{4ex}\textbf{\shortstack{Extraction\\(s)}} & 1.442 & 0.187 & 6.493 & 0.302 & 14.196 & 1.229 \\
\hline
 & $\pm$ 0.302 & $\pm$ 0.039 & $\pm$ 2.375 & $\pm$ 0.103 & $\pm$ 10.091 & $\pm$ 0.294 \\
\hline
\rule{0pt}{4ex}\textbf{\shortstack{Auditing\\(s)}} & 0.007 & 0.009 & 0.009 & 0.008 & 0.011 & 0.018 \\
\hline
 & $\pm$ 0.001 & $\pm$ 0.003 & $\pm$ 0.000 & $\pm$ 0.000 & $\pm$ 0.002 & $\pm$ 0.000 \\
\hline
\rule{0pt}{4ex}\textbf{\shortstack{Realize\\HFSM (s)}} & 3.532 & 4.193 & 4.217 & 4.193 & 5.997 & 7.492 \\
\hline
 & $\pm$ 0.365 & $\pm$ 0.017 & $\pm$ 0.028 & $\pm$ 0.023 & $\pm$ 1.075 & $\pm$ 0.023 \\
\hline
\rule{0pt}{4ex}\textbf{\shortstack{Overall\\(s)}} & 40.184 & 23.171 & 666.082 & 45.867 & 4228.671 & 321.273 \\
\hline
 & $\pm$ 10.864 & $\pm$ 2.866 & $\pm$ 474.724 & $\pm$ 15.613 & $\pm$ 4124.537 & $\pm$ 48.982 \\
\hline
\end{tabular}

%% file: Acknowledgment.tex

The authors are responsible for the overall design, concept development,
and testing of all code in the project, and are
responsible for all text, images, and references.
During development, generative AI tools (ChatGPT, Codex, and Claude) were
used to assist with prototyping scripts, code refactoring, debugging, and
drafting, revising, and copyediting text.
No AI tool determined the original scientific content, analysis, or
conclusions, is credited as an author, or bears responsibility for the
work. The authors reviewed and take full responsibility for the accuracy
and integrity of the manuscript and associated code.